\newcommand{\myfootnotetext}[1]{\footnotetext{#1\label{fn:text}%
        \edef\fnmark{\getpagerefnumber{fn:mark}}%
        \edef\fntext{\getpagerefnumber{fn:text}}%
        \ifx\fnmark\fntext\else\ClassWarning{}{footnote mark and text on different pages!}\fi}}
\newtheorem{assumption}[theorem]{Assumption}
\numberwithin{equation}{section}
\DeclareFontFamily{OT1}{pzc}{}
\DeclareFontShape{OT1}{pzc}{m}{it}{<-> s * [1.10] pzcmi7t}{}
\DeclareMathAlphabet{\mathpzc}{OT1}{pzc}{m}{it}
\numberwithin{equation}{section}
\newcommand{\rhoT}{\rho_T}
\newcommand{\brhoL}{\bm{\rho}_T^L}
\newcommand{\brhoT}{\bm{\rho}_T}
\newcommand{\brho}{\bm{\rho}}
\newcommand{\brhoEAL}{\bm{\rho}_T^{EA,L}}
\newcommand{\brhoxiL}{\bm{\rho}_T^{\xi, L}}
\newcommand{\brhoEA}{\bm{\rho}_T^{EA}}
\newcommand{\brhoxi}{\bm{\rho}_T^{\xi}}
\newcommand{\bmu}{\bm{\mu^{\bY}}}
\newcommand{\forcex}{\mathbf{F}^{\bx}}
\newcommand{\vertiii}[1]{{\left\vert\kern-0.25ex\left\vert\kern-0.25ex #1 
    \right\vert\kern-0.25ex\right\vert\kern-0.25ex}}
\newcommand{\mbf}[1]{\boldsymbol{#1}}
\newcommand{\dbinnerp}[1]{\langle\hspace{-1mm}\langle{#1}\rangle\hspace{-1mm}\rangle}
\newcommand{\norm}[1]{\left\| #1 \right\|}
\newcommand{\real}{\mathbb{R}}
\newcommand{\br}{\mbf{r}}
\newcommand{\bv}{\mbf{v}}
\newcommand{\bx}{\mbf{x}}
\newcommand{\bX}{\mbf{X}}
\newcommand{\bXdot}{\dot{\mbf{X}}}
\renewcommand{\top}{T}
\newcommand{\topwd}{\mbf{s}}
\newcommand{\topwdT}{\mathcal{V}} 
\newcommand{\topwde}{\topwd^E}
\newcommand{\topwdem}{\topwd^{E, (m)}}
\newcommand{\topwda}{\topwd^A}
\newcommand{\topwdam}{\topwd^{A, (m)}}
\newcommand{\topwdxi}{\topwd^{\xi}}
\newcommand{\topwdxim}{\topwd^{\xi, (m)}}
\newcommand{\FeatureHypSpaceE}{\mathbf{S}^E}
\newcommand{\FeatureHypSpaceA}{\mathbf{S}^A}
\newcommand{\FeatureHypSpacexi}{\mathbf{S}^{\xi}}
\newcommand{\PairHypSpace}{\mathbf{R}}
\newcommand{\bY}{\mbf{Y}} 
\newcommand{\by}{\mbf{y}}
\newcommand{\bz}{\mbf{z}}
\newcommand{\mE}{\mathcal{E}}
\newcommand{\bmE}{\mbf{\mathcal{E}}}
\newcommand{\mK}{\mathcal{K}}
\newcommand{\mS}{\mathcal{S}}
\newcommand{\mY}{\mathcal{Y}}
\newcommand{\bigO}{\mathcal{O}}
\newcommand{\R}{\real}
\renewcommand{\dim}{d}
\newcommand{\numcl}{K}
\newcommand{\idxcl}{k}
\newcommand{\cl}{C}
\newcommand{\clof}{\mathpzc{k}}
\newcommand{\bL}{\mbf{L}}
\newcommand{\bZ}{\mbf{Z}}
\newcommand{\bXi}{\mbf{\Xi}}
\newcommand{\rhsf}{\mathbf{f}}
\newcommand{\rhsfvnc}{\rhsf^{\text{nc}, \dot\bx}}
\newcommand{\rhsfxinc}{\rhsf^{\text{nc}, \xi}}
\newcommand{\intkernele}{\intkernel^{E}}
\newcommand{\intkernela}{\intkernel^{A}}
\newcommand{\intkernelxi}{\intkernel^{\xi}}
\newcommand{\bintkernela}{\bintkernel^{A}}
\newcommand{\bintkernelxi}{\bintkernel^{\xi}}
\newcommand{\bintkernele}{\bintkernel^{E}}
\newcommand{\hypspacee}{\mathcal{H}^{E}}
\newcommand{\hypspacea}{\mathcal{H}^{A}}
\newcommand{\hypspacexi}{\mathcal{H}^{\xi}}
\newcommand{\intkernelvare}{\varphi^{E}}
\newcommand{\intkernelvara}{\varphi^{A}}
\newcommand{\intkernelvarxi}{\varphi^{\xi}}
\newcommand{\bintkernelvarxi}{\bintkernelvar^{\xi}}
\newcommand{\bV}{\mbf{V}}
\newcommand{\lintkernele}{\lintkernel^{E}}
\newcommand{\lintkernela}{\lintkernel^{A}}
\newcommand{\lintkernelxi}{\lintkernel^{\xi}}
\newcommand{\blintkernelxi}{\blintkernel^{\xi}}
\newcommand{\bvpEA}{\bm{\varphi}^{EA}}
\newcommand{\bvpE}{\bm{\varphi}^{E}}
\newcommand{\bvpA}{\bm{\varphi}^{A}}
\newcommand{\bvpxi}{\bm{\varphi}^{\xi}}
\newcommand{\bvxi}{\bm{\varphi}^{\xi}}
\newcommand{\bvphEA}{\widehat{\bm{\varphi}}^{EA}}
\newcommand{\bvphxi}{\widehat{\bm{\varphi}}^{\xi}}
\newcommand{\bpEA}{\bm{\phi}^{EA}}
\newcommand{\bpE}{\bm{\phi}^{E}}
\newcommand{\bpA}{\bm{\phi}^{A}}
\newcommand{\bpxi}{\bm{\phi}^{\xi}}
\newcommand{\bphEA}{\widehat{\bm{\phi}}^{EA}}
\newcommand{\bphxi}{\widehat{\bm{\phi}}^{\xi}}
\newcommand{\coords}{\bX,\bV,\bXi}
\newcommand{\estcoords}{\widehat{\bX},\widehat{\bV},\widehat{\bXi}}
\newcommand{\coordsT}{\bX(t),\bV(t),\bXi(t)}
\newcommand{\coordsTL}{\bX(t_l),\bV(t_l),\bXi(t_l)}
\newcommand{\coordsTLM}{\bX^{(m)}(t_l),\bV^{(m)}(t_l),\bXi^{(m)}(t_l)}
\newcommand{\sx}{\topwde}
\newcommand{\sxdot}{\topwda}
\newcommand{\phijhat}{\widehat{\phi}_{jj'}^{EA}}
\newcommand{\sxi}{\topwde_{ii'}}
\newcommand{\sxidot}{\topwda_{ii'}}   
\newcommand{\spacevariables}{\bm{\mathcal{V}}}
\newcommand{\Ltwo}{L^2}
\newcommand{\LtwoB}{\bm{L}^2}  
  \sbox\z@{\let\label\@gobble$\displaystyle\BODY$}
\newcommand{\wildcard}{{\{E,A,\xi\}}}
\newcommand{\combf}[2]{#1 \oplus #2}
\newcommand{\combfbvphEbvphA}{\bvphEA}
\newcommand{\combfbphEbphA}{\bphEA}
\newcommand{\combfbpEbpA}{\bpEA}
\newcommand{\combfbvpEbvpA}{\bvpEA}
\newcommand{\basise}{\psi^{\bx}}
\newcommand{\basisxi}{\psi^{\xi}}
\newcommand{\force}{F}
\newcommand{\forcev}{\force^{\dot\bx}}
\newcommand{\forcexi}{\force^{\xi}}
\newcommand{\intkernel}{\phi}
\newcommand{\lintkernel}{\widehat{\intkernel}}
\newcommand{\bintkernel}{{\bm{\phi}}}
\newcommand{\blintkernel}{{\widehat{\bm{\phi}}}}
\newcommand{\intkernelvar}{\varphi}
\newcommand{\bintkernelvar}{{\bm{\varphi}}}
\newcommand{\rhsfo}{\mathbf{f}}
\newcommand{\hypspace}{\mathcal{H}}
\newcommand{\bhypspace}{\mbf{\mathcal{H}}}
\newcommand{\bhypspaceEA}{\mbf{\mathcal{H}}^{EA}}
\newcommand{\bhypspacexi}{\mbf{\mathcal{H}}^{\xi}}
\newcommand{\E}{\mathbb{E}}
\newcommand{\argmin}[1]{\underset{#1}{\operatorname{arg}\operatorname{min}}\;}
\newcommand{\mand}{\quad \text{and} \quad}
\DeclareMathAlphabet{\mathpzc}{OT1}{pzc}{m}{it}
\begin{document}

\title{Learning Theory for Inferring Interaction Kernels in Second-Order Interacting Agent Systems} 

\author{\name Jason Miller$^{2,3}$ \email jason.miller@jhu.edu \\ \AND \name Sui Tang$^{1,4}$ \email suitang@math.ucsb.edu \\
\AND \name Ming Zhong$^{2,3}$  \email mzhong5@jhu.edu \\
\AND 
\name Mauro Maggioni$^{1,2,3}$ \email mauromaggionijhu@icloud.com	  \\ 
\\
       \addr $^1$Departments of Mathematics, $^2$Applied Mathematics and Statistics, \\
       $^3$ Johns Hopkins University, 3400 N. Charles Street, Baltimore, MD 21218, USA\\
 $^4$ University of California, Santa Barbara, 552 University Rd, Isla Vista, CA 93117, USA
}

\maketitle

\begin{abstract}
Modeling the complex interactions of systems of particles or agents is a fundamental scientific and mathematical problem that is studied in diverse fields, ranging from physics and biology, to economics and machine learning. In this work, we describe a very general second-order, heterogeneous, multivariable, interacting agent model, with an environment, that encompasses a wide variety of known systems. We describe an inference framework that uses nonparametric regression and approximation theory based techniques to efficiently derive estimators of the interaction kernels which drive these dynamical systems. We develop a complete learning theory which establishes strong consistency and optimal nonparametric min-max rates of convergence for the estimators, as well as provably accurate predicted trajectories. The estimators exploit the structure of the equations in order to overcome the curse of dimensionality and we describe a fundamental coercivity condition on the inverse problem which ensures that the kernels can be learned and relates to the minimal singular value of the learning matrix. The numerical algorithm presented to build the estimators is parallelizable, performs well on high-dimensional problems, and is demonstrated on complex dynamical systems. 
\end{abstract}
	
\ \\
\begin{keywords}
Machine learning; dynamical systems; agent-based dynamics; inverse problems; regularized least squares; nonparametric statistics.
\end{keywords}

\tableofcontents

\ \\

\section{Introduction} \label{sec:intro}
Physical, biological, and social systems across all scales of complexity and size can often be described as dynamical systems written in terms of interacting agents (e.g. particles, cells, humans, planets, ...). Rich theories have been developed  to explain the collective behavior of these interacting agents across many fields including astronomy, particle physics, economics, social science, and biology. Examples include predator-prey systems, molecular dynamics, coupled harmonic oscillators, flocking birds or milling fish, human social interactions, and celestial mechanics, to name a few. In order to encompass many of these examples, we will describe a very general second-order, heterogeneous (the agents can be of different types), interacting (the acceleration of an agent is a function of properties of the other agents) agent system that includes external forces, masses of the agents, multivariable interaction kernels, and an additional environment variable that is a dynamical property of the agent (for example, a firefly having its luminescence varying in time). We propose a learning approach that combines machine learning and dynamical systems in order to provide highly accurate dynamical models of the observation data from these systems. 

The model and learning framework presented in sections \ref{sec:ModelDesc}-\ref{sec:LT} includes a very large number of relevant systems and allows for their modeling. 
Clustering of opinions \cite{Krause2000,CKFL2005, BHT2009, MT2014} is a simple first-order case that exhibits clustering. Flocking of birds \cite{CS2007,CM2008,CD2011} provides a simple example of a second-order system that exhibits an emergent shared velocity of all agents. Milling of fish \cite{Chuang2007,Abaid2010,Albi2014,Chuang2016} is another second-order model and presents both a $2$ and $3$-dimensional milling pattern over long time and introduces a non-collective force from the environment. 
A model of oscillators (fireflies) that sync and swarm together, and have their dynamics governed by their positions and a phase variable $\xi$, was studied by \cite{Strogatz2000,OKeeffe2017,OKeeffe2018,Okeeffe2019}.
There are also models that include both energy and alignment interaction kernels, a particular case of this is the anticipation dynamics model from \cite{shu2019anticipation}, which we study in this work.

One can also consider a collection of celestial bodies interacting via the gravitational potential, which was initially studied in \cite{Zhong20} and is further studied in the upcoming \cite{GravLetter20}. All of these models fit into our framework and we have presented detailed studies of these, and others in this work as well as \cite{Zhong20, Tang2019, lu2019nonparametric}. 
These dynamics exhibit a wide range of emergent behaviors, and 
as shown in \cite{Vicsek_model,TKIHLC2013,CM2008,GC2004,
CHDOB2007,BDT2017,MT2014}, the behaviors can be studied when the governing equations are known. However, if the equations are not known and the data consists of only trajectories, we still wish to develop a model that can both make accurate predictions of the trajectories and discover a dynamical form that accurately reflects their emergent properties. To achieve this, we present a theoretically optimal learning algorithm that is accurate, captures emergent behavior for large time, and, by exploiting the structure of the collective dynamical system, avoids the curse of dimensionality. 

Applying machine learning to the sciences has experienced tremendous growth in recent years, a small selection of general applications related to the ideas in this work include: learning PDEs (\cite{Bar-Sinai,Schaeffer6634, Kevri_2}), modeling dynamical systems (\cite{Kevrik_1, Kevri_3, Kevri_4}), governing equations (\cite{Champion, Kevri_5}), biology (\cite{Chiel993}), fluid mechanics (\cite{Raissi1026, Han219}), many-body problems in quantum systems (\cite{Carleo602}), mean-field games (\cite{Ruthotto}), meteorology (\cite{Ham2019}), and dynamical systems (\cite{Costa1501, Brunton3932, YairE, Bongard9943}). These, and the references therein, give a flavor of the diverse range of applications.
A vast literature exists in the context of learning dynamical systems. In the case of a general nonlinear dynamical system, symbolic regression has been developed to learn the underlying form of the equations from data, see \cite{bongard2007automated, schmidt2009distilling}. Sparse regression techniques which use an extremely large collection of functions, often containing most major mathematical functions, are fit to the data with a sparsity condition that only allows a few terms to appear in the final model. Detailed study and development of these approaches can be found for SINDy in (\cite{BPK2016, RBPK2017, BKP2017}), a LASSO-type penalty (\cite{han2015robust, kang2019ident}), and sparse Bayesian regression (\cite{zhang2018robust}). Other approaches consider multiscale methods, statistical mechanics, or force-based models, see \cite{BCCCCGLOPPVZ2008,BCGMSVW2012}. 
Deep learning has also been applied to learn dynamical systems, for ODEs see \cite{raissi2018multistep, rudy2019deep} and for PDEs see \cite{raissi2018deep, raissi2018hidden, long2017pde}, as well as the references therein.

The majority of the earliest work in inferring interaction kernels in systems of the type \eqref{eq:intro:second_order}, \eqref{eq:2ndOrder} occurred in the Physics literature, going back to the works of Newton. From the viewpoint of purely data-driven analysis of the equations, requiring limited or no physical reasoning, foundational work was \cite{LLEK2010,KTIHC2011}. In these works, the interaction kernels are assumed to be in the span of a known family of functions and parameters are estimated. 
 In statistics, the problem of parameter estimation in dynamical systems from observations is classical, e.g. \cite{varah1982spline,brunel2008parameter,
liang2008parameter,cao2011robust,ramsay2007parameter}. The question of identifiability of the parameter emerges, see e.g. \cite{dattner2015optimal,miao2011identifiability}. Our work is closely related to this viewpoint but our parameter is now infinite-dimensional, with identifiability discusses in section \ref{sec:coercivity}.

Our learning approach is based on exploiting the structure of collective dynamical systems and nonparametric estimation techniques (see \cite{cucker2002mathematical, Tsybakov:2008, Gyorfi06, devore2006approximation, binev2005universal}).
We focus here on second-order models and the form of the equations, generalizing the first order models (see discussion in Appendix \ref{sec:1stOrder}), derived from Newton's second law: for $i = 1, \ldots, N$
\begin{align}\label{eq:intro:second_order}
\hspace{-2cm} m_i\ddot\bx_i(t) = \forcev(\bx_i, \dot\bx_i) + \frac{1}{N} \sum_{\substack{i' = 1,\\ i' \neq i}}^N &\intkernel^E(\norm{\bx_{i'}(t) - \bx_i(t)})(\bx_{i'}(t) - \bx_i(t)) \nonumber \\ &+ \intkernel^A(\norm{\bx_{i'}(t) - \bx_i(t)})(\dot{\bx}_{i'}(t) - \dot{\bx}_i(t)) .
\end{align}
Here, $m_i$ is the mass of the $i^{th}$ agent, $\bx_i$ is its position, $\forcev$ is a non-collective force, and $\intkernel^E: \R^+ \rightarrow \R$, $\intkernel^A: \R^+ \rightarrow \R$ are known as the interaction kernels.  A significant amount of research on modeling collective dynamics is concerned with inducing desired collective behaviors (flocking, clustering, milling, etc.) from relatively simple, local and non-local interaction kernels, often from known or specific parametric families of relatively simple functions. Here we consider a non-parametric, inverse-problem-based approach to infer the interaction kernels from observations of trajectory data, especially within short-time periods.  In \cite{BFHM17}, a convergence study of learning unknown interaction kernels from observation of first-order models of homogeneous agents was done for increasing $N$, the number of agents. 
The estimation problem with $N$ fixed, but the number of trajectories $M$ varying, for first-order and second-order models of heterogeneous agents was numerically studied in \cite{lu2019nonparametric} and learning theory on these first-order models was developed in \cite{Tang2019, li2019identifiability}.  Further extension of the model and algorithm to more complicated second-order systems, with particular emphasis on emergent collective behaviors, was discussed in \cite{Zhong20}.  A big data application to real celestial motion ephemerides is developed and discussed in \cite{GravLetter20}.  In this work, we provide a rigorous learning theory covering the models presented in \cite{Zhong20}, as well as the second-order models introduced in \cite{lu2019nonparametric}.  We consider generalizations of the models in \cite{Zhong20}, to include models with higher-dimensional interaction kernels, that do not depend only on pairwise distances. Compared to the theories studied in \cite{Tang2019, li2019identifiability}, our theory focuses on second-order models with interaction kernels of the form $\intkernele(r)r + \intkernela(r)\dot{r}$ (with $r$ and $\dot r$ representing norms of differences of positions and, respectively, velocities of pairs of agents); additionally, we discuss the identifiability and separability of $\intkernele$ and $\intkernela$ from the sum.

The overall objective of the algorithm can be stated as: given trajectory data generated from an interacting agent system, we wish to learn the underlying interaction kernels, from which we will understand its long-term and emergent behavior, and ultimately build a highly accurate approximate model that faithfully captures the dynamics. We make minimal assumptions on the form of the interaction kernels and the various forces involved, and in some cases the assumptions are made for purely theoretical reasons and the algorithm can still perform well when they do not hold for a given system.
We offer a learning approach to address these collective systems by first discovering the governing equations from the observational data, and then using the estimated equations for large-time predictions. 

The approach in this paper builds on many of these ideas and uses observation data coming from collective dynamical systems of the form \eqref{eq:2ndOrder}, to learn the underlying interaction functions. This variational approach was initially developed in \cite{BFHM17, lu2019nonparametric} and further studied and extended in \cite{Tang2019,Zhong20}.
Our analysis of this system blends differential equations, inverse problems and nonparametric regression, and (statistical) learning theory. A central insight is that we exploit the form of \eqref{eq:2ndOrder} to move the inference task to just the unknown functions ($\intkernel^E,\intkernel^A,\intkernel^{\xi}$) allowing us to avoid the curse of dimensionality incurred if we were to directly perform regression against the high-dimensional phase space and trajectory data of the system, which provides independence of observations where each observation is a different trajectory.

To use the trajectory data to derive estimators, we consider appropriate hypothesis spaces in which to build our estimators, measures adapted to the dynamics, norms, and other performance metrics, and ultimately an inverse problem built from these tools. 
Once we have obtained this estimated interaction kernel, we want to study its properties as a function of the amount of trajectory data we receive, which is the $M$ trajectories sampled from different initial conditions from the same underlying system. Here we study properties of the error functional, establish the uniqueness of its minimizers, and use the probability measures to define a dynamics-adapted norm to measure the error of our estimators over the hypothesis spaces. In comparing the estimators to the true interaction kernels, we first establish concentration estimates over the hypothesis space.

Our first main result is the strong consistency of our learned estimators asymptotically. For the relevant definitions see \ref{sec:learn_framework} and for the full theorem see section \ref{sec:consistency}, which for the model \eqref{eq:intro:second_order} yields,
\begin{align}
\lim_{M\rightarrow \infty}\|\bphEA - \bpEA\|_{\LtwoB(\brhoEAL)} =0 \text{ with probability one.}
\end{align}
where $\brhoEAL$ is a dynamics-adapted measure on pairwise distances -- and we use a weighted $\LtwoB$ space detailed in section \ref{sec:LT}, particularly \eqref{meas:vectorized}. 
Perhaps most importantly, we give a rate of convergence in terms of the $M$ trajectories. We achieve the minimax rate of convergence for any number of variables $|\spacevariables|$ in the interaction kernels. See section \ref{sec:rateofconv} for the full theorem, (see section \ref{sec:learn_framework} for relevant definitions) which is given by:
\begin{align}
 \mathbb{E}_{\bmu}\Big[\| \bphEA 
 -\bpEA \|^2_{\LtwoB(\brhoEAL)} \Big] \leq C \left(\frac{\log M}{M}\right)^{\frac{2s}{2s+|\spacevariables|}}\,. 
 \end{align}
In the case of model \eqref{eq:intro:second_order}, $|\spacevariables|=1$, as in the results for first-order systems \cite{BFHM17,lu2019nonparametric,Tang2019}.

This means that our estimators converge at the same rate in $M$ as the best possible estimator (up to a logarithmic factor) one could construct when the initial conditions are randomly sampled from some underlying initial condition distribution denoted $\bmu$ throughout this work, see (section \ref{sec:main_results}). 

To solve the inverse problem, we give a detailed discussion of an essential link between these three aspects, the notion of coercivity of the system - detailed in section \ref{sec:coercivity}. Coercivity plays a key role in the approximation properties, the ability to approximate the interaction kernel and the learning theory. We also present numerical examples, see the detailed numerical study in \cite{Zhong20}, which help to explain why the particular norms we define are the right choice, as well as show excellent performance on complex dynamical systems in section \ref{sec:numerics}. 

Our paper is structured as follows. The first part of the paper describes the model, learning framework, inference problem, and the basic tools needed for the learning theory. These ideas are all explained in detail in sections \ref{sec:ModelDesc}-\ref{sec:LT}. If one wishes to quickly jump to the theoretical sections, and then refer back to the definitions as needed, we have provided table \ref{tab:2ndOrder_def},\ref{tab:2ndOrderlearning_def} which explains the model equations and outlines the definitions and concepts needed for the learning theory and general theoretical results, respectively.  
The theoretical part of the paper (sections \ref{sec:coercivity}-\ref{sec:trajectory}) discusses fundamental questions of identifiability and solvability of the inverse problem, consistency, and rate of convergence of the estimators, and the ability to control trajectory error of the evolved trajectories using our estimators. Some key highlights of our theoretical contributions are described in \ref{sec:theory_overview}, with full details in the corresponding sections. 
Lastly, we consider applications in section \ref{sec:numerics}, as well as have many additional proofs and details in appendices \ref{s:trajectoryerrorproof}-\ref{sec:algorithm_numeric}.

\subsection{Comparison with existing work}
Our learning approach discovers the governing structure of a particular subset of dynamical systems of the form,
\[
\dot\bY(t) = \mbf{F}_{\bpEA, \bpxi}(\bY(t)), \quad t \in [0, T].
\]
from observations $\{\bY(t_l), \dot\bY(t_l)\}_{l = 1}^L$, by implicitly inferring the right hand side, $\mbf{F}_{\bpEA, \bpxi}$. The main difficulties in establishing an effective theory of learning $\mbf{F}_{\bpEA, \bpxi}$ are the \textit{curse of dimensionality} caused by the size of $\bY$, which is often $N(2d + 1)$, where $N$ is the number of agents, $d$ the dimension of physical space; and the \textit{dependence} of the observation data, for example $\bY(t_{l + 1})$ depends on $\bY(t_{l})$.

There are many techniques which can be used to tackle the high-dimension of the data set: sparsity assumptions, dimension reduction, reduced-order modeling, and machine learning techniques trained using gradient-based optimization. The dependent nature of the data prevents traditional regression-based approaches, see the discussion in \cite{Tang2019}, but many of the approaches above successfully address this. Our work, however, exploits the interacting-agent structure of collective dynamical systems, which is driven by a collection of two-body interactions where each interaction depends only on pairwise data between the states of agents, as in \eqref{eq:intro:second_order}. With this structure in mind, we are able to reduce the ambient dimension of the data $N(2d + 1)$ to the dimension of the variables in the interaction kernels, which is independent of $N$. We also naturally incorporate the dependence in the data in an appropriate manner by considering trajectories generated from different initial conditions.

Our theoretical results focus on the joint learning of $\bintkernele, \bintkernela$ that takes into account their natural weighted direct sum structure that is described in the following sections, which is different from the learning theory on single $\bintkernele$'s considered in \cite{lu2019nonparametric, Tang2019}.  The current theoretical framework is not able to conclusively show that $\bintkernele$ and $\bintkernela$ can be learned separately; however we demonstrate in various numerical experiments that by learning $\bintkernele$ and $\bintkernela$ jointly, we still achieve strong performance. Finally, we note that the first-order theory developed in \cite{Tang2019} is a special case of our second-order theory, see details in appendix \ref{sec:1stOrder}.

\section{Model description} \label{sec:ModelDesc}
In order to motivate the choice of second-order models considered in this paper, we begin our discussion with a simple second-order model derived from the classical mechanics point of view.  Let us consider a closed system of $N$ homogeneous agents (or particles) equipped with a certain type of Lagrangian energy, namely $L(t)$ for the whole system, given as follows,
\[
L(t) = \frac{1}{2}m_i\norm{\dot\bx_i(t)}^2 - \frac{1}{N}\sum_{1 \le i < i' \le N}U(\norm{\bx_{i'}(t) - \bx_i(t)}), \quad i = 1, \ldots, N.
\]
Here $U$ is a potential energy depending on pairwise distance.  From the Lagrange equation, $\frac{d}{dt}\Big(\partial_{\dot\bx_i}L\Big) = \partial_{\bx_i}L$, we obtain a simple second-order collective dynamics model
\begin{equation}\label{eq:2nd_simple}
m_i\ddot\bx_i(t) = \frac{1}{N}\sum_{i' = 1, i' \neq i}^N\intkernele(\norm{\bx_{i'}(t) - \bx_i(t)})(\bx_{i'}(t) - \bx_i(t)), \quad i = 1, \ldots, N.
\end{equation}
Here, $\intkernele(r) = \frac{U'(r)}{r}$ represents an energy-based interaction between agents.  For example, taking $U(r) = \frac{NGm_{i'}m_i}{r}$, it becomes the celebrated model for Newton's universal gravity.  
In order to incorporate more complicated behaviors into the model equation, we consider alignment-based interactions (to align velocities, so that short-range repulsion, mid-range alignment, and long arrange attraction are all present), auxiliary variables describing internal states of agents (emotion, excitation, phases, etc.), and non-collective forces (interaction with the environment).  We also consider a system of $N$ heterogeneous agents, such that the agents belong to $\numcl$ disjoint types $\{\cl_{\idxcl}\}_{\idxcl = 1}^{\numcl}$, with $N_{\idxcl}$ being the number of agents of type $\idxcl$. They interact according to the system of ODEs 
\begin{equation}
\begin{dcases}
m_i\ddot\bx_i(t) &= \forcev(\bx_i(t), \dot\bx_i(t), \xi_i(t)) + \sum_{i'=1}^N \frac{1}{N_{\clof_{i'}}} \Big[\intkernele_{\clof_i \clof_{i'}}(r_{ii'}(t), \topwde_{i i'}(t))(\bx_{i'}(t) - \bx_i(t)) \\
&\qquad\qquad\qquad\qquad\qquad\qquad\qquad + \intkernela_{\clof_i \clof_{i'}}(r_{ii'}(t), \topwda_{i i'}(t))(\dot\bx_{i'}(t) - \dot\bx_i(t))\Big]  \\
\dot\xi_i(t) &= \forcexi(\bx_i(t), \dot\bx_i(t), \xi_i(t)) + \sum_{i'=1}^N \frac{1}{N_{\clof_{i'}}} \intkernelxi_{\clof_i \clof_{i'}}(r_{ii'}(t), \topwdxi_{i i'}(t))(\xi_{i'}(t) - \xi_i(t))
\end{dcases}
\label{eq:2ndOrder}
\end{equation}
for $i = 1, \ldots, N$, where $\clof_i, \clof_{i'} \in \{1,\cdots, K\}$ are the indices of the agent type of the agents $i$ and $i'$ respectively; the interaction kernels, $\intkernele_{\idxcl \idxcl'}, \intkernela_{\idxcl\idxcl'}, \intkernelxi_{\idxcl\idxcl'}$, are in general different for interacting agents of different types, and they not only depend on pairwise distance $r_{ii'}(t)$, but also on  other features, given by $\topwde_{i i'}(t), \topwda_{i i'}(t), \topwdxi_{i i'}(t)$.  For example, the interactions between birds can also depend on the field of vision, not just the distance between pairs of birds. Note that we will often suppress the explicit dependence on time $t$ when it is clear from context. 
The unknowns, for which we will construct estimators, in these equations, are the functions $\intkernele_{\clof_i \clof_{i'}}$, $\intkernela_{\clof_i \clof_{i'}}$ and $\intkernelxi_{\clof_i \clof_{i'}}$; everything else is assumed given.

Table \ref{tab:2ndOrder_def} gives a detailed explanation for the definition of the variables used in \eqref{eq:2ndOrder}.
We note that in what follows the notation, $\wildcard$, attached to any expression means that there is one of those maps, functions, etc. for each element in the set $\wildcard$. It is a convenient way to avoid excessive repetition of similar definitions. 
\begin{table}[H]
\centering
\small{
\small{\begin{tabular}{ c | l }
\hline
Variable                    & Definition \\
\hline\hline
$i,i'$                     & index of agent, from $1,\ldots, N$ \\
\hline
$m_i$                       & mass of agent $i$ \\
\hline
$\bx_i(t),\dot{\bx}_i(t),\ddot{\bx}_i(t)\in \R^d$          & position/velocity/acceleration vector of agent $i$ at time $t$ \\
\hline 
$\xi_i, \dot{\xi}_i$					& auxiliary variable, and its derivative  \\
\hline
$\|\cdot\|$ & Euclidean norm in $\mathbb{R}^d$ \\
\hline
$\numcl$                    & number of types of agents \\
\hline
$\idxcl, \idxcl'$            & index of type of agents, from $1,\ldots, \numcl$  \\
\hline
$N_{\idxcl}$                & number of agents in type $\idxcl$ \\
\hline
$\clof_i$                   & type index of agent $i$ \\
\hline
$\cl_{\idxcl}$              &  set of indices for type $k$ agent, subset of $\{1, \ldots, N\}$   \\
\hline
$\intkernel_{kk'} $ (or $\intkernel_{kk''}$) & interaction kernel: influence of agent $k'$ (or $k''$) on agent $k$ \\
\hline
$\forcev, \forcexi$         & non-collective forces on $\ddot\bx_i$ and $\dot\xi_i$ respectively \\
\hline
$\intkernele, \intkernela, \intkernelxi$ & energy, alignment, and environment-based interaction kernels respectively \\
\hline 
\hline $\topwdT $      & Features, $\topwdT(\bx, \dot{\bx}, \xi, \bx ', \dot{\bx}', \xi '): \R^{4d + 2} \rightarrow \mathbb{R}^{p}$ \\

\hline
$\topwd_{(\idxcl, \idxcl')}^{\wildcard}$            & Feature map, $\pi_{\idxcl\idxcl'}^{\wildcard}\circ \topwdT(\bx, \dot\bx, \xi, \bx', \dot\bx', \xi'): \R^{4d + 2} \rightarrow \R^{p_{(\idxcl,\idxcl')}^{\wildcard}}$ \\

\hline
$\topwd_{ii'}^{\wildcard}$            & Feature evaluation, 
$\topwd_{(\clof_i,\clof_{i'})}^{\wildcard}(\bx_i, \dot\bx_i, \xi_i, \bx_{i'}, \dot\bx_{i'}, \xi_{i'}) \in \R^{p_{\clof_{i}\clof_{i'}}^{\wildcard}}$ \\

\hline
\end{tabular}}  
}
\caption{Notation for the variables in \eqref{eq:2ndOrder}.}
\label{tab:2ndOrder_def} 
\end{table}
The specific instances of the feature map $\topwdT$ together with corresponding projections $\pi_{kk'}^{\wildcard}$ include a variety of systems that have found a wide range of applications in physics, biology, ecology, and social science; see the examples in the chart below. We assume that the function $\mathcal{V}$ is Lipschitz and known and so are all the $\pi_{kk'}^{\wildcard}$'s. The Lipschitz assumption is necessary for us to control the trajectory error and ensure the well-posedness of the system. The function $\mathcal{V}$ is a uniform way to collect all of the different variables (functions of the inputs) used across any of the $(k,k')$ pairs over all of the $E,A,\xi$ functions in the system. This uniformity is helpful when discussing the rate of convergence, among other places. Examples of where this generality matters emerge naturally, say when one has a different number of variables across interaction kernels for different pairs $(k,k')$, or when the energy and alignment kernels depend on $r$ and then additional but distinct other variables. 
From this uniform set of variables, we then project (which of course implies that the feature maps are all Lipschitz) to arrive at the relevant function $\topwd_{(\idxcl, \idxcl')}^{\wildcard}$ for each pair and each of the elements of the wildcard. Lastly, we can then evaluate this map at the specific pair of agents $(i,i')$, that leads to the feature evaluation, $\topwd_{ii'}^{\wildcard}$ which is the expression used in the model equation \eqref{eq:2ndOrder}. 

The models encompassed by the form \eqref{eq:2ndOrder} are quite diverse. For a concrete example, please see section \ref{sec:example_AD}.
We summarize many examples in table \ref{tab:2ndOrder_Examples} with a shaded cell indicating that the model has that characteristic, and an empty cell indicates that the model does not have this characteristic. A numeric value indicates this is the number of unique variables, $\spacevariables, \spacevariables_{\xi}$ used within the EA or $\xi$ portions of the system. The number of these unique variables specifies the dimension in the minimax convergence rate, see section \ref{sec:rateofconv}.

\begin{table}[]
\begin{tabular}{|l||l|l|l|l|l|l|l|l|l|l|l|l|}
\hline
      & \multicolumn{12}{l|}{Properties} \\ \hline 
Model &$\phi^E$&$\phi^A$&$m_i$&$\forcev$&$\phi^{\xi}$&$\forcexi$ &$K>1$&$\topwde$&$\topwda$&$\topwdxi$&$|\spacevariables|$&$|\spacevariables_{\xi}|$\\ \hline
Anticipation Dynamics  &    \cellcolor[HTML]{343434}    &   \cellcolor[HTML]{343434}     &  \cellcolor[HTML]{343434}   &         &            &           &     &    \cellcolor[HTML]{343434}     &     \cellcolor[HTML]{343434}    &          &          2       &  \\ \hline
Celestial Mechanics     &   \cellcolor[HTML]{343434}      &        &  \cellcolor[HTML]{343434}   &         &            &           & \cellcolor[HTML]{343434}     &         &         &          &         1       &  \\ \hline
Cucker-Smale      &        &   \cellcolor[HTML]{343434}      &  \cellcolor[HTML]{343434}   &         &            &           &     &         &         &          &       1          &  \\ \hline
Fish Milling 2D      &    \cellcolor[HTML]{343434}   &        & \cellcolor[HTML]{343434}   &   \cellcolor[HTML]{343434}       &            &           &     &         &         &          &        1         &  \\ \hline
Fish Milling 3D &    \cellcolor[HTML]{343434}     &        &  \cellcolor[HTML]{343434}   &    \cellcolor[HTML]{343434}      &            &           &     &         &         &          &          1       &  \\ \hline
Flocking w. Ext. Poten.  &    \cellcolor[HTML]{343434}     & \cellcolor[HTML]{343434}       & \cellcolor[HTML]{343434}   &    \cellcolor[HTML]{343434}      &            &           &     &         &         &          &          1       &  \\ \hline
Phototaxis     &        &    \cellcolor[HTML]{343434}     & \cellcolor[HTML]{343434}   &  \cellcolor[HTML]{343434}        &           \cellcolor[HTML]{343434}  &       \cellcolor[HTML]{343434}     &      &         &         &          &       1          &  1 \\ \hline
Predator-Swarm ($2^{nd}$ Order)    &    \cellcolor[HTML]{343434}     &        &  \cellcolor[HTML]{343434}   &   \cellcolor[HTML]{343434}       &            &           &    \cellcolor[HTML]{343434}  &         &         &          &      1           &  \\ \hhline{|=||=|=|=|=|=|=|=|=|=|=|=|=|}
Lennard-Jones     &   \cellcolor[HTML]{343434}      &        &     &         &            &    \cellcolor[HTML]{343434}        &     &         &         &          &          1       &  \\ \hline      
 Opinion Dynamics  &  \cellcolor[HTML]{343434}      &        &    &         &            &           &     &         &         &          &         1        &  \\ \hline
 Predator-Swarm ($1^{st}$ Order)    &    \cellcolor[HTML]{343434}     &        &     &          &            &           &    \cellcolor[HTML]{343434}  &         &         &          &      1           &  \\ \hline
Synchronized Oscillator    &        &    \cellcolor[HTML]{343434}     &     &   \cellcolor[HTML]{343434}       &    \cellcolor[HTML]{343434}         &   \cellcolor[HTML]{343434}         &     &     \cellcolor[HTML]{343434}     &         &      \cellcolor[HTML]{343434}     &                2 & 2 \\ \hline
\end{tabular}
\caption{Summary of the models studied in this work and in \cite{lu2019nonparametric,Tang2019,Zhong20,GravLetter20}} 
\label{tab:2ndOrder_Examples}
\end{table}
Our second-order model equations cover the first-order models considered in \cite{lu2019nonparametric, Tang2019, Zhong20} as special cases, see Appendix \ref{sec:1stOrder}, which is why we choose second-order models as the main focus of this work.  Furthermore, the dynamical characteristics produced by second-order models are much richer and can model more complicated collective motions and emergent behavior of the agents. Note that our second-order model in \eqref{eq:2ndOrder}, even when written as a first-order system in more variables, is a strict generalization of the previous first-order analysis. 

\subsection*{Rate of convergence notation}
For the system \eqref{eq:2ndOrder}, depending on the number of variables, we will have a different rate of convergence as the dimension of the function(s) we are learning changes. In order to present a unified theorem, we adopt the following notation. Let $\spacevariables$ denote the set of features in the range of $\topwdT$ that are arguments of $\bintkernel^E$ or $\bintkernel^A$ across each of the $(k,k')$ pairs. 
For many collective dynamical systems $\spacevariables = \{r\}$. In the case where the system has both $\bintkernele$ and $\bintkernela$ but $\spacevariables = \{r\}$, it is easy to see from Theorem \ref{main:convrate} below that we still only pay the $1$-dimensional rate. 
Analogously, we define $\spacevariables_{\xi}$ to be the set of all features in the range of $\topwdT$ that are arguments  involved in the $\bintkernelxi$ part of \eqref{eq:2ndOrder} across all of the $(k,k')$ pairs. This notation is used to frame the convergence rate theorem on the $\xi$ variable as well. 

\section{Preliminaries and notation} \label{sec:notation}
We vectorize the models given in \eqref{eq:2ndOrder} in order to give them a more compact description.  Letting $\bv_i(t) = \dot\bx_i(t)$, we take the following notations
\[
\bX(t) = \begin{bmatrix} \bx_1(t) \\ \vdots \\ \bx_N(t) \end{bmatrix} \in \mathbb{R}^{Nd}, \quad \bV(t) := \begin{bmatrix} \bv_1(t) \\ \vdots \\ \bv_N(t) \end{bmatrix} \in \mathbb{R}^{Nd}, \quad \bXi(t) := \begin{bmatrix} \xi_1(t) \\ \vdots \\ \xi_N(t) \end{bmatrix} \in \mathbb{R}^N.
\]
We introduce a weighted norm to measure the system variables, denoted $\norm{\cdot}_{\mS(\cdot)}$ and given by, 
\begin{equation}\label{eq:sys_var_norm}
\norm{\bZ}_{\mS}^2 := \sum_{i = 1}^N\frac{1}{N_{\clof_i}}\norm{\bz_i}^2,
\end{equation}
for $\bZ = \begin{bmatrix} \bz_1^\top, & \ldots, & \bz_N^\top \end{bmatrix}^\top$ with each $\bz_i \in \R^{d}$ or $\R$. Here $\norm{\cdot}$ is the same norm used in the construction of pairwise distance data for the interaction  kernels. In the subsequent equations we {drop the explicit dependence on} $t$ for simplicity.  The weight factor, $\frac{1}{N_{\clof_i}}$, is introduced so that agents of different types are considered equally and we learn well even in the case that the number of agents in the classes is highly non-uniform.
With thsee vectorized notations, the model in \eqref{eq:2ndOrder} becomes,
\[
\begin{dcases}
\vec{m} \circ \ddot\bX &= \rhsfvnc(\bX, \bV, \bXi) + \rhsf^{\bm{\phi}^E}(\bX, \bV, \bXi) + \rhsf^{\bm{\phi}^A}(\bX, \bV, \bXi) \\
\dot\bXi &= \rhsfxinc(\bX, \bV, \bXi) + \rhsf^{\bintkernelxi}(\bX, \bV, \bXi).
\end{dcases}
\]
Here $\vec{m} = \begin{bmatrix} m_1, & \ldots, & m_N \end{bmatrix}^\top \in \mathbb{R}^N$, $\circ$ is the Hadamard product, and we use boldface fonts to denote the vectorized form of our estimators (with some once-for-all-fixed ordering of the pairs $(k,k')_{k,k'=1,\dots,K}$):
\begin{equation} \label{not:vectorized}
\bm{\phi}^E = \{\intkernele_{\idxcl \idxcl'}\}_{\idxcl, \idxcl' = 1}^{\numcl}, \quad \bm{\phi}^A = \{\intkernela_{\idxcl \idxcl'}\}_{\idxcl, \idxcl' = 1}^{\numcl}, \quad \bm{\phi}^{\xi} = \{\intkernelxi_{\idxcl \idxcl'}\}_{\idxcl, \idxcl' = 1}^{\numcl}.
\end{equation}
We also use the shorthand, 
\begin{equation} \label{not:shorthand}
\bpEA = \bpE \oplus \bpA, 
\end{equation}
to denote the element of the direct sum of the function spaces containing $\bpE, \bpA$. This notation will be used throughout on the energy and alignment ($EA$ for short) portion of the system in order to simplify the notation. 

We denote by $\rhsfvnc$, the vectorized notation for the non-collective force defined as follows,
$\rhsfvnc(\bX, \bV, \bXi) := \begin{bmatrix} \forcev(\bx_1, \dot\bx_1, \xi_1), \hdots, \forcev(\bx_N, \dot\bx_N, \xi_N)  \end{bmatrix}^T \in \mathbb{R}^{Nd}$,
and
$$\rhsf^{\bm{\phi}^E} := \bigg[
\sum_{i'=1}^N \frac{1}{N_{\clof_{i'}}} \intkernele_{\clof_1 \clof_{i'}}(r_{1i'}, \topwde_{1i'})(\bx_{i'} - \bx_1),
\hdots,
\sum_{i'=1}^N \frac{1}{N_{\clof_{i'}}} \intkernele_{\clof_N \clof_{i'}}(r_{Ni'}, \topwde_{Ni'})(\bx_{i'} - \bx_N) 
\bigg]^T \in \mathbb{R}^{Nd}.
$$
We omit the analogous definitions for $\rhsf^{\bm{\phi}^A}$ and $\rhsf^{\bm{\phi}^{\xi}}$.

\subsection{Trajectory Performance Measurement}  \label{subsec:trajperf}
We will also consider another measurement to assess the learning performance of the estimated kernels in terms of trajectory error. We compare the observed trajectories to the estimated trajectories evolved from the same initial conditions but with the estimated interaction kernels.   Let $\bY(t) = [ \bX^\top(t),  \bV^\top(t),  \bXi^\top(t) ]^\top$ be the trajectory from dynamics generated by the true/unknown interaction kernels with initial condition, $\bY(0)$; and $\hat\bY(t) = [ \hat\bX^\top(t), \hat\bV^\top(t), \hat\bXi^\top(t)]^\top$ be the trajectory from dynamics generated by the estimated interaction kernels learned from observation of $\{\bY(t_l)\}_{l = 1}^L$ with the same initial condition, $\bY(0)$ (i.e., $\hat\bY(0) = \bY(0)$).  We define a norm for the difference between $\bY$ and $\hat\bY$ at time $t_l$:

\begin{equation}\label{eq:y_norm}
\norm{\bY(t_l) - \hat\bY(t_l)}_{\mY}^2 = \norm{\bX(t_l) - \hat\bX(t_l)}_{\mS}^2 + \norm{\bV(t_l) - \hat\bV(t_l)}_{\mS}^2 + \norm{\bXi(t_l) - \hat\bXi(t_l)}_{\mS}^2\,,
\end{equation}
and a corresponding norm on the trajectory $\bY_{[0, T]} = \{\bY(t_l)\}_{l = 1}^L$ ($0 = t_1 < \dots < t_L = T$):
\begin{equation}\label{eq:traj_norm}
\norm{\bY_{[0, T]} - \hat\bY_{[0, T]}}_{\text{traj}} = \max_{l = 1, \ldots, L} \norm{\bY(t_l) - \hat\bY(t_l)}_{\mY}\,.
\end{equation}
We also consider a relative version, invariant under changes of units of measure:
\[
\norm{\bY_{[0, T]} - \hat\bY_{[0, T]}}_{\text{traj}^*} = \frac{\norm{\bY_{[0, T]} - \hat\bY_{[0, T]}}_{\text{traj}}}{\norm{\bY_{[0, T]}}_{\text{traj}}}\,.
\]
Lastly, we report errors between $\bX_{[0, T]}$ and $\hat\bX_{[0, T]}$,
\[
\norm{\bX_{[0, T]} - \hat\bX_{[0, T]}}_{\mS^*} = \frac{\max_{l = 1,\ldots,L}\{\norm{\bX(t_l) - \hat\bX(t_l)}_{\mS}\}}{\max_{l = 1, \ldots,L}\{\norm{\bX(t_l)}_{\mS}\}}.
\]
Similar re-scaled norms are used for the difference between $\bV_{[0, T]}$ and $\hat\bV_{[0, T]}$, and for the difference between $\bXi_{[0, T]}$ and $\hat\bXi_{[0, T]}$.

\subsection{Function spaces} \label{sec:functionspaces}
We begin by describing some basic ideas about measures and function spaces. 
Consider a compact or precompact set $\mathcal{U} \subset \mathbb{R}^{p}$ for some $p$, then we define the infinity norm as,
$$\|h\|_{\infty}=\operatorname{ess} \sup _{x \in \mathcal{U}}|h(x)|.$$
Further define, $L^{\infty}(\mathcal{U})$ as the space of real valued functions defined on $\mathcal{U}$ with finite $\infty$-norm. 
A key function space we need to consider is, 
$$C_{c}^{k, \alpha}(\mathcal{\mathcal{U}}) \text { for } k \in \mathbb{N}, 0<\alpha \leq 1,$$ defined as the space of compactly supported, $k$ times continuously differentiable functions with a $k$-th derivative that is H\"older continuous of order $\alpha$. 
We can then consider vectorizations of these spaces as 
$$\boldsymbol{L}^{\infty}(\mathcal{U}):=\bigoplus_{k, k^{\prime}=1,1}^{K, K} L^{\infty}(\mathcal{U}),$$ which has the vectorized infinity norm given by 
$$\|\boldsymbol{f}\|_{\infty}=\max _{k, k^{'}}\left\|f_{k k^{\prime}}\right\|_{\infty}, \forall \boldsymbol{f} \in \boldsymbol{L}^{\infty}(\mathcal{U}).$$
Similarly, we consider direct sums of measures, with corresponding vectorized function spaces. This is done explicitly in section \ref{sec:PerfMeasures} where we define a weighted $L^2$ space (under a particular measure with an associated norm that induces a weighting) and consider vectorized versions of it.   

In order to develop a theoretical foundation, and in line with the literature, we make assumptions on the class of functions that can arise as interaction kernels in the model \eqref{eq:2ndOrder}. As the agents get farther and farther apart, they eventually should have no influence on each other. This is an approximation to the vanishing, or rapidly decreasing to zero, nature of pairwise interaction as distance increases that is observed in many physical models. We thus assume a maximum interaction radius for the interaction kernels which represents the maximal distance at which one agent can influence another. Similar assumptions will be made on the feature maps.  

More precisely, for each pair $(k,k')$ we consider the following spaces,
\begin{align}
\mathcal{K}_{\idxcl \idxcl'}^\wildcard := L^{\infty}([R_{\idxcl \idxcl'}^{\min}, R_{\idxcl \idxcl'}^{\max}] \times \mathbb{S}^{\wildcard}_{\idxcl \idxcl'})\,,
\end{align}
for $\idxcl, \idxcl' = 1, \ldots, \numcl$ where recall that the $\wildcard$ notation means that there is an admissible space (or more generally an expression) for each element of the set $\wildcard$. 
Here, $R_{\idxcl\idxcl'}^{\min},R_{\idxcl\idxcl'}^{\max}$ are the minimum or maximum, respectively, possible interaction radii for agents in $\cl_{\idxcl'}$ influencing agents in $\cl_{\idxcl}$.
Similarly, $\mathbb{S}^{E}_{\idxcl \idxcl'}, \mathbb{S}^{A}_{\idxcl \idxcl'}, \mathbb{S}^{\xi}_{\idxcl \idxcl'}$ are compact sets in $\R^{p_{kk'}^E}, \R^{p_{kk'}^A}, \R^{p_{kk'}^{\xi}}$ which contain the ranges of the feature maps, $\topwde_{kk'}, \topwda_{kk'}$ and $\topwdxi_{kk'}$.  

We can also define the vectorizations of these spaces, which we will look at subsets of when we define the admissible spaces below, given by
$$
\bm{\mathcal{K}}^{\wildcard} :=\bigoplus_{k,k'=1,1}^{K,K} \mathcal{K}_{\idxcl \idxcl'}^\wildcard.
$$
In order to provide uniform bounds, we introduce the following sets:
\begin{equation} \label{eq:compactsets}
\FeatureHypSpaceE := \prod_{k,k'}\mathbb{S}_{kk'}^E \qquad 
\FeatureHypSpaceA := \prod_{k,k'}\mathbb{S}_{kk'}^A \qquad
\FeatureHypSpacexi := \prod_{k,k'}\mathbb{S}_{kk'}^{\xi}
\end{equation}

Next, we introduce notation to bound the interaction radii on the pairwise distances and pairwise velocities. 
\begin{remark}
Here we note an important distinction. There is the range of the norms of pairwise interactions generated by the dynamics,  and the underlying support of the interaction kernels themselves. These two notions of interaction radius are distinct, we will comment on the estimation and subtleties of both in the numerical algorithm section. 
\end{remark}
We let 
\begin{align} \label{eq:Rdef}
\PairHypSpace &:= \prod_{k,k'}[R_{kk'}^{\min},R_{kk'}^{\max}]\\
R &:= \max_{k,k'}R_{kk'}^{\max}.
\end{align}
Notice that a uniform support for all interaction kernels on the pairwise distance variable $r$ is $[0,R]$. 

We denote the distribution of the initial conditions by $\bmu$. This measure is unknown and is the source of randomness in our system. It reflects that we will observe trajectories which start at different initial conditions, but that evolve from the same dynamical system, which allows for learnability. For the numerical experiments we will choose our initial conditions to be sampled uniformly over a system dependent range. 

Due to the form of \eqref{eq:2ndOrder}, and the norms defined below, we consider the following dynamics induced ranges of the variables. Note that the first supremum is taken over the initial conditions, each of which generate different solutions $\bx_i(t)$ which are used in the second supremum. 
\begin{align} 
R_{\dot{x}} &:= \sup_{\bY(0) \sim \bmu} \sup_{t \in [0,T]} \max_{i,i'} \|\dot{\bx}_i(t) - \dot{\bx}_{i'}(t) \| \label{eq:Rdyndef} \\
R_{\xi}   &:= \sup_{\bY(0) \sim \bmu} \sup_{t \in [0,T]} \max_{i,i'} \|\xi_i(t) - \xi_{i'}(t) \|
\label{eq:Rdyndef:xi}
\end{align} 
We assume in this work that both of these quantities $R_{\dot{x}}, R_{\xi}$ are finite. This will be easily satisfied if the measures $\bm{\mu^{\dot{x}}}, \bm{\mu}^{\xi}$ (specifying the distribution of the initial conditions on the velocities and the environment variable) are compactly supported. This follows by the assumptions on the interaction kernels below and that we only consider finite final time $T$.
 
In order for the second-order systems given by \eqref{eq:2ndOrder} to be well-posed, we assume that the interaction kernels lie in \textit{admissible sets}. For each of the kernels, let $\mathcal{U}_{kk'}:=[R_{\idxcl \idxcl'}^{\min}, R_{\idxcl \idxcl'}^{\max}] \times \mathbb{S}^{\wildcard}_{\idxcl \idxcl'}$ and define

\begin{align} \label{eq:E-Admissibility}
\boldsymbol{\mathcal{K}}_{S_{\wildcard}}^{\wildcard}&:=\Big\{\left(\intkernel^{\wildcard}_{\idxcl \idxcl'}\right)_{\idxcl,\idxcl' = 1,1}^{\numcl,\numcl}: \text{{ for all }} k,k'=1,\dots,K\,, \\ 
&\qquad\intkernel^{\wildcard}_{\idxcl \idxcl'} \in C_c^{0,1}\left(\mathcal{U}_{kk'}\right),
 \norm{\intkernel^{\wildcard}_{\idxcl \idxcl'}}_{\infty} + \operatorname{Lip}\left[\intkernel^{\wildcard}_{\idxcl \idxcl'}\right] \leq S_{\wildcard}\Big\}\,. \nonumber
\end{align}
When estimating the $EA$ part of the system, we will consider the direct sum admissible space, for $S_{EA}\geq \max\{S_E,S_A\}$,
\begin{equation} \label{eq:dsum-admissibility}
\boldsymbol{\mathcal{K}}_{S_{EA}}^{EA} := \boldsymbol{\mathcal{K}}_{S_{E}}^{E} \oplus \boldsymbol{\mathcal{K}}_{S_{A}}^{A} 
\end{equation}

The admissibility assumptions allow us to establish properties such as existence and uniqueness of solutions to \eqref{eq:2ndOrder} as well as to have control on the trajectory errors in finite time $[0, T]$. It further allows us to show regularity and absolute continuity with respect to Lebesgue measure of the appropriate performance measures defined in section \ref{sec:PerfMeasures}.  

In the learning approach, we will consider hypothesis spaces that we will search in order to estimate the various interaction kernels.  
The hypothesis spaces corresponding to $\{\phi_{kk'}^{\wildcard}\}$ are denoted as $\{\hypspace_{kk'}^{\wildcard}\}$ and we vectorize them as,
\begin{align} \label{not:hypspaces}
\bhypspace^{\wildcard} := \bigoplus_{k,k'=1,1}^{K,K}\hypspace_{kk'}^{\wildcard}.
\end{align}
Analogous to our simplified notation for $\bpEA, \bvpEA$ described in \eqref{not:shorthand}, we define the direct sum of the hypothesis spaces as, 
\begin{equation} \label{not:bhypspaceEA}
\bhypspaceEA := \bhypspace^E\oplus \bhypspace^A
\end{equation} 
We will consider specific hypothesis spaces during the learning theory and numerical algorithm sections.

\section{Inference problem and learning approach}\label{sec:learn_framework}
In this section, we first introduce the problem of inferring the interaction kernels from observations of trajectory data and give a brief review and generalization of the learning approach proposed in the works \cite{lu2019nonparametric} and \cite{Zhong20}. 

\subsection{Problem setting}
 Our observation data is given by $\{\bY^{(m)}(t_l), \smash{\dot\bY^{(m)}(t_l)}\}_{m=1,l= 1}^{M,L}$ for $0 = t_1 < t_2 < \cdots < t_L = T$. Here $\dot\bY=[\by_1^\top,\cdots, \by_N^\top]$ and $ \by_i = \begin{bmatrix}\bx_i^\top(t), \dot\bx_i^\top(t), \xi_i(t)\end{bmatrix}^\top$.  For simplicity, we only consider equidistant observation points: $t_{l} - t_{l - 1} = h$ for $l = 2, \ldots, L$.  The proposed algorithm with slight modifications also works for non-equispaced points. The $M$ sets of discrete trajectory data are generated by the system (2.1) with the unknown set of interaction kernels, i.e. $\bm{\phi}^E, \bm{\phi}^A, \bm{\phi}^{\xi}$, whose  initial conditions $\bY^{(m)}(0)$ are drawn i.i.d from $\bmu$, a probability measure defined on the space $\R^{N(2d+2)}$. The goal is to infer the unknown interaction kernels directly from data.

\subsection{Loss functionals}  \label{sec:lossfunctionals}
Given observations, the references \cite{lu2019nonparametric, Tang2019, Zhong20} proposed an empirical error functional, recalling the notational convention \eqref{not:shorthand},
\begin{eqnarray}
\mbf{\mathcal{E}}_{M}^{EA}(\combfbvpEbvpA)&=&\frac{1}{LM}\sum_{l=1, m=1}^{L,M}
\Big\Vert\ddot{\bX}^{(m)}(t_l) -  \rhsfvnc(\coordsTLM) \label{eq:Error_Func_Empirical:EA} \\ 
&-& \rhsf^{\bvpE}(\coordsTLM) - \rhsf^{\bvpA}(\coordsTLM)\Big\Vert^2_{\mathcal{S}},  \nonumber\\
\mbf{\mathcal{E}}_{M}^{\xi}(\bm{\varphi}^{\xi})&=& \frac{1}{LM} \sum_{l=1, m=1}^{L,M} \Big\Vert\dot{\bm{\Xi}}(t_l) - \rhsfxinc(\coordsTLM) - \label{eq:Error_Func_Empirical:xi} \\
& &\rhsf^{\bm{\varphi}^{\xi}}(\coordsTLM)\Big\Vert^2_{\mathcal{S}}\,.\nonumber
\end{eqnarray}
The estimators of interaction kernels are defined as the minimizers of  the error functionals $\mbf{\mathcal{E}}_{M}^{EA}$ and $\mbf{\mathcal{E}}_{M}^{\xi}$ over $\bhypspaceEA$ and $\bhypspacexi$ respectively, i.e.
\begin{equation} \label{not:minimizers}
\bphEA = \argmin{\bvpEA \in \bhypspaceEA}\mbf{\mathcal{E}}_{M}^{EA}(\bvpEA)\,,
\qquad
\blintkernelxi = \argmin{\bintkernelvarxi \in \bhypspacexi}\mbf{\mathcal{E}}_{M}^{\xi}(\bintkernelvarxi).
\end{equation}
For the learning theory, we will consider the following error functionals.
On the energy and alignment portion, we consider,
\begin{align}
\mbf{\mathcal{E}}_{\infty}^{EA}(\combfbvpEbvpA)&:=\mathbb{E}_{\bmu}\frac{1}{L}\sum_{l=1}^{L}\Big[
\Big\Vert\ddot{\bX}(t_l) -  \rhsfvnc(\coordsTL) \nonumber \\ 
&- \rhsf^{\bvpE}(\coordsTL) - \rhsf^{\bvpA}(\coordsTL)\Big\Vert^2_{\mathcal{S}}
\Big].  \label{eq:Error_Func:EA}
\end{align}
Similarly, on the $\xi$ portion, we consider, 
\begin{equation}
\mbf{\mathcal{E}}_{\infty}^{\xi}(\bvpxi)= \mathbb{E}_{\bmu}\frac{1}{L} \sum_{l=1}^L \Big[
\Big\Vert\dot{\bm{\Xi}}(t_l) - \rhsfxinc(\coordsTL) - \rhsf^{\bvpxi}(\coordsTL)\Big\Vert^2_{\mathcal{S}}
\Big]. \label{eq:Error_Func:xi}
\end{equation}
We can relate these error functionals to the natural empirical error functionals introduced at the start of the section as follows. By the Strong Law of Large Numbers we have that, 
\[
\mbf{\mathcal{E}}_{\infty}^{EA}(\combfbvpEbvpA)= \lim_{M\to \infty} \mbf{\mathcal{E}}_{M}^{EA}(\combfbvpEbvpA) \quad \text{and} \quad
\mbf{\mathcal{E}}_{\infty}^{\xi}(\bm{\varphi}^{\xi}) =\lim_{M\to \infty} \mbf{\mathcal{E}}_{M}^{\xi}(\bm{\varphi}^{\xi}).  
\]
The relationship between the theoretical and empirical error functionals will play a key role in the learning theory.

\subsection{Overview of theoretical contributions} \label{sec:theory_overview}
The papers \cite{Zhong20, lu2019nonparametric,Tang2019}  have applied this learning approach to a variety of systems and the extensive numerical simulations demonstrate the effectiveness of the approach. However, theoretical guarantees of the proposed approach for second order systems had not been fully developed and will be the main focus of this paper. Our theory contains the first-order theory in \cite{Tang2019} as a special case, as discussed in Appendix \ref{sec:1stOrder}.  We focus on the regime where $L$ is fixed but $M \rightarrow \infty$. We provide a learning theory that answers the fundamental questions: 
\begin{itemize}
\item \textbf{Quantitative description of estimator errors}. We will introduce measures to describe how close the estimators are to the true interaction kernels, that lead to novel dynamics-adapted norms. See section \ref{sec:LT}. 
\item \textbf{Identifiability of kernels}. We will establish the existence and uniqueness of the estimators as well as relate the solvability of our inverse problem to a fundamental coercivity property.  See section \ref{sec:coercivity}.
\item  \textbf{Consistency and optimal convergence rate of the  estimators}. We will prove theorems on strong consistency and optimal minimax rates of convergence of the estimators, which exploit the separability of the learning on the energy and alignment from the learning on the environment variable. See section \ref{sec:main_results}.
\item \textbf{Trajectory Prediction} We prove a theorem that describes the performance of the estimated dynamics using the estimated kernels compared to the true dynamics. Our result demonstrates how the expected supremum error (over the entire time interval) of our trajectories is controlled by the norm of the difference between the true and estimated kernels, further justifying our choice of norms and estimation procedure. See section \ref{sec:trajectory}.
\end{itemize}

\subsection{Hypothesis Space and Algorithm}  \label{sec:hypspaceandalg} 
First, we choose finite-dimensional subspaces of $\hypspacee_{\idxcl\idxcl'}$, i.e., $\tilde{\hypspace}_{\idxcl\idxcl'}^E \subset \hypspacee_{\idxcl\idxcl'}$, whose basis functions are piece-wise polynomials of varying degrees (other type of basis functions are also possible, e.g., clamped B-splines as shown in \cite{lu2019nonparametric}); similarly for $\tilde{\hypspace}_{\idxcl\idxcl'}^A \subset \hypspacea$.  Hence, each test function, $\intkernelvare_{\idxcl \idxcl'}, \intkernelvara_{\idxcl \idxcl'}$, can be expressed in terms of the linear combination of the basis functions as follows
\begin{align*}
\intkernelvare_{\idxcl\idxcl'}(r, \topwde) &= \sum_{\eta_{\idxcl\idxcl'}^{E} = 1}^{n_{\idxcl\idxcl'}^{E}} \alpha_{\idxcl, \idxcl', \eta_{\idxcl\idxcl'}^{E}}^{E}\basise_{\idxcl, \idxcl', \eta_{\idxcl\idxcl'}^{E}}(r, \topwde), \\
\intkernelvara_{\idxcl\idxcl'}(r, \dot{r}, \topwda) &= \sum_{\eta_{\idxcl\idxcl'}^{A} = 1}^{n_{\idxcl\idxcl'}^{A}} \alpha_{\idxcl, \idxcl', \eta_{\idxcl\idxcl'}^{A}}^{A}\basise_{\idxcl, \idxcl', \eta_{\idxcl\idxcl'}^{A}}(r, \dot{r}, \topwda),.
\end{align*} 
Substituting this linear combination back into \eqref{eq:Error_Func_Empirical:EA}, we obtain a system of linear equations, 
\[
A^{EA}\vec{\alpha}^{EA} = \vec{b}^{EA}.
\]
Here, $\vec{\alpha}^{EA} \in \R^{n^{EA}} = \begin{bmatrix} (\vec{\alpha}^E)^\top & (\vec{\alpha}^A)^\top \end{bmatrix}^\top$ with $\vec{\alpha}^E$ and $\vec{\alpha}^A$ being the collection of $\alpha_{\idxcl, \idxcl', \eta_{\idxcl\idxcl'}^{E}}^{E}$ or $\alpha_{\idxcl, \idxcl', \eta_{\idxcl\idxcl'}^{A}}^{A}$ respectively.  Moreover, $A^{EA} \in \R^{n^{EA} \times n^{EA}}$ and $\vec{b}^{EA} \in \R^{n^{EA}}$.  See Sec. \ref{sec:algorithm_numeric} for full details. 

The overhead memory storage needed is $MLN(d(5 + n^{EA} + n^{\xi}) + 3)$, with $MLN(4d + 2)$ needed for trajectory data, $MLNd(n^{EA} + n^{\xi}$ (here $n^{EA} = n^E + n^A$, the sum of the number of basis functions on $E$ and $A$) for learning matrices, and $MLN(d + 1)$ for right hand side vectors.  Hence if $M \gg \bigO(1)$, we can consider parallelization in $m$ in order to reduce the overhead memory, ending up with $M_{\text{per core}}\big(LN(d(5 + n^{EA} + n^{\xi}) + 3)\big)$ with $M_{\text{per core}} = \frac{M}{n_{\text{cores}}}$.  The final storage of $A$ and $\vec{b}$ only needs $n^{EA}(n^{EA} + 1) + n^{\xi}(n^{\xi} + 1)$.  

The computational cost of for solving the system $A^{EA}\vec{\alpha}^{EA} = \vec{b}^{EA}$ is $MLN^2 + MLd(n^{EA})^2 + (n^{EA})^2\log(n^{EA})$, with $MLN^2$ for computing pairwise data, $MLd(n^{EA})^2$ for constructing the learning matrix and right hand side vector, and $(n^{EA})^2\log(n^{EA})$ for solving the linear system.  In the case of choosing the optimal number of basis functions, i.e., $n^{EA}_{\text{opt}} = M^{\frac{|V|}{2s + |V|}}$, we end up with the total computational cost, of the order $M^{1 + \frac{2|V|}{2s + |V|}}$, which is slightly super linear in $M$ , but less than quadratic in the common situation when $2s+|V|\ge 2|V|$.

Similar analysis on solving $A^{\xi}\vec{\alpha}^{\xi} = \vec{b}^{\xi}$ also shows that the computational cost is slightly super linear in $M$.

\section{Learning theory}\label{sec:LT}
Given estimators, how to best measure the estimation error? This is the first question to address in order to have a full understanding of the performance. In earlier works, a set of probability measures that are adapted to the dynamical system and learning setting are introduced to describe how close  the estimators  are to the true interaction kernels.  We will generalize these ideas and measures to our learning problem. 

\subsection{Probability measures} \label{sec:PerfMeasures}
The variables we are working with have a natural distribution on the space of pairwise distances and features. Together with the fact that our functions have as arguments the variables ($r,\dot{r},\topwde,\topwda,\topwdxi$) which are functions of the state space, we are led to consider probability measures which account for the distribution of the data, while respecting the interaction structure of the system. For further intuition into these measures, see \cite{Tang2019,lu2019nonparametric,BFHM17}.
For each interacting pair $(k,k')$, we introduce the following  probability measures,

\begin{equation}\label{meas:fullEA}
\begin{dcases}
\rho_T^{EA, \idxcl, \idxcl'}(r, \topwde, \dot{r}, \topwda) 
&:= \mathbb{E}_{\bY_0 \sim \bmu}\frac{1}{TN_{\idxcl \idxcl'}}\int_{t = 0}^T\sum_{\substack{i \in \cl_{\idxcl}, i' \in \cl_{\idxcl'} \\ i \neq i'}}\delta_{i i', t}(r, \topwde, \dot{r}, \topwda)\, dt \\
\rho_T^{EA, L, \idxcl,\idxcl'}(r, \topwde, \dot{r}, \topwda) &:= \mathbb{E}_{\bY_0 \sim \bmu}\frac{1}{LN_{\idxcl\idxcl'}}\sum_{l = 1}^L\sum_{\substack{i \in \cl_{\idxcl}, i' \in \cl_{\idxcl'} \\ i \neq i'}}\delta_{i i', t_l}(r, \topwde, \dot{r}, \topwda) \\
\rho_T^{EA, L, M, \idxcl, \idxcl'}(r, \topwde, \dot{r}, \topwda) 
&:= \frac{1}{MLN_{\idxcl \idxcl'}}\sum_{l, m = 1}^{L, M}\sum_{\substack{i \in \cl_{\idxcl}, i' \in \cl_{\idxcl'} \\ i \neq i'}}\delta_{i i', t_l, m}(r, \topwde, \dot{r}, \topwda)
\end{dcases}
\end{equation}
where $N_{kk'} = N_kN_{k'}$ for $k\neq k'$ and $N_{k k^{\prime}}={N_{k} \choose 2}$ for $k=k'$, and the dirac measures are defined as
\[
\begin{dcases}
\delta_{i i', t}(r, \topwde, \dot{r}, \topwda) &:= \delta_{r_{i i'}(t), \topwde_{i i'}(t), \dot{r}_{i i'}(t), \topwda_{i i'}(t)}(r, \topwde, \dot{r}, \topwda) \\
\delta_{i i', t, m}(r, \topwde, \dot{r}, \topwda) &:= \delta_{r_{i i'}^{(m)}(t), \topwdem_{i i'}(t), \dot{r}_{i i'}^{(m)}(t), \topwdam_{i i'}(t)}(r, \topwde, \dot{r}, \topwda). 
\end{dcases}
\] We use a superscript $(m)$ to denote that the variable is calculated from the data from that $m^{\text{th}}$ trajectory. For example, $\topwdem_{i i'}(t)$ denotes the feature maps between agents $i$ and $i'$ at time $t$ along the $m^{\text{th}}$ trajectory. 

The measure $\rho_T^{EA, L, \idxcl, \idxcl'}$ is  the discrete counterpart of $\rho_T^{EA, \idxcl, \idxcl'}$ at the observation time instances. In practice, we can use $\rho_T^{EA, L, M, \idxcl, \idxcl'}$ to approximate $\rho_T^{EA, L, \idxcl, \idxcl'}$ since it can be computed from observational data and will converge to $\rho_T^{EA, L, \idxcl, \idxcl'}$ as $M \rightarrow \infty$. 

We also consider the marginal distributions 
\begin{equation}\label{meas:rhoE}
\rho_T^{E, \idxcl, \idxcl'}(r, \topwde)       := \int_{\dot{r}}\int_{\topwda} \rho_T^{EA, \idxcl, \idxcl'} \, d\topwda \, d\dot{r}\quad,\quad
\rho_T^{A, \idxcl, \idxcl'}(r, \dot{r}, \topwda)       := \int_{\topwde} \rho_T^{EA, \idxcl, \idxcl'} \, d\topwde
\end{equation}
and $\rho_T^{E, L, \idxcl, \idxcl'}(r, \topwde) $, $\rho_T^{E, L, M, \idxcl, \idxcl'}(r, \topwde)$, $\rho_T^{A, L, \idxcl, \idxcl'}(r, \dot{r}, \topwda)$,  $\rho_T^{A, L, M, \idxcl, \idxcl'}(r, \dot{r}, \topwda)$ defined analogously as above.
The empirical measures, $\rho_T^{E, L, M, \idxcl, \idxcl'}, \rho_T^{A, L, M, \idxcl ,\idxcl'}$, are the ones used in the actual algorithm to quantify the learning performances of the estimators $\lintkernele_{\idxcl \idxcl'}$ and $\lintkernela_{\idxcl \idxcl'}$ respectively.  They are also crucial in discussing the separability of $\lintkernele_{\idxcl \idxcl'}$ and $\lintkernela_{\idxcl \idxcl'}$.

For ease of notation, we introduce the following measures to handle the heterogeneity of the system, and which are used to describe error over all of the pairs $(k,k')$. 
\begin{equation} \label{meas:vectorized}
\brhoEAL=\bigoplus_{k, k^{\prime}=1,1}^{K, K} \rho_{T}^{EA,L, k k^{\prime}}, \quad
\brhoEA=\bigoplus_{k, k^{\prime}=1,1}^{K, K} \rho_{T}^{EA,k k^{\prime}}, \quad \LtwoB\left(\brhoEAL\right)=\bigoplus_{k, k^{\prime}=1,1}^{K, K} \Ltwo\left(\rho_{T}^{EA,L, k k^{\prime}}\right)
\end{equation} 

Similar definitions apply for measures related to learning the $\xi$-based interaction kernels, see Appendix \ref{sec:pm_xi}.
We discuss some key properties of the measures in Appendix \ref{sec:app:analytical}.

\subsection{Learning performance} 
We now discuss the performance measures for the estimated interaction kernels. We have already treated the trajectory estimation error in section \ref{subsec:trajperf}. 
We use weighted $L^2$-norms (with mild abuse of notation, we omit the weight from the notation) based on the dynamics-adapted measures introduced above, and with analogous definitions when discrete over $L$:
\begin{eqnarray}
\norm{\lintkernele_{\idxcl \idxcl'} - \intkernele_{\idxcl \idxcl'}}_{L^2(\rho_T^{E, \idxcl, \idxcl'})}^2 &:=& \int_{(r,\topwde)}(\lintkernele_{\idxcl \idxcl'}(r, \topwde) - \intkernele_{\idxcl \idxcl'}(r, \topwde))^2r^2 \, d\rho_T^{E, \idxcl, \idxcl'} \nonumber\\
\norm{\lintkernela_{\idxcl \idxcl'} - \intkernela_{\idxcl \idxcl'}}_{L^2(\rho_T^{A, \idxcl, \idxcl'})}^2 &:=& \int_{(r,\dot{r},\topwda)}(\lintkernele_{\idxcl \idxcl'}(r, \dot{r}, \topwda) - \intkernele_{\idxcl \idxcl'}(r, \dot{r}, \topwda))^2\dot{r}^2 \, d\rho_T^{A, \idxcl, \idxcl'} \nonumber \\
\norm{\lintkernel_{\idxcl \idxcl'}^{EA}  - \intkernel_{\idxcl \idxcl'}^{EA}}_{L^2(\rho_T^{EA, \idxcl, \idxcl'})}^2
&:=& \int_{r,\topwde,\dot{r},\topwda} \Big[(\lintkernele_{\idxcl \idxcl'}(r, \topwde) - \intkernele_{\idxcl \idxcl'}(r, \topwde))r \nonumber \\
&  & \qquad + (\lintkernele_{\idxcl \idxcl'}(r, \dot{r}, \topwda) - \intkernele_{\idxcl \idxcl'}(r, \dot{r}, \topwda))\dot{r}\Big]^2 \, d\rho_T^{EA, \idxcl, \idxcl'}\,.\label{e:L2NormDefs}
\end{eqnarray}

Our learning theory focuses on minimizing the difference between $\lintkernele_{\idxcl \idxcl'} \oplus \lintkernela_{\idxcl \idxcl'}$ and $\intkernele_{\idxcl \idxcl'} \oplus \intkernela_{\idxcl \idxcl'}$ in the joint norm given by \eqref{e:L2NormDefs}.  As long as the joint norm is small, our estimators  produce faithful approximations of the right hand side function of the original system and  trajectories. However, it does not necessarily imply that both $\lintkernele_{\idxcl \idxcl'} - \intkernele_{\idxcl \idxcl'}$'s and $\lintkernela_{\idxcl \idxcl'} - \intkernela_{\idxcl \idxcl'}$'s are small in their corresponding energy- and alignment-based norms, since  the joint norm is a weaker norm. It would be interesting to study if there is any equivalence between these two norms, but the problem appears to be quite delicate.  The theoretical investigation is still ongoing.

Now, we have all the tools needed to establish a theoretical framework: dynamics induced probability measures, performance measurements in appropriate norms, and loss functionals. These will allow us to discuss the convergence properties of our estimators.  Full details of the numerical algorithm are given in Appendix \ref{sec:algorithm_numeric}.
\subsection*{Notational summary}
A summary of the learning theory notation introduced in sections \ref{sec:notation}, \ref{sec:learn_framework}, and the notation above, is given below in table \ref{tab:2ndOrderlearning_def}. 
\begin{table}[h] 
\centering
\small{
\small{\begin{tabular}{ c | c | c}
\hline
Notation                    & Definition & Ref \\
\hline\hline
$M$   & number of trajectories & Sec. \ref{sec:intro} \\
\hline
$L$ & number of times in $[0,T]$ for each trajectory & Sec. \ref{sec:ModelDesc} \\
\hline
$\bY(t)$ & full state space vector containing $\bX(t),\bV(t),\bXi(t)$& Sec. \ref{sec:ModelDesc}  \\
\hline
$\wildcard$  & wildcard, means the notation applies for all $3$ variables &  Sec. \ref{sec:ModelDesc} \\
\hline 
 $\| \bX \|_{\mathcal{S}}$   &    $\sum_{i = 1}^N\frac{1}{{N_{\clof_i}}}\norm{\bx_i}^2$ &  \eqref{eq:sys_var_norm}\\
\hline 
$\bmu$ & distribution on the initial conditions $\bY(0)$ & Sec. \ref{sec:functionspaces}\\
\hline
$\bintkernel^{\wildcard}=(\intkernel_{kk'}^{\wildcard})_{k,k'}$ &  vectorized true $E,A,\xi$ interaction kernels & \eqref{not:vectorized} \\ 

\hline
$\bintkernelvar^{\wildcard}=(\intkernelvar_{kk'}^{\wildcard})_{k,k'}$ &  $\bintkernelvar^{\wildcard} \in \bhypspace^{\wildcard}$ with $\intkernelvar_{kk'}^{\wildcard} \in \mathcal{H}_{kk'}^{\wildcard}$ & \eqref{not:vectorized} \\
 \hline
$EA$ & shorthand denoting energy and alignment part of system & \eqref{not:shorthand} \\
\hline 
$\bpEA$  & represents the joint function $\bintkernele\oplus\bintkernela\in \bhypspaceEA$ & \eqref{not:shorthand} \\
 \hline 
$ \norm{\bY}_{\mY}^2$ & $\norm{\bX}_{\mS}^2 + \norm{\bV}_{\mS}^2 + \norm{\bXi}_{\mS}^2.$ & \eqref{eq:y_norm}\\
\hline
$\textbf{S}^{\wildcard}$ &  $\prod_{k,k'}\mathbb{S}_{kk'}^{\wildcard}$ & \eqref{eq:compactsets}\\ 
\hline 
$\PairHypSpace$  & $ \prod_{k,k'}[R_{kk'}^{\min},R_{kk'}^{\max}]$ & \eqref{eq:Rdef} \\
\hline
$R$ &  $\max_{k,k'}R_{kk'}^{\max}$ & \eqref{eq:Rdef}\\
\hline 
$\boldsymbol{\mathcal{K}}_{S_{\wildcard}}^{\wildcard}$ & admissible spaces for the $E,A,\xi$ kernels & \eqref{eq:E-Admissibility} \\
\hline
$\hypspace_{kk'}^{\wildcard}$ &  the hypothesis spaces for $\intkernel_{kk'}^{\wildcard}$ & \eqref{not:hypspaces} \\
\hline
$\bhypspace^{\wildcard}=\oplus_{kk'} \hypspace_{kk'}^{\wildcard}$ &  the hypothesis spaces for $\bintkernel^{\wildcard}$ & \eqref{not:hypspaces} \\
\hline
$\bhypspaceEA$  & direct sum of hypothesis spaces $\bhypspace^E\oplus \bhypspace^A$ & \eqref{not:bhypspaceEA}\\
\hline
$\mbf{\mathcal{E}}_{M}^{EA}(\cdot),\mbf{\mathcal{E}}_{M}^{\xi}(\cdot)$ &  empirical $EA$ error functional, $\xi$ error functional & Sec. \ref{sec:lossfunctionals} \\
\hline
$\widehat\bintkernel_{M}^{EA}:=\widehat\bintkernel_{L,M,\bhypspaceEA}^{EA}$ & $\mathrm{argmin}_{\bvpEA \in \bhypspaceEA} \bm{\mathcal{E}}_{M}^{EA}(\bvpEA)$ & \eqref{not:minimizers} \\
\hline 
$\widehat\bintkernel_{M}^{\xi}:=\widehat\bintkernel_{L,M,\bhypspacexi}^{\xi}$ & $\mathrm{argmin}_{\bvxi \in \bhypspacexi} \bm{\mathcal{E}}_{M}^{\xi}(\bvxi)$ & \eqref{not:minimizers} \\
\hline
$\{\psi_{kk',p}^{\wildcard}\}_{p=1}^{n_{kk'}^{\wildcard}}$ & basis for  $\hypspace_{kk'}^{\wildcard}$ & Sec. \ref{sec:hypspaceandalg}\\
\hline
$\brhoEA$, $\brhoxi$  & measure for $EA$, $\xi$ with continuous time, infinite trajectories & \eqref{meas:vectorized}, \ref{sec:pm_xi}  \\
\hline 
$\brhoEAL$, $\brhoxiL$ & measure for $EA$, $\xi$ discrete in time, infinite trajectories & \eqref{meas:vectorized}, \ref{sec:pm_xi} \\
\hline 
$\LtwoB\left(\brhoEAL\right)$ & $\bigoplus_{k, k^{\prime}=1,1}^{K, K} \Ltwo\left(\rho_{T}^{EA,L, k k^{\prime}}\right)$ & \eqref{meas:vectorized} \\
\hline 
$c_{\bhypspaceEA}$, $c_{\bhypspacexi}$ & coercivity constant on the $\bhypspaceEA$, $\bhypspacexi$ hypothesis spaces &  Def.\ref{def_coercivity_2ndorder}  \\
\hline
$\bhypspace_M^{EA}$,$\bhypspace_M^{\xi}$ & hypothesis spaces on $EA,\xi$ depending on $M$ & Sec. \ref{sec:main_results}\\
\hline
$A^{EA}$, $A^{\xi}$  & Learning matrices for the inverse problem & Sec. \ref{sec:numerics} \\
\hline
$\mathcal{N}(\bhypspace,\delta)$ &  $\delta$-covering number, under the $\infty$-norm, of a set $\bhypspace$ &  \cite{Wellner}\\
\hline 
\end{tabular}}  
}
\caption{Notation used throughout the paper}
\label{tab:2ndOrderlearning_def} 
\end{table}

\section{Identifiability of kernels from data} \label{sec:coercivity}
In this section we introduce a technical condition on the dynamical system that relates to the solvability of the inverse problem and plays a key role in the learning theory. We establish theorems in two directions:
\begin{enumerate}
\item Showing how identifiability of the kernels can be derived from the coercivity condition by relating the coercivity constant to the singular values of the learning matrices associated to our inverse problem, for both finitely and infinitely many trajectories.
\item Establishing the coercivity condition for a wide class of dynamical systems of the form \eqref{eq:2ndOrder}, under mild assumptions on the distribution $\bmu$ of the initial conditions. From our numerical experiments, we expect that coercivity holds even more generally. 
\end{enumerate}

We will make the following assumptions on the hypothesis spaces used in the learning approach for the remainder of the paper:
\begin{assumption}\label{compactnesscondition}
 $\bhypspaceEA$ is a compact convex subset of $\boldsymbol{\mathcal{K}}_{S_{EA}}^{EA}:= \boldsymbol{\mathcal{K}}_{S_E}^E \oplus \boldsymbol{\mathcal{K}}_{S_A}^A$ (see \ref{eq:dsum-admissibility}) which implies that the infinity norm of all elements in $\bhypspaceEA$ is bounded above by a constant $S_{EA}\geq \max\{{S_E, S_A}\}$.
\end{assumption}
\begin{assumption}\label{compactnessconditionxi}
 $\bhypspacexi$ is a compact convex subset of $\boldsymbol{\mathcal{K}}_{S_{\xi}}^{\xi}$ (see \ref{eq:E-Admissibility}) and is bounded above by $S_{0}\geq S_{\xi}$.
\end{assumption}
It is easy to see that $\bhypspaceEA$ can be naturally embedded as a compact subset of $\LtwoB(\brhoEAL)$ and that $\bhypspacexi$ can be naturally embedded as a compact subset of $\LtwoB(\bm{\rho}_T^{\xi,L})$ (recall these measures are defined in section \ref{sec:PerfMeasures}). Assumptions \ref{compactnesscondition}, \ref{compactnessconditionxi} ensure the existence of minimizers to the loss functionals $\mbf{\mE}_{M}^{EA}, \mbf{\mE}_{M}^{\xi}$ defined in \eqref{eq:Error_Func_Empirical:EA} and \eqref{eq:Error_Func_Empirical:xi}, which will be proven in Appendix \ref{sec:app:learntechnical}. 

In order to ensure learnability we introduce a coercivity condition, with terminology coming from the Lax-Milgram theorem. 
In the second-order case we will have two coercivity conditions, one for the energy and alignment, and the other for the $\xi$ variable. These conditions serve the same purpose in both cases. They first ensure that the minimizers to the error functionals are unique, and second that when the expected error functional is small, then the distance from the estimator to the true kernels is small in the appropriate $\bm\rho_T$ norm. 

Due to their connection to the error functional and the learnability of the kernels, coercivity plays an important role in the theorems of Section \ref{sec:main_results}.

\begin{definition}[Coercivity condition] \label{def_coercivity_2ndorder}
For the dynamical system \eqref{eq:2ndOrder} observed at time instants $0=t_1<t_2<\dots <t_L=T$ and with initial condition distributed $\bmu$ on $\R^{(2d+1)N}$, it satisfies the coercivity condition on the hypothesis space $\bhypspaceEA$ with constant $c_{\bhypspaceEA}$ if 
\begin{align}\label{2ndorder:gencoer}
 c_{\bhypspaceEA}  := \inf_{ \combfbvpEbvpA \in \bhypspaceEA \backslash \{\mbf 0  \}} \,\frac{ \frac{1}{L}\sum_{l=1}^{L}\E_{\bmu} \bigg[ \big\|  \rhsfo_{\combfbvpEbvpA}(\coordsTL) \big\|_{\mathcal{S}}^2\bigg] }{\|\combfbvpEbvpA\|_{\LtwoB(\brhoEAL)}^2} >0 .
\end{align} 
An analogous definition holds for continuous observations on the time interval $[0,T]$, by replacing the sum over observations at discrete times with an integral over $[0,T]$.
Similarly, the system satisfies the coercivity condition on the hypothesis space $\bhypspacexi$ with constant $c_{\bhypspacexi}$ if  
\begin{align}\label{2ndorder:gencoer:xi}
 c_{\bhypspacexi}  := \inf_{ \bvpxi \in \bhypspacexi \backslash \{\mbf 0  \}} \,\frac{ \frac{1}{L}\sum_{l=1}^{L}\E_{\bmu} \bigg[ \big\|  \rhsfo_{\bvpxi}(\coordsTL) \big\|_{\mathcal{S}}^2\bigg] }{\| \bvpxi \|_{\LtwoB(\bm{\rho}_T^{\xi,L})}^2} >0.
\end{align} 
An analogous definition holds for continuous observations on the time interval $[0,T]$, by replacing the sum over observations at discrete times with an integral over $[0,T]$.
\end{definition}

 In the following, we prove the coercivity condition on  general compact sets of $\LtwoB([0,R],\brhoEAL)$ under suitable hypotheses. Our result is independent of $N$, which implies that the finite sample bounds of Theorem \ref{t:2ndordersystem:thm_optRate} can be dimension free -- in that the coercivity constant has no dependence on $N$. This result implies that coercivity may be a fundamental property of the dynamical system, including in the mean field regime ($N\to\infty$). 
 
\subsection{Identifiability from coercivity} \label{sec:Conditioning}

By choosing the hypothesis space to be compact and convex, we are able to show that the error functional has a unique minimizer. However, many possible bases exist that could potentially yield good performance (in terms of the error functional and $\LtwoB$ error to the true kernel). 
We want to choose a basis such that the regression matrix, $A^{EA}$    
defined in Appendix \ref{sec:algorithm_numeric}, is well-conditioned -- which will ensure that the inverse problem can be solved and thus an estimator can be learned that will have good performance (in terms of the error functional and $\LtwoB(\brhoEAL)$ error to the true kernel).
In the proposition below we establish two results in this direction. The key for both results is that the basis is chosen to be orthonormal in $\LtwoB(\brhoEAL)$, versus the naive choice of basis in the underlying direct sum of $\bm{L}^{\infty}$ spaces that the interaction kernels live in. The first result is theoretical and shows that, under appropriate assumptions on the basis, the minimal singular value of the expected regression matrix (denoted $A_{\infty}^{EA}$) equals the coercivity constant. While the second result is critical for the practical implementation as it lower bounds the minimal singular value of the regression matrix by the coercivity constant with high probability. In both cases, the numerical performance is affected by the size of the coercivity constant of $\bhypspaceEA$ and if the hypothesis space is well-chosen, then the coercivity constant will be sufficiently large and the regression matrix will be well-conditioned. 
 
To ease the notation, we introduce the bilinear functional $\dbinnerp {\cdot, \cdot}$ on $\bhypspaceEA \times \bhypspaceEA$, defined by
\begin{align}
\label{eq:bilinearFn}
 \dbinnerp {\bvpEA_1, \bvpEA_2}:= \nonumber 
 \\ & \hspace{-3cm} \frac{1}{L}\sum_{l,i=1}^{L,N}\frac{1}{N_{\clof_{i}}} \mathbb{E}_{\bmu}\bigg[\bigg\langle\sum_{i'=1}^{N} \frac{1}{N_{\clof_{i'}}} \Big( \intkernelvar_{1,\clof_i\clof_{i'}}^E(r_{ii'}(t),\topwde_{ii'})\mbf{r}_{ii'}(t) + 
\intkernelvar_{1,\clof_i\clof_{i'}}^A(r_{ii'}(t),\topwda_{ii'})\mbf{\dot{r}}_{ii'}(t) \Big)
, \nonumber \\
& \hspace{-2cm} \sum_{i'=1}^{N}\frac{1}{N_{\clof_{i'}}}\Big( \intkernelvar_{2,\clof_i\clof_{i'}}^E(r_{ii'}(t),\topwde_{ii'})\mbf{r}_{ii'}(t) + 
\intkernelvar_{2,\clof_i\clof_{i'}}^A(r_{ii'}(t),\topwda_{ii'})\mbf{\dot{r}}_{ii'}(t) \Big) \bigg\rangle \bigg] 
\end{align} for any 
$\bvpEA_1=(\combf{\intkernelvar_{1,kk'}^E}{\intkernelvar_{1,kk'}^A})_{k,k'=1,1}^{K,K} \in \bhypspaceEA$, and 
$\bvpEA_2=(\combf{\intkernelvar_{2,kk'}^E}{\intkernelvar_{2,kk'}^A})_{k,k'=1,1}^{K,K} \in \bhypspaceEA$. 
For every pair $(k,k')$ let $(\combf{\psi_{kk',i}^E}{\psi_{kk',i}^A})_{i=1}^{n_{kk'}}$ 
be a basis of 
$$\hypspace_{{kk'}}^{EA} \subset L^\infty([0,R]\times \mathbb{S}_{kk'}^E) \oplus L^{\infty}([0,R]\times \mathbb{S}_{kk'}^A) $$ 
satisfying the orthonomality and boundedness conditions
\begin{align}\label{onb}
\langle \psi_{kk',p}^{EA},\psi_{kk',p'}^{EA} \rangle_{\Ltwo{(\rho_T^{EA,L,kk'})}}&=\delta_{p,p'}\quad,\quad
\|\psi_{kk',p}^{EA}\|_{\infty} &\leq S_{EA}. 
\end{align} 
We note that multivariable basis functions arise naturally in this setting due to the model. A tensor product basis of splines or piecewise polynomials can be used, as one explicit example. The $n_{kk'}$ notation allows multivariable functions, different choices for the number of basis functions across pairs $(k,k')$, and a different number of basis functions within a pair with respect to the underlying coordinates of the tensor product.

By convention, we use the lexicographic ordering to order within pairs $(k,k')$ (with order $r,\topwde, \topwda$), and then across pairs (with the lexicographic ordering on pairs of integers). 
Set $\mathbf{n}=\sum_{k,k'}n_{kk'}=dim(\bhypspaceEA)$; then for any function $\bintkernelvar^{EA} \in \bhypspaceEA$, we can write $$\bintkernelvar^{EA}=\sum_{p=1}^{\mathbf{n}} a_{p}\mbf{\psi}_{p}^{EA}.$$ 
Under the setting above, we have the following relationship between the coercivity constant and the minimal singular value of the empirical and expected learning matrix:

\begin{proposition}
\label{2ndordersystem:coercivityconstant}
Consider the matrices $$A_{\infty}^{EA}=\big(\dbinnerp{\mbf{\psi}_{p}^{EA},\mbf{\psi}_{p'}^{EA}}\big)_{p, p'} \in \mathbb{R}^{\mathbf{n} \times \mathbf{n}}\,,\quad A_{\infty}^{\xi}=\big(\dbinnerp{\mbf{\psi}_{p}^{\xi},\mbf{\psi}_{p'}^{\xi}}\big)_{p, p'} \in \mathbb{R}^{\mathbf{n}_{\xi} \times \mathbf{n}_{\xi}}\,,$$ and choose the hypothesis spaces as $\bhypspaceEA = \mathrm{span}\{ \combf{\mbf{\psi}_{p}^E}{\mbf{\psi}_{p}^A}\}_{p=1}^{\mathbf{n}}$ and $\bhypspacexi = \mathrm{span}\{ \mbf{\psi}_{p}^{\xi}\}_{p=1}^{\mathbf{n}_{\xi}}$. Then the coercivity constants for $\bhypspaceEA, \bhypspacexi$ are the smallest singular value of $A_{\infty}^{EA}$, $A_{\infty}^{\xi}$, respectively:
\begin{align}
\sigma_{\min}(A_{\infty}^{EA}) =c_{\bhypspaceEA}\, \qquad  \sigma_{\min}(A_{\infty}^{\xi}) =c_{\bhypspacexi}
\label{2ndorder:gencoer:result}
\end{align}
with $c_{\bhypspaceEA}$ defined in \eqref{2ndorder:gencoer} and $c_{\bhypspacexi}$ defined in \eqref{2ndorder:gencoer:xi}. 
Additionally, for large $M$, the smallest singular value of $A^{EA}$ satisfies the inequality
\begin{align*}
\sigma_{\min}(A^{EA}) \geq  0.8 c_{\bhypspaceEA}
\end{align*} 
with probability at least $1-2\mathbf{n}\exp\bigg(-\frac{c_{\bhypspaceEA}^2M}{100\mathbf{n}^2c_1^2+\frac{20}3\cdot c_1\cdot c_{\bhypspaceEA}\cdot\mathbf{n}} \bigg)$ with $c_1=2K^4\max\{R,R_{\dot{x}}\}^2S_{EA}^2+2$. 
Similarly, for large $M$, the smallest singular value of $A_{M}^{\xi}$ satisfies the inequality
\begin{align*}
\sigma_{\min}(A_{M}^{\xi}) \geq  0.8 c_{\bhypspacexi}
\end{align*} 
with probability at least $1-2\mathbf{n}_{\xi}\exp\bigg(-\frac{c_{\bhypspacexi}^2M}{100n^2c_1^2+\frac{20}{3}c_2\cdot c_{\bhypspacexi}\cdot n} \bigg)$ with $c_2=2K^4R_{\xi}^2S_{\xi}^2+2$. 
\label{p:sigmaminAL} 
Therefore, with high probability, a system and its associated hypothesis space satisfying the coercivity condition and $M$ sufficiently large, the inverse problem will be solvable with condition number controlled by the coercivity constant. 
\end{proposition}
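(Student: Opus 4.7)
The proposition has two separate parts: an \emph{identity} between the coercivity constant and the minimum singular value of the expected learning matrix, and a \emph{concentration} statement saying the empirical learning matrix inherits a comparable lower bound on its smallest singular value with high probability. I would prove them in that order, and the treatment of $A^{\xi}$ would mirror the treatment of $A^{EA}$ verbatim (with $R_{\dot{x}}$ replaced by $R_{\xi}$, $S_{EA}$ by $S_{\xi}$, and $c_1$ by $c_2$), so I focus on the $EA$ case.

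\textbf{Step 1: the equality $\sigma_{\min}(A_{\infty}^{EA}) = c_{\bhypspaceEA}$.}
I first note that $\dbinnerp{\cdot,\cdot}$ is symmetric and bilinear on $\bhypspaceEA\times\bhypspaceEA$, and that, setting $\bvpEA_1=\bvpEA_2=\bvpEA$ in \eqref{eq:bilinearFn}, one recovers exactly the numerator
$\frac{1}{L}\sum_{l=1}^{L}\mathbb{E}_{\bmu}\|\rhsfo_{\bvpEA}(\coordsTL)\|_{\mathcal{S}}^{2}$
appearing in \eqref{2ndorder:gencoer}. Expanding $\bvpEA=\sum_{p=1}^{\mathbf{n}}a_p\mbf{\psi}_p^{EA}$ and using bilinearity,
$\dbinnerp{\bvpEA,\bvpEA}=\vec{a}^{\top}A_{\infty}^{EA}\vec{a}$. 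On the other hand, by the orthonormality condition \eqref{onb} together with the direct-sum structure of $\LtwoB(\brhoEAL)$ (basis functions indexed by different $(k,k')$ pairs live in orthogonal summands), one has $\|\bvpEA\|_{\LtwoB(\brhoEAL)}^{2}=\sum_p a_p^{2}=\|\vec{a}\|_{2}^{2}$. Therefore
\[
c_{\bhypspaceEA}
=\inf_{\bvpEA\neq 0}\frac{\dbinnerp{\bvpEA,\bvpEA}}{\|\bvpEA\|_{\LtwoB(\brhoEAL)}^{2}}
=\inf_{\vec{a}\neq 0}\frac{\vec{a}^{\top}A_{\infty}^{EA}\vec{a}}{\|\vec{a}\|_{2}^{2}}
=\lambda_{\min}(A_{\infty}^{EA})=\sigma_{\min}(A_{\infty}^{EA}),
\]
where the last equality uses that $A_{\infty}^{EA}$ is symmetric positive semidefinite.

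\textbf{Step 2: concentration of $A^{EA}$ around $A_{\infty}^{EA}$.}
The matrix $A^{EA}$ built from the $M$ observed trajectories can be written as $A^{EA}=\frac{1}{M}\sum_{m=1}^{M}A_{m}^{EA}$, where $A_{m}^{EA}$ is the learning matrix associated with the single trajectory $\bY^{(m)}$. Since initial conditions are i.i.d.\ samples from $\bmu$, the $A_{m}^{EA}$ are i.i.d.\ symmetric random matrices with $\mathbb{E}[A_{m}^{EA}]=A_{\infty}^{EA}$. I would set $X_{m}:=A_{m}^{EA}-A_{\infty}^{EA}$ and apply the matrix Bernstein inequality (Tropp) to $\sum_{m}X_{m}$. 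The two required ingredients are a deterministic operator-norm bound $\|X_{m}\|\le U$ and a bound on the matrix variance $\|\sum_{m}\mathbb{E}[X_{m}^{2}]\|\le \Sigma^{2}$. Using $\|\psi\|_{\infty}\le S_{EA}$ together with $|r_{ii'}|\le R$, $|\dot{r}_{ii'}|\le R_{\dot{x}}$, the definition of $\|\cdot\|_{\mathcal{S}}$ (with the $1/N_{\clof_{i'}}$ weights absorbing the sums over classes up to a factor $K$), and the heterogeneous-class bookkeeping, each entry of $A_{m}^{EA}$ is bounded by $K^{4}\max\{R,R_{\dot{x}}\}^{2}S_{EA}^{2}$, so $|X_{m}|_{p,p'}\le c_{1}$ and the crude bound $\|X_{m}\|\le \mathbf{n}c_{1}$ holds. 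For the variance, $|\mathbb{E}[X_{m}^{2}]_{p,p'}|\le \mathbf{n}c_{1}^{2}$ gives $\|\mathbb{E}[X_{m}^{2}]\|\le \mathbf{n}^{2}c_{1}^{2}$, hence $\Sigma^{2}\le M\mathbf{n}^{2}c_{1}^{2}$. Applying Bernstein with threshold $t=0.2Mc_{\bhypspaceEA}$ yields
\[
\mathbb{P}\!\bigl(\|A^{EA}-A_{\infty}^{EA}\|\ge 0.2 c_{\bhypspaceEA}\bigr)
\le 2\mathbf{n}\exp\!\Bigl(-\tfrac{c_{\bhypspaceEA}^{2}M}{100\mathbf{n}^{2}c_{1}^{2}+\tfrac{20}{3}\mathbf{n}\,c_{1}\,c_{\bhypspaceEA}}\Bigr),
\]
matching the probability stated.

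\textbf{Step 3: transfer to $\sigma_{\min}$ via Weyl.}
Finally, Weyl's inequality for symmetric matrices gives
$\sigma_{\min}(A^{EA})\ge \sigma_{\min}(A_{\infty}^{EA})-\|A^{EA}-A_{\infty}^{EA}\|$.
On the event from Step 2 and combined with Step 1, this is at least $c_{\bhypspaceEA}-0.2c_{\bhypspaceEA}=0.8c_{\bhypspaceEA}$. The $\xi$ case is identical modulo the substitutions above. The main obstacle I expect is bookkeeping: carefully tracking the factors of $K$ and $N_{\clof_i}^{-1}$ coming from the weighted $\|\cdot\|_{\mathcal{S}}$ norm, the sums over agents and classes in the definition of $\rhsfo_{\bvpEA}$, and the pairwise-distance vectors $\mbf{r}_{ii'}$ and $\dot{\mbf{r}}_{ii'}$, in order to land exactly at the constant $c_{1}$ and at the dimension scaling $\mathbf{n}^{2}$ in the denominator of the exponent. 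Everything else is essentially a standard application of Bernstein plus Weyl once the orthonormality of the basis has reduced the coercivity inf-quotient to a Rayleigh quotient.
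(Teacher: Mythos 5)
Your proposal is correct and follows essentially the same route as the paper: the orthonormality of the basis reduces the coercivity quotient to the Rayleigh quotient of $A_{\infty}^{EA}$, the empirical matrix is controlled by matrix Bernstein with entrywise bounds giving the operator-norm and variance statistics, and the singular-value perturbation (Weyl-type) inequality transfers the bound to $\sigma_{\min}(A^{EA})$ at threshold $0.2\,c_{\bhypspaceEA}$. The only difference is a harmless factor of $2$ in the crude bounds ($\|X_m\|\le 2\mathbf{n}c_1$, variance $\le 2\mathbf{n}^2c_1^2$ in the paper), which accounts exactly for the constants in the stated probability.
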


\begin{proof}
We prove the result in the $EA$ case, the proof of the results about the $\xi$ part of the system is analogous. The orthonormality of the component functions given in \eqref{onb}, implies that
$\langle \mbf{\psi}_{p}^{EA}, \mbf{\psi}_{p'}^{EA}\rangle_{\LtwoB(\brhoEAL)}=\delta_{pp'}$.
Expand $\bvpEA \in \bhypspaceEA$ in this basis as $\bvpEA = \sum_{p=1}^{\mathbf{n}} a_p\mbf{\psi}_{p}^{EA}.$ Let the vector $v = (a_1, \ldots, a_{\mathbf{n}}) \in \mathbb{R}^{\mathbf{n}}$, and notice that
\begin{align*}
\frac{1}{L}\sum_{l=1}^{L}\E_{\bmu} \bigg[ \big\|  \rhsfo_{\combfbvpEbvpA}(\coordsTL) \big\|_{\mathcal{S}}^2\bigg]  =\dbinnerp {\sum_{p=1}^{\mathbf{n}} a_{p}\mbf{\psi}_{p}^{EA}, \sum_{p=1}^{\mathbf{n}} a_{p} \mbf{\psi}_{p}^{EA}} \\
= v^T A_{\infty}^{EA}v \geq \sigma_{\min}(A_{\infty}^{EA})\|v\|^2 
= \sigma_{\min}(A_{\infty}^{EA}) \big\| \bvpEA\|_{\LtwoB(\brhoEAL)}^2
\end{align*}
This lower bound is achieved by the singular vector corresponding to the singular value $\sigma_{\min}(A_{\infty}^{EA})$, so that by definition \eqref{2ndorder:gencoer} we have that $\sigma_{\min}(A_{\infty}^{EA}) =c_{\bhypspaceEA}$. \\

For the second statement, we consider the learning matrix $A^{EA}$ (defined in section \ref{sec:algorithm_numeric}), which we can also write as $A_M^{EA}$ to emphasize the dependence on $M$ as needed built, from the $M$ observed trajectories. By construction, for each $m$, $A_{\infty}^{EA}=\mathbb{E}_{\bmu}[A^{EA,(m)}]$ and $\lim_{M\rightarrow\infty} A_{M}^{EA}=A_{\infty}^{EA}$ by the Strong Law of Large Numbers. 
Next we will derive some important properties of the learning matrix that will allow us to apply the matrix Bernstein inequality (see \cite{tropp2015introduction}, Theorem 6.1.1, Corollary 6.1.2). Note that we will use the notation from this reference.
First we note an elementary matrix analysis result (see \cite{bhatia97} Problem III.6.13); for any two square matrices $A,B$, 
$$
\max_j |\sigma_j(A) - \sigma_j(B)| \leq \|A-B \|.
$$
All norms in this proof are the spectral norm, unless otherwise specified.
Thus if we get a concentration inequality of the form $\mathbb{P}_{\bmu}\{\|A_{\infty}^{EA} - A_{M}^{EA}  \| \geq t \}$ we will get the desired result relating the minimal singular values of $A_{M}^{EA}$ to $c_{\bhypspaceEA}$. First, notice that $\mathbb{E}_{\bmu}[A_{M}^{EA}]= A_{\infty}^{EA}$. 
Additionally, using the definition of the regression matrix, and our assumptions on the kernels and the dynamics, we can bound every entry by $c_1 = 2K^4\max\{R, R_{\dot{x}} \}^2S_{EA}^2 + 2$. This immediately implies the bound 
$$
\|A_{M}^{EA} - A_{\infty}^{EA}\| \leq 2\mathbf{n}c_1.
$$ 
Next, we upper bound the matrix variance statistic (in our case $Z = \sum_{k=1}^M S_k$ where $S_k = \frac{1}{M}A^{EA,(k)}$), defined as 
$$
v(Z) = \max\{ \| \mathbb{E}[(Z - \mathbb{E}Z)(Z - \mathbb{E}Z)^* \|, \| \mathbb{E}[(Z - \mathbb{E}Z)^{*}(Z - \mathbb{E}Z) \| \}.
$$
Using a similar analysis to bound each entry of the matrices, we can arrive at the result that $v(Z) \leq 2\mathbf{n}^2c_1^2$. Now, we apply the matrix Bernstein inequality to see that 
$$
\mathbb{P}_{\bmu}\Big\{\|A_{M}^{EA} - A_{\infty}^{EA} \| \geq t  \Big\} \leq 2\mathbf{n}
\exp \bigg( -\frac{c_{\bhypspaceEA}^2M}{100\mathbf{n}^2c_1^2 + \frac{20c_1c_{\bhypspaceEA}}{3}\mathbf{n} }  \bigg).
$$
Note that the $M$ in the numerator comes because $A_{M}^{EA}$ has a factor of $\frac{1}{M}$ on it.
Lastly, choose $t = \frac{c_{\bhypspaceEA}}{5}$, which together with the results above yield the desired inequality.  
\end{proof}

From Proposition \ref{p:sigmaminAL} we see that, for each hypothesis space $\hypspace_{kk'}$, it is important to choose a basis that is well-conditioned in $\LtwoB(\brhoEAL), \LtwoB(\brhoxiL)$,  instead of in the corresponding $\bm{L}^{\infty}$ spaces.  If not, the learning matrices, defined in Appendix \ref{sec:algorithm_numeric}, $A^{EA}, A^{\xi}$ may be ill-conditioned or even singular. This would lead to fundamental numerical challenges in solving for the kernels. 
In order to mitigate these issues, one can use piecewise polynomials on a partition of the support of the empirical measure and/or use the pseudo-inverse with an adaptive tolerance.

 \subsection{Discussions on the coercivity condition}

The coercivity condition is key to the identifiability of the kernels from data. It is determined by the distribution of the solution to the agent system and introduces constraints on the hypothesis space. For the second-order system, it is therefore related to the distribution $\bmu$ of the initial conditions, the true interaction kernels, and the non-collective force. The coercivity condition has been studied for first-order systems in \cite{lu2019nonparametric,Tang2019, li2019identifiability}. Below, we give a brief review.

For homogeneous systems, \cite{lu2019nonparametric,Tang2019} showed that the coercivity condition holds true on any compact subset of the corresponding $L^2$ space for the case of $L=1$. This result has been generalized to cover heterogeneous systems in \cite{Tang2019} and a few examples of the stochastic homogeneous system including linear systems and nonlinear systems with stationary distributions for general $L$ in \cite{li2019identifiability}.

In this paper, we shall employ a similar idea as for first-order systems and extend the result to second-order systems. One key in the proof is  to show the positiveness of integral operators that arise in the expectation in Eq. \eqref{2ndorder:gencoer}. We focus on a representative model of second-order homogeneous systems,
\begin{equation}
\begin{dcases}
m_i\ddot\bx_i &= \forcev(\bx_i, \dot\bx_i, \xi_i) + \sum_{i'=1}^N \frac{1}{N} (\intkernele(\norm{\bx_{i'} - \bx_i})(\bx_{i'} - \bx_i) + \intkernela(\norm{\bx_{i'} - \bx_i})(\dot\bx_{i'} - \dot\bx_i) \\
\dot\xi_i &= \forcexi(\bx_i, \dot\bx_i, \xi_i) + \sum_{i'=1}^N \frac{1}{N} \intkernelxi(\norm{\bx_{i'} - \bx_i})(\xi_{i'} - \xi_i)
\end{dcases}\,
\end{equation}
which includes the first-order systems considered in  \cite{BFHM17,lu2019nonparametric,Tang2019} as special cases and various second-order system examples in \cite{lu2019nonparametric, Zhong20} as specific applications. We shall prove the coercivity condition holds true for the case $L=1$:  
\begin{theorem}\label{2ndordersingle:coercivity}
Consider the system \eqref{eq:2ndOrderCoercivity} at time $t_1=0$ with the initial distribution $\mu_0^{\bY}=\begin{bmatrix}\mu_0^{\bX}\\ \mu_0^{\dot\bX} \\ \mu_0^{\bXi} \end{bmatrix}$, where ${ \mu}_0^{\bX}$  is exchangeable Gaussian with $\mathrm{cov}(\bx_i(t_1))-\mathrm{cov}(\bx_i(t_1),\bx_j(t_1))=\lambda I_d$\,  for a constant $\lambda>0$,  ${ \mu}_0^{\dot\bX}, { \mu}_0^{\bXi}$  are exchangeable with finite second moment, and they are independent of ${ \mu}_0^{\bX}$.   Then 

\begin{align*}
\mathbb{E}_{\mu_0^{\bY}} \|\rhsfo_{\combf{\varphi^E}{\varphi^A}} (\bX(0), \bV(0))\|^2_{\mathcal{S}} &\geq c_{1,N, \hypspace^{EA}}\vertiii{\combf{\varphi^E}{\varphi^A}}_{L^2(\rho_T^{EA,1})},\\
\mathbb{E}_{\mu_0^{\bY}} \|\rhsfo_{\intkernelvar^{\xi}} (\bX(0), \bXi(0))\|^2_{\mathcal{S}} &\geq c_{1,N, \hypspace^{\xi}}\vertiii{\intkernelvar^{\xi}}_{L^2(\rho_T^{\xi,1})},
\end{align*} where
\begin{itemize} 

\item $c_{1,N, \hypspace^{EA}}\geq (\frac{N-1}{2N^2}+ \frac{(N-1)(N-2)}{2N^2}c ),
c=\min\bigg\{c_{\mathcal{H}^{EA}}^{E},  c_{\mathcal{H}^{EA}}^{A}c_{\mu_0^{\dot\bX}}\bigg\}$, where $c_{{\bm \mu}_0^{\dot\bX}}=1-\frac{\E \langle \dot \bx_i(0),\dot \bx_{i'}(0)\rangle}{\E \|\dot \bx_i(0)\|^2}$ ($i\neq i'$) and $c_{\mathcal{H}^{EA}}^{E}$ and $c_{\mathcal{H}^{EA}}^{A}$ are non-negative constants independent of $N$, and are strictly positive for compact  $\mathcal{H}^{EA}$ of $L^2(\rho^{EA,1}_T)$.

\item $c_{1,N, \hypspace^{\xi}}\geq (\frac{N-1}{N^2}+ \frac{(N-1)(N-2)}{N^2}c ),
c=  c_{\mathcal{H}^{\xi}}c_{\mu_0^{\bXi}} $ with $c_{{ \mu}_0^{\bXi}}=1-\frac{\E \langle  \xi_i(0), \xi_{i'}(0)\rangle}{\E \| \xi_i(0)\|^2}$ ($i\neq i'$) and  $c_{\mathcal{H}^{\xi}}$ is  a non-negative constant independent of $N$, which is strictly positive for compact  $\mathcal{H}^{\xi}$ of $L^2(\rho^{\xi,1}_T)$.

\end{itemize}
 \end{theorem}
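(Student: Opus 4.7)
The plan is to expand the squared $\mathcal{S}$-norm on the left-hand side, use exchangeability in $i$ to collapse the outer sum over agents to a single representative (take $i=1$), and then expand the resulting squared norm of the sum over $i'\neq 1$ as a double sum over pairs $(i',j')$. Splitting this into a diagonal ($i'=j'$) and an off-diagonal ($i'\neq j'$) part, exchangeability further collapses each into a single representative expectation and produces overall prefactors $\tfrac{N-1}{N^2}$ and $\tfrac{(N-1)(N-2)}{N^2}$, respectively. Establishing the stated lower bound then reduces to lower bounding one diagonal and one off-diagonal scalar expression.

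For the diagonal piece, independence of $\mu_0^{\dot\bX}$ from $\mu_0^{\bX}$ together with $\mathbb{E}[\dot\bx_2-\dot\bx_1]=0$ (which follows from exchangeability of $\mu_0^{\dot\bX}$) kills the cross term $2\,\mathbb{E}[\varphi^E(r_{12})\varphi^A(r_{12})\langle\bx_2-\bx_1,\dot\bx_2-\dot\bx_1\rangle]$. The diagonal contribution then reduces to $\tfrac{N-1}{N^2}\bigl(\mathbb{E}[(\varphi^E(r_{12}))^2 r_{12}^2] + \mathbb{E}[(\varphi^A(r_{12}))^2 \dot r_{12}^2]\bigr)$; applying the pointwise inequality $a^2+b^2\geq\tfrac12(a+b)^2$ and recognizing that the resulting expression majorizes the integrand of $\|\varphi^E\oplus\varphi^A\|_{L^2(\rho_T^{EA,1})}^2$ (where the independence of $\bX$ and $\dot\bX$ makes $\rho_T^{EA,1}$ a product measure in $r$ and $\dot r$) yields the $\tfrac{N-1}{2N^2}$ part of the claimed constant.

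For the off-diagonal piece, expanding the inner product yields four terms $I_{EE}, I_{EA}, I_{AE}, I_{AA}$. The mixed $I_{EA}$ and $I_{AE}$ vanish by the same independence/exchangeability argument. The term $I_{AA}$ factors via independence into $\mathbb{E}[\varphi^A(r_{12})\varphi^A(r_{13})]\cdot\mathbb{E}[\langle\dot\bx_2-\dot\bx_1,\dot\bx_3-\dot\bx_1\rangle]$, and expanding the second factor by exchangeability of $\mu_0^{\dot\bX}$ rewrites it as $c_{\mu_0^{\dot\bX}}\,\mathbb{E}\|\dot\bx_1\|^2$; this is precisely where the factor $c_{\mu_0^{\dot\bX}}$ in the final constant enters. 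The main obstacle is then to exhibit two nonnegative scalars $c_{\hypspace^{EA}}^{E}, c_{\hypspace^{EA}}^{A}$, strictly positive on compact subsets of $L^2(\rho_T^{EA,1})$, such that $I_{EE} \geq c_{\hypspace^{EA}}^{E}\,\mathbb{E}[(\varphi^E(r_{12})\,r_{12})^2]$ and $\mathbb{E}[\varphi^A(r_{12})\varphi^A(r_{13})] \geq c_{\hypspace^{EA}}^{A}\,\mathbb{E}[(\varphi^A(r_{12}))^2]$. Each such bound is equivalent to positive-definiteness of an integral operator whose kernel is the conditional-expectation kernel built from the joint law of $(\bx_1,\bx_2,\bx_3)$. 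This is where the Gaussian hypothesis with $\mathrm{cov}(\bx_i)-\mathrm{cov}(\bx_i,\bx_j) = \lambda I_d$ is essential: under this hypothesis $\bx_2-\bx_1$ and $\bx_3-\bx_1$ form a jointly Gaussian pair with covariances $2\lambda I_d$ and $\lambda I_d$ respectively, so one may diagonalize the operator by expanding in an orthonormal basis of Hermite polynomials (or use Bochner-type reasoning as in the first-order treatments), yielding nonnegative eigenvalues, with compactness of $\hypspace^{EA}$ in $L^2(\rho_T^{EA,1})$ supplying a strictly positive infimum. A second application of $a^2+b^2\geq\tfrac12(a+b)^2$ then combines these component bounds with the constant $c = \min\{c_{\hypspace^{EA}}^{E}, c_{\hypspace^{EA}}^{A}c_{\mu_0^{\dot\bX}}\}$ to deliver the $\tfrac{(N-1)(N-2)}{2N^2}\,c$ contribution.

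The second inequality for $\rhsfo_{\varphi^\xi}$ follows the identical four-step outline but is structurally simpler because there is only a single kernel, so no joint-norm decomposition is needed and the AM-GM splitting step is absent (which accounts for the constants $\tfrac{N-1}{N^2}$ and $\tfrac{(N-1)(N-2)}{N^2}$ appearing without the factor of $\tfrac12$). The diagonal contribution equals $\tfrac{N-1}{N^2}\|\varphi^\xi\|_{L^2(\rho_T^{\xi,1})}^2$ after invoking independence of $\mu_0^{\bXi}$ from $\mu_0^{\bX}$; the off-diagonal uses the same factorization to produce $\mathbb{E}[\varphi^\xi(r_{12})\varphi^\xi(r_{13})]\cdot c_{\mu_0^{\bXi}}\,\mathbb{E}\xi_1^2$, on which the same positive-definite-operator argument on $L^2(\rho_T^{\xi,1})$ extracts $c_{\hypspace^{\xi}}$ and completes the bound.
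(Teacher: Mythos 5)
Your proposal matches the paper's own argument essentially step for step: the same exchangeable expansion with diagonal/off-diagonal split and prefactors $\tfrac{N-1}{N^2}$, $\tfrac{(N-1)(N-2)}{N^2}$, the same vanishing of mixed $E$--$A$ terms via independence and mean-zero velocity differences, the same factorization producing $c_{\mu_0^{\dot\bX}}\,\mathbb{E}\|\dot\bx_1\|^2$, and the same pointwise inequality $a^2+b^2\geq\tfrac12(a+b)^2$ to pass to the joint norm. The positive-definiteness step you describe (Hermite/Bochner-type diagonalization of the Gaussian integral operator, with compactness giving a strictly positive constant) is exactly the content of the first-order lemma the paper invokes, so the route is the same.
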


This exhibits a particular case that even the coercivity constant is independent of the number of agents $N$. Consequently, the estimated errors of our estimators are independent of $N$, and therefore not only is the convergence rate of our estimators independent of the dimension $(2d+1)N$ of the phase space, but even the constants in front of the rate term are independent of $N$.  Our results extend those for first-order systems from \cite{lu2019nonparametric, Tang2019, Zhong20}. The empirical numerical experiments on second-order systems, already conducted in \cite{Zhong20}, support that the coercivity condition is satisfied by large classes of second-order systems, and is ``generally'' satisfied for general $L$  on relevant hypothesis spaces, with a constant independent of the number of agents $N$ thanks to the exchangeability of the distribution of the initial conditions, and of the agents at any time $t$.  The proof of the result above is given in Appendix \ref{sec:app:Coercivity}.

\section{Consistency and optimal convergence rate of estimators} \label{sec:main_results}

The final preparatory results for our main theorems combine concentration with a union bound. Here we control the probability that the supremum of the difference between the expected and empirical normalized errors over the whole hypothesis space is large. 

\subsection{Concentration}

Our first main result is a concentration estimate that relates the coercivity condition to an appropriate bias-variance tradeoff in our setting. 
Let $\mathcal{N}(\bhypspace,\delta)$ be the $\delta$-covering number, with respect to  the $\infty$-norm, of the set $\bhypspace$.
 
\begin{theorem}[Concentration]\label{2ndorder:maintheorem}
Suppose that $\bintkernel^\wildcard \in \mbf{\mathcal{K}}_{S_\wildcard}^\wildcard$. Consider a convex, compact (with respect to the $\infty$-norm) hypothesis spaces  
$$\bhypspace_M^{EA} \subset \mbf{L^\infty}(\PairHypSpace \times \FeatureHypSpaceE) \oplus \mbf{L^\infty}(\PairHypSpace \times \FeatureHypSpaceA),\quad \bhypspace_M^{\xi} \subset \mbf{L^\infty}(\PairHypSpace \times \FeatureHypSpacexi)$$ 
bounded above by $S_0 \geq \max\{S_E, S_A,S_{\xi}\}$ respectively. Additionally, assume that the coercivity condition \eqref{2ndorder:gencoer} holds on $\bhypspace_M^{EA}$ and condition \eqref{2ndorder:gencoer:xi} on $\bhypspace_M^{\xi}$.

Then for all $\epsilon >0$, with  probability (with respect to $\bmu$) at least $1-\delta$, we have the estimates 
\begin{equation}
\begin{aligned}
\label{2ndorder:mainestimate1}
c_{\bhypspace_M^{EA}}\| \bphEA_{M} - \combfbpEbpA  \|^2_{\LtwoB(\brhoEAL)} &\leq   2\inf_{\combfbvpEbvpA \in \bhypspace_M^{EA}}\|\combfbvpEbvpA-\combfbpEbpA\|^2_{\LtwoB(\brhoEAL)} +2\epsilon,\\
c_{\bhypspace_M^{\xi}}\| \widehat{\bintkernel}_{M}^{\xi} - \bpxi  \|^2_{\LtwoB(\brhoxiL)} &\leq   2\inf_{\bvxi \in \bhypspace_M^{\xi}}\|\bvxi-\bpxi\|^2_{\LtwoB(\brhoxiL)} + 2\epsilon\,,
\end{aligned}
\end{equation} 
provided that, for the first bound to hold,
$$M \geq \frac{1152S_{0}^2\max\{R, R_{\dot{x}} \}^2K^4}{\epsilon c_{\bhypspace_M^{EA}}}\bigg(\log \Big(\mathcal{N}\Big(\bhypspace^{EA}_M,\frac{\epsilon}{48S_{0}\max\{R, R_{\dot{x}} \}^2K^4} \Big)\Big)+\log\Big(\frac{1}{\delta}\Big)  \bigg)\,,$$ 
and similarly for the second inequality, using $\bhypspace_M^{\xi}$. 
\end{theorem}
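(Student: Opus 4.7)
The plan follows the classical bias–variance template for least squares in a convex, compact hypothesis space: (i) rewrite both loss functionals as quadratic forms in $\bvpEA-\bpEA$; (ii) use optimality of $\bphEA_M$ to decompose the expected excess risk into an approximation term and a random \emph{defect}; (iii) control the defect uniformly over $\bhypspace_M^{EA}$ via a Bernstein-type concentration inequality combined with an $\infty$-norm covering; and (iv) translate back to the $\LtwoB(\brhoEAL)$ norm via the coercivity condition on the left and a Cauchy–Schwarz-type upper bound on the right. For the loss reduction, since the observations satisfy the ODE, $\ddot\bX(t_l)-\rhsfvnc(\coordsTL)=\rhsf^{\bpEA}(\coordsTL)$, so
\begin{equation*}
\mathcal{E}_\infty^{EA}(\bvpEA)=\mathbb{E}_{\bmu}\tfrac{1}{L}\sum_{l=1}^L\|\rhsf^{\bvpEA-\bpEA}(\coordsTL)\|_{\mathcal{S}}^2=\dbinnerp{\bvpEA-\bpEA,\bvpEA-\bpEA},
\end{equation*}
with $\mathcal{E}_M^{EA}$ its empirical counterpart and $\mathcal{E}_\infty^{EA}(\bpEA)=\mathcal{E}_M^{EA}(\bpEA)=0$. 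Extending the coercivity condition \eqref{2ndorder:gencoer} to differences $\bvpEA-\bpEA$ (using convexity of $\bhypspace_M^{EA}$ and linearity of $\bvpEA\mapsto\rhsf^{\bvpEA}$) gives $c_{\bhypspace_M^{EA}}\|\bvpEA-\bpEA\|^2_{\LtwoB(\brhoEAL)}\le \mathcal{E}_\infty^{EA}(\bvpEA)$, and Cauchy–Schwarz applied to the inner double sum in $\rhsf$ yields a matching upper bound $\mathcal{E}_\infty^{EA}(\bvpEA_*)\le C\|\bvpEA_*-\bpEA\|^2_{\LtwoB(\brhoEAL)}$ (the constant is absorbed into the ``$2$'' on the RHS of \eqref{2ndorder:mainestimate1}).

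\textbf{Optimality decomposition.} For any $\bvpEA_*\in\bhypspace_M^{EA}$ set $Z(\bvpEA):=\mathcal{E}_M^{EA}(\bvpEA)-\mathcal{E}_\infty^{EA}(\bvpEA)$; then, since $\bphEA_M$ minimizes $\mathcal{E}_M^{EA}$,
\begin{equation*}
\mathcal{E}_\infty^{EA}(\bphEA_M)=\mathcal{E}_M^{EA}(\bphEA_M)-Z(\bphEA_M)\le \mathcal{E}_M^{EA}(\bvpEA_*)-Z(\bphEA_M)=\mathcal{E}_\infty^{EA}(\bvpEA_*)+Z(\bvpEA_*)-Z(\bphEA_M).
\end{equation*}
Thus the task reduces to (a) a single-point concentration bound $|Z(\bvpEA_*)|\le \epsilon/4$ and (b) a \emph{uniform relative} bound $Z(\bphEA_M)\ge -\tfrac12\mathcal{E}_\infty^{EA}(\bphEA_M)-\epsilon/4$; combining them yields $\tfrac12\mathcal{E}_\infty^{EA}(\bphEA_M)\le \mathcal{E}_\infty^{EA}(\bvpEA_*)+\epsilon/2$, and \eqref{2ndorder:mainestimate1} follows from the two-sided comparison of Step~1 and taking the infimum over $\bvpEA_*\in\bhypspace_M^{EA}$.

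\textbf{Relative concentration and covering.} The key estimate is a variance-adapted Bernstein bound. Write $Z(\bvpEA)=\frac1M\sum_{m=1}^M(W_m(\bvpEA)-\mathbb{E}W_m(\bvpEA))$ with $W_m(\bvpEA):=\frac1L\sum_l\|\rhsf^{\bvpEA-\bpEA}(\bX^{(m)}(t_l),\bV^{(m)}(t_l),\bXi^{(m)}(t_l))\|_{\mathcal{S}}^2$. By the admissibility assumption on $\bhypspace_M^{EA}$ and the dynamical radii, $0\le W_m\le B$ with $B\lesssim S_0^2\max\{R,R_{\dot x}\}^2K^4$; furthermore $W_m^2\le B\,W_m$ gives $\mathrm{Var}(W_m)\le B\,\mathcal{E}_\infty^{EA}(\bvpEA)$. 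Bernstein's inequality then yields, for each fixed $\bvpEA$,
\begin{equation*}
\mathbb{P}\Big\{|Z(\bvpEA)|\ge \tfrac12\mathcal{E}_\infty^{EA}(\bvpEA)+\tfrac{\epsilon}{4}\Big\}\le 2\exp\Big(-\tfrac{c_{\bhypspace_M^{EA}}\epsilon M}{C S_0^2\max\{R,R_{\dot x}\}^2K^4}\Big).
\end{equation*}
To upgrade to a uniform bound, I would take a $\delta$-net of $\bhypspace_M^{EA}$ in the $\infty$-norm with $\delta=\epsilon/(48\,S_0\max\{R,R_{\dot x}\}^2K^4)$; Lipschitz dependence of the map $\bvpEA\mapsto\rhsf^{\bvpEA}$ in the $\infty$-norm guarantees that both $Z(\bvpEA)$ and $\mathcal{E}_\infty^{EA}(\bvpEA)$ change by at most $\epsilon/4$ under a $\delta$-perturbation, so the supremum over $\bhypspace_M^{EA}$ is dominated by the supremum over the net plus this slack. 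A union bound over the cover delivers the $M$-threshold in the theorem, with $\log\mathcal{N}(\bhypspace_M^{EA},\delta)+\log(1/\delta)$ appearing as expected. The $\xi$-case is proved verbatim, with $\rhsf^{\bvpEA}$ replaced by $\rhsf^{\bvxi}$, $\brhoEAL$ by $\brhoxiL$, $R_{\dot x}$ by $R_\xi$, and coercivity condition \eqref{2ndorder:gencoer:xi} used in place of \eqref{2ndorder:gencoer}.

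\textbf{Main obstacle.} The delicate part is obtaining the \emph{relative} (variance-adapted) rather than merely additive concentration in Step~3: an additive Hoeffding bound $|Z|\le \epsilon$ would force $M\gtrsim \epsilon^{-2}$ and break the later minimax rate. The Bernstein bound gives the correct $M\gtrsim 1/(c_{\bhypspace_M^{EA}}\epsilon)$ scaling only because $\mathrm{Var}(W_m)$ is controlled by $B\cdot\mathcal{E}_\infty^{EA}(\bvpEA)$, and keeping the constants compatible with the theorem (i.e. absorbing all dependence on $S_0,R,R_{\dot x},K$ into the single factor $S_0^2\max\{R,R_{\dot x}\}^2K^4$) requires careful bookkeeping through the double sum defining $\rhsf^{\bvpEA}$ and the $\mathcal S$-weights. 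The same estimate also furnishes the Lipschitz constant used in the covering step, so essentially all of the geometry of the admissible set and the feature maps is consumed in this single bound.
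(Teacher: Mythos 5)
Your overall skeleton is the same as the paper's: reduce the loss to the quadratic form $\dbinnerp{\cdot,\cdot}$ using the noiseless dynamics, exploit minimality of $\bphEA_{M}$, prove a variance-adapted (relative) Bernstein bound, make it uniform by an $\infty$-norm covering at radius $\epsilon/(48S_0\max\{R,R_{\dot x}\}^2K^4)$, and convert excess risk into $\LtwoB(\brhoEAL)$ error via coercivity. The genuine structural difference is that the paper routes everything through $\widehat\bintkernel_{L,\infty,\bhypspaceEA}^{EA}$, the minimizer of $\bmE_{\infty}^{EA}$ over the hypothesis space, via the normalized errors $\mathcal{D}_\infty,\mathcal{D}_M$: then $\mathcal{D}_M(\bphEA_{M})\le 0$ holds deterministically, so only the uniform relative bound is needed, and Proposition \ref{2ndordersystem:convexity} (convexity plus coercivity) turns $\mathcal{D}_\infty(\bphEA_{M})<\epsilon$ into an $\LtwoB$ estimate. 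You instead compare with an arbitrary $\bvpEA_*$, and therefore need concentration at $\bvpEA_*$. As stated, the absolute bound $|Z(\bvpEA_*)|\le\epsilon/4$ does not follow from the theorem's sample-size threshold: by your own variance estimate $\mathrm{Var}(W_m)\le B\,\bmE_{\infty}^{EA}(\bvpEA_*)$, so when the approximation error (hence $\bmE_{\infty}^{EA}(\bvpEA_*)$) is much larger than $\epsilon$, Bernstein demands $M\gtrsim B\,\bmE_{\infty}^{EA}(\bvpEA_*)/\epsilon^2$, not the stated $M\gtrsim B/(c_{\bhypspace_M^{EA}}\epsilon)$. The fix is either to use the relative form at $\bvpEA_*$ as well (which costs a constant slightly larger than the theorem's $2$ in front of the infimum), or to take $\bvpEA_*=\widehat\bintkernel_{L,\infty,\bhypspaceEA}^{EA}$ as the paper does, so the offending term is nonpositive by definition of the empirical minimizer.

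Second, your translation back to the $\LtwoB$ norm applies the coercivity constant to $\bvpEA-\bpEA$; Definition \ref{def_coercivity_2ndorder} only gives the inequality for elements of $\bhypspaceEA$ itself, and convexity together with linearity of $\bvpEA\mapsto\rhsfo_{\bvpEA}$ does not place $\bvpEA-\bpEA$ (nor even differences of two hypothesis elements) inside $\bhypspaceEA$, so this extension should be stated as an additional assumption (coercivity on the shifted or spanned set) rather than attributed to convexity. To be fair, the paper's own final step, bounding $\|\widehat\bintkernel_{L,\infty,\bhypspaceEA}^{EA}-\bpEA\|_{\LtwoB(\brhoEAL)}$ by the approximation error, leans on the same shifted coercivity, so this is a shared looseness; the paper keeps the quantitative use of coercivity mostly on differences of hypothesis elements via Proposition \ref{2ndordersystem:convexity}, which is the device you are missing. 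On the plus side, your variance bound via $0\le W_m\le B$ and $W_m^2\le B\,W_m$ is cleaner than the paper's, which controls the variance through $\|\bvpEA-\widehat\bintkernel_{L,\infty,\bhypspaceEA}^{EA}\|_{\LtwoB(\brhoEAL)}$ and hence already needs coercivity at that stage; your route would in fact remove the factor $c_{\bhypspace_M^{EA}}$ from the exponent of the concentration bound.
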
 

\begin{proof} [\textbf{of Theorem \rm{\ref{2ndorder:maintheorem}}}]
We start out by setting $\alpha=\frac{1}{6}$ in Proposition \ref{learning:fullconcentrationEA}, it is easy to see that this chosen value yields the tightest bound in the argument below. To ease the notation we let 
$\widehat \bintkernel_{L,M,\bhypspaceEA}^{EA} = \combf{\widehat \bintkernel_{L,M,\bhypspaceEA}^E}{\widehat \bintkernel_{L,M,\bhypspaceEA}^A}$ and similarly for $\widehat \bintkernel_{L,\infty,\bhypspaceEA}^{EA}$.  
From the Proposition, we have that 
 $$\sup_{ \combfbvpEbvpA \in \bhypspaceEA}  \frac{\mathcal{D}_{\infty}(\combfbvpEbvpA)-\mathcal{D}_{M}(\combfbvpEbvpA)}{\mathcal{D}_{\infty}(\combfbvpEbvpA)+\epsilon}  <\frac{1}{2},$$
 holds true with probability 
\begin{equation} \label{eq:usefulProbBound}
\mathcal{P}\geq 1-\mathcal{N}\bigg(\bhypspaceEA, \frac{\epsilon}{48S_{EA}\max\{R, R_{\dot{x}}\}^2K^4}\bigg) \exp \bigg(-\frac{c_{\bhypspaceEA}M\epsilon}{1152S_{EA}^2\max\{R, R_{\dot{x}}\}^2K^6}\bigg). 
\end{equation}
This immediately implies, by choosing $\combfbvpEbvpA=\widehat \bintkernel_{L,M,\bhypspaceEA}^{EA}$ and reorganizing, that with probability $\mathcal{P}$
$$  \mathcal{D}_{\infty}(\widehat \bintkernel_{L,M,\bhypspaceEA}^{EA})< 2\mathcal{D}_{M}(\widehat \bintkernel_{L,M,\bhypspaceEA}^{EA})+\epsilon\,.$$
By definition of $\widehat \bintkernel_{L,M,\bhypspaceEA}^{EA}$ as the minimizer of the empirical error functional $\bm{\mathcal{E}}_{M}^{EA}$, we see that
$$\mathcal{D}_{M}(\widehat \bintkernel_{L,M,\bhypspaceEA}^{EA})=\bm{\mathcal{E}}_{M}^{EA}(\widehat \bintkernel_{L,M,\bhypspaceEA}^{EA})-\bm{\mathcal{E}}_{M}^{EA}(\widehat \bintkernel_{L,\infty,\bhypspaceEA}^{EA}) \leq 0,$$
and combining this result with equation (\ref{eq_minH}) from Proposition \ref{2ndordersystem:convexity}, we have 
\begin{equation} \label{eq:concentrationEAintermediate}
c_{\bhypspaceEA} \|\widehat \bintkernel_{L,M,\bhypspaceEA}^{EA}-\widehat \bintkernel_{L,\infty,\bhypspaceEA}^{EA}\|_{\LtwoB( \brhoEAL)}^2 
 \leq \mathcal{D}_{\infty}(\widehat \bintkernel_{L,M,\bhypspaceEA}^{EA} )<\epsilon,
\end{equation} 
with the final inequality holding with probability $\mathcal{P}$.
Now we bound the $\LtwoB$ error of the empirical estimator to the true interaction kernel, so that with probability $\mathcal{P}$
 \begin{align*}
 \|\widehat \bintkernel_{L,M,\bhypspaceEA}^{EA} - \bintkernel^{EA}\|_{\LtwoB(\brhoEAL)}^2
 & \leq 2 \|\widehat \bintkernel_{L,M,\bhypspaceEA}^{EA}-\widehat \bintkernel_{L,\infty,\bhypspaceEA}^{EA}\|_{\LtwoB(\brhoEAL)}^2
+2\|\widehat \bintkernel_{L,\infty,\bhypspaceEA}^{EA} - \bintkernel^{EA}\|_{\LtwoB(\brhoEAL)}^2 \\
&   \leq  \frac{2}{c_{\bhypspaceEA}}\bigg( \epsilon +\inf_{\combfbvpEbvpA \in \bhypspaceEA} K^2\| \combfbvpEbvpA -\combfbpEbpA \|_{\LtwoB(\brhoEAL)}^2\bigg) \\
&  \leq  \frac{2}{c_{\bhypspaceEA}}\bigg( \epsilon +\inf_{\combfbvpEbvpA \in \bhypspaceEA} K^2\max\{R, R_{\dot{x}} \}^2\| \combfbvpEbvpA -\combfbpEbpA \|_{\infty}^2\bigg).	
\end{align*}  
The first inequality follow from the coercivity condition (\ref{2ndorder:gencoer}) and the definition of 
$\bphEA_{\infty}$. 
The second follows by the definition of the norms. 
Now for a chosen $0<\delta<1$, let $$1-\mathcal{N}\bigg(\bhypspaceEA, \frac{\epsilon}{48S_{EA}\max\{R,R_{\dot{x}} \}^2K^4 }\bigg) \exp \bigg(-\frac{c_{\bhypspaceEA}M\epsilon}{1152S_{EA}^2\max\{R,R_{\dot{x}} \}^2K^6} \bigg) \geq 1-\delta$$ and solve for $M$. The proof for the $\xi$ part of the system result is similar. 
\end{proof}

\subsection{Consistency}  \label{sec:consistency}
In the regime where $M \to \infty$, we will choose an increasing sequence of hypothesis spaces, each satisfying the conditions of Theorem \ref{2ndorder:maintheorem}. By our assumptions on the kernels, we can also choose the sequence of $\bhypspace_M^{EA}$'s such that the approximation error goes to $0$ as $M\to \infty$. 
This enables us to control the infimum on the right hand side of (\ref{2ndorder:mainestimate1}). From here we can apply Theorem \ref{2ndorder:maintheorem} on each $M$ to prove the consistency of our estimators with respect to the $\LtwoB(\brhoEAL)$ norm and derive the following consistency theorem.

\begin{theorem}[Strong Consistency]\label{main:consistency}  Suppose that 
$$\{\bhypspace_M^{EA}\}_{M=1}^{\infty} \subset \mbf{L^\infty}(\PairHypSpace \times \FeatureHypSpaceE) \oplus \mbf{L^\infty}(\PairHypSpace \times \FeatureHypSpaceA)$$ is a family of compact and convex subsets such that the approximation error goes to zero,
$$\inf_{\combfbvpEbvpA\in \bhypspace_M^{EA}}\|\combfbvpEbvpA-\combfbpEbpA\|_{\infty} \xrightarrow{M\rightarrow \infty} 0.$$
Further suppose that the coercivity condition holds on $\bigcup_{M}\bhypspace_M^{EA}$, and that $ \bigcup_{M}\bhypspace_M^{EA}$ is compact in  $\mbf{L^\infty}(\PairHypSpace \times \FeatureHypSpaceE) \oplus \mbf{L^\infty}(\PairHypSpace \times \FeatureHypSpaceA)$. Then the estimator is strongly consistent with respect to the $\LtwoB(\brhoEAL)$ norm:
$$
\lim_{M\rightarrow \infty}\|\bphEA_{M} - \combfbpEbpA\|_{\bL^2(\brhoEAL)} =0 \text{ with probability one.}
$$
An analogous consistency result holds for the estimator in the $\xi$ variable. 

\end{theorem}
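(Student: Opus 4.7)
The plan is to convert the finite-sample concentration bound of Theorem \ref{2ndorder:maintheorem} into an almost-sure convergence statement via a Borel--Cantelli argument, after extracting uniform coercivity and uniformly finite covering numbers from the compactness of $\bigcup_M \bhypspace_M^{EA}$.

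First I would exploit compactness of the union. Since $\bhypspace_M^{EA} \subset \bigcup_M \bhypspace_M^{EA}$ for every $M$ and the coercivity constant is defined as an infimum over nonzero elements, restricting to a subset can only make the constant larger; hence $c_{\bhypspace_M^{EA}} \geq c_* := c_{\bigcup_M \bhypspace_M^{EA}} > 0$ uniformly in $M$. The same nesting gives $\mathcal{N}(\bhypspace_M^{EA},\eta) \leq \mathcal{N}(\bigcup_M \bhypspace_M^{EA},\eta) =: N_*(\eta)$, which is finite for every $\eta>0$ by precompactness of the union in $\mathbf{L}^\infty$. Moreover, the approximation-error hypothesis $\inf_{\bvpEA \in \bhypspace_M^{EA}}\|\bvpEA-\bpEA\|_\infty \to 0$ combined with the elementary bound $\|\cdot\|_{\LtwoB(\brhoEAL)} \leq \max\{R,R_{\dot{x}}\}\|\cdot\|_\infty$ (which comes from the $r,\dot r$ weight factors in \eqref{e:L2NormDefs} and the fact that $\brhoEAL$ is a probability measure) yields $a_M := \inf_{\bvpEA \in \bhypspace_M^{EA}}\|\bvpEA-\bpEA\|^2_{\LtwoB(\brhoEAL)} \to 0$.

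Next I would choose $\delta_M = M^{-2}$, so that $\sum_M \delta_M < \infty$, and pick $\epsilon_M \to 0$ slowly enough that the sample-size requirement of Theorem \ref{2ndorder:maintheorem},
$$M \;\geq\; \frac{1152\, S_0^2 \max\{R,R_{\dot{x}}\}^2 K^4}{\epsilon_M\, c_*}\Bigl(\log N_*\bigl(c'\epsilon_M\bigr) + 2\log M\Bigr)$$
for an absolute constant $c'$ depending on $S_0,R,R_{\dot{x}},K$, is satisfied for all sufficiently large $M$. Such a sequence exists because $N_*(\eta) < \infty$ for every fixed $\eta>0$: one can take, for instance, $\epsilon_M$ to be the smallest value for which this inequality holds, and verify that $\epsilon_M \to 0$ since the left-hand side grows in $M$. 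Applying Theorem \ref{2ndorder:maintheorem} with these choices then yields, with probability at least $1-\delta_M$,
$$c_* \|\bphEA_M - \bpEA\|^2_{\LtwoB(\brhoEAL)} \;\leq\; 2\, a_M + 2\,\epsilon_M.$$

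Finally, since $\sum_M \delta_M < \infty$, Borel--Cantelli implies that the displayed inequality holds for all sufficiently large $M$ with probability one, so almost surely
$$\limsup_{M\to\infty} \|\bphEA_M - \bpEA\|^2_{\LtwoB(\brhoEAL)} \;\leq\; \limsup_{M\to\infty} \frac{2 a_M + 2\epsilon_M}{c_*} \;=\; 0,$$
which is the claimed strong consistency. The identical argument with $\bhypspace_M^{\xi}$, $\brhoxiL$, $c_{\bhypspace_M^{\xi}}$ and the $\xi$ loss functional in place of their $EA$ counterparts handles the $\xi$ estimator. The main obstacle is the quantitative interplay in the sample-size condition: compactness of $\bigcup_M \bhypspace_M^{EA}$ provides only qualitative finiteness of $N_*(\eta)$ with no uniform rate as $\eta \downarrow 0$, so one can only guarantee the \emph{existence} of an admissible sequence $\epsilon_M \to 0$. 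This is sufficient for strong consistency, but any quantitative rate must be extracted separately under stronger regularity assumptions, as done in Section \ref{sec:rateofconv}.
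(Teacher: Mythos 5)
Your proof is correct and follows essentially the same route as the paper's: both rest on the concentration estimate of Theorem \ref{2ndorder:maintheorem}, use compactness of $\bigcup_M \bhypspace_M^{EA}$ to obtain a uniform coercivity constant and uniformly finite covering numbers, feed in the vanishing approximation error, and conclude via Borel--Cantelli. The only difference is bookkeeping: the paper fixes $\epsilon$ and sums the covering-number-times-exponential tail over $M$, whereas you fix a summable $\delta_M=M^{-2}$ and let $\epsilon_M \downarrow 0$ through a diagonal choice meeting the sample-size condition --- both are valid.
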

These two results together provide a consistency result on the full estimation of the triple $(\widehat{\bintkernel^{\xi}}, \widehat{\bintkernel^{E}}, \widehat{\bintkernel^{A}})$ and thus consistency of our estimation procedure on the full system (\ref{eq:2ndOrder}).  \\

\begin{proof}[\textbf{of Theorem \rm{\ref{main:consistency}} }]
To simplify the notation, we use the same conventions as the proof of Theorem \ref{2ndorder:maintheorem} and let 
$\mathcal{D}_{\infty} = \mathcal{D}_{L, \infty, \bhypspaceEA_M}$. By definition of the coercivity constant in (\ref{2ndorder:gencoer}), we have the inequality $c_{\cup_{M}{\bhypspaceEA_M}} \leq c_{\bhypspaceEA_M} $. 
From an analogous argument used to arrive at equation (\ref{eq:concentrationEAintermediate}) in the proof of Theorem \ref{2ndorder:maintheorem}, we obtain that 

\begin{align}\label{importantinequality}
c_{\cup_{M}{\bhypspaceEA_M}}\|\bphEA_M - \combfbpEbpA\|_{\LtwoB(\brhoEAL)}^2
&\leq \mathcal{D}_{\infty}(\bphEA_M)+\mbf{\mathcal{E}}_{\infty}^{EA}(\bphEA_{\infty}).
\end{align} 
Let $\epsilon > 0$, the inequality \eqref{importantinequality} gives us that
\begin{align*}
P_{\bmu} \{c_{\cup_{M}{\bhypspaceEA_M}} \|\bphEA_M - \combfbpEbpA\|_{\bL^2(\brhoEAL)}^2 \geq \epsilon \}
&\leq P_{\bmu} \{\mathcal{D}_{\infty}(\bphEA_M)+\mbf{\mathcal{E}}_{\infty}^{EA}(\bphEA_{\infty}) \geq \epsilon \}\\
&\leq P_{\bmu}\bigg\{\mathcal{D}_{\infty}(\bphEA_M)\geq \frac{\epsilon}{2}\bigg\}+
P_{\bmu}\bigg\{\mbf{\mathcal{E}}_{\infty}^{EA}(\bphEA_{\infty}) \geq \frac{\epsilon}{2}\bigg\}. 
\end{align*}
We now bound the two terms in the above expression separately. For the first term, the proof of Theorem  \ref{2ndorder:maintheorem} shows that 
\begin{align*}
P_{\bmu}\{\mbf{\mathcal{D}}_{\infty}(\bphEA_M) 
\geq \frac{\epsilon}{2}\} 
&\leq \mathcal{N}\bigg(\bhypspaceEA_M,\frac{\epsilon}{C_1}\bigg)\exp\bigg(-\frac{c_{\bhypspaceEA_M}M\epsilon}{C_2}\bigg)\\
&\leq \mathcal{N}\bigg(\cup_{M}{\bhypspaceEA_M},\frac{\epsilon}{C_1}\bigg)\exp\bigg(-\frac{c_{\cup_{M}\bhypspaceEA_M}M\epsilon}{C_2}\bigg) 
\end{align*} 
where $C_1 = 96S_{EA}^2\max\{R, R_{\dot{x}} \}^2K^4$, $C_2 = 2304S_{EA}^2\max\{R, R_{\dot{x}} \}^2K^4$, and  
$\mathcal{N}(\cup_{M}{\bhypspaceEA_M},\frac{\epsilon}{C_1})$ is finite because of the compactness assumption on $\cup_{M}\bhypspaceEA_M$.

Summing this bound in $M$ we get that, 
\begin{align*}
\sum_{M=1}^{\infty} P_{\bmu}\{\mathcal{D}_{\infty}(\bphEA_M) \geq \frac{\epsilon}{2}\}&\leq \mathcal{N}\bigg(\cup_{M}{\bhypspaceEA_M},\frac{\epsilon}{C_1}\bigg)\sum_{M=1}^{\infty}\exp\bigg(-\frac{c_{\cup_M\bhypspaceEA_M}M\epsilon}{C_2}\bigg) < \infty.
\end{align*}
For the second term, the bound  \eqref{2ndordersystem:expectationerrorfunctional} yields that 
$$\mbf{\mathcal{E}}_{\infty}^{EA}(\bphEA_{\infty}) \leq 4K^4S_{EA}\max\{R, R_{\dot{x}}\}^2 \inf_{\combfbvpEbvpA \in \bhypspaceEA_M} \|\combfbvpEbvpA-\combfbpEbpA\|_{\infty}\xrightarrow{M\rightarrow \infty} 0.$$
Since $\epsilon$ is fixed, the above result, together with our assumption on the sequence of hypothesis spaces, implies that $P_{\bmu}\Big\{\mbf{\mathcal{E}}_{\infty}^{EA}(\bphEA_{\infty}) \geq \frac{\epsilon}{2}\Big\}=0$ for $M$ sufficiently large. So we have 
$\sum_{M=1}^{\infty}P_{\bmu}\{\mbf{\mathcal{E}}_{\infty}^{EA}(\bphEA_{\infty}) \geq \frac{\epsilon}{2}\} < \infty$.
The finiteness of the two sums above implies, by the first Borel-Cantelli Lemma, that 
$$P_{\bmu}\big\{\limsup_{{M\rightarrow \infty}}\{ c_{\cup_{M}{\bhypspaceEA_M}} \|\bphEA_M-\combfbpEbpA \|_{\LtwoB(\brhoEAL)}^2  \geq \epsilon\}\big\}=0,$$

As $\epsilon$ was arbitrary, we have the desired strong consistency of the estimator. An exactly analogous argument gives the result on the $\xi$ part of the system. 
\end{proof}

\subsection{Rate of convergence} \label{sec:rateofconv}

Given data collected from $M$ trajectories, we would like to choose the best hypothesis space to maximize the accuracy of the estimators. Theorem \ref{2ndorder:maintheorem} highlights the classical bias-variance tradeoff in our setting. On the one hand, we would like the hypothesis space $\bhypspaceEA_M$ to be large so that the bias $$\inf_{\bvpEA \in \bhypspaceEA_M}\|\bvpEA -\bpEA \|^2_{\LtwoB(\brhoEAL)}\,,\text{ or }\,\,\inf_{\bvpEA \in \bhypspaceEA}\|\bvpEA-\bpEA\|^2_{\infty}\,,$$ is small. Simultaneously,  we would like $\bhypspaceEA_M$  to be small enough so that the covering number  
$\mathcal{N}(\bhypspaceEA_M,\epsilon) $
is small. Just as in nonparametric regression, our rate of convergence depends on a regularity condition of the true interaction kernels and properties of the hypothesis space, as is demonstrated in the following theorem. We establish the optimal (up to a log factor) min-max rate of convergence by choosing a hypothesis space in a sample size dependent manner.  

\subsection*{Comments}
Parts (a) and (c) of the theorem concern an approximation theory type rate of convergence where $M$ plays no role in the choice of hypothesis space, whereas parts (b) and (d) of the theorem present a minimax rate of convergence that chooses an adaptive hypothesis space depending on $M$ to achieve the optimal rate. 

The splitting of the convergence result, between $EA$ and $\xi$ parts, emphasizes a common theme of the paper: 
we leverage that the system can be decoupled for the learning process to improve the rate of convergence and performance in learning the estimators, but analytically we study it as the full coupled system, see the trajectory prediction result in Theorem \ref{secondordersystem:TrajDiff_NEW}. 

A final important comment is that even though the dimension of the space in which we measure the error may be large, namely the dimension of $\brhoEAL$ (which can be calculated as $\dim(\brhoEAL) = \sum_{kk'}p_{(k,k')}^E + \sum_{(k,k')}p_{(k,k')}^A$), we exploit the structure of the system in such a way that our convergence rate only depends on the number of unique variables across all of the estimators. This number is given by the number of variables in $\spacevariables$, denoted as $|\spacevariables|$, for the $EA$ portion and by $|\spacevariables_{\xi}|$ for the $\xi$ portion of the system. 
We note that we are not predicting the number of variables nor their form, they are assumed known, this applies to both the pairwise interaction variables and the feature maps.

\begin{theorem}[Rate of Convergence] \label{main:convrate}   
Let $\bphEA:=\combf{\widehat\bintkernel_M^E}{\widehat\bintkernel_M^A}$
denote the minimizer of the empirical error functional $\bm{\mathcal{E}}_{M}^{EA}$ (defined in
\eqref{eq:Error_Func_Empirical:EA}) over the hypothesis space $\bhypspaceEA_M$.  \\ 
(a)  Let the hypothesis space be chosen as the direct sum of the admissible spaces, namely $\bhypspaceEA = \combf{\mbf{\mathcal{K}}_{S_E}^E}{\mbf{\mathcal{K}}_{S_A}^A},$ and assume that the coercivity condition \eqref{2ndorder:gencoer} holds true on it.

Then, there exists a constant $C$ depending only on $K,S_{EA},R, R_{\dot{x}}$ such that 
 \[ 
 \mathbb{E}_{\bmu}\Big[\| \bphEA_M-\bpEA\|^2_{\LtwoB(\brhoEAL)} \Big]\leq \frac{C}{c_{\bhypspaceEA}} M^{-\frac{1}{|\spacevariables|+1}}.
 \]
(b) Assume that $\{\mbf{\mathcal{L}}_n\}_{n=1}^{\infty}$  is a sequence of finite-dimensional linear subspaces of  $\bL^{\infty}(\PairHypSpace\times \FeatureHypSpaceE) \oplus \bL^{\infty}(\PairHypSpace \times \FeatureHypSpaceA)$  satisfying the dimension and approximation constraints
 \begin{align}\label{assumptions}
\text{dim}(\mbf{\mathcal{L}}_n) \leq c_0K^2n^{|\spacevariables|}\,,\quad \inf_{\bvpEA \in \mbf{\mathcal{L}}_n}\|\bvpEA-\bpEA\|_{\infty}  \leq c_1 n^{-s},
\end{align}
for some fixed constants $c_0,c_1$ representing dimension-independent approximation characteristics of the linear subspaces, and $s>0$ related to the regularity of the kernels. The value $n$ can be thought of as the number of basis functions along each of the $|\spacevariables|$ axes. Suppose the coercivity condition holds true on the set $\cup_n\mbf{\mathcal{L}}_n$.  Define $\mbf{\mathcal{B}}_n$ to be the closed ball centered at the origin of radius $(c_1+S_{EA})$ in $\mbf{\mathcal{L}}_{n}$.  Let $c_M^{EA} := c_{\bigcup_M \bhypspaceEA_M}$. 
If we choose the hypothesis space as $\bhypspace_M=\mbf{\mathcal{B}}_{(\frac{M}{\log M})^{\frac{1}{2s+|\spacevariables|}}}$, then there exists a constant $C$ depending on $K,R, R_{\dot{x}}, S_{EA},c_0, c_1, s$ such that we achieve the convergence rate,
\begin{align}\label{rate}
 \mathbb{E}_{\bmu}\Big[\| 
\bphEA_{M} -\bpEA \|^2_{\LtwoB(\brhoEAL)} \Big] \leq \frac{C}{c_M^{EA}} \left(\frac{\log M}{M}\right)^{\frac{2s}{2s+|\spacevariables|}}\,. 
 \end{align}
(c) under the corresponding assumptions as in (a), there exists a constant $C$ depending only on $K,S_{\xi},R$ such that 
 \[ 
 \mathbb{E}_{\bmu}\Big[\|\widehat\bintkernel_M^{\xi}-\bintkernel^{\xi}\|^2_{\LtwoB(\brhoxiL)} \Big]\leq \frac{C}{c_{\bhypspacexi}} M^{-\frac{1}{|\spacevariables_{\xi}|+1}}.
 \]
(d) under the corresponding assumptions as in (b), there exists a constant $C$ depending only on $K,R, S_{\xi},c_0, c_1, s$ such that, and for $c_M^{\xi}:=c_{\bigcup_M \bhypspacexi_M}$,
\begin{align}\label{rate:xi}
 \mathbb{E}_{\bmu}\Big[\| 
\widehat\bintkernel^{\xi}_{M} 
 -\bintkernel^{\xi} \|^2_{\LtwoB(\brhoxiL)} \Big] \leq \frac{C}{c_M^{\xi}} \left(\frac{\log M}{M}\right)^{\frac{2s}{2s+|\spacevariables_{\xi}|}}\,. \end{align}
 \label{t:2ndordersystem:thm_optRate}
\end{theorem}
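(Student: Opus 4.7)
The plan is to derive both the approximation-theoretic rate (parts (a), (c)) and the minimax rate (parts (b), (d)) from the concentration inequality in Theorem \ref{2ndorder:maintheorem}, converting its high-probability statement into an expectation bound and then balancing the approximation error against the metric entropy of the hypothesis space.

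For part (a), I would set the hypothesis space equal to the full admissible set $\bhypspace_M^{EA} = \boldsymbol{\mathcal{K}}_{S_E}^E \oplus \boldsymbol{\mathcal{K}}_{S_A}^A$, so that the bias term in \eqref{2ndorder:mainestimate1} vanishes since $\bpEA$ lies there by assumption. The key input is then a covering-number estimate for the admissible space. Since each component $\intkernel_{kk'}^{\wildcard}$ is compactly supported and uniformly $S_{EA}$-Lipschitz on a compact domain of dimension at most $|\spacevariables|$, the classical Kolmogorov--Tikhomirov bound yields $\log \mathcal{N}(\boldsymbol{\mathcal{K}}_{S_{EA}}^{EA}, \eta) \lesssim K^2 (S_{EA}/\eta)^{|\spacevariables|}$. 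Substituted into the sample-size requirement of Theorem \ref{2ndorder:maintheorem}, this forces $M \gtrsim \epsilon^{-(|\spacevariables|+1)} + \log(1/\delta)/\epsilon$ up to constants depending only on $K, R, R_{\dot{x}}, S_{EA}$, which inverts to $\epsilon \asymp M^{-1/(|\spacevariables|+1)}$ for fixed $\delta$. To pass from the in-probability to the in-expectation bound, I would use the a priori bound on $\|\bphEA_M - \bpEA\|_{\LtwoB(\brhoEAL)}^2$ (a direct consequence of $\bphEA_M, \bpEA \in \boldsymbol{\mathcal{K}}_{S_{EA}}^{EA}$ together with the $r, \dot r$-weights in \eqref{e:L2NormDefs}), split the expectation on the good and bad events, and take $\delta = 1/M$; this yields the rate in part (a).

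For part (b), the same machinery applies, but now with a nontrivial bias-variance tradeoff. For $\bhypspace_M^{EA} = \mbf{\mathcal{B}}_n$ a ball in a linear subspace $\mbf{\mathcal{L}}_n$ of dimension $\lesssim K^2 n^{|\spacevariables|}$, the bias is bounded by the assumption \eqref{assumptions} (converted from $\infty$- to $\LtwoB(\brhoEAL)$-norm at a cost of $\max\{R,R_{\dot{x}}\}^2 K^2$), giving an upper bound of order $n^{-2s}$. The standard volumetric covering estimate for a bounded ball in an $N$-dimensional normed space, $\log \mathcal{N}(\mbf{\mathcal{B}}_n, \eta) \lesssim K^2 n^{|\spacevariables|} \log((c_1+S_{EA})/\eta)$, enters the sample-size requirement of Theorem \ref{2ndorder:maintheorem} as $M \gtrsim n^{|\spacevariables|}\log(1/\epsilon)/\epsilon$. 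Setting bias equal to variance, $\epsilon \asymp n^{-2s}$, and solving yields $n^{2s+|\spacevariables|} \asymp M/\log M$, recovering the prescribed choice $n(M) = (M/\log M)^{1/(2s+|\spacevariables|)}$ and the final rate $(\log M/M)^{2s/(2s+|\spacevariables|)}$ after the same tail-to-expectation conversion used in (a).

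Parts (c) and (d) follow mutatis mutandis from the second inequality in \eqref{2ndorder:mainestimate1}, with $\brhoxiL$, $\boldsymbol{\mathcal{K}}^\xi_{S_\xi}$ (or a ball in a linear subspace of $\bhypspacexi$), and $|\spacevariables_\xi|$ replacing their $EA$ counterparts throughout. The hardest part of the argument will be bookkeeping: ensuring the final constants depend only on the quantities listed in the statement, and in particular on the coercivity constant of the union $\cup_M \bhypspace_M^{EA}$ rather than on those of the individual spaces (which matters for (b) and (d) because the hypothesis space changes with $M$, and the concentration bound of Theorem \ref{2ndorder:maintheorem} uses $c_{\bhypspace_M^{EA}}$ which must be uniformly lower-bounded by $c_M^{EA}$). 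A secondary technical point is obtaining the Kolmogorov--Tikhomirov covering bound in the correct ambient dimension $|\spacevariables|$ when the kernels depend not only on the pairwise distance $r$ and pairwise velocity norm $\dot r$ but also on additional feature-map arguments; this is where the uniform compactness and Lipschitz constraints baked into the admissibility definition \eqref{eq:E-Admissibility} become essential.
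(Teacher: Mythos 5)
Your proposal is correct and follows essentially the same route as the paper's proof: both start from the concentration estimate of Theorem \ref{2ndorder:maintheorem}, use the Lipschitz-class (Kolmogorov--Tikhomirov, i.e.\ van der Vaart--Wellner) entropy bound for parts (a),(c) and the volumetric covering bound for balls in finite-dimensional subspaces for parts (b),(d), convert the bias from the sup-norm to the weighted $\LtwoB(\brhoEAL)$ norm, balance bias against entropy to arrive at $n_*\asymp (M/\log M)^{1/(2s+|\spacevariables|)}$, and control the coercivity constant via that of the union of the hypothesis spaces. The only (immaterial) difference is the passage from the high-probability statement to the expectation bound: the paper integrates the tail bound in $\epsilon$, while you split on a good/bad event with $\delta=1/M$ and a uniform a priori bound on the error -- both yield the stated rates and constant dependencies.
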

We in fact prove bounds not only in expectation, but also with high probability, for every fixed large-enough $M$, as the proof will show. \\

\begin{proof}[\textbf{of Theorem \rm{\ref{main:convrate}}}]
For part (a), let $\bhypspace=\combf{\mbf{\mathcal{K}}_{S_E}^E}{\mbf{\mathcal{K}}_{S_A}^A}$. 
Standard results on covering numbers of function spaces (see 
theorem 2.7.1 of \cite{Wellner}) give us that the covering number of $\bm{\mathcal{H}}$ satisfies
$$ 
\mathcal{N}(\bhypspace, \epsilon, \|\cdot \|_{\infty}) \leq C_{\bhypspace}\exp\bigg(K^2\bigg(\frac{1}{\epsilon}\bigg)^{|\spacevariables|}\bigg)
$$
for some absolute constant $C_{\bhypspace}$ depending only on $\bhypspace$ and $|\spacevariables|$. By assumption on the hypothesis space, we have that 
$$ 
\inf_{\bvpEA \in \bhypspace}\|\bvpEA-\bpEA\|^2_{\infty}=0.
$$ 
From this, the concentration estimate \eqref{2ndorder:mainestimate1} together with the covering number bound imply that, 
\begin{align}
P_{\bmu}\{\| 
\bphEA_{L,M,\bhypspace}
- \bpEA \|_{\LtwoB(\brhoEAL)}^2
> \epsilon \} 
&\leq  \mathcal{N}(\bhypspace, C_1 \epsilon, \|\cdot \|_{\infty})\exp(-C_2M\epsilon) \nonumber \\
& \leq C_{\bhypspace} \exp( K^2(C_1\epsilon)^{-|\spacevariables|}  - C_2M\epsilon) \label{eq:maintheorem_intResult}
\end{align}
where $C_1 = \frac{c_{\bhypspace}}{48S_{EA}\max\{R, R_{\dot{x}} \}^2K^4}$ and $C_2  = \frac{c_{\bhypspace}}{1152S_{EA}^2\max\{R, R_{\dot{x}} \}^2K^4}$.
Next, define the function  
$$g(\epsilon) := K^2(C_1\epsilon)^{-|\spacevariables|} - \frac{C_2M\epsilon}{2},$$ which we will minimize to achieve the desired probability bound. 

By direct calculation, $g(\epsilon)=0$ if we choose $\epsilon=\epsilon_{M}=(\frac{C_3}{M})^{\frac{1}{|\spacevariables|+1}}$, where $C_3=\Big(\frac{2K^2}{C_2C_1^{|\spacevariables|}}\Big)^{\frac{1}{|\spacevariables|+1}}$.  It is then an easy computation to see that the derivative of $g(\epsilon)$ is $\leq 0$ for all $\epsilon \geq \epsilon_M$. Thus, we can put this result into the bound \eqref{eq:maintheorem_intResult} to arrive at the probability bound, 
\begin{equation} \label{eq:maintheorem_intResult2}
P_{\bmu}\{\| 
\bphEA_{L,M,\bhypspace} -
\bpEA \|_{\LtwoB(\brhoEAL)}^2
> \epsilon \} \leq  \begin{cases} \exp\Big(\frac{-C_2M\epsilon}{2}\Big), &  \epsilon \geq \epsilon_M \\ 1,  &  \epsilon \leq \epsilon_M \end{cases}
\end{equation}
Integrating this bound over $\epsilon\in(0,+\infty)$ and using the elementary inequality $e^{-x}\leq x+1$ for all $x\geq 0$, we get that  
$$ 
\int_0^{\infty} P_{\bmu}\{\| 
\bphEA_{L,M,\bhypspace} -
\bpEA \|_{\LtwoB(\brhoEAL)}^2
> \epsilon \}  d\epsilon \leq \Big(\frac{C_4}{M}\Big)^{\frac{1}{|\spacevariables|+1}} + O\Big(\frac{1}{M}\Big)
$$ 
Now, bringing the coercivity part from \eqref{2ndorder:mainestimate1} back in, we achieve the rate, 
$$ 
\mathbb{E}_{\bmu}[\| 
\bphEA_{L,M,\bhypspace}- 
\bpEA\|_{\LtwoB(\brhoEAL)}^2]  \le \frac{C_4}{c_{\bhypspace}} M^{-\frac{1}{|\spacevariables|+1}}, $$ where $C_4$ is an absolute constant that only depends on $K, S_{EA}, R, R_{\dot{x}}$. 

For part (b), we note the following basic result on the covering number of $\mbf{\mathcal{B}}_n$  by $\epsilon$-balls (see \cite[Proposition 5]{CS02}),
$$
\mathcal{N}(\bm{\mathcal{B}}_n, \epsilon, \|\cdot \|_{\infty}) \leq \bigg(\frac{4(c_1+S_{EA})}{\epsilon}\bigg)^{c_0K^2n^{|\spacevariables|}}.
$$
Using \eqref{2ndorder:mainestimate1}, and the approximation assumption, we bound the probability as
\begin{equation}
 \begin{aligned}
P_{\bmu}\{\| 
\bphEA_{L,M,\bm{\mathcal{B}}_n}
- \bpEA\|_{\LtwoB(\brhoEAL)}^2
> \epsilon + c_2n^{-2s} \} \\
& \hspace{-7cm} = P_{\bmu}\{\| 
\bphEA_{L,M,\bm{\mathcal{B}}_n} - \bpEA\|_{\LtwoB(\brhoEAL)}^2
> t'n^{-2s} + c_2n^{-2s} \} \\
& \hspace{-7cm} = P_{\bmu}\{\| 
\bphEA_{L,M,\bm{\mathcal{B}}_n} - \bpEA \|_{\LtwoB(\brhoEAL)}^2
> tn^{-2s} \} \\
& \hspace{-7cm} \leq \mathcal{N}\Big(\bm{\mathcal{B}}_n, c_3'tn^{-2s} , \|\cdot \|_{\infty}\Big)\exp(-c_4Mtn^{-2s}) \\
& \hspace{-7cm} \leq \Big(\frac{c_3}{tn^{-2s}}\Big)^{c_0K^2n^{|\spacevariables|}} \exp(-c_4Mtn^{-2s})\\
& \hspace{-7cm} \leq \exp(c_0K^2n^{|\spacevariables|}\log(c_3)+c_0K^2n^{|\spacevariables|}|\log(tn^{-2s})|-c_4Mtn^{-2s}), 
\label{est2}
\end{aligned}
\end{equation}
where $c_2=\frac{1}{c_{\cup_n\mbf{\mathcal{L}}_n}}c_1$, $ c_3' = \frac{c_{\cup_n\mbf{\mathcal{L}}_n}}{48(S_{EA}+c_1)\max\{R, R_{\dot{x}}\}^2K^4}$, $c_3=\frac{192 (S_{EA}+c_1)^2\max\{R, R_{\dot{x}}\}^2K^4}{c_{\cup_n\mbf{\mathcal{L}}_n}}$, and $c_4=\frac{c_{\cup_n\mbf{\mathcal{L}}_n}}{1152(S_{EA}+c_1)^2\max\{R, R_{\dot{x}}\}^2K^4}$ are absolute constants independent of $M$.  
Define 
$$
g(n) :=c_0n^{|\spacevariables|}K^2\log(c_3)+c_0n^{|\spacevariables|}K^2|\log(tn^{-2s})|-\frac{c_4}{2}Mtn^{-2s}.
$$ 
To find the optimal $n$ in terms of $M$, we minimize $g$ in $n$. By taking a derivative, and solving the corresponding equation, we see that the optimal $n$ is 
$$n_* = O\bigg(\Big(\frac{M}{\log M}\Big)^{\frac{1}{2s+|\spacevariables|}}\bigg),$$ with constant independent of $M$ and only depending on $c_3, c_4, c_2$.
For convenience we will choose $n_*= \lfloor(\frac{M}{\log M})^{\frac{1}{2s+|\spacevariables|}}\rfloor$. Now let $\epsilon_M = (\frac{M}{\log M})^{\frac{2s}{2s+|\spacevariables|}}$ and  consider 
$$h(\epsilon) = c_0n_*K^2\log(c_3) + c_0n_*K^2|\log(\epsilon)| - \frac{c_4}{2}M\epsilon.$$ 
As before, let $\epsilon = tn_*^{-2s} = t\epsilon_M$ and consider $h(t\epsilon_M)$. It is easy to see that $\lim_{t \rightarrow 0^+} h(t\epsilon_M) = \infty$ and $\lim_{t \rightarrow \infty}h(t \epsilon_M)=-\infty$. Together with the continuity of $h$, these facts imply that there exists a constant $c_5$, depending on $K,c_0, c_2,c_3,c_4$ such that $h(c_5\epsilon_M) = 0$. 
We further need that $h'(\epsilon) \leq 0$ for all $\epsilon \geq c_5\epsilon_M$. By taking the derivative of $h$, setting it $\leq 0$, we find that this condition eventually holds by basic calculus on $h$. 
Therefore, if needed to satisfy the derivative condition, we can enlarge the constant $c_5$ to a constant $c_6$ (independent of $M$) such that $h(\epsilon) \leq 0$ and $ h'(\epsilon) \leq 0$ for all $\epsilon \geq c_6\epsilon_M$. 
These results imply the probability bound,
\begin{equation} \label{eq:maintheorem_intResult2}
P_{\bmu}\{\| 
\bphEA_{L,M,\mbf{\mathcal{B}}_{n_*}}
- \bpEA \|_{\LtwoB(\brhoEAL)}^2
> \epsilon \} \leq  \begin{cases} \exp\Big(\frac{-c_4M\epsilon}{2}\Big), &  \epsilon \geq c_6\epsilon_M \\ 1,  &  \epsilon \leq c_6\epsilon_M \end{cases}
\end{equation}
Integrating this bound over $\epsilon\in(0,+\infty)$ and using the elementary inequality $e^{-x}\leq x+1$ for all $x\geq 0$, we get that  
$$ 
\int_0^{\infty} P_{\bmu}\{\| 
\bphEA_{L,M,\mbf{\mathcal{B}}_{n_*}} - \bpEA\|_{\LtwoB(\brhoEAL)}^2
> \epsilon \}  d\epsilon \leq C_1\Big(\frac{\log M}{M}\Big)^{\frac{2s}{2s+|\spacevariables|}},
$$ 
where $C_1$ is a constant depending on $c_0,c_1,s, K, S_{EA}, R, R_{\dot{x}}$. 
Now with $\bhypspaceEA_M = \mbf{\mathcal{B}}_{n_*}$ and using \eqref{2ndorder:mainestimate1}, we have shown the convergence rate, 
$$ 
\mathbb{E}_{\bmu}[\| 
\bphEA_{L,M,\bhypspaceEA_M}
- \bpEA \|_{\LtwoB(\brhoEAL)}^2]  \le \frac{c_7}{c_{\bigcup_M \bhypspaceEA_M}} \Big(\frac{M}{\log M}\Big)^{-\frac{2s}{2s+|\spacevariables|}}, 
$$ where $c_7$ is an absolute constant that only depends on $s,K,c_0, c_1, S_{EA}, R, R_{\dot{x}}$. 

\end{proof}

In both theorems, the convergence rates $\frac{2s}{2s+|\spacevariables|}$ and $\frac{2s}{2s+|\spacevariables_{\xi}|}$ coincide with the minimax rate of convergence for nonparametric regression in the corresponding dimension -- up to the logarithmic factor. It is possible that this logarithmic factor could be removed (see techniques in Chapter 11-15 of \cite{Gyorfi06}), but with considerable additional complexity of the proof. 
Achieving the same rate of convergence as if we had observed the noisy values of the interaction kernels directly, rather than through the dynamics, is a major strength of our approach. The strong consistency results show the asymptotic optimality of our method, and for wide classes of systems the assumptions of the theorems apply.  
Specifically, for part (b) of the theorems, the dimension and approximation conditions can be explicitly achieved by piecewise polynomials or splines appropriately adapted to the regularity of the kernel. In the conditions of theorem \ref{t:2ndordersystem:thm_optRate}, $n$ can be the number of partitions along each axis of the variables in $\spacevariables$. Then, using multivariate splines or piecewise polynomials we will have a fixed constant $c_0$ (corresponding to the number of parameters to estimate for each function) times $Kn^{\spacevariables}$ as the dimension of the linear space. Furthermore, by standard approximation theory results, see \cite{Schumaker} (Chapters 12,13), \cite{DeVore1980},\cite{DeBoor1983}, for $s$ the regularity of the interaction kernels we achieve the desired approximation condition with piecewise polynomials of degree $\lfloor s\rfloor$. 
In our admissible spaces we have $s=1$, note that the theorems are stronger if we have a kernel of higher regularity.  

We next briefly examine the convergence rate on a few systems of fundamental interest.
Recall that in table \ref{tab:2ndOrder_Examples} we have as the final two columns the values $|\spacevariables|, |\spacevariables_{\xi}|$. These correspond directly to the rate of convergence of each of the system under our learning approach.  

Some specific highlights:
\begin{itemize}
\item For Anticipation Dynamics (AD), even though we are learning both an energy and alignment kernel, because there are only $2$ unique variables shared across both of them we learn at the $2$-dimensional rate.
\item For the Synchronized Oscillator we achieve the $2$-dimensional optimal learning rate on each of the $EA$ and $\xi$ portions (rather than a $4$-dimensional rate) due to the decoupled nature of the system, similarly we only pay the $1$-dimensional rate twice for the Phototaxis system. This is a key reason for splitting our learning theory between $EA$- and $\xi$-interaction kernels and accounting for shared and non-shared variables: this enables us to substantially improve the performance guarantees in actual applications.
\item Due to the design of the measures, norms and the associated learning algorithm, even in the heterogeneous case for celestial mechanics and predator-swarm, we only pay the $1$-dimensional learning rate, although the constants are of course affected by the heterogeneity and the algorithm requires a larger learning matrix.
\item The rates of convergence of our estimators for all previously-studied first-order systems (see \cite{lu2019nonparametric,Tang2019,Zhong20}) can be derived from Theorem \ref{main:convrate}.
\end{itemize}

One downside of the results above is the lack of dependence on $L$ as it seems natural that finer time samples in each trajectory should improve the results. Indeed, the numerical experiments of \cite{Zhong20, lu2019nonparametric, Tang2019} demonstrate that more data in $L$ may indeed be helpful to improve the performance. One technique used in \cite{Zhong20} for very long trajectory data (large $L$, medium to small $M$) is to split each trajectory into larger $M$ with smaller $L$ in each. 
In this way, one can explicitly show the desired convergence rate in a form agreeing with the above theorems, we do not believe that it leads to significantly different performance compared to using the original data, only that we can easily transform the data into a form that the theorems apply to and with no loss of performance.  Explicit dependence on $N$ is not the objective of this work, see \cite{BFHM17}, but further study of the mean-field regime is of interest to the authors and work is ongoing.

\section{Performance of trajectory prediction} \label{sec:trajectory}
Once estimators $\bvphEA,\bvphxi$ are obtained, a natural question is the accuracy of the evolved trajectory based on these estimated kernels. The next theorem shows that the error in prediction is (i) bounded trajectory-wise by a continuous-time version of the error functional, and (ii) bounded on average by the $\LtwoB(\brhoEA), \LtwoB(\brhoxi)$, respectively, error of the estimator. This further validates the effectiveness of our error functional and $\LtwoB(\brhoT)$-metric to assess the quality of the estimator. 
In particular, this emphasizes that although the system is a coupled system of ODE's, our decoupled learning procedure with our choice of norm will lead to control of the expected supremum error as long as we minimize the $\LtwoB(\brhoEA), \LtwoB(\brhoxi)$ norms in obtaining our estimators. 

\begin{theorem}\label{secondordersystem:TrajDiff_NEW}
Suppose that $\widehat\bintkernel^E \in \boldsymbol{\mathcal{K}}_{S_E}^E$, $\widehat\bintkernel^A \in \boldsymbol{\mathcal{K}}_{S_A}^A$ and $\widehat\bintkernel^{\xi} \in \boldsymbol{\mathcal{K}}_{S_{\xi}}^{\xi}$. 
Denote by $\widehat{\bY}(t)$ and $\bY(t)$ 
the solutions of the systems with kernels $\widehat\bintkernel^E=(\widehat\intkernel_{kk'}^{E})_{k,k'=1}^{K,K}, \widehat\bintkernel^A =(\widehat\intkernel_{kk'}^{A})_{k,k'=1}^{K,K}$, and $\widehat\bintkernel^{\xi} =(\widehat\intkernel_{kk'}^{\xi})_{k,k'=1}^{K,K}$ and $\bintkernel^E, \bintkernel^A,\bintkernel^{\xi}$ respectively, both with the same initial condition. Then 
\begin{align*}
\sup_{t\in[0,T]}\Vert \widehat{\bY}(t)- \bY(t)\Vert_{\mathcal{Y}}^2 &\leq g(T) \bigg[2T^2 \int_{p=0}^t \int_{s=0}^p \| \ddot{\bX} - \rhsfvnc(\coords) - \rhsf^{\bphEA}(\coords) \|_{\mathcal{S}}^2 \text{ ds dp} \\
&+ 2T  \int_{s=0}^t  \| \ddot{\bX} - \rhsfvnc(\coords) - \rhsf^{\bphEA}(\coords) \|_{\mathcal{S}}^2 ds \\
&+ 2T\int_{s=0}^t \| \dot{\bXi} - \rhsfxinc(\coords) - \rhsf^{\widehat{\bintkernel}_{\xi}}(\coords) \|_{\mathcal{S}}^2 ds \bigg] 
\end{align*}
where $g(T) =1+(1+B_1T)T\exp(A_1T+T^2/2)$. The constants are $A_1 = 2T(8KP + \mathcal{L} +8QK + \mathcal{L}^{\xi})$ and $B_1 = 2T^2(8KP + \mathcal{L})$, with any unspecified constants made precise in the proof and only depending on the Lipschitz constants of the noncollective forces and the feature maps, as well as the values $S^E, S^A, S^{\xi}$ coming from the admissible spaces.
It is bounded on average, with respect to the initial distribution $\bmu$, by
\begin{align} \label{trajPred:KeyExpectation}
\E_{\bmu}[\sup_{t\in[0,T]} \|\widehat{\bY}(t)- \bY(t)\|_{\mathcal{Y}}^2] &\leq g(T)\bigg((T^2K^2 + TK^2)\Vert \bphEA - \bpEA \Vert_{\LtwoB(\brhoEA)}^2 \nonumber \\
&+ TK^2\Vert \widehat{\bm{\phi}}^{\xi} - \bm{\phi}^{\xi} \Vert_{\LtwoB(\bm{\rho}_T^{\xi})}^2 \bigg) 
\end{align}
with the measures $\brhoEA,\bm{\rho}_T^{\xi}$ defined in (\ref{meas:fullEA}, \ref{meas:rhoxibar}). Expression (\ref{trajPred:KeyExpectation}) shows that by minimizing the right hand side, we can control the expected $\mathcal{Y}$-supremum error of the estimated trajectories. 

\end{theorem}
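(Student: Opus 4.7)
The plan is to integrate both ODE systems, decompose the integrand into a ``stability'' part (Lipschitz in the trajectory) and a ``residual'' part (the kernel mismatch evaluated along the true trajectory), close via a Gr\"onwall-type inequality, and then take expectation to pass to the $\LtwoB$-norm of the estimator error. First observe that along the true trajectory $\bY(\cdot)$, $m_i\ddot{\bx}_i=\forcev+\rhsf^{\bpEA}_i(\coords)$ and $\dot\xi_i=\forcexi+\rhsf^{\bpxi}_i(\coords)$, so the integrands on the right-hand side of the theorem are precisely
\begin{align*}
\|\ddot{\bX}-\rhsfvnc(\coords)-\rhsf^{\bphEA}(\coords)\|_{\mathcal{S}}^2 &= \|\rhsf^{\bphEA-\bpEA}(\coords)\|_{\mathcal{S}}^2,\\
\|\dot{\bXi}-\rhsfxinc(\coords)-\rhsf^{\bphxi}(\coords)\|_{\mathcal{S}}^2 &= \|\rhsf^{\bphxi-\bpxi}(\coords)\|_{\mathcal{S}}^2,
\end{align*}
i.e.\ collective-force mismatches evaluated along $\bY$. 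This identification pins down what the Gr\"onwall forcing term should be.

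Using $\widehat\bY(0)=\bY(0)$ and adding-and-subtracting the estimated RHS evaluated at the true trajectory, I would write
\begin{align*}
\widehat{\bV}(t)-\bV(t) &= \int_0^t\!\big[\mathrm{stab}^{EA}(s)+\rhsf^{\bphEA-\bpEA}(\coords)\big]\,ds,\\
\widehat{\bXi}(t)-\bXi(t) &= \int_0^t\!\big[\mathrm{stab}^{\xi}(s)+\rhsf^{\bphxi-\bpxi}(\coords)\big]\,ds,
\end{align*}
with $\widehat{\bX}(t)-\bX(t)=\int_0^t(\widehat{\bV}(s)-\bV(s))\,ds$, where $\mathrm{stab}^{EA}(s):=\rhsfvnc(\estcoords)-\rhsfvnc(\coords)+\rhsf^{\bphEA}(\estcoords)-\rhsf^{\bphEA}(\coords)$ and analogously for $\mathrm{stab}^{\xi}$. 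Because $\widehat\bintkernel^{\wildcard}$ lie in admissible spaces and the feature maps $\mathcal{V}$ together with $\forcev,\forcexi$ are Lipschitz, the maps $\bZ\mapsto\rhsf^{\bphEA}(\bZ)$ and $\bZ\mapsto\rhsf^{\bphxi}(\bZ)$ are Lipschitz in the $\mathcal{Y}$-norm with constants that are uniform in $N$ (the $1/N_{\clof_{i'}}$ weighting is crucial here). This produces $\|\mathrm{stab}^{EA}(s)\|_{\mathcal{S}}^2\le(8KP+\mathcal{L})\,e(s)$ and $\|\mathrm{stab}^{\xi}(s)\|_{\mathcal{S}}^2\le(8KQ+\mathcal{L}^{\xi})\,e(s)$, where $e(s):=\|\widehat\bY(s)-\bY(s)\|_{\mathcal{Y}}^2$ and $P,Q,\mathcal{L},\mathcal{L}^{\xi}$ are exactly the constants appearing in the theorem's $A_1,B_1$.

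Squaring the three identities, applying $(a+b)^2\le 2(a^2+b^2)$ to separate stability from residual, using the Cauchy--Schwarz-in-time bound $\|\int_0^t h\,ds\|^2\le T\int_0^t\|h\|^2\,ds$, and finally summing into $e(t)=\|\widehat{\bX}-\bX\|_{\mathcal{S}}^2+\|\widehat{\bV}-\bV\|_{\mathcal{S}}^2+\|\widehat{\bXi}-\bXi\|_{\mathcal{S}}^2$ produces
\begin{align*}
e(t)\;\le\; A_1\!\int_0^t\! e(s)\,ds\;+\;B_1\!\int_0^t\!\!\int_0^p\! e(u)\,du\,dp\;+\;R(t),
\end{align*}
where $R(t)$ is exactly the bracketed residual expression of the theorem; the double-integral term arises because $\widehat\bX-\bX$ is the time-integral of $\widehat\bV-\bV$, so its contribution to $e(t)$ is a nested integral of the stability/residual decomposition of $\|\widehat\bV-\bV\|_{\mathcal{S}}^2$. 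A Gr\"onwall-type iteration for this mixed single/double-integral inequality --- e.g.\ bounding $\int_0^t\!\!\int_0^p e\,du\,dp\le T\int_0^t e\,ds$ to apply the classical Gr\"onwall lemma, or iterating directly and summing the resulting series --- produces $e(t)\le g(T)\,R(t)$ with the claimed $g(T)=1+(1+B_1T)T\exp(A_1T+T^2/2)$. Taking $\sup_{t\in[0,T]}$ gives the first bound.

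For the expectation bound I would apply $\E_\bmu$ on both sides of the pointwise inequality, commute it with the time integrals via Tonelli, and invoke the definition of $\brhoEA$ in \eqref{meas:fullEA} together with the norm in \eqref{e:L2NormDefs}: Cauchy--Schwarz on the sum $\sum_{i'}\tfrac{1}{N_{\clof_{i'}}}(\cdot)$ inside $\|\rhsf^{\bphEA-\bpEA}(\coords)\|_{\mathcal{S}}^2$ contributes a $K$-factor, while the weighting built into $\brhoEA$ absorbs the remaining $\sum_{i,i'}\tfrac{1}{N_{\clof_i}N_{\clof_{i'}}}$; careful bookkeeping yields
\begin{align*}
\E_\bmu\!\int_0^T\!\|\rhsf^{\bphEA-\bpEA}(\coords)\|_{\mathcal{S}}^2\,dt \;\le\;K^2\|\bphEA-\bpEA\|_{\LtwoB(\brhoEA)}^2,
\end{align*}
and analogously for the $\xi$-residual against $\brhoxi$. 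Multiplying by the $2T,2T^2$ prefactors and absorbing constants into $g(T)$ yields \eqref{trajPred:KeyExpectation}. The hardest step will be the mixed Gr\"onwall iteration: tracking the single- and double-integral terms simultaneously to recover the exact form of $g(T)$ is delicate, and a secondary care-point is to ensure the collective-force Lipschitz constants do not scale with $N$ --- this relies essentially on the $1/N_{\clof_{i'}}$ normalization together with the admissibility bounds $\|\widehat\bintkernel^{\wildcard}\|_\infty+\mathrm{Lip}[\widehat\bintkernel^{\wildcard}]\le S_{\wildcard}$.
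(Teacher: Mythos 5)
Your plan coincides with the paper's proof: the same add-and-subtract of the estimated right-hand side evaluated along the true trajectory, the same Lipschitz stability estimates for the non-collective forces, feature maps, and admissible kernels, the same mixed single/double-integral inequality $e(t)\le A_1\int_0^t e + B_1\int_0^t\!\int_0^p e + R(t)$, and the same final expectation step using the dynamics-adapted measures to produce the $K^2\,\LtwoB(\brhoEA)$ and $\LtwoB(\brhoxi)$ bounds. The only step you leave schematic is the closure of that mixed inequality, which the paper handles with the iterated Gr\"onwall lemma of Bainov--Simeonov (Theorem \ref{IteratedGronwall}), yielding exactly the stated $g(T)$; your suggested alternatives would give a bound of the same type, and you correctly flag this as the delicate point.
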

We postpone the somewhat lengthy proof to Appendix \ref{s:trajectoryerrorproof}.

\section{Applications} \label{sec:numerics}
Our learning theory, as well as measures, norms, functionals etc. can be applied to study all the examples considered in the works \cite{lu2019nonparametric, Tang2019, Zhong20}. These examples, particularly those of \cite{Zhong20}, can thus be considered as applications of the theoretical results as well as of the algorithm in section \ref{sec:algorithm_numeric}. 

We choose to study two new dynamics, which are not considered in \cite{lu2019nonparametric, Zhong20} since they exhibit some unique features of our generalized model.  In particular, we choose them due to their special form of having both energy-based and alignment-based interactions.  These are the flocking with external potential (FwEP) model in \cite{ST2020} and the anticipation dynamics (AD) model in \cite{shu2019anticipation}.  

Table \ref{tab:app_params} shows the value of learning parameters for these dynamics.
\begin{table}[H]
\centering
\begin{tabular}{ c | c | c | c | c | c | c }
\hline
$M_{\rho}$ & $L$   & $T_f$ & $T$ & $\mu^{\bx}$             & $\mu^{\dot\bx}$          & Num. of learning trials\\
\hline
$2000$     & $500$ & $10$  & $5$ & $\text{Unif.}([0, 5]^2)$ & $\text{Unif.}([0, 5]^2)$ & $10$\\
\hline
\end{tabular}
\caption{Values of parameters for the learning.}
\label{tab:app_params} 
\end{table}

The setup of the learning experiment is as follows.  We use $M_{\rho}$ different initial conditions to evolve the dynamics\footnote{We use the built-in MATLAB integrating routine, $ode15s$, with relative tolerance at $10^{-8}$ and absolute tolerance at $10^{-11}$.} from $0$ to $T$ to generate a good approximation to $\rho_T^{L, EA}, \rho_T^{L, E}$ and $\rho_T^{L, A}$.  Then we use another set of $M$ ($M = 500$ for FwEP and $M = 750$ for AD) initial conditions to generate training data to learn the corresponding $\intkernele$ and $\intkernela$ from the empirical distributions, $\rho_T^{L, M, EA}$'s, etc.  We report the relative learning errors calculated via \eqref{e:L2NormDefs} for $\lintkernele \oplus \lintkernela$, \eqref{e:L2NormDefs} for $\lintkernele$, and \eqref{e:L2NormDefs} for $\lintkernela$, along with pictorial comparison of those interaction kernels as well as a visualization on the pairwise data which is used to learn the estimated kernels.  Then we evolve the dynamics either from the training set of $M$ initial conditions or another set of $M$ randomly chosen initial conditions with $\intkernele \oplus \intkernela$ and $\lintkernele \oplus \lintkernela$ from $0$ to $T_f > T$, and report the trajectory errors calculated using \eqref{eq:traj_norm} on $\by$ (the whole system), and for $\bx$ (the position) and $\bv$ (the velocity).  Again, pictorial comparison of the trajectories are also shown.  We report the trajectory errors over $[0, T]$ and $[T, T_f]$.  The learning results are shown in the following sections.

\subsection{Learning results for flocking with external potential}
We consider the FwEP model for its simplicity and clustering behavior in both position and velocity (hence flocking occurs).  The dynamics of the FwEP model is given as follows,
\begin{equation}
\ddot\bx_i = \frac{1}{N}\sum_{i' = 1, i' \neq i}^N a(\dot\bx_{i'} - \dot\bx_i) \nonumber + \frac{1}{N}\sum_{i' = 1, i' \neq i}^N\intkernel(\norm{\bx_{i'} - \bx_i})(\dot\bx_{i'} - \dot\bx_i).\label{eq:FwEP_model}
\end{equation}
Here $a > 0$ is a constant representing an attraction force, and $\intkernel = \frac{1}{(1 + r^2)^\beta}$\footnote{This choice of interaction for alignment is not mandatory.  It is a good comparison between this FwEP model with the Cucker-Smale mode with this choice of interaction functions.} with $\beta = \frac{1}{2}$.  To fit into our learning regime, we take, $m_i = 1$, $\numcl = 1$, no $\xi_i$, no non-collective force, and 
\[
\intkernele = a \quad \text{and} \quad \intkernela = \frac{1}{(1 + r^2)^\beta}.
\]
For the FwEP model, we use the space of $1^{st}$ degree piece-wise polynomials with dimension $n^E = 122$ for learning $\intkernele$; and for $\intkernela$, we use the same space.  First, consider the comparison of energy-based interactions shown in Fig. \ref{fig:FwEP_phiE}.
\begin{figure}[H]
  \centering
  \includegraphics[width=.8\linewidth]{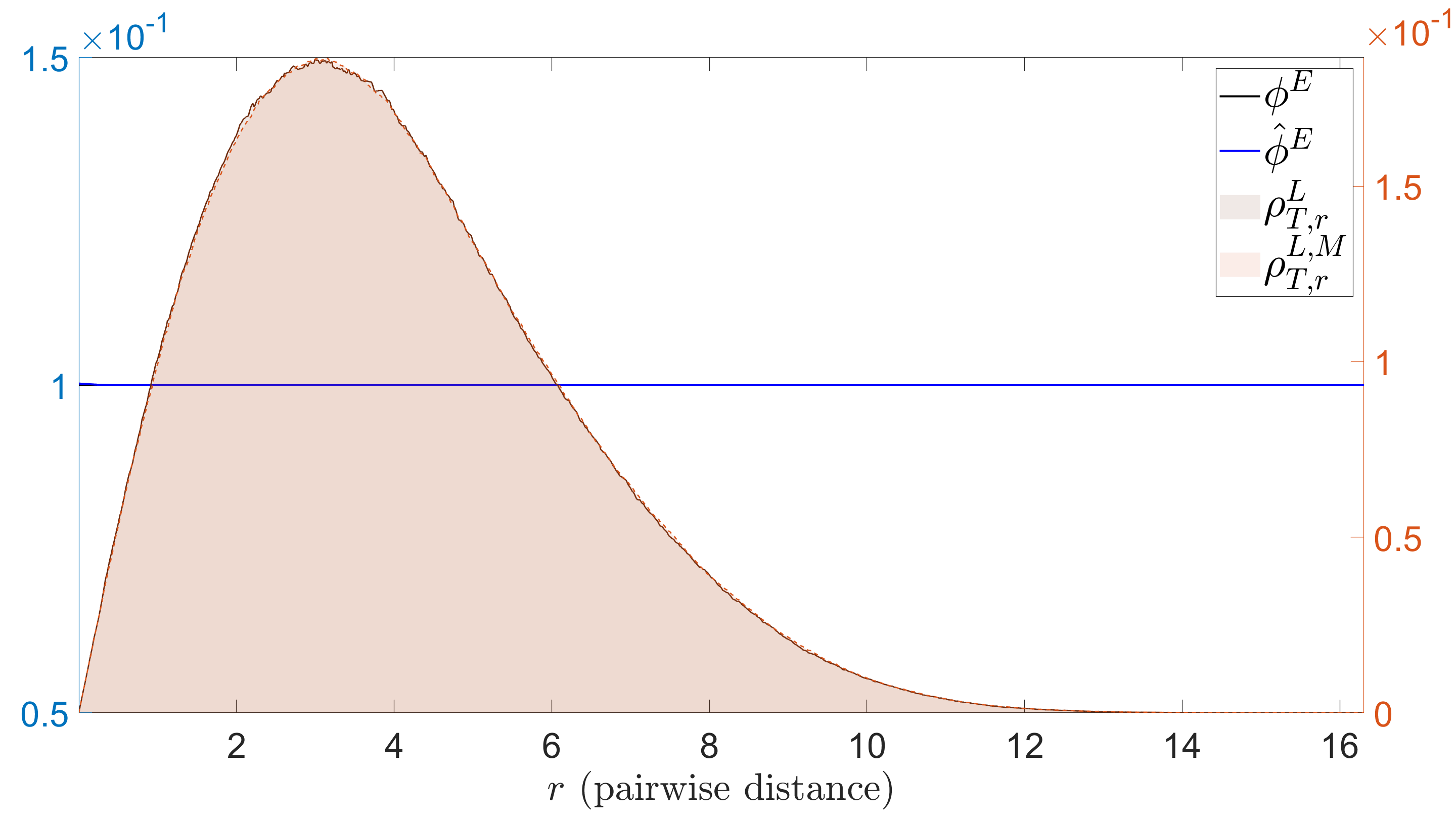}
  \caption{$\intkernele$ vs. $\lintkernele$, Err: $3.9 \cdot 10^{-6} \pm 6.6 \cdot 10^{-7}$.  The lines shown in blue are the estimated interaction kernels, and the lines shown in black are the true interaction kernels. The colored areas shown in the background are the learned distributions of pairwise distance data.}
  \label{fig:FwEP_phiE}
\end{figure}
Fig. \ref{fig:FwEP_phiE} shows that our learning performance on constant functions using piecewise linear polynomials shows promising results.  However, we still have trouble learning the behavior of the interaction at $r = 0$, part of it due to the weight of $\vec{0}$ in the model, and the other part of it being lack of available data towards $r = 0$.  Next, we show the comparison of alignment-base interactions shown in Fig. \ref{fig:FwEP_phiA_results} with distribution of the pairwise data.
\begin{figure}[H]
\centering
\begin{subfigure}{.5\textwidth}
  \centering
  \includegraphics[width=.9\linewidth]{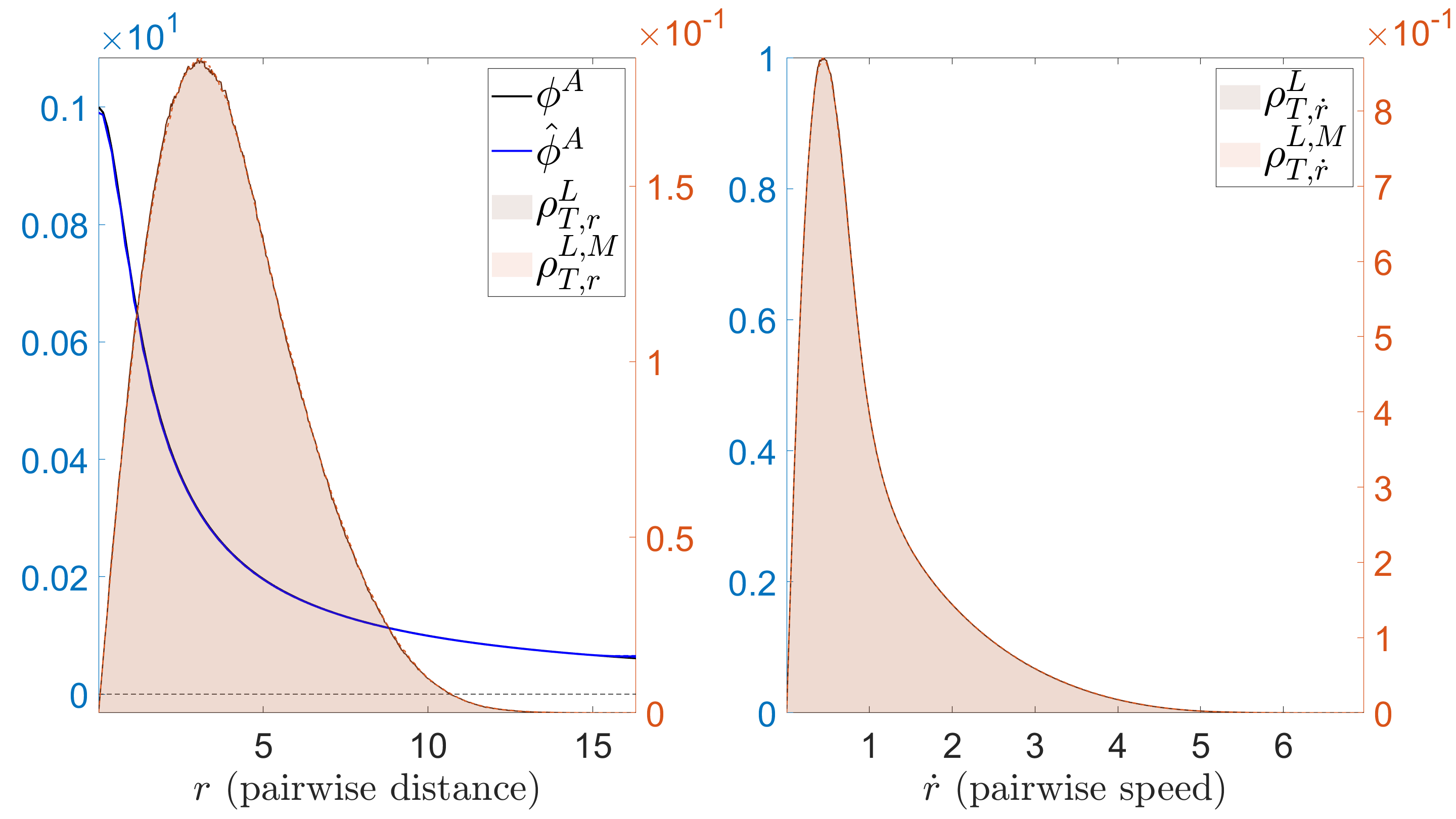}
  \caption{$\intkernela$ vs. $\lintkernela$, Err: $9.1 \cdot 10^{-3} \pm 2.5 \cdot 10^{-4}$.}
  \label{fig:FwEP_phiA}
\end{subfigure}%
\begin{subfigure}{.5\textwidth}
  \centering
  \includegraphics[width=.9\linewidth]{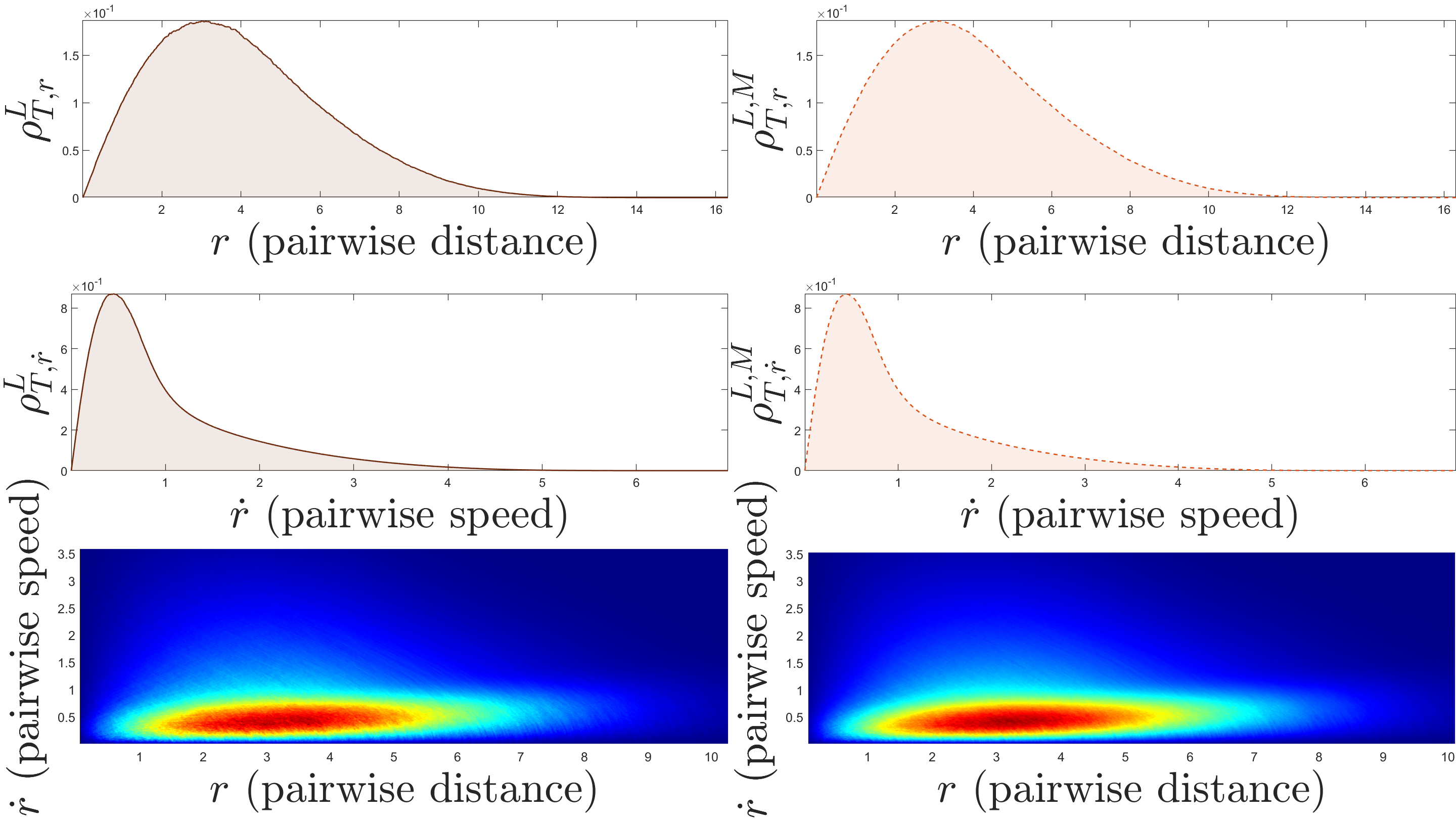}
  \caption{$\rho_{T}^{A,L}$ vs. $\rho_{T}^{A,L, M}$.}
  \label{fig:FwEP_rhoA}
\end{subfigure}
\caption{The lines shown in blue are the estimated interaction kernels, and the lines shown in black are the true interaction kernels. 
The colored areas shown in the background are the learned distributions of pairwise distance data.}
\label{fig:FwEP_phiA_results}
\end{figure}
Again, in Fig. \ref{fig:FwEP_phiA}, it shows a faithful approximation from our estimated kernels.  The $\lintkernele \oplus \lintkernela$ error is: $5.8 \cdot 10^{-3} \pm 1.6 \cdot 10^{-4}$.  The comparison of trajectories are shown in Fig. \ref{fig:FwEP_trajs}.
\begin{figure}[H]
  \centering
  \includegraphics[width=.8\linewidth]{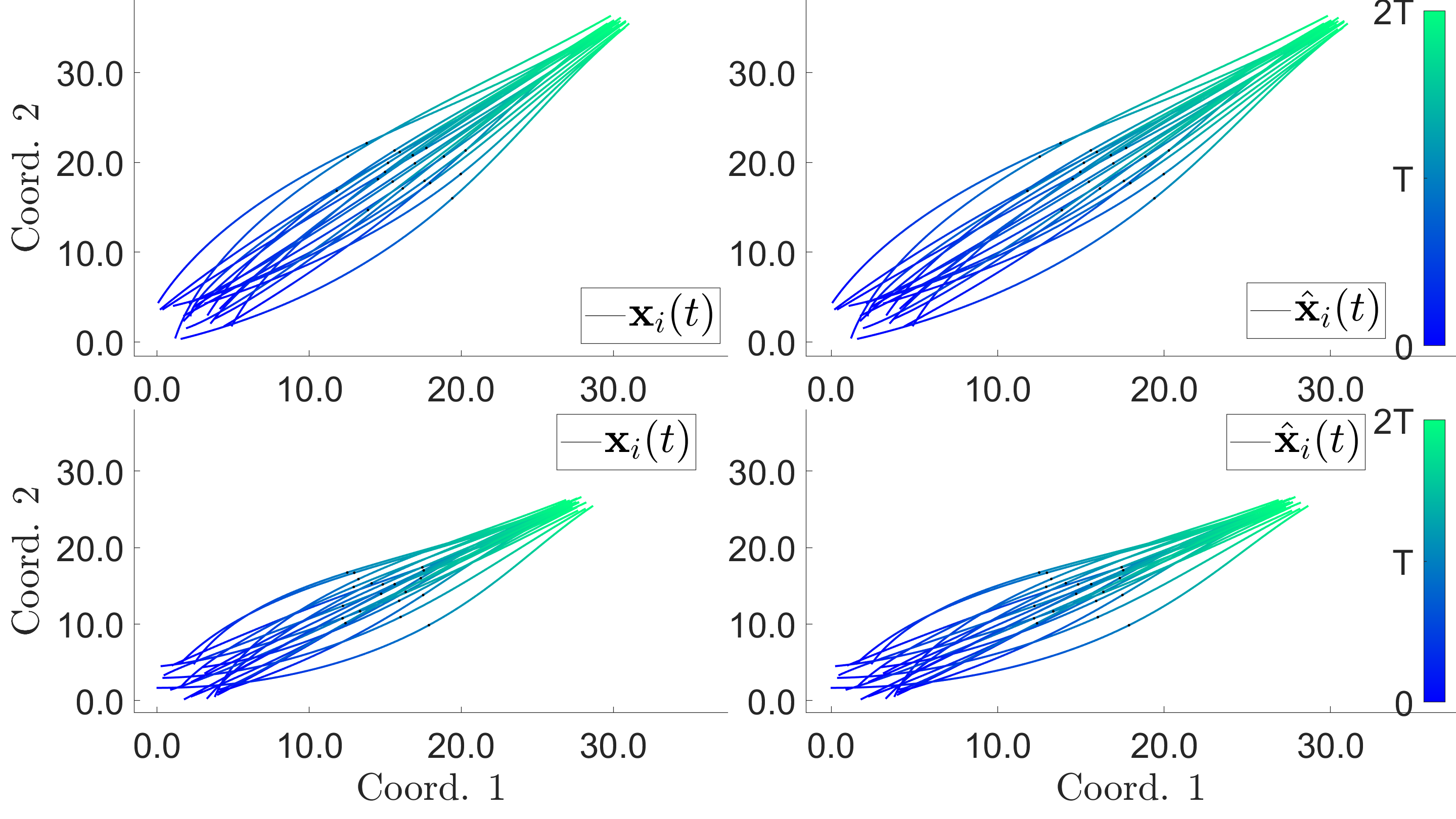}
  \caption{Trajectory Comparison.}
  \label{fig:FwEP_trajs}
\end{figure}
Fig. \ref{fig:FwEP_trajs} shows little visual difference between the learned and observed trajectories.  A more quantitative description of the trajectory errors are shown in table \ref{tab:FwEP_traj_err}.
\begin{table}[H]
\centering
\begin{tabular}{ c | c | c }
\hline
                                   & $[0, T]$                                   & $[T, T_f]$ \\
\hline
$\text{mean}_{\text{IC}}$ on $\bx$ & $7.2 \cdot 10^{-4} \pm 1.9 \cdot 10^{-5}$  & $6.7 \cdot 10^{-4} \pm 1.8 \cdot 10^{-5}$ \\
$\text{mean}_{\text{IC}}$ on $\bv$ & $1.15 \cdot 10^{-3} \pm 3.1 \cdot 10^{-5}$ & $1.5 \cdot 10^{-3} \pm 4.1 \cdot 10^{-3}$ \\
$\text{mean}_{\text{IC}}$ on $\by$ & $6.2 \cdot 10^{-6} \pm 1.7 \cdot 10^{-7}$  & $2.22 \cdot 10^{-6} \pm 6.8 \cdot 10^{-8}$ \\
\hline
$\text{std}_{\text{IC}}$ on $\bx$  & $1.28 \cdot 10^{-4} \pm 4.2 \cdot 10^{-6}$ & $1.22 \cdot 10^{-4} \pm 4.0 \cdot 10^{-6}$ \\
$\text{std}_{\text{IC}}$ on $\bv$  & $2.20 \cdot 10^{-4} \pm 7.0 \cdot 10^{-6}$ & $2.5 \cdot 10^{-4} \pm 1.0 \cdot 10^{-5}$ \\
$\text{std}_{\text{IC}}$ on $\by$  & $1.52 \cdot 10^{-6} \pm 5.9 \cdot 10^{-8}$ & $6.0 \cdot 10^{-7} \pm 2.6 \cdot 10^{-8}$ \\
\hline
\hline
$\text{mean}_{\text{IC}}$ on $\bx$ & $7.2 \cdot 10^{-4} \pm 1.7 \cdot 10^{-5}$  & $6.7 \cdot 10^{-4} \pm 1.6 \cdot 10^{-5}$ \\
$\text{mean}_{\text{IC}}$ on $\bv$ & $1.15 \cdot 10^{-3} \pm 2.7 \cdot 10^{-5}$ & $1.46 \cdot 10^{-3} \pm 3.3 \cdot 10^{-5}$ \\
$\text{mean}_{\text{IC}}$ on $\by$ & $6.2 \cdot 10^{-6} \pm 1.6 \cdot 10^{-7}$  & $2.22 \cdot 10^{-6} \pm 5.4 \cdot 10^{-8}$ \\
\hline
$\text{std}_{\text{IC}}$ on $\bx$  & $1.30 \cdot 10^{-4} \pm 5.8 \cdot 10^{-6}$ & $1.24 \cdot 10^{-4} \pm 5.3 \cdot 10^{-6}$ \\
$\text{std}_{\text{IC}}$ on $\bv$  & $2.25 \cdot 10^{-4} \pm 9.9 \cdot 10^{-6}$ & $2.56 \cdot 10^{-4} \pm 9.1 \cdot 10^{-6}$ \\
$\text{std}_{\text{IC}}$ on $\by$  & $1.6 \cdot 10^{-6} \pm 7.6 \cdot 10^{-8}$  & $6.2 \cdot 10^{-7} \pm 2.4 \cdot 10^{-8}$ \\
\hline
\end{tabular}
\caption{Trajectory Errors.  The first three rows of mean trajectory errors are from the training set of initial conditions.  The next three rows are standard deviation of the trajectory errors from the training set of initial conditions.  The following three rows are mean trajectory errors from a new set of initial conditions.  Finally, the last three rows report the standard deviation of the trajectory errors from a new set of initial conditions.}
\label{tab:FwEP_traj_err} 
\end{table}
We are maintaining a relative four-digit accuracy in estimating the position, and a relative three-digit accuracy in estimating the velocity of the agents in the system.  Although we are able to reconstruct $\intkernele$ with a $6$-digit accuracy, we are not able to do the same for $\intkernela$.  The error in $\lintkernele \oplus \lintkernela$ reflects this discrepancy by considering the two functions together.
\subsection{Learning results for anticipation dynamics with $U(r) = \frac{r^p}{p}$}\label{sec:example_AD}
The energy-based interactions are constants in the FwEP models, if we want to consider more complicated models, i.e., interactions depending on pairwise distance and more, the AD models are suitable candidates.  The dynamics of the AD model is given as follows,
\begin{align}
\ddot\bx_i &= \frac{1}{N}\sum_{i' = 1, i' \neq i}^N \frac{\tau U'(\norm{\bx_{i'} - \bx_i})}{\norm{\bx_{i'} - \bx_i}}(\dot\bx_{i'} - \dot\bx_i) \nonumber \\
& \quad + \frac{1}{N}\sum_{i' = 1, i' \neq i}^N\Big\{\frac{-\tau U'(\norm{\bx_{i'} - \bx_i})(\bx_{i'} - \bx_i)\cdot(\dot\bx_{i'} - \dot\bx_i)}{\norm{\bx_{i'} - \bx_i}^3} \nonumber \\
& \quad + \frac{\tau U''(\norm{\bx_{i'} - \bx_i})(\bx_{i'} - \bx_i)\cdot(\dot\bx_{i'} - \dot\bx_i)}{\norm{\bx_{i'} - \bx_i}^2} + \frac{U'(\norm{\bx_{i'} - \bx_i})}{\norm{\bx_{i'} - \bx_i}}\Big\}(\bx_{i'} - \bx_i). \label{eq:AD_model}
\end{align}
Compared to the original model in \cite{shu2019anticipation}, we take $\tau_{i, i'} = 0$.  In order to fit the model into our learning regime, we take
\[
\intkernela(r) = \frac{\tau U'(r)}{r} \mand \intkernele(r, s) = \frac{-\tau U'(r)s}{r^3} + \frac{\tau U''(r)s}{r^2} + \frac{U'(r)}{r}.
\]
Here we have no $\xi_i$, $\numcl = 1$, $m_i = 1$, and
\[
s^E_{i, i'} = s^A_{i, i'} = (\bx_{i'} - \bx_i)\cdot(\dot\bx_{i'} - \dot\bx_i).
\]
We also use $\tau = 0.1$.

It is shown in \cite{shu2019anticipation} that if $U''$ is bounded when $r \rightarrow \infty$ with $U(0) = U'(0) = 0$, then unconditionally flocking would occur.  We take $U(r) = \frac{r^p}{p}$ for $1 < p \le 2$, then the system would show unconditional flocking.  We choose $p = 1.5$ for our learning trials\footnote{$p = 2$ induces constant forces on the dynamics.}.  We use a tensor grid of $1^{st}$ degree piece-wise standard polynomials with $n^E = 28^2$ for learning $\intkernele(r, s)$, then a set of $1^{st}$ degree piecewise standard polynomials with $n^A = 138$ for learning $\intkernela(r)$.  For the energy-based interactions we have the following results. 
\begin{figure}[H]
\centering
\begin{subfigure}{.5\textwidth}
  \centering
  \includegraphics[width=.9\linewidth]{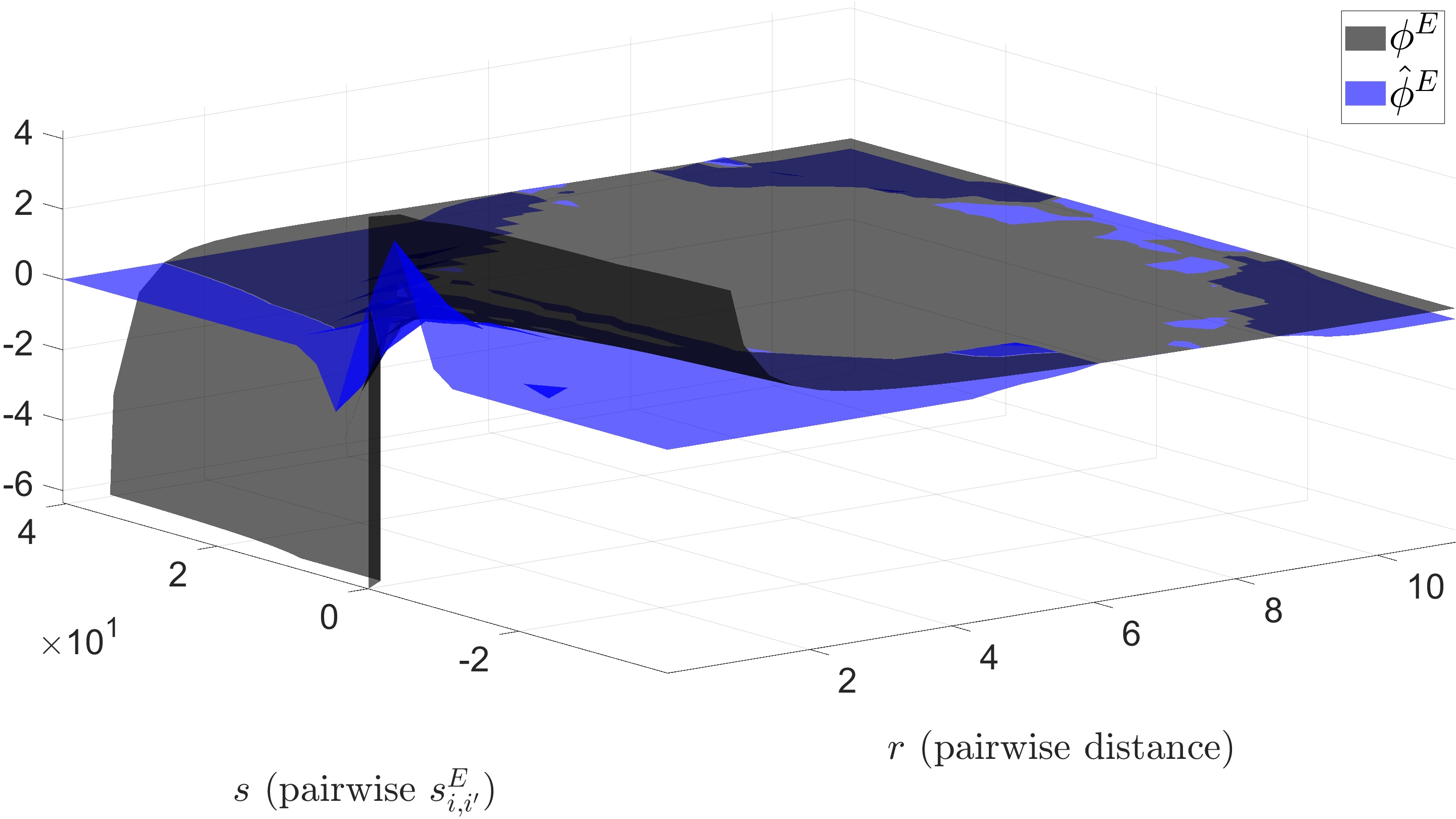}
  \caption{$U(r) = \frac{r^{1.5}}{1.5}$: $\intkernele$ vs. $\lintkernele$, Err: $6 \cdot 10^{-1} \pm 2.6 \cdot 10^{-1}$.}
  \label{fig:AD01_phiE}
\end{subfigure}%
\begin{subfigure}{.5\textwidth}
  \centering
  \includegraphics[width=.9\linewidth]{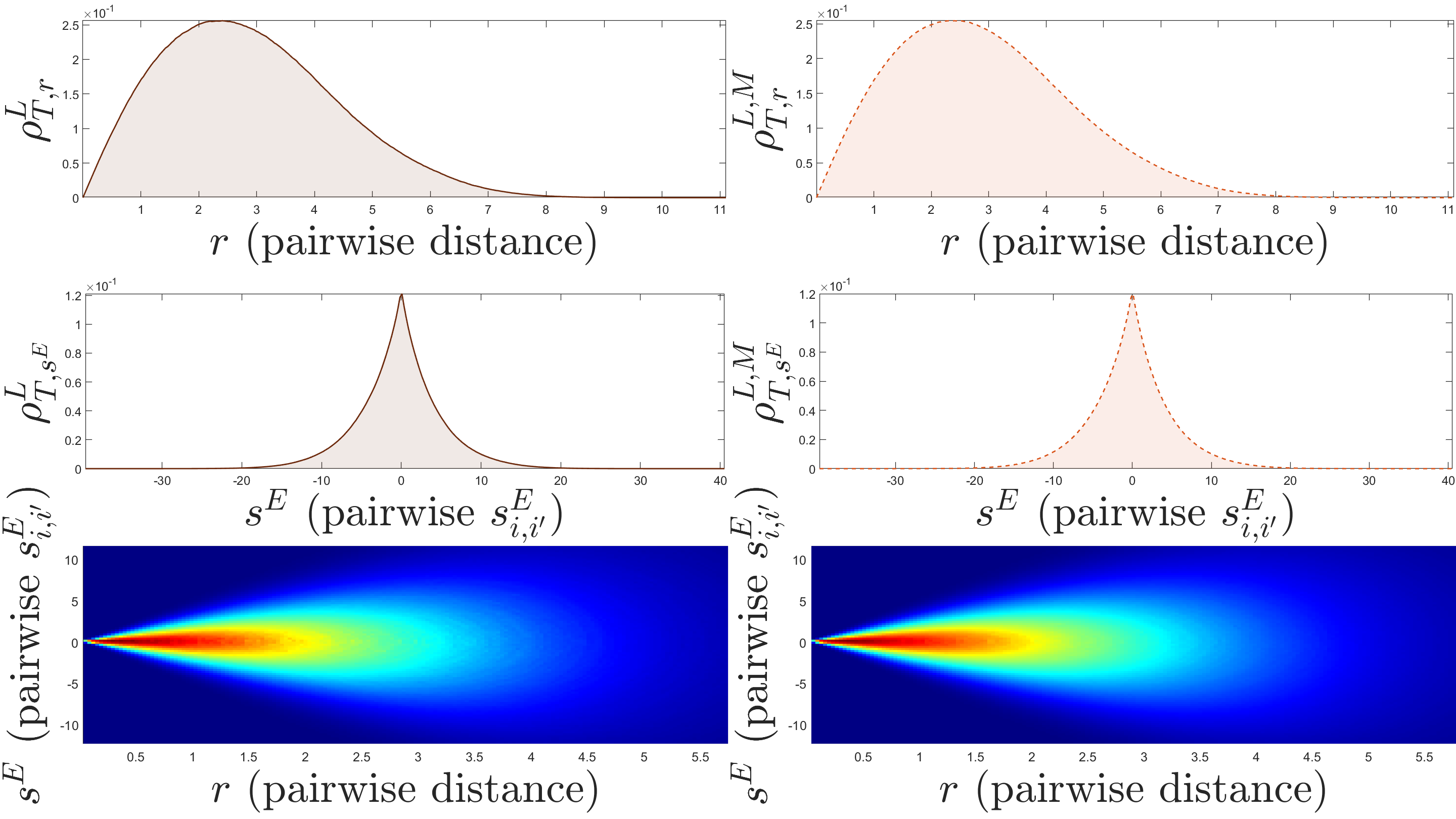}
  \caption{$U(r) = \frac{r^{1.5}}{1.5}$: $\rho_T^{E,L}$ vs. $\rho_T^{E,L, M}$.}
  \label{fig:AD01_rhoE}
\end{subfigure}
\caption{The lines shown in blue are the estimated interaction kernels, and the lines shown in black are the true interaction kernels. 
The colored areas shown in the background are the learned distributions of pairwise distance data.}
\label{fig:AD01_phiE_results}
\end{figure}

As is shown in Fig. \ref{fig:AD01_rhoE}, the concentration of pairwise distance data is away from $0$, making the estimation of the behavior of $\intkernele(r, s)$ at $r$ close to $0$ extremely difficult, meanwhile, since $\intkernele$ is also weighted by the pairwise difference, $\bx_{i'} - \bx_i$, and at $r_{i, i'}$ close to $0$, the information is also lost.  Next, we present the alignment-based interaction kernels.

\begin{figure}[H]
\centering
\begin{subfigure}{.5\textwidth}
  \centering
  \includegraphics[width=.9\linewidth]{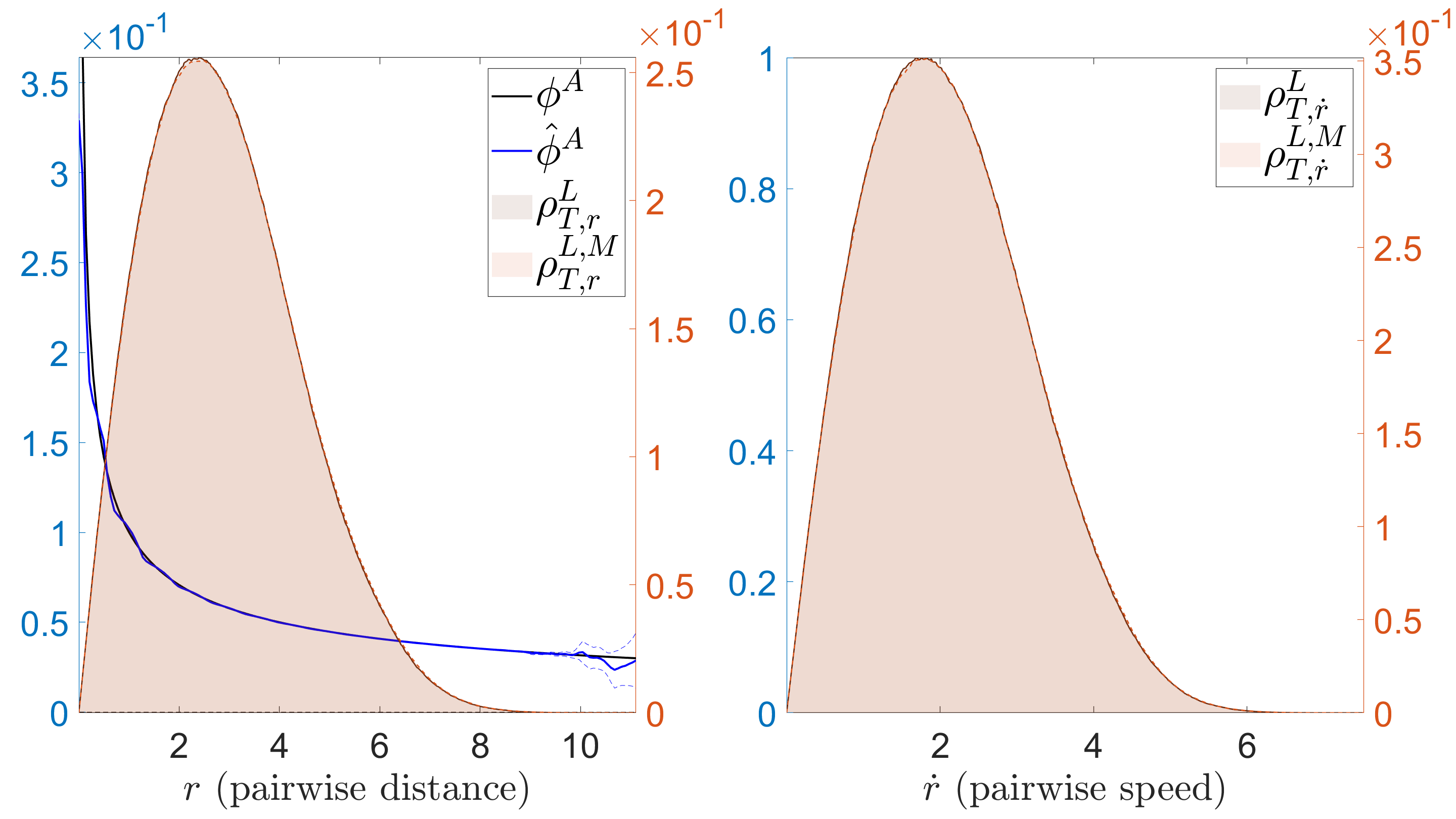}
  \caption{$U(r) = \frac{r^{1.5}}{1.5}$: $\intkernela$ vs. $\lintkernela$. Err: $1.7 \cdot 10^{-1} \cdot 3.9 \cdot 10^{-2}$.}
  \label{fig:AD01_phiA}
\end{subfigure}%
\begin{subfigure}{.5\textwidth}
  \centering
  \includegraphics[width=.9\linewidth]{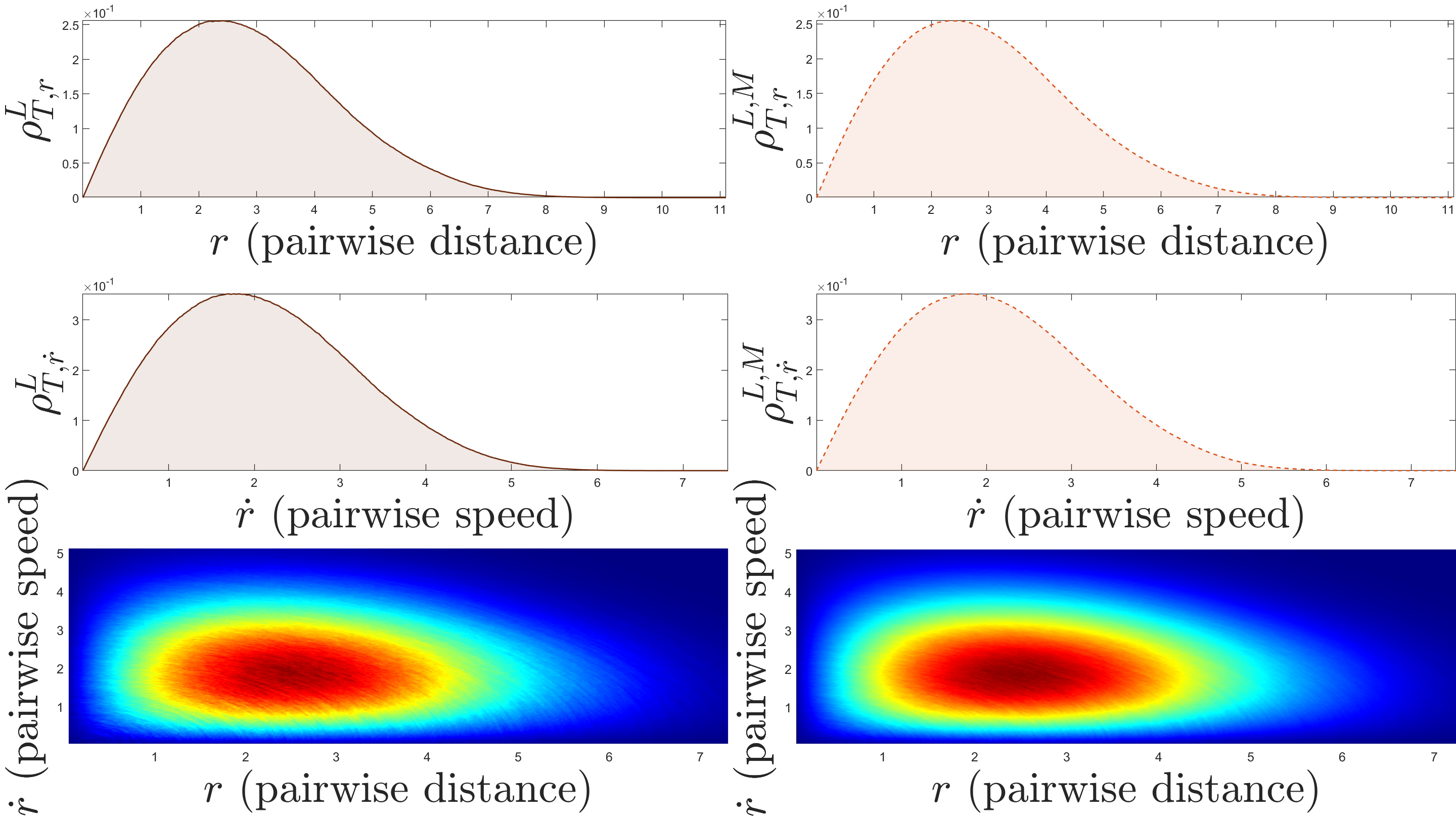}
  \caption{$U(r) = \frac{r^{1.5}}{1.5}$: $\rho_T^{A,L}$ vs. $\rho_T^{A,L, M}$.}
  \label{fig:AD01_rhoA}
\end{subfigure}
\caption{The lines shown in blue are the estimated interaction kernels, and the lines shown in black are the true interaction kernels. 
The colored areas shown in the background are the learned distributions of pairwise distance data.}
\label{fig:AD01_phiA_results}
\end{figure}
We have less trouble estimating the behavior of $\intkernela$ at $r = 0$.  We have trouble estimating $\intkernela$ at the other end of the spectrum of $r$, since the agents have aligned their velocities, hence the weight $\bv_{i'} -  \bv_i$ is close to a zero vector.  The overall learning performance for estimating $\intkernela$ is better compared to estimating $\intkernele$.  The $\lintkernele \oplus \lintkernela$ error is: $6 \cdot 10^{-1} \pm 3.0 \cdot 10^{-1}$.  The comparison of trajectories between the true kernels (LHS) and the estimators (RHS) is shown below.
\begin{figure}[H]
  \centering
  \includegraphics[width=.8\linewidth]{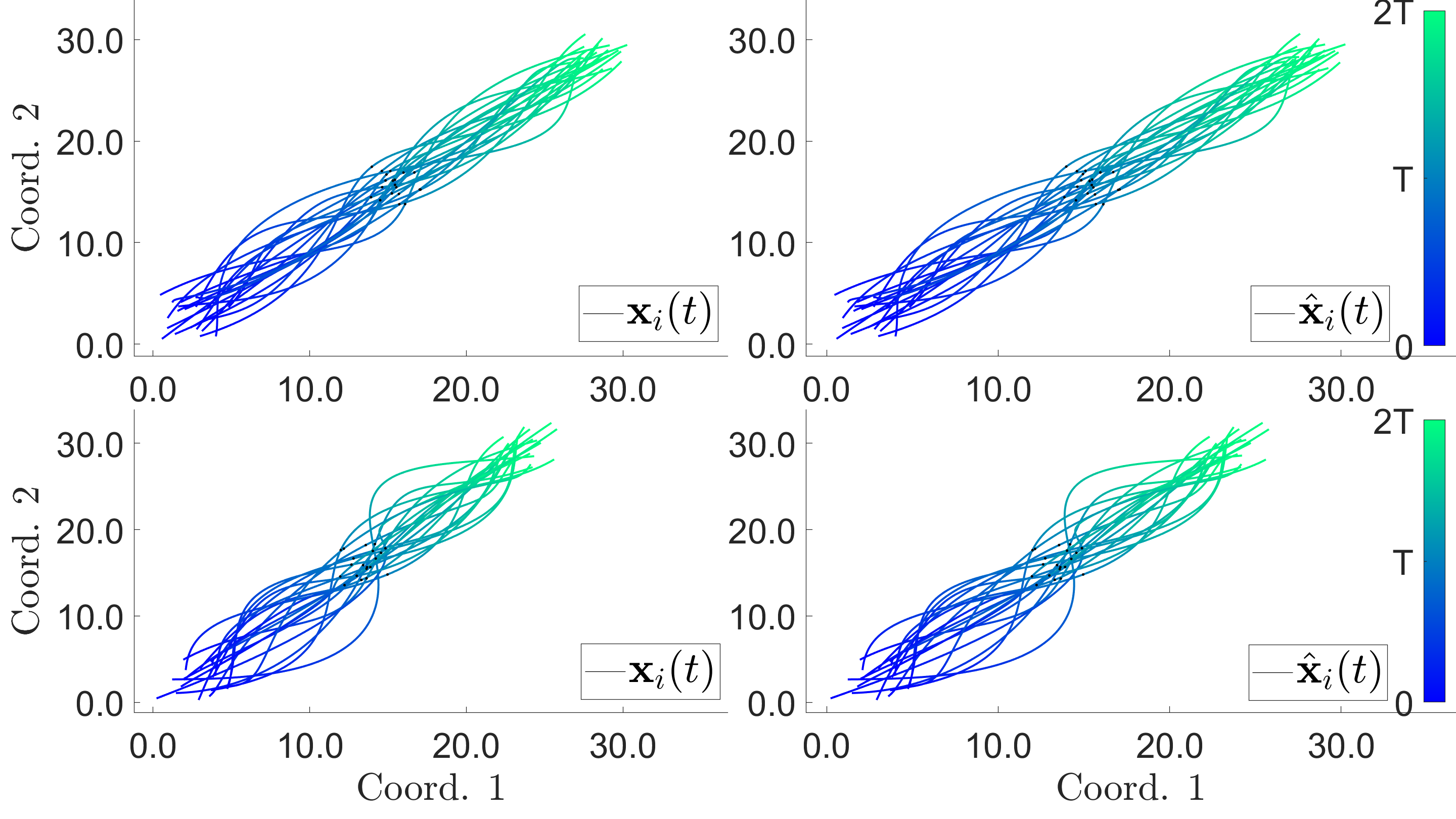}
  \caption{$U(r) = \frac{r^{1.5}}{1.5}$: Trajectory Comparison.}
  \label{fig:AD01_traj}
\end{figure}
Visually, there is no difference between the true dynamics and the estimated dynamics.  We offer more quantitative insight into the difference between the two in table \ref{tab:AD01_traj_err}.
\begin{table}[H]
\centering
\begin{tabular}{ c | c | c }
\hline
                                   & $[0, T]$                                   & $[T, T_f]$ \\
\hline
$\text{mean}_{\text{IC}}$ on $\bx$ & $2.22 \cdot 10^{-3} \pm 8.5 \cdot 10^{-5}$ & $2.4 \cdot 10^{-3} \pm 1.0 \cdot 10^{-4}$ \\
$\text{mean}_{\text{IC}}$ on $\bv$ & $7.8 \cdot 10^{-3} \pm 3.9 \cdot 10^{-4}$  & $1.78 \cdot 10^{-2} \pm 8.9 \cdot 10^{-4}$ \\
$\text{mean}_{\text{IC}}$ on $\by$ & $1.91 \cdot 10^{-5} \pm 9.0 \cdot 10^{-7}$ & $6.2 \cdot 10^{-6} \pm 3.6 \cdot 10^{-7}$ \\
\hline
$\text{std}_{\text{IC}}$ on $\bx$  & $2.8 \cdot 10^{-4} \pm 1.2 \cdot 10^{-5}$  & $3.4 \cdot 10^{-4} \pm 1.5 \cdot 10^{-5}$ \\
$\text{std}_{\text{IC}}$ on $\bv$  & $1.14 \cdot 10^{-3} \pm 7.8 \cdot 10^{-5}$ & $2.7 \cdot 10^{-3} \pm 1.5 \cdot 10^{-4}$ \\
$\text{std}_{\text{IC}}$ on $\by$  & $4.7 \cdot 10^{-6} \pm 3.0 \cdot 10^{-7}$  & $6.2 \cdot 10^{-6} \pm 3.6 \cdot 10^{-7}$ \\
\hline
\hline
$\text{mean}_{\text{IC}}$ on $\bx$ & $2.22 \cdot 10^{-3} \pm 8.4 \cdot 10^{-5}$ & $2.4 \cdot 10^{-3} \pm 1.0 \cdot 10^{-4}$ \\
$\text{mean}_{\text{IC}}$ on $\bv$ & $7.8 \cdot 10^{-3} \pm 3.8 \cdot 10^{-4}$  & $1.78 \cdot 10^{-2} \pm 8.7 \cdot 10^{-4}$ \\
$\text{mean}_{\text{IC}}$ on $\by$ & $1.91 \cdot 10^{-5} \pm 8.6 \cdot 10^{-7}$ & $2.4 \cdot 10^{-5} \pm 1.1 \cdot 10^{-6}$ \\
\hline
$\text{std}_{\text{IC}}$ on $\bx$  & $3.0 \cdot 10^{-4} \pm 2.4 \cdot 10^{-5}$  & $3.4 \cdot 10^{-4} \pm 1.3 \cdot 10^{-5}$ \\
$\text{std}_{\text{IC}}$ on $\bv$  & $1.15 \cdot 10^{-3} \pm 6.8 \cdot 10^{-5}$ & $2.7 \cdot 10^{-3} \pm 1.5 \cdot 10^{-4}$ \\
$\text{std}_{\text{IC}}$ on $\by$  & $4.7 \cdot 10^{-6} \pm 2.6 \cdot 10^{-7}$  & $6.2 \cdot 10^{-6} \pm 3.1 \cdot 10^{-7}$ \\
\hline
\end{tabular}
\caption{$U(r) = \frac{r^{1.5}}{1.5}$: Trajectory Errors.} 

\label{tab:AD01_traj_err} 
\end{table}
We maintain a $3$-digit relative accuracy in estimating the position/velocity of the agents, even though for the interaction kernels, we are only able to maintain a $1$-digit relative accuracy. 
\section{Conclusion and further directions} \label{sec:conclusion}
We have described a second-order model of interacting agents that incorporates multiple agent types, an environment, external forces, and multivariable interaction kernels. The inference procedure described exploits the structure of the system to achieve a learning rate that only depends on the dimension of the interaction kernels, which is much smaller than the full ambient dimension $(2d+1)N$. %
Our estimators are strongly consistent, and in fact have learning rates that are min-max optimal within the nonparametric class, under mild assumptions on the interaction kernels and the system. We described how one can relate the expected supremum error of the trajectories for the system driven by the estimated interaction kernels to the difference between the true interaction kernels and the estimated ones -- this result gives strong support to the use of our weighted $L^2$ norms as the correct way to measure performance and derive estimators. A detailed discussion of the full numerical algorithm, including the inverse problem derived from data and a coercivity condition to ensure learnability, along with complex examples, were presented and we showed how the formulation presented covers a very wide range of systems coming from many disciplines. 

There are various ways that one could build on this work to handle different systems and for many of these further directions, the theoretical framework, techniques, and theorems presented here would be directly useful. In particular, one could consider second-order stochastic systems or a similar system but on a manifold, more complex environments, having more unknowns within the model beyond just the interaction kernels (say estimating the non-collective forces as well), 
identifying the best feature maps to model the data, and considering semiparametric problems where there are hidden parameters within the interaction kernels or other parts of the model that we wish to estimate along with the interaction kernels. 
The generality of the model and its broad coverage of models across the sciences, together with the scalability and performance of the algorithm, could inspire new models -- both explicit equations and nonparametric estimators learned from data -- which are theoretically justified and highly practical.

\acks{MM is grateful for discussions with Fei Lu and Yannis Kevrekidis, and for partial support from NSF-1837991, NSF-1913243, NSF-1934979, NSF-Simons-2031985, AFOSR-FA9550-17-1-0280 and FA9550-20-1-0288, ARO W911NF-18-C-0082, and to the Simons Foundation for the Simons Fellowship for the year '20-'21; ST for support from an AMS Simons travel grant; JM for support from NIH - T32GM11999. Please direct correspondence to any of the first three authors.

All authors jointly designed research and wrote the manuscript; JM and ST derived theoretical results; MZ developed algorithms and applications; JM and MZ analyzed data.}

\appendix

\section{Control of trajectory error}\label{s:trajectoryerrorproof}
\begin{proof}[of Theorem \ref{secondordersystem:TrajDiff_NEW}]
We introduce the function $$F{[\varphi^{EA}]}(\bx,\dot{\bx},\topwde,\topwda):= \varphi^{E}(||\bx||, \sx)\bx + \varphi^{A}(||\bx||,\sxdot)\dot{\bx}$$ defined on $\mathbb{R}^{2d+p^E+p^A}$ for functions $\varphi^E \in L^{\infty}([0,R]\times \mathbb{S}^E), \varphi^A \in L^{\infty}([0,R]\times \mathbb{S}^A)$. Similarly, let $F[\varphi^{\xi}](\bx, \xi,  s^{\xi}):= \varphi^{\xi}(||\bx||,  \topwdxi)\xi$.
Now we have by assumption that all trajectories 

start from the same initial conditions on both the position and velocity, which implies that $\widehat{\bY}(0) = \bY(0)$ and $ \dot{\widehat{\bY}}(0) = \dot{\bY}(0)$. 
For every $t\in[0,T]$ we have that, by the fundamental theorem of calculus and the triangle inequality,  
\begin{align}  
\| \bX(t) - \widehat{\bX}(t)\|_{\mathcal{S}}^2 &= \sum_{j=1}^k \sum_{i\in C_j}\frac{1}{N_j}\bigg\Vert\int_0^t \int_0^p (\ddot{\bx_i}-\ddot{\widehat{\bx}}_i) ds dp  \bigg\Vert^2 \nonumber\\
 &\leq tp \sum_{j=1}^K \sum_{i\in C_j}\frac{1}{N_j} \int_{p=0}^t \int_{s=0}^p \| \ddot{\bx_i}-\ddot{\widehat{\bx}}_i \|^2 ds dp \nonumber \\
&= tp \int_{p=0}^t \int_{s=0}^p \| \ddot{\bX} - \rhsfvnc(\coords) - \rhsf^{\bphEA}(\coords) +\rhsfvnc(\coords) \nonumber\\ & \hspace{1cm}+ \rhsf^{\bphEA}(\coords) - \rhsfvnc(\estcoords) - \rhsf^{\bphEA}(\estcoords) \|_{\mathcal{S}}^2 ds dp \nonumber\\
& \leq 2T^2 \int_{p=0}^t \int_{s=0}^p \| \ddot{\bX} - \rhsfvnc(\coords) - \rhsf^{\bphEA}(\coords) \|_{\mathcal{S}}^2 ds dp \label{trajpred:mainbound} \\
& + 2T^2 \int_{p=0}^t \int_{s=0}^p I ds dp + 2T^2 \int_{p=0}^t \int_{s=0}^p \| \rhsfvnc(\coords) - \rhsfvnc(\estcoords) \|_{\mathcal{S}}^2 ds dp. \nonumber
\end{align}
Here we have introduced the term, 
\begin{equation*}
I = \bigg \Vert \rhsf^{\bphEA}(\coords)- \rhsf^{\bphEA}(\estcoords) \bigg \Vert_{\mathcal{S}},
\end{equation*}
which can be expressed explicitly as,
\begin{equation}
I = \Bigg\Vert \bigg( \sum_{j'=1}^K \sum_{i'\in C_{j'}}\frac{1}{N_{j'}}(F{[\phijhat]}(\br_{ii'},\dot{\br}_{ii'},\sxi,\sxidot) - F{[\phijhat]}(\widehat{\br}_{ii'},\dot{\widehat{\br}}_{ii'},\widehat{\sxi},\widehat{\sxidot}) ) \bigg)_{i,j} \Bigg\Vert_{\mathcal{S}}^2.
\end{equation}
Note that in $I$, $j$ is the index of the type among the $\{1,\ldots K\}$ and $i$ indexes within each type $C_j$. This holds similarly in later expressions $I_1,I_2$. 
For the third term of (\ref{trajpred:mainbound}), we exploit the Lipschitz property of the non-collective force:
\begin{align*}
 \| \rhsfvnc(\coords) - \rhsfvnc(\estcoords) \|_{\mathcal{S}}^2 &= \sum_{j=1}^K \sum_{i \in C_j}\frac{1}{N_j} \|\forcev_i(\bx_i, \dot{\bx}_i, \xi_i) - \forcev_i(\widehat{\bx}_i, \dot{\widehat{\bx}}, \widehat{\xi}_i) \|^2 \\
 &\leq \sum_{j=1}^K \sum_{i \in C_j}\frac{1}{N_j} \text{Lip}^2[\forcev_i](\| \bx_i - \widehat{\bx}_i \|^2 + \|\dot{\bx}_i - \dot{\widehat{\bx}}_i \|^2 + \|\xi_i - \widehat{\xi}_i \|^2) \\
 & \leq \max_i \text{Lip}^2[\forcev_i] \|\bY - \widehat{\bY} \|_{\mathcal{Y}}^2
\end{align*}
So that we have the bound
\begin{equation} \label{trajpred:bound1}
2T^2 \int_{p=0}^t \int_{s=0}^p \| \rhsfvnc(\coords) - \rhsfvnc(\estcoords) \|_{\mathcal{S}}^2 ds dp \leq 2T^2 \int_{p=0}^t \int_{s=0}^p \max_i \text{Lip}^2[\forcev_i]  \|\bY - \widehat{\bY} \|_{\mathcal{Y}}^2 ds dp
\end{equation}
First we introduce the convenient notations of 
\begin{equation} \label{trajpred:s-notation}
 \bm s_{i \widehat{i'}}^{E} = \topwde_{(\clof_i,\clof_{i'})}(\bx_i, \dot{\bx}_i, \xi_i, \widehat{\bx}_{i'}, \dot{\widehat{\bx}}_{i'}, \widehat{\xi}_{i'}), \hspace{0.75cm} 
 \widehat{\bm s}_{ii'}^{A} = \topwda_{(\clof_i,\clof_{i'})}(\widehat{\bx}_i, \dot{\widehat{\bx}}_i, \widehat{\xi}_i, \widehat{\bx}_{i'}, \dot{\widehat{\bx}}_{i'}, \widehat{\xi}_{i'})
\end{equation}

with analogous formulae for 
$\bm s_{\widehat{i} i'}^{E},\bm s_{\widehat{i} i'}^{A},\bm s_{i \widehat{i'}}^{A},\bm s_{i \widehat{i'}}^{\xi}, \bm s_{\widehat{i} i'}^{\xi},\widehat{\bm s}_{ii'}^{E}, \widehat{\bm s}_{ii'}^{A}, \widehat{\bm s}_{ii'}^{\xi} $. 
Now we break up $I$ using the triangle inequality and get that $I \leq I_1 + I_2$ where
$$
I_1 =\Bigg\Vert  \bigg( \sum_{j'=1}^K \sum_{i'\in C_{j'}}\frac{1}{N_{j'}} (F{[\phijhat]}(\br_{ii'},\dot{\br}_{ii'},\sxi,\sxidot) - F{[\phijhat]}(\bx_i - \widehat{\bx_{i'}}, \dot{\bx_i} - \dot{\widehat{\bx_i}},\bm{s}_{i\widehat{i'}}^{E},\bm{s}_{i\widehat{i'}}^{A} )) \bigg )_{i,j}
 \Bigg\Vert_{\mathcal{S}}^2
$$ 
$$
I_2 = \Bigg\Vert \bigg( \sum_{j'=1}^K \sum_{i'\in C_{j'}}\frac{1}{N_{j'}} (F[\phijhat](\bx_i - \widehat{\bx_{i'}}, \dot{\bx_i} - \dot{\widehat{\bx_i}},\bm{s}_{i\widehat{i'}}^{E},\bm{s}_{i\widehat{i'}}^{A}) - F_{[\phijhat]}(\widehat{\br_{ii'}}, \dot{\widehat{\br_{ii'}}}, \widehat{\sxi},\widehat{\sxidot}))) \bigg )_{i,j}
\Bigg\Vert_{\mathcal{S}}^2
$$
So using the Lipschitz property of $F[\phijhat]$ we get that, since
$$
I_1 = \sum_{j=1}^K \sum_{i\in C_{j}}\frac{1}{N_{j}}  \Bigg| \sum_{j'=1}^K \sum_{i'\in C_{j'}}\frac{1}{N_{j'}} (F{[\phijhat]}(\br_{ii'},\dot{\br}_{ii'},\sxi,\sxidot) - F{[\phijhat]}(\bx_i - \widehat{\bx_{i'}}, \dot{\bx_i} - \dot{\widehat{\bx_i}},\bm{s}_{i\widehat{i'}}^{E},\bm{s}_{i\widehat{i'}}^{A} )) 	
 \Bigg|^2,
$$ 
then, 
\begin{align*}
I_1 \leq K \sum_{j=1}^K\sum_{i\in C_j}\frac{1}{N_j}\sum_{j'=1}^K\sum_{i'\in C_{j'}}\frac{1}{N_{j'}}&\bigg|\bigg(\text{Lip}[F[\phijhat]]\|(\bx_{i'} - \widehat{\bx_{i'}}, \dot{\bx}_{i'} - \dot{\widehat{\bx}}_{i'}\bm{s}_{ii'}^{E} - \bm{s}_{i\widehat{i'}}^{E} , \bm{s}_{ii'}^{A} - \bm{s}_{i\widehat{i'}}^{A} )\|  \bigg) \bigg|^2.
\end{align*}
By the assumptions on the feature maps, we have that
\begin{align*}
\| \bm{s}_{ii'}^{E} - \bm{s}_{i\widehat{i'}}^{E} \| \leq \text{Lip}[\topwde_{(\clof_i,\clof_{i'})}]\|(\bx_{i'} - \widehat{\bx_{i'}}, \dot{\bx}_{i'} - \dot{\widehat{\bx}}_{i'},\xi_{i'} - \widehat{\xi}_{i'}) \| \\
\| \sxidot - \bm{s}_{i\widehat{i'}}^{A}\| \leq \text{Lip}[\topwda_{(\clof_i,\clof_{i'})}]\|(\bx_{i'} - \widehat{\bx_{i'}},\dot{\bx}_{i'} - \dot{\widehat{\bx}}_{i'},\xi_{i'} - \widehat{\xi}_{i'}) \| 
\end{align*}
Combining these bounds we see that, 
\begin{align} 
I_1 \leq K \sum_{j=1}^K\sum_{i\in C_j} \frac{1}{N_j} \sum_{j'=1}^K\sum_{i'\in C_{j'}}\frac{1}{N_{j'}} &\Big(\max_{j,j'}\Big(\text{Lip}[F[\phijhat](\text{Lip}[\topwde_{(j,j')}]+1),\text{Lip}[F[\phijhat](\text{Lip}[\topwda_{(j,j')}]+1) \Big)\Big)^2 \nonumber \\ 
&(\|\bx_{i'} - \widehat{\bx_{i'}} \| +  \|\dot{\bx}_{i'} - \dot{\widehat{\bx}}_{i'}\| + \|\xi_{i'} - \widehat{\xi}_{i'} \|)^2. \label{trajpred:I_1Bound}
\end{align}

Let $\tilde{S}= \max(S_E, S_A)^2$, 
$J=(\max_{j,j'}\text{Lip}[\topwde_{(j,j')},\topwda_{(j,j')}]+1)^2$, 
and then let $P=\tilde{S}J$ and we get by Young's inequality that,
\begin{equation}
I_1 \leq 4KP\|\bY - \widehat{\bY} \|_{\mathcal{Y}}^2,  \label{trajPred:FinalI1bound}
\end{equation}
and performing a similar analysis we get that 
$$I_2 \leq 4KP \|\bY - \widehat{\bY} \|_{\mathcal{Y}}^2. $$
So gathering terms, we can reexpress (\ref{trajpred:mainbound}) as 
\begin{align}
\| \bX(t) - \widehat{\bX}(t)\|_{\mathcal{S}}^2 &\leq
2T^2 \int_{p=0}^t \int_{s=0}^p \| \ddot{\bX} - \rhsfvnc(\coords) - \rhsf^{\bphEA}(\coords) \|_{\mathcal{S}}^2 ds dp \nonumber \\
&+ 2T^2 (F + 8KP) \int_{p=0}^t \int_{s=0}^p  \|\bY - \widehat{\bY} \|_{\mathcal{Y}}^2 ds dp \label{trajpred:Xbound}
\end{align}
where $F = \max_{i}\text{Lip}[\forcev_i]$.
Performing an analogous analysis on $\| \bV(t) - \widehat{\bV}(t)\|_{\mathcal{S}}^2 ,\| \bXi(t) - \widehat{\bXi}(t)\|_{\mathcal{S}}^2$ , 
with some additional effort, one can get the following result on the phase variable 
\begin{align}
\| \bXi(t) - \widehat{\bXi}(t)\|_{\mathcal{S}}^2 & \leq 2T(8QK + F^{\xi}) \int_{s=0}^t \|\bY - \widehat{\bY} \|_{\mathcal{Y}}^2 ds \nonumber \\
&+ 2T\int_{s=0}^t \|\dot\bXi - \rhsfxinc(\bX, \bV, \bXi) +  \rhsf^{\bintkernelxi}(\bX, \bV, \bXi) \|_{\mathcal{S}}^2 ds \label{trajpred:Xibound}
\end{align}
where $F^{\xi} = \max_{i}\text{Lip}[\forcexi_i]$ and $Q = \max(H, S^{\xi})$ where 
$H = \max_{j,j'}\text{Lip}[\topwde_{(j,j')},\topwda_{(j,j')}]$. 
Similarly, we have that, 
\begin{align}
\| \bV(t) - \widehat{\bV}(t)\|_{\mathcal{S}}^2 &\leq 2T \int_{s=0}^t  \| \ddot{\bX} - \rhsfvnc(\coords) - \rhsf^{\bphEA}(\coords) \|_{\mathcal{S}}^2 ds \nonumber \\
& +2T(F + 8KP)\int_{s=0}^t \|\bY - \widehat{\bY} \|_{\mathcal{Y}}^2 ds \label{trajpred:Vbound}
\end{align}

Gathering the bounds (\ref{trajpred:Xbound}, \ref{trajpred:Xibound}, \ref{trajpred:Vbound}), we have that

\begin{align*}
\Vert \widehat{\bY}(t)- \bY(t)\Vert_{\mathcal{Y}}^2 &\leq
2T(8KP + F + 8QK + F^{\xi})\int_{s=0}^t \Vert \widehat{\bY}- \bY\Vert_{\mathcal{Y}}^2 ds \\
& + 2T^2(8KP + F)\int_{p=0}^t \int_{s=0}^p \Vert \widehat{\bY}- \bY\Vert_{\mathcal{Y}}^2 ds dp \\
a(t) & \begin{dcases} &+ 2T^2 \int_{p=0}^t \int_{s=0}^p \| \ddot{\bX} - \rhsfvnc(\coords) - \rhsf^{\bphEA}(\coords) \|_{\mathcal{S}}^2 ds dp\\
&+ 2T  \int_{s=0}^t  \| \ddot{\bX} - \rhsfvnc(\coords) - \rhsf^{\bphEA}(\coords) \|_{\mathcal{S}}^2 ds \\
&+ 2T\int_{s=0}^t \| \dot{\bXi} - \rhsfxinc(\coords) - \rhsf^{\widehat{\bintkernel}^{\xi}}(\coords) \|_{\mathcal{S}}^2 ds \bigg]
\end{dcases}
\end{align*}

where we denote the last three lines by $a(t)$ and notice that this is a nondecreasing function in $t$. We also denote $A_1 = 2T(8KP + F + 8QK + F^{\xi})$ and $B_1 = 2T^2(8KP + F)$. Now use theorem \ref{IteratedGronwall}, which is in \cite{ChoYeol2007} and is originally in Bainov and Simeonov. With this notation, we can rewrite the above bound as
\begin{equation}
\Vert \widehat{\bY}(t)- \bY(t)\Vert_{\mathcal{Y}}^2 \leq A_1 \int_{s=0}^t \Vert \widehat{\bY}- \bY\Vert_{\mathcal{Y}}^2 ds  + B_1 \int_{p=0}^t \int_{s=0}^p \Vert \widehat{\bY}- \bY\Vert_{\mathcal{Y}}^2 ds dp +a(t) 
\label{trajpred:Bainov}
\end{equation}
And so in the notation of Theorem \ref{IteratedGronwall} we have $u(t) = \|\widehat{\bY}(t) - \bY(t) \|_{\mathcal{Y}}^2$,  $b(t) = 1$, $k_1(t,t_1) = A_1$ and $k_2(t,t_1,t_2) = B_1$, so that for all $t$ we have 
\begin{align*}
\Vert \widehat{\bY}(t)- \bY(t)\Vert_{\mathcal{Y}}^2 \leq a(t) + \int_0^t \widehat{R}[a](t,s)\exp\bigg(\int_s^t \widehat{R}[b](t,\tau) d\tau\bigg)ds
\end{align*}
and we have the simple bounds
\begin{align}
\widehat{R}[a](t,s) = a(t) + \int_0^s B_1 a(t_2)dt_2 \leq a(T) + B_1Ta(T) \\
\widehat{R}[b](t,\tau) = A_1+ \int_0^{\tau}1dy = A_1 + \tau
\end{align}
So that, 
\begin{align*}
\Vert \widehat{\bY}(t)- \bY(t)\Vert_{\mathcal{Y}}^2 &\leq 
a(T) + [a(T) + B_1Ta(T)]\int_{s=0}^t \exp\bigg(\int_s^t (A_1 + \tau)d\tau\bigg)ds \\
& \leq a(T) + [a(T) + B_1Ta(T)] \int_{s=0}^T\exp\bigg(\int_0^T A_1+\tau d\tau\bigg)ds \\
&= a(T) + [a(T) + B_1Ta(T)]T\exp(A_1T + T^2/2) \\
&= a(T)(1+(1+B_1T)T\exp(A_1T + T^2/2))
\end{align*}
So that we can immediately conclude the first assertion of the theorem, 
$$\sup_{t\in [0,T]}\Vert \widehat{\bY}(t)- \bY(t)\Vert_{\mathcal{Y}}^2 \leq a(T)(1+(T+B_1T^2)\exp(A_1T + T^2/2))$$
Lastly, we can use the results of section \ref{app:continuity} to get the key result on the expected supremum error. We take expectation on each of the three terms of $a(T)$ and normalize them so they are in the form of the results of \ref{app:continuity}. 
\begin{align}
\frac{1}{T^2}\int_{p=0}^T \int_{s=0}^T \mathbb{E}_{\bmu} \| \ddot{\bX} - \rhsfvnc(\coords) - \rhsf^{\bphEA}(\coords) \|_{\mathcal{S}}^2 ds dp \\
< K^2 \Vert \bphEA - \bpEA \Vert_{\LtwoB(\brhoEA)}^2 
\end{align}
We similarly get that,
\begin{equation}
\frac{1}{T}\int_{s=0}^T \mathbb{E}_{\bmu} \| \dot{\bXi} - \rhsfxinc(\coords) - \rhsf^{\widehat{\bintkernel}^{\xi}}(\coords) \|_{\mathcal{S}}^2 ds < K^2 \Vert \bphxi - \bpxi \Vert_{\LtwoB(\brhoxi)}^2,
\end{equation}
and can get an analogous bound for the remaining term of $a(T)$. 
These bounds together lead to 
$$a(T) \leq (2T^4K^2 +2T^2K^2)\Vert \bphEA - \bpEA \Vert_{\LtwoB(\brhoEA)}^2  + 2T^2K^2\Vert \bphxi - \bpxi \Vert_{\LtwoB(\brhoxi)}^2 $$
which implies the desired result. 
\end{proof}

\section{Learning theory - technical tools} \label{sec:app:learntechnical}

\subsection{Continuity of the error functionals} \label{app:continuity}
For any $t\in [0,T]$, consider the two random variables,
\begin{align} \label{eq:Error_Func_RV:EA}
\bm{\mathcal{E}}_{\bX(t)}^{EA}(\bvpEA) &= \Big\Vert\ddot{\bX}(t) -  \rhsfvnc(\coordsT) \nonumber \\ 
&- \rhsf^{\bvpE}(\coordsT) - \rhsf^{\bvpA}(\coordsT)\Big\Vert^2_{\mathcal{S}} 
\end{align}
\begin{align} \label{eq:Error_Func_RV:xi}
\bm{\mathcal{E}}_{\bXi(t)}^{\xi}(\bvpxi) = \Big\Vert\dot{\bm{\Xi}}(t) - \rhsfxinc(\coordsT) - \rhsf^{\bvpxi}(\coordsT)\Big\Vert^2_{\mathcal{S}}
\end{align}
These will be used in various places throughout the technical proofs and easily relate to the natural error functionals defined in \eqref{eq:Error_Func:EA}, \eqref{eq:Error_Func:xi} as, 
$$ \bm{\mathcal{E}}_{\infty}^{EA}(\bvpEA)=\frac{1}{L} \sum_{l=1}^{L} \mathbb{E}_{\bmu}\left[\bm{\mathcal{E}}_{\bX(t_{l})}^{EA}(\bvpEA)\right], \qquad \bm{\mathcal{E}}_{\infty}^{\xi}(\bvpxi)=\frac{1}{L} \sum_{l=1}^{L} \mathbb{E}_{\bmu}\left[\bm{\mathcal{E}}_{\bXi(t_l)}^{\xi}(\bvpxi)\right]. $$

We begin by establishing basic continuity results for our error functionals over the hypothesis space. 
The specific structure of the governing equations plays a critical role in the analysis.
\subsection*{Alignment and energy based kernels}
\begin{proposition}  \label{proposition:EAbound}
For $\combfbvphEbvphA, \combfbphEbphA \in \bhypspaceEA $ the true and empirical error functionals are bounded as follows,   
\begin{align}
|\mbf{\mathcal{E}}_{\infty}^{EA}(\combfbvphEbvphA) - \mbf{\mathcal{E}}_{\infty}^{EA}(\combfbphEbphA)| &\leq  K^2 \|\combfbvphEbvphA -  \combfbphEbphA\|_{\LtwoB(\bm{\rho}_T^{EA,L})} \|2\combfbpEbpA  - \combfbvphEbvphA -  \combfbphEbphA \|_{\LtwoB(\bm{\rho}_T^{EA,L})}\label{2ndordersystem:expectationerrorfunctional}\\
|\mbf{\mathcal{E}}_{M}^{EA}(\combfbvphEbvphA) - \mbf{\mathcal{E}}_{M}^{EA}(\combfbphEbphA)| &\leq  K^4  \max \{R, R_{\dot{x}} \}^2 \|\combfbvphEbvphA -  \combfbphEbphA  \|_{\infty}  \|2
\combfbpEbpA  - \combfbvphEbvphA -  \combfbphEbphA \|_{\infty}\label{2ndordersystem:empiricalerrorfunctional}
\end{align}
Recall the definitions of $R,R_{\dot{x}}$ in equations \eqref{eq:Rdef}, and \eqref{eq:Rdyndef}. 
\end{proposition}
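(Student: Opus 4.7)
The plan is to exploit two structural facts, and then apply Cauchy--Schwarz twice -- once for the inner products inside the norm, and once for the expectation/sum.

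The first key observation is linearity of $\rhsf^{\bvpEA}$ in the kernel argument, together with the fact that the true trajectory $\bX(t)$ satisfies the governing equation \eqref{eq:2ndOrder}. Consequently,
\begin{equation*}
\ddot\bX(t_l) - \rhsfvnc(\coordsTL) - \rhsf^{\bvpE}(\coordsTL) - \rhsf^{\bvpA}(\coordsTL) \;=\; \rhsf^{\bpEA - \bvpEA}(\coordsTL)\,,
\end{equation*}
so both error functionals collapse to $\mbf{\mathcal{E}}_{\infty}^{EA}(\bvpEA) = \mathbb{E}_{\bmu}\frac{1}{L}\sum_l \|\rhsf^{\bpEA - \bvpEA}\|_{\mathcal{S}}^2$ (and similarly for $\mbf{\mathcal{E}}_{M}^{EA}$ with an empirical average instead of the expectation).

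Using the polarization identity $\|a\|_{\mathcal{S}}^2 - \|b\|_{\mathcal{S}}^2 = \langle a-b, a+b\rangle_{\mathcal{S}}$ with $a = \rhsf^{\bpEA - \bvphEA}$ and $b = \rhsf^{\bpEA - \bphEA}$, and again using linearity of $\rhsf$ in the kernel, I obtain
\begin{equation*}
\mbf{\mathcal{E}}_{\infty}^{EA}(\bvphEA) - \mbf{\mathcal{E}}_{\infty}^{EA}(\bphEA) \;=\; \mathbb{E}_{\bmu}\tfrac{1}{L}\sum_l \big\langle \rhsf^{\bphEA - \bvphEA},\; \rhsf^{2\bpEA - \bvphEA - \bphEA}\big\rangle_{\mathcal{S}}\,.
\end{equation*}
Taking absolute values, Cauchy--Schwarz in $\mathcal{S}$ pointwise in $(\bmu,l)$, followed by Cauchy--Schwarz in the joint expectation/sum-over-$l$, gives the product of two $L^2$-type quantities of the form $\big(\mathbb{E}_{\bmu}\tfrac{1}{L}\sum_l\|\rhsf^{\bvpEA}\|_{\mathcal{S}}^2\big)^{1/2}$.

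The main remaining obstacle -- and the only nontrivial step -- is to relate this quantity to $\|\bvpEA\|_{\LtwoB(\brhoEAL)}$. Writing out $\|\rhsf^{\bvpEA}\|_{\mathcal{S}}^2 = \sum_i \frac{1}{N_{\clof_i}}\big\|\sum_{i'}\frac{1}{N_{\clof_{i'}}}(\intkernelvare_{\clof_i\clof_{i'}}(r_{ii'},\topwde_{ii'})(\bx_{i'}-\bx_i) + \intkernelvara_{\clof_i\clof_{i'}}(r_{ii'},\topwda_{ii'})(\dot\bx_{i'}-\dot\bx_i))\big\|^2$ and applying Cauchy--Schwarz in the inner sum using $\sum_{i'}\frac{1}{N_{\clof_{i'}}} = K$ yields one factor of $K$; expanding the squared Euclidean norm and dominating the cross term via $\langle \bx_{i'}-\bx_i,\dot\bx_{i'}-\dot\bx_i\rangle \leq r_{ii'}\dot r_{ii'}$ will convert each summand into $\big(\intkernelvare(r_{ii'},\topwde_{ii'})r_{ii'}+\intkernelvara(r_{ii'},\topwda_{ii'})\dot r_{ii'}\big)^2$, which is exactly the integrand defining $\|\intkernelvar^{EA}_{\clof_i\clof_{i'}}\|_{L^2(\rho_T^{EA,\clof_i\clof_{i'}})}^2$ after taking expectations and using the definition of the empirical measure $\rho_T^{EA,L,\idxcl,\idxcl'}$ in \eqref{meas:fullEA}. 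Reorganizing the double sum by pairs of types $(k,k')$ reproduces the direct-sum $\LtwoB(\brhoEAL)$-norm and yields the remaining factor of $K$, giving the $K^2$ constant in \eqref{2ndordersystem:expectationerrorfunctional}.

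For the empirical bound \eqref{2ndordersystem:empiricalerrorfunctional}, the same polarization/Cauchy--Schwarz chain applies, but since there is no expectation one loses the measure structure and must control $\|\rhsf^{\bvpEA}\|_{\mathcal{S}}$ pointwise. Replacing the $L^2$ integrand by its uniform bound gives one factor of $K^2$ from the two summations (as above), one factor of $\max\{R,R_{\dot x}\}^2$ from bounding $r_{ii'}\leq R$ and $\dot r_{ii'}\leq R_{\dot x}$ pointwise (using \eqref{eq:Rdef}, \eqref{eq:Rdyndef}), and another $K^2$ from the second application of Cauchy--Schwarz across the $(k,k')$ components of the direct sum, yielding the $K^4\max\{R,R_{\dot x}\}^2$ constant. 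Throughout, the $\|\cdot\|_\infty$ on the direct sum is the max over $(k,k')$ of sup-norms of the components, consistent with the convention introduced in Section~\ref{sec:functionspaces}.
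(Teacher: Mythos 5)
Your proposal is correct and follows essentially the same route as the paper's own proof: linearity of the residual in the kernels plus the polarization identity, Cauchy--Schwarz over the $\mathcal{S}$-inner product and over the sums/expectation, conversion of the resulting quantities into the weighted $\LtwoB(\brhoEAL)$-norms (your domination of the cross term $\langle\br_{ii'},\dot{\br}_{ii'}\rangle\le r_{ii'}\dot r_{ii'}$ is precisely the step the paper takes implicitly when it passes to the joint $L^2(\rho_T^{EA,L,kk'})$ norms), and finally the pointwise bound via $\max\{R,R_{\dot{x}}\}$ and the $\infty$-norm for the empirical functional. The only difference is bookkeeping: the paper applies Cauchy--Schwarz pathwise with the per-trajectory empirical measures and only then averages over $l$ and takes the expectation in $\bmu$, whereas you apply it jointly in $(\bmu,l)$ first, which is the same estimate in a slightly different order and yields the same constants.
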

\begin{AdditionalTxt}
\begin{proof}
Using Jensen's inequality,
\begin{align}
&|\bm{\mE}_{\bX(t)}^{EA}(\combfbvphEbvphA)-\bm{\mE}_{\bX(t)}^{EA}(\combfbphEbphA)| \nonumber \\ 
&= \bigg| \sum_{k=1}^{K}\frac{1}{N_k} \sum_{i \in C_k} \big\langle \sum_{k'=1}^{K}\frac{1}{N_{k'}}\sum_{i'\in C_{k'}}( \widehat \intkernelvar_{kk'}^{E}-\widehat \intkernel_{kk'}^{E})(r_{ii'},\topwde_{ii'}){\br}_{ii'}+ (\widehat \intkernelvar_{kk'}^{A}-\widehat \intkernel_{kk'}^{A})(r_{ii'},\topwda_{ii'}){\dot\br}_{ii'}, \nonumber 
\\& \sum_{k''=1}^{K}\frac{1}{N_{k''}} \sum_{i'\in C_{k''}}(2\intkernel_{kk'}^{E}-\widehat \intkernelvar_{kk'}^{E}-\widehat \intkernel_{kk'}^{E})(r_{ii'},\topwde_{ii'}){\br}_{ii'}+(2\intkernel_{kk'}^{A}-\widehat \intkernelvar_{kk'}^{A}-\widehat \intkernel_{kk'}^{A})(r_{ii'},\topwda_{ii'})\dot{\br}_{ii'}
\big\rangle \bigg| \nonumber 
\\&\leq \sum_{k=1}^{K}
\sum_{k'=1}^{K} \sum_{k''=1}^{K}\frac{1}{N_k}\sum_{i \in C_k} \| \frac{1}{N_{k'}}\sum_{i' \in C_{k'}}( \widehat \intkernelvar_{kk'}^{E}-\widehat \intkernel_{kk'}^{E})(r_{ii'},\topwde_{ii'}){\br}_{ii'}+ (\widehat \intkernelvar_{kk'}^{A}-\widehat \intkernel_{kk'}^{A})(r_{ii'},\topwda_{ii'}){\dot\br}_{ii'} \|\\& \| \frac{1}{N_{k''}}\sum_{i' \in C_{k''}}(2\intkernel_{kk'}^{E}-\widehat \intkernelvar_{kk'}^{E}-\widehat \intkernel_{kk'}^{E})(r_{ii'},\topwde_{ii'}){\br}_{ii'}+(2\intkernel_{kk'}^{A}-\widehat \intkernelvar_{kk'}^{A}-\widehat \intkernel_{kk'}^{A})(r_{ii'},\topwda_{ii'})\dot{\br}_{ii'}
 \| \nonumber 
 \end{align}
\begin{align}
&< \sum_{k=1}^{K}\sum_{k'=1}^{K} \sum_{k''=1}^{K}\sqrt{\frac{1}{N_kN_{k'}} \sum_{i \in C_k, i'\in C_{k'}} \|(\widehat \intkernelvar_{kk'}^{E}-\widehat \intkernel_{kk'}^{E})(r_{ii'},\topwde_{ii'}){\br}_{ii'}+ (\widehat \intkernelvar_{kk'}^{A}-\widehat \intkernel_{kk'}^{A})(r_{ii'},\topwda_{ii'}){\dot\br}_{ii'}}\|^2 \times \nonumber\\ 
&\qquad\sqrt{\frac{1}{N_kN_{k''}}\sum_{i\in C_{k},i' \in C_{k''}}\|(2\intkernel_{kk'}^{E}-\widehat \intkernelvar_{kk'}^{E}-\widehat \intkernel_{kk'}^{E})(r_{ii'},\topwde_{ii'}){\br}_{ii'}+(2\intkernel_{kk'}^{A}-\widehat \intkernelvar_{kk'}^{A}-\widehat \intkernel_{kk'}^{A})(r_{ii'},\topwda_{ii'})\dot{\br}_{ii'}\|^2}\nonumber\\ 
&< \sum_{k=1}^{K}\sum_{k'=1}^{K} \sum_{k''=1}^{K}  \| ( \widehat{\varphi}_{kk'}^{EA} - \widehat{\phi}_{kk'}^{EA}) \|_{\Ltwo(\hat\rho_{T}^{t, kk'})} 
\| 2(\phi_{kk'}^{EA} - \widehat{\varphi}_{kk'}^{EA} - \widehat{\varphi}_{kk'}^{EA})\|_{\Ltwo(\hat\rho_{T}^{t, kk''})} \nonumber \\
&\leq K^2\| \combfbvphEbvphA -  \combfbphEbphA \|_{\LtwoB(\hat\brho_T^t)} \|2\combfbpEbpA - \combfbvphEbvphA -  \combfbphEbphA\|_{\LtwoB(\hat\brho_T^t)},
\end{align}
where 
\begin{equation*}
\smash{\widehat\rho}_{T}^{ t, kk'}(r,\topwde,\dot{r}, \topwda)=\displaystyle \frac{1}{LN_{kk'}}\sum_{l= 1}^L\! \sum_{\substack{i \in C_{k}, i' \in C_{k'} \\ i\neq i'}} \delta_{r_{ii'}(t), \topwde_{ii'}(t_l), \dot{r}_{ii'}(t_l), \topwda_{ii'}(t_l)}(r,\topwde,\dot{r}, \topwda)
\end{equation*}
and $\widehat\brho_{T}^{t}=\bigoplus_{k,k'=1,1}^{K,K}\widehat\rho_{T}^{t, kk'}$.
Therefore, we have that 
\begin{align}
&\big |\frac{1}{L}\sum_{l=1}^{L}\bm{\mathcal{E}}_{\bX(t_l)}^{EA}(\combfbvphEbvphA)-\frac{1}{L}\sum_{l=1}^{L}\bm{\mathcal{E}}_{\bX(t_l)}^{EA}(\combfbphEbphA) \big| \leq \frac{1}{L}\sum_{l=1}^{L} \big |\bm{\mathcal{E}}_{\bX(t_l)}^{EA}(\combfbphEbphA)- \bm{\mathcal{E}}_{\bX(t_l)}^{EA}(\combfbvphEbvphA) \big|   \nonumber \\
&<\frac{K^2}{L}\sum_{l=1}^{L}\| \combfbvphEbvphA -  \combfbphEbphA \|_{\LtwoB(\widehat\brho_T^{t_l})} \|2\combfbpEbpA - \combfbvphEbvphA -  \combfbphEbphA \|_{\LtwoB(\widehat\brho_T^{t_l})}\nonumber 
\\&\leq K^2\sqrt{\frac{1}{L}\sum_{l=1}^{L}\|\combfbvphEbvphA -  \combfbphEbphA \|^2_{\LtwoB(\widehat\brho_T^{t_l})}}\sqrt{\frac{1}{L}\sum_{l=1}^{L}\|2\combfbpEbpA - \combfbvphEbvphA -  \combfbphEbphA\|^2_{\LtwoB(\widehat\brho_T^{t_l})}} \nonumber 
\\&=K^2 \|\combfbvphEbvphA -  \combfbphEbphA \|_{\bL^2(\widehat{\mbf{\rho}}_T^L)} \|2\combfbpEbpA - \combfbvphEbvphA -  \combfbphEbphA\|_{\bL^2(\widehat{\mbf{\rho}}_T^L)} \label{usefuleq1}\\
&\leq K^4  \max \{R, R_{\dot{x}} \}^2 \| \combfbvphEbvphA -  \combfbphEbphA \|_{\infty} \|2\combfbpEbpA - \combfbvphEbvphA -  \combfbphEbphA \|_{\infty}\label{usefuleq3}
\end{align}
Taking the expectation with respect to $\bmu$ on each side of (\ref{usefuleq1}) we get the first inequality. The second inequality follows by noticing that,  
$$ |\mbf{\mathcal{E}}_{M}^{EA}(\combfbvphEbvphA)-\mbf{\mathcal{E}}_{M}^{EA}(\combfbphEbphA)| \leq \frac{1}{M}\sum_{m=1}^{M} \big|\frac{1}{L}\sum_{l=1}^{L}\mbf{\mathcal{E}}_{\bX^{(m)}(t_l)}(\combfbvphEbvphA)-\frac{1}{L}\sum_{l=1}^{L}\mbf{\mathcal{E}}_{\bX^{(m)}(t_l)}(\combfbphEbphA)\big|.$$

\end{proof}

\end{AdditionalTxt}

\subsection*{Environment interaction kernels }
Here we show an analogous result to the alignment and energy result above. The techniques are similar and the result serves an identical purpose in the theory. Recall the definition of $R_{\xi}$ in \eqref{eq:Rdyndef:xi}.
\begin{proposition}
For  $\widehat \bintkernelvar,\widehat \bintkernel  \in \bhypspace^{\xi}$, we have
\begin{align}
|\mbf{\mathcal{E}}_{\infty}^{\xi}(\widehat\bintkernelvar)-\mbf{\mathcal{E}}_{\infty}^{\xi}(\widehat \bintkernel)| &\leq  K^2 \|\widehat \bintkernelvar-\widehat \bintkernel \|_{\LtwoB(\bm\rho_T^{\xi,L})} \|2\bintkernel^{\xi}-\widehat \bintkernelvar-\widehat \bintkernel \|_{\LtwoB(\bm\rho_T^{\xi,L})}\label{2ndordersystem:expectationerrorfunctionalxi}\\
 |\mbf{\mathcal{E}}_{M}^{\xi}(\widehat \bintkernelvar)-\mbf{\mathcal{E}}_{M}^{\xi}(\widehat \bintkernel)|&\leq K^4 R_{\xi}^2 \| \widehat \bintkernelvar-\widehat \bintkernel \|_{\infty}\| 2\bintkernel -\widehat \bintkernelvar-\widehat \bintkernel \|_{\infty}\label{2ndordersystem:empiricalerrorfunctionalxi}
\end{align}
\end{proposition}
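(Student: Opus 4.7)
The plan is to mirror the argument used in Proposition \ref{proposition:EAbound} for the energy/alignment case, taking advantage of the simpler structure of the $\xi$-error functional (only one kernel per pair, scalar weights $\xi_{i'}-\xi_i$). I would start from the pointwise-in-$t$ random variable $\bm{\mathcal{E}}^{\xi}_{\bXi(t)}(\bvpxi)$ defined in \eqref{eq:Error_Func_RV:xi} and apply the algebraic identity $\|a\|_{\mathcal{S}}^{2}-\|b\|_{\mathcal{S}}^{2}=\langle a-b,\, a+b\rangle_{\mathcal{S}}$ with
\[
a = \dot{\bXi}(t)-\rhsfxinc(\coordsT)-\rhsf^{\widehat\bintkernelvar}(\coordsT),\qquad b = \dot{\bXi}(t)-\rhsfxinc(\coordsT)-\rhsf^{\widehat\bintkernel}(\coordsT).
\]
Then $a-b = \rhsf^{\widehat\bintkernel-\widehat\bintkernelvar}(\coordsT)$ and $a+b = 2\bigl(\dot\bXi(t)-\rhsfxinc(\coordsT)\bigr) - \rhsf^{\widehat\bintkernelvar+\widehat\bintkernel}(\coordsT)$, which after replacing $\dot\bXi(t)-\rhsfxinc(\coordsT)$ by its expression in terms of the true kernel $\bintkernel^{\xi}$ from \eqref{eq:2ndOrder} becomes $\rhsf^{2\bintkernel^{\xi}-\widehat\bintkernelvar-\widehat\bintkernel}(\coordsT)$.

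Next I would expand the inner product over agents, as in the EA proof: for each type index $k$ and each $i\in C_{k}$, pull out the $\tfrac{1}{N_k}$ weight from $\|\cdot\|_{\mathcal{S}}$, then apply Cauchy--Schwarz three times--once over the index $k'$ running over types in the first factor, once over $k''$ in the second factor, and once over $i$ inside each type--to turn the inner sums over $i'\in C_{k'}$ (respectively $i'\in C_{k''}$) into empirical $\Ltwo$ norms of $(\widehat\intkernelvar^{\xi}_{kk'}-\widehat\intkernel^{\xi}_{kk'})(\xi_{i'}-\xi_i)$, and similarly for the second factor. The crucial point is that the ``weight'' in each summand here is the scalar $\xi_{i'}-\xi_i$, so the empirical $\Ltwo$ norm that appears is precisely the weighted one built from $\widehat\rho^{\xi,t,kk'}_{T}$, giving the analogue of (\ref{usefuleq1}):
\begin{align*}
\bigl|\bm{\mathcal{E}}^{\xi}_{\bXi(t)}(\widehat\bintkernelvar)-\bm{\mathcal{E}}^{\xi}_{\bXi(t)}(\widehat\bintkernel)\bigr| \le K^{2}\,\|\widehat\bintkernelvar-\widehat\bintkernel\|_{\LtwoB(\widehat{\bm{\rho}}_{T}^{\xi,t})}\,\|2\bintkernel^{\xi}-\widehat\bintkernelvar-\widehat\bintkernel\|_{\LtwoB(\widehat{\bm{\rho}}_{T}^{\xi,t})}.
\end{align*}

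To conclude the first inequality, I would average over $l=1,\dots,L$, apply Cauchy--Schwarz in $l$ to split the product (exactly as in \eqref{usefuleq1}), and then take expectation with respect to $\bmu$; the definition \eqref{meas:vectorized} of $\brhoxiL$ (the $\xi$-analogue of \eqref{meas:vectorized}, from Appendix \ref{sec:pm_xi}) absorbs the average over trajectories and observation times, yielding \eqref{2ndordersystem:expectationerrorfunctionalxi}. For \eqref{2ndordersystem:empiricalerrorfunctionalxi}, I would instead bound each empirical $\Ltwo$ norm by the sup-norm of the integrand times $R_{\xi}$ (using \eqref{eq:Rdyndef:xi} to bound $|\xi_{i'}-\xi_i|$ uniformly) and sum over $m$. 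The main ``obstacle''--really just a bookkeeping obstacle--is managing the three-index sum $\sum_{k,k',k''}$ that comes from the double-Cauchy--Schwarz decoupling of the inner product, and ensuring the right power of $K$ ($K^{2}$ for the $\LtwoB$ bound, $K^{4}$ for the $\infty$-bound) emerges; there are no new conceptual ingredients beyond what is used for \eqref{2ndordersystem:expectationerrorfunctional}--\eqref{2ndordersystem:empiricalerrorfunctional}.
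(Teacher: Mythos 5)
Your proposal is correct and is essentially the paper's own argument: the paper proves the EA case (Proposition \ref{proposition:EAbound}) in detail and simply states that the $\xi$ case follows by the same technique, and your plan carries out exactly that analogue — the polarization identity, substitution of the dynamics to introduce $2\bintkernel^{\xi}-\widehat\bintkernelvar-\widehat\bintkernel$, Cauchy--Schwarz over type indices and agents to produce the $\xi$-weighted empirical $\Ltwo$ norms, Cauchy--Schwarz in $l$ and expectation in $\bmu$ for \eqref{2ndordersystem:expectationerrorfunctionalxi}, and the sup-norm/$R_{\xi}$ bound with the correct $K^{4}$ factor for \eqref{2ndordersystem:empiricalerrorfunctionalxi}. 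No gaps beyond the same level of brevity the paper itself uses (e.g.\ the Cauchy--Schwarz step when passing the expectation through the product of empirical norms).
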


The following lemma can be immediately deduced using \eqref{2ndordersystem:expectationerrorfunctional}, \eqref{2ndordersystem:empiricalerrorfunctional}
\begin{AdditionalTxt}
, and \eqref{usefuleq3} 
\end{AdditionalTxt}
.

\begin{lemma} \label{berstein} For all $\combfbvpEbvpA \in \bhypspaceEA$, define the defect function $L_M^{EA}(\combfbvpEbvpA)$ as 
\begin{equation}
L_{M}^{EA}(\combfbvpEbvpA)=\mbf{\mathcal{E}}_{\infty}^{EA}(\combfbvpEbvpA)-\mbf{\mathcal{E}}_{M}^{EA}(\combfbvpEbvpA).
\label{e:defectFunction}
\end{equation}
Then, given two functions $\bvpEA_1,  \bvpEA_2\in \bhypspaceEA$, the defect function is bounded by   
\begin{align*}
|L_{M}^{EA}(\bvpEA_1)-L_{M}^{EA}(\bvpEA_2)| &\leq  2K^4 \max \{R, R_{\dot{x}} \}^2 \|\bvpEA_1 - \bvpEA_2 \|_{\infty} \| \bvpEA_1 + \bvpEA_2 -2\combfbpEbpA\|_{\infty}
\end{align*}
almost surely with respect to $\bmu$. 
\end{lemma}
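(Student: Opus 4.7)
The statement is essentially a triangle inequality consequence of Proposition \ref{proposition:EAbound}, so the plan is short and the main work is really only in converting the two distinct norms appearing in \eqref{2ndordersystem:expectationerrorfunctional} and \eqref{2ndordersystem:empiricalerrorfunctional} into a common $\infty$-norm estimate.

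First, I would expand the defect function and apply the triangle inequality:
$$|L_M^{EA}(\bvpEA_1)-L_M^{EA}(\bvpEA_2)| \leq |\mbf{\mathcal{E}}_{\infty}^{EA}(\bvpEA_1)-\mbf{\mathcal{E}}_{\infty}^{EA}(\bvpEA_2)| + |\mbf{\mathcal{E}}_{M}^{EA}(\bvpEA_1)-\mbf{\mathcal{E}}_{M}^{EA}(\bvpEA_2)|.$$
The empirical term is handled immediately by \eqref{2ndordersystem:empiricalerrorfunctional}, which is already in the desired form $K^4\max\{R,R_{\dot{x}}\}^2\|\bvpEA_1-\bvpEA_2\|_\infty\|\bvpEA_1+\bvpEA_2-2\bpEA\|_\infty$.

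Second, for the expectation term I would start from \eqref{2ndordersystem:expectationerrorfunctional}, which is phrased in the weighted $\LtwoB(\brhoEAL)$ norm, and dominate this norm by the $\infty$-norm. Concretely, by the weighted $L^2$ definition \eqref{e:L2NormDefs} and the fact that $\brhoEAL$ is a probability measure supported on $\PairHypSpace\times\FeatureHypSpaceE\times\PairHypSpace\times\FeatureHypSpaceA$ with the $r,\dot r$ weights bounded by $R$ and $R_{\dot x}$ respectively, one has for any $\bm{f}\in\bhypspaceEA$ that $\|\bm{f}\|_{\LtwoB(\brhoEAL)}^2\leq K^2\max\{R,R_{\dot{x}}\}^2\|\bm{f}\|_\infty^2$ (this is precisely the passage used implicitly between the $L^2$ step and the $\infty$-norm step in the chain of inequalities of Proposition \ref{proposition:EAbound}). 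Applying this to $\bvpEA_1-\bvpEA_2$ and to $\bvpEA_1+\bvpEA_2-2\bpEA$ upgrades \eqref{2ndordersystem:expectationerrorfunctional} to the same form as the empirical bound, namely $K^4\max\{R,R_{\dot{x}}\}^2\|\bvpEA_1-\bvpEA_2\|_\infty\|\bvpEA_1+\bvpEA_2-2\bpEA\|_\infty$.

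Third, summing these two bounds produces the factor $2$ and yields the claimed estimate. The almost-sure qualifier is automatic since \eqref{2ndordersystem:empiricalerrorfunctional} (being a deterministic statement per realization of the trajectories) holds pointwise for every draw of the initial conditions, while the expectation bound is deterministic. There is no genuine obstacle here: the only mild care is to recognize that the $\infty$-norm dominates the $\LtwoB(\brhoEAL)$ norm with the correct $K^2\max\{R,R_{\dot{x}}\}^2$ constant, which is exactly the estimate already embedded in the proof of Proposition \ref{proposition:EAbound}.
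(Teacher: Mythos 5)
Your proposal is correct and follows essentially the same route as the paper, which deduces the lemma immediately from the triangle inequality together with the bounds in Proposition \ref{proposition:EAbound}: the empirical term via \eqref{2ndordersystem:empiricalerrorfunctional} and the expected term via the $\infty$-norm estimate \eqref{usefuleq3} (equivalently, your step of dominating the weighted $\LtwoB(\brhoEAL)$ norm by $K\max\{R,R_{\dot{x}}\}\|\cdot\|_{\infty}$, which is exactly the passage the paper uses between \eqref{usefuleq1} and \eqref{usefuleq3}). Your handling of the almost-sure qualifier --- deterministic expected term plus a pointwise bound on the empirical term --- also matches the paper.
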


A similar lemma can be immediately deduced on the $\xi$ variable .
\begin{lemma}
For all $\bvxi \in \bhypspacexi$, define the defect function $L_M^{\xi}(\bvxi)$ as 
\begin{equation}
L_{M}^{\xi}(\bvxi)=\mbf{\mathcal{E}}_{\infty}^{\xi}(\bvxi)-\mbf{\mathcal{E}}_{M}^{\xi}(\bvxi).
\label{e:defectFunctionxi}
\end{equation}
Then, given two functions $\bvxi_1, \bvxi_2 \in \bhypspacexi$, the defect function is bounded by   
\begin{align*}
|L_{M}^{\xi}(\bvxi_1)-L_{M}^{\xi}(\bvxi_2)| &\leq  2K^4 R_{\xi}^2 \|\bvxi_1 - \bvxi_2 \|_{\infty} \| \bvxi_1 + \bvxi_2 -2\bpxi\|_{\infty}
\end{align*}
almost surely with respect to $\bmu$. 
\end{lemma}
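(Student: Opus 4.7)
The plan is to mirror the argument that yields Lemma~\ref{berstein} for the energy–alignment case, substituting the $\xi$-analogues of the continuity bounds. First, I would apply the triangle inequality to the defect function:
\begin{align*}
|L_M^{\xi}(\bvxi_1)-L_M^{\xi}(\bvxi_2)| &\leq |\mbf{\mathcal{E}}_{\infty}^{\xi}(\bvxi_1)-\mbf{\mathcal{E}}_{\infty}^{\xi}(\bvxi_2)| + |\mbf{\mathcal{E}}_{M}^{\xi}(\bvxi_1)-\mbf{\mathcal{E}}_{M}^{\xi}(\bvxi_2)|.
\end{align*}
The second term on the right is bounded directly by \eqref{2ndordersystem:empiricalerrorfunctionalxi}, which already has the correct form $K^4 R_\xi^2\|\bvxi_1-\bvxi_2\|_\infty \|2\bpxi-\bvxi_1-\bvxi_2\|_\infty$ (this bound holds for every realization, hence almost surely with respect to $\bmu$).

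For the first term I would invoke \eqref{2ndordersystem:expectationerrorfunctionalxi}, which is stated in the weighted $\LtwoB(\brhoxiL)$-norm rather than the $\infty$-norm. The conversion is the crux of the matching constants: since the $\xi$-weighted inner product in the definition of $\LtwoB(\brhoxiL)$ carries a factor of $(\xi_{i'}-\xi_i)^2$, and this factor is almost surely bounded above by $R_\xi^2$ by \eqref{eq:Rdyndef:xi}, we have
\[
\|\bvxi\|_{\LtwoB(\brhoxiL)}^2 \;\leq\; K^2 R_\xi^2\,\|\bvxi\|_\infty^2
\]
for every $\bvxi\in\bm{L}^\infty(\PairHypSpace\times\FeatureHypSpacexi)$ (the $K^2$ arises from summing over the $K^2$ pairs $(k,k')$ in the direct sum). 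Inserting this into \eqref{2ndordersystem:expectationerrorfunctionalxi} upgrades the right-hand side to $K^4 R_\xi^2\,\|\bvxi_1-\bvxi_2\|_\infty\,\|2\bpxi-\bvxi_1-\bvxi_2\|_\infty$, matching the empirical bound.

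Adding the two matching upper bounds produces the factor of $2$ in the final statement. Since neither the deterministic pointwise bound \eqref{2ndordersystem:empiricalerrorfunctionalxi} nor the $\infty$-norm conversion requires integration against the initial distribution, the resulting inequality holds $\bmu$-almost surely as claimed. I do not anticipate any genuine obstacle: the only non-routine step is the weighted-to-uniform norm conversion, and this is immediate from the definition of $\brhoxiL$ and the finiteness of $R_\xi$; the rest is bookkeeping identical to the EA case.
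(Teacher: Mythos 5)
Your proposal is correct and follows essentially the same route as the paper: the triangle inequality splits the defect into the expected and empirical error differences, the empirical part is handled by \eqref{2ndordersystem:empiricalerrorfunctionalxi}, and the expected part is reduced to the same $\infty$-norm bound, which is exactly the mechanism behind the paper's use of the pointwise bound \eqref{usefuleq3} in the EA case. Your weighted-to-uniform conversion $\|\bvxi\|_{\LtwoB(\brhoxiL)}^2\le K^2R_\xi^2\|\bvxi\|_\infty^2$ is valid since each $\rho_T^{\xi,L,kk'}$ is a probability measure supported where $|\xi_{i'}-\xi_i|\le R_\xi$, so the constants and the almost-sure qualifier come out exactly as in the statement.
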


 \subsection{Uniqueness of minimizers over a compact convex space}
Recall the energy and alignment bilinear functional $\dbinnerp {\cdot, \cdot}_{EA}$, previously defined in equation \eqref{eq:bilinearFn} 
\begin{align*}
\dbinnerp {\bvpEA_1, \bvpEA_2}_{EA}:=\frac{1}{L}\sum_{l=1}^{L}\mathbb{E}_{\bmu}  \bigg[\left\langle \rhsfo_{\bvpEA_1}(\coordsTL) , \rhsfo_{\bvpEA_2}(\coordsTL) \right\rangle_{\mathcal{S}} \bigg],
\end{align*} 
for any $\bvpEA_1, \bvpEA_2 \in \bhypspaceEA$. The $\mathcal{S}$-inner product is the inner product induced by the $\| \cdot \|_{\mathcal{S}}$ norm by the polarization identity, which holds as we are working in an $\Ltwo$ space, so the parallelogram law holds. 
Then our coercivity condition  \eqref{2ndorder:gencoer}.
can be written in terms of this bilinear functional as: for all $\combfbvpEbvpA \in \bhypspaceEA$
\[
c_{\bhypspaceEA} \norm{\combfbvpEbvpA}^2_{\bm{L}^2(\bm{\rho}_T^{EA,L})}\leq  \dbinnerp {\combfbvpEbvpA, \combfbvpEbvpA} 
\]

\begin{proposition}\label{2ndordersystem:convexity}
Let the minimizer of the error functional be denoted
 $$\widehat{\bintkernel}_{L,\infty, \bhypspaceEA}^{EA}:=\combf{\widehat{\bintkernel}_{L,\infty, \bhypspaceEA}^E}{\widehat{\bintkernel}_{L,\infty, \bhypspaceEA}^A} :=\argmin{\combfbvpEbvpA \in \bhypspaceEA} \mbf{\mathcal{E}}_{\infty}^{EA}(\combfbvpEbvpA);$$ 
  then for all $\combfbvpEbvpA \in \bhypspaceEA$, the difference of the error functional at this element of $ \bhypspaceEA$ and the minimizer is lower bounded as, 
 \begin{equation}\label{eq_minH}
 \mbf{\mathcal{E}}_{\infty}^{EA}(\combfbvpEbvpA)-\mbf{ \mathcal{E}}_{\infty}^{EA}(\widehat{\bintkernel}_{L,\infty, \bhypspaceEA}^{EA}) \geq c_{\bhypspaceEA} \| \combfbvpEbvpA - \widehat{\bintkernel}_{L,\infty, \bhypspaceEA}^{EA} \|_{\LtwoB(\brhoEAL) }^2\,.
 \end{equation}
Thus, the minimizer of $\mbf{\mathcal{E}}_{\infty}^{EA}$ over $\bhypspaceEA$ is unique in $\bm{L}^2(\bm{\rho}_T^{EA,L})$. 
\end{proposition}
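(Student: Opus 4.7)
The plan is to exploit the fact that $\mbf{\mathcal{E}}_{\infty}^{EA}$ is a quadratic functional in $\bvpEA$ with Hessian exactly the bilinear form $\dbinnerp{\cdot,\cdot}$, and combine this with convexity of $\bhypspaceEA$ and the coercivity assumption. First I would rewrite the loss: using that the true trajectories satisfy $\ddot{\bX}(t_l) = \rhsfvnc(\coordsTL) + \rhsf^{\bpEA}(\coordsTL)$ almost surely, one gets
\begin{equation*}
\mbf{\mathcal{E}}_{\infty}^{EA}(\combfbvpEbvpA) \;=\; \frac{1}{L}\sum_{l=1}^L \mathbb{E}_{\bmu}\bigl[\bigl\|\rhsf^{\bpEA}(\coordsTL) - \rhsf^{\combfbvpEbvpA}(\coordsTL)\bigr\|_{\mathcal{S}}^2\bigr] \;=\; \dbinnerp{\combfbvpEbvpA - \bpEA,\; \combfbvpEbvpA - \bpEA}.
\end{equation*}

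Next I would expand this quadratic around the minimizer $\widehat{\bintkernel}_{L,\infty,\bhypspaceEA}^{EA}$ (call it $\bphEA_\infty$ for brevity). By bilinearity of $\dbinnerp{\cdot,\cdot}$,
\begin{equation*}
\mbf{\mathcal{E}}_{\infty}^{EA}(\combfbvpEbvpA) - \mbf{\mathcal{E}}_{\infty}^{EA}(\bphEA_\infty) \;=\; \dbinnerp{\combfbvpEbvpA - \bphEA_\infty,\; \combfbvpEbvpA - \bphEA_\infty} \;+\; 2\,\dbinnerp{\combfbvpEbvpA - \bphEA_\infty,\; \bphEA_\infty - \bpEA}.
\end{equation*}
Here is where convexity of $\bhypspaceEA$ enters. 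For any $\combfbvpEbvpA \in \bhypspaceEA$ and $t\in[0,1]$, the convex combination $\bphEA_\infty + t(\combfbvpEbvpA - \bphEA_\infty)$ belongs to $\bhypspaceEA$, so the scalar map $t \mapsto \mbf{\mathcal{E}}_{\infty}^{EA}(\bphEA_\infty + t(\combfbvpEbvpA - \bphEA_\infty))$ attains its minimum over $[0,1]$ at $t=0$; its right derivative there is $2\dbinnerp{\combfbvpEbvpA - \bphEA_\infty, \bphEA_\infty - \bpEA}$, hence nonnegative. This is the standard first-order variational inequality and it kills the cross term.

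Finally, I would apply the coercivity assumption \eqref{2ndorder:gencoer} to the pure quadratic term $\dbinnerp{\combfbvpEbvpA - \bphEA_\infty, \combfbvpEbvpA - \bphEA_\infty}$ to obtain $c_{\bhypspaceEA}\,\|\combfbvpEbvpA - \bphEA_\infty\|_{\LtwoB(\brhoEAL)}^2$, which is precisely \eqref{eq_minH}. Uniqueness then follows immediately: if $\tilde{\bphEA}_\infty$ were another minimizer, the left-hand side of \eqref{eq_minH} with $\combfbvpEbvpA = \tilde{\bphEA}_\infty$ would vanish, forcing $\|\tilde{\bphEA}_\infty - \bphEA_\infty\|_{\LtwoB(\brhoEAL)} = 0$ since $c_{\bhypspaceEA} > 0$.

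The main obstacle is really a subtle consistency check rather than a hard step: one must verify that the coercivity condition, though stated as an infimum over $\bhypspaceEA \setminus \{\mbf{0}\}$, is usable on the \emph{difference} $\combfbvpEbvpA - \bphEA_\infty$, which need not itself lie in $\bhypspaceEA$ when the latter is merely convex. The cleanest way around this is to observe that the Rayleigh-quotient defining $c_{\bhypspaceEA}$ is scale-invariant, so it extends unchanged to the cone generated by $\bhypspaceEA$; combined with convexity, this cone contains differences of elements of $\bhypspaceEA$, and the bound propagates. Existence of $\bphEA_\infty$ itself, needed to justify speaking of ``the'' minimizer before proving uniqueness, follows from compactness of $\bhypspaceEA$ and continuity of $\mbf{\mathcal{E}}_{\infty}^{EA}$ guaranteed by Proposition~\ref{proposition:EAbound}.
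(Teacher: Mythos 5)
Your argument is essentially the paper's own proof: the same rewriting of $\mbf{\mathcal{E}}_{\infty}^{EA}(\bvpEA)$ as the quadratic $\dbinnerp{\bvpEA-\bpEA,\,\bvpEA-\bpEA}$, the same expansion around the minimizer into a pure quadratic term plus a cross term, the same first-order variational inequality from convexity (the paper gets it by evaluating the loss at $t\,\bvpEA+(1-t)\,\bphEA_{\infty}$, dividing by $t$ and letting $t\to 0^{+}$, which is exactly your right-derivative statement), then coercivity on the quadratic term and the immediate uniqueness conclusion.

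The one place where you go beyond the paper is your patch for applying coercivity to the difference $\bvpEA-\bphEA_{\infty}$, and that patch is incorrect as stated: the cone generated by a convex set does \emph{not} in general contain differences of its elements. In one dimension the cone generated by the convex set $[1,2]$ is $[0,\infty)$, which misses $1-2=-1$; likewise, if $\bhypspaceEA$ consists of positive-valued kernels, its generated cone contains only nonnegative functions while differences change sign. Scale invariance of the Rayleigh quotient lets you pass from $\bhypspaceEA$ to its cone, but not from the cone to $\bhypspaceEA-\bhypspaceEA$. The honest repair is to require (or verify) the coercivity inequality on differences, i.e.\ on $\bhypspaceEA-\bhypspaceEA$ or on the linear span of $\bhypspaceEA$; in the situations where the proposition is actually invoked this is automatic, since the hypothesis spaces used are either symmetric convex sets such as $\boldsymbol{\mathcal{K}}_{S_{EA}}^{EA}$ (where $\tfrac12(\bvpEA_1-\bvpEA_2)$ again lies in the set) or balls in linear subspaces with coercivity assumed on $\cup_n\mbf{\mathcal{L}}_n$. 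The paper itself applies coercivity to $\bvpEA-\bphEA_{\infty}$ without comment, so your main line is no weaker than the published proof --- but drop or fix the cone claim.
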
 
\begin{AdditionalTxt}
\begin{proof} For $\combfbvpEbvpA \in \bhypspaceEA$, and to ease the notation let $\widehat{\bintkernel}^{EA} := \widehat{\bintkernel}_{L,\infty, \bhypspaceEA}^{EA}$,  we have 
\begin{align}
\mbf{\mathcal{E}}_{\infty}^{EA}(\combfbvpEbvpA)- \mbf{\mathcal{E}}_{\infty}^{EA}(\widehat{\bintkernel}^{EA})
&=\dbinnerp{(\bvpEA - \bpEA), (\bvpEA - \bpEA) } \nonumber \\ &- \dbinnerp{(\bphEA - \bpEA), (\bphEA - \bpEA)} \label{interEq1}
\end{align}
Using that $\dbinnerp{X,X} - \dbinnerp{Y,Y} = \dbinnerp{X-Y, X+Y}$, which holds by bilinearity and the definition of the form, we get that  
\begin{align}
(\ref{interEq1}) &= \dbinnerp{ (\bvpEA - \bphEA), (\bvpEA + \bphEA - 2\bpEA } \nonumber \\
&= \dbinnerp{(\bvpEA - \bphEA), (\bvpEA - \bphEA + 2(\bphEA - \bpEA))} \nonumber \\
&= \dbinnerp{(\bvpEA - \bphEA), (\bvpEA - \bphEA)} + 2\dbinnerp{(\bvpEA - \bphEA), (\bphEA - \bpEA)} \label{intereq2}
\end{align}
By the coercivity condition, the first term in (\ref{intereq2}) is at least as large as $$ c_{\bhypspaceEA}\|\combfbvpEbvpA - \combfbphEbphA \|_{\bm{L}^2(\bm{\rho}_T^{EA,L})} \geq 0 $$
We are left to show the second term in (\ref{intereq2}) is nonnegative. Since $\bhypspaceEA$ is convex, for all $t \in [0,1]$, $t (\combfbvpEbvpA) + (1-t) (\combfbphEbphA) \in \bhypspaceEA$. By definition of $\combfbphEbphA$ as an argmin, 
$$\mbf{\mathcal{E}}_{\infty}^{EA}(t\combfbvpEbvpA + (1-t)\combfbphEbphA) - \mbf{\mathcal{E}}_{\infty}^{EA}(\combfbphEbphA) \geq 0 $$
which means, using a decomposition analogous to the one above in (\ref{intereq2}), that
\begin{align*}
\langle\hspace{-1mm}\langle (t\bvpEA + (1-t)\bphEA - \bphEA), (t\bvpEA + (1-t)\bphEA - \bphEA + 2(\bphEA - \bpEA) \rangle\hspace{-1mm}\rangle \\
= \langle\hspace{-1mm}\langle t((\bvpEA - \bphEA)), (t\bvpEA + (2-t)\bphEA - 2\bpEA) \rangle\hspace{-1mm}\rangle \geq 0
\end{align*}
So that, 
\begin{align}
t\langle\hspace{-1mm}\langle (\bvpEA - \bphEA), (t\bvpEA + (2-t)\bphEA - 2\bpEA) \rangle\hspace{-1mm}\rangle \geq 0 \label{finalintereq} \\
\Leftrightarrow \langle\hspace{-1mm}\langle (\bvpEA - \bphEA), (t\bvpEA + (2-t)\bphEA - 2\bpEA) \rangle\hspace{-1mm}\rangle \geq 0
\end{align}
By the results of section \ref{app:continuity}, we have (Lipschitz) continuity of the bilinear functional $\dbinnerp{\cdot, \cdot}$ over $\bhypspaceEA \times \bhypspaceEA$. Next, take the $\lim_{t\to 0^+}$ of (\ref{finalintereq}) and by the dominated convergence theorem (which holds due to the boundedness and continuity assumptions on the kernels) we pass the limit through the expectations in $\dbinnerp{\cdot, \cdot}$. This gives that (\ref{intereq2}) is greater than $0$, giving the desired result on the uniqueness of the minimizer. 
\end{proof}
\end{AdditionalTxt}

\begin{proposition}\label{2ndordersystem:convexity:xi}
Let the minimizer of the error functional be denoted
 $$\widehat{\bintkernel}_{L,\infty, \bhypspacexi} :=\argmin{\bvxi \in \bhypspacexi} \mbf{\mathcal{E}}_{\infty}^{\xi}(\bvxi);$$ 
  then for all $\bvxi \in \bhypspacexi$, the difference of the error functional at this element of $ \bhypspacexi$ and the minimizer is lower bounded as, 
 \begin{equation}\label{eq_minH:xi}
 \mbf{\mathcal{E}}_{\infty}^{\xi}(\bvxi)-\mbf{ \mathcal{E}}_{\infty}^{\xi}(\widehat{\bintkernel}_{L,\infty, \bhypspacexi}^{\xi}) \geq c_{\bhypspacexi} \| \bvxi - \widehat{\bintkernel}_{L,\infty, \bhypspacexi}^{\xi} \|_{\LtwoB(\brhoxiL) }^2\,.
 \end{equation}
Thus, the minimizer of $\mbf{\mathcal{E}}_{\infty}^{\xi}$ over $\bhypspacexi$ is unique in $\LtwoB(\brhoxiL)$. 
\end{proposition}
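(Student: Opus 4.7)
The plan is to mirror, line for line, the proof of Proposition \ref{2ndordersystem:convexity} (the $EA$ version), with the only real change being that we work with the $\xi$-bilinear form and exploit the coercivity condition \eqref{2ndorder:gencoer:xi} together with convexity of $\bhypspacexi$. First I would introduce the bilinear functional
\[
\dbinnerp{\bvxi_1, \bvxi_2}_{\xi} := \frac{1}{L}\sum_{l=1}^L \mathbb{E}_{\bmu}\Big[\big\langle \rhsf^{\bvxi_1}(\coordsTL), \rhsf^{\bvxi_2}(\coordsTL)\big\rangle_{\mathcal{S}}\Big]
\]
on $\bhypspacexi \times \bhypspacexi$, so that the coercivity condition \eqref{2ndorder:gencoer:xi} reads $c_{\bhypspacexi}\|\bvxi\|^2_{\LtwoB(\brhoxiL)} \leq \dbinnerp{\bvxi,\bvxi}_{\xi}$ for every $\bvxi \in \bhypspacexi$. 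Existence of the minimizer $\widehat{\bintkernel}^{\xi}_{L,\infty,\bhypspacexi}$ is guaranteed by the compactness assumption on $\bhypspacexi$ together with the continuity estimate \eqref{2ndordersystem:expectationerrorfunctionalxi}, so we may work with it directly.

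Next, writing $\widehat{\bintkernel}^{\xi} := \widehat{\bintkernel}^{\xi}_{L,\infty,\bhypspacexi}$ for brevity, I would expand
\[
\mbf{\mathcal{E}}_{\infty}^{\xi}(\bvxi) - \mbf{\mathcal{E}}_{\infty}^{\xi}(\widehat{\bintkernel}^{\xi}) = \dbinnerp{\bvxi - \bpxi,\bvxi - \bpxi}_{\xi} - \dbinnerp{\widehat{\bintkernel}^{\xi} - \bpxi, \widehat{\bintkernel}^{\xi} - \bpxi}_{\xi}
\]
and apply the polarization-type identity $\dbinnerp{X,X} - \dbinnerp{Y,Y} = \dbinnerp{X-Y, X+Y}$ (which holds by bilinearity and symmetry of $\dbinnerp{\cdot,\cdot}_{\xi}$) with $X = \bvxi - \bpxi$ and $Y = \widehat{\bintkernel}^{\xi} - \bpxi$. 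This gives
\[
\mbf{\mathcal{E}}_{\infty}^{\xi}(\bvxi) - \mbf{\mathcal{E}}_{\infty}^{\xi}(\widehat{\bintkernel}^{\xi}) = \dbinnerp{\bvxi - \widehat{\bintkernel}^{\xi},\bvxi - \widehat{\bintkernel}^{\xi}}_{\xi} + 2\,\dbinnerp{\bvxi - \widehat{\bintkernel}^{\xi}, \widehat{\bintkernel}^{\xi} - \bpxi}_{\xi}.
\]
The first term is at least $c_{\bhypspacexi}\|\bvxi - \widehat{\bintkernel}^{\xi}\|^2_{\LtwoB(\brhoxiL)}$ by the $\xi$-coercivity condition, which is exactly the bound we want.

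The main obstacle, as in the $EA$ case, is to show that the cross-term $\dbinnerp{\bvxi - \widehat{\bintkernel}^{\xi}, \widehat{\bintkernel}^{\xi} - \bpxi}_{\xi}$ is non-negative. Here I use convexity of $\bhypspacexi$: for every $t \in [0,1]$, the element $t\bvxi + (1-t)\widehat{\bintkernel}^{\xi}$ lies in $\bhypspacexi$, and by minimality of $\widehat{\bintkernel}^{\xi}$,
\[
\mbf{\mathcal{E}}_{\infty}^{\xi}\big(t\bvxi + (1-t)\widehat{\bintkernel}^{\xi}\big) - \mbf{\mathcal{E}}_{\infty}^{\xi}(\widehat{\bintkernel}^{\xi}) \geq 0.
\]
Expanding this difference via the same polarization identity yields $t\,\dbinnerp{\bvxi - \widehat{\bintkernel}^{\xi}, t(\bvxi - \widehat{\bintkernel}^{\xi}) + 2(\widehat{\bintkernel}^{\xi} - \bpxi)}_{\xi} \geq 0$; dividing by $t>0$ and sending $t \to 0^+$ (using the Lipschitz continuity of $\dbinnerp{\cdot,\cdot}_{\xi}$, which follows from the $\xi$-analog of Proposition \ref{proposition:EAbound} and the dominated convergence theorem justified by the admissibility bound $S_{\xi}$ and the finiteness of $R_{\xi}$) gives $\dbinnerp{\bvxi - \widehat{\bintkernel}^{\xi}, \widehat{\bintkernel}^{\xi} - \bpxi}_{\xi} \geq 0$. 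Combining the two estimates proves \eqref{eq_minH:xi}, and uniqueness in $\LtwoB(\brhoxiL)$ is then immediate: if $\widehat{\bintkernel}^{\xi}_1$ and $\widehat{\bintkernel}^{\xi}_2$ both minimized $\mbf{\mathcal{E}}_{\infty}^{\xi}$, applying \eqref{eq_minH:xi} with $\bvxi = \widehat{\bintkernel}^{\xi}_2$ and minimizer $\widehat{\bintkernel}^{\xi}_1$ would force $\|\widehat{\bintkernel}^{\xi}_1 - \widehat{\bintkernel}^{\xi}_2\|_{\LtwoB(\brhoxiL)} = 0$.
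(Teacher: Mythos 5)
Your proof is correct and follows exactly the route the paper intends: it is the line-for-line adaptation of the paper's proof of Proposition \ref{2ndordersystem:convexity} (the $EA$ case) to the $\xi$-bilinear form, using the same polarization identity, the $\xi$-coercivity condition, and the convexity/variational-inequality argument with $t\to 0^+$ justified by continuity of the form. Nothing is missing; the uniqueness conclusion is also drawn in the same way as in the paper.
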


\subsection{Uniform estimates on defect functions}
We start this section by introducing normalized errors of the estimators. 
Denote the minimizer of  $\bmE_{\infty}^{EA}(\cdot)$ over $\bhypspaceEA$ by  
\begin{align}
\widehat\bintkernel_{L,\infty, \bhypspaceEA}^{EA}:=\combf{\widehat\bintkernel_{L,\infty, \bhypspaceEA}^{E}}{\widehat\bintkernel_{L,\infty, \bhypspaceEA}^{A}} =\argmin{\combfbvpEbvpA \in \bhypspaceEA} \bmE_{\infty}^{EA}(\combfbvpEbvpA). 
\end{align} 
For any $ \combfbvpEbvpA \in \bhypspaceEA$, define the \textit{normalized errors} as
\begin{align}
&\mathcal{D}_{\infty}(\combfbvpEbvpA) :=\bmE_{\infty}^{EA}(\combfbvpEbvpA)-\bmE_{\infty}^{EA}(\widehat\bintkernel_{L,\infty, \bhypspaceEA}^{EA} ) \,,\label{2ndorder:difference1}\\
&\mathcal{D}_{M}(\combfbvpEbvpA) :=\bmE_{M}^{EA}(\combfbvpEbvpA)-\bmE_{M}^{EA}(\widehat\bintkernel_{L,\infty, \bhypspaceEA}^{EA} )\,. \label{2ndorder:difference2}
\end{align}
These quantities capture the difference between the expected/empirical errors of the estimator and the function in the hypothesis space minimizing the expected error functional.
We begin by proving a lemma that assumes the distance between the expected and empirical normalized errors are small for a given estimator. We then show that we have similar control on these distances for all points in a neighborhood of this particular one. 
This control enables us to apply a covering argument in the main proposition of this section due to the compactness of the hypothesis space. 
\begin{remark}
Exactly analogous definitions hold for the $\xi$ variable and we will simply state the results in that case. 
\end{remark}

\begin{lemma} \label{2ndorder:covering}
For all $\epsilon>0$ and $0<\alpha<1$,  if the function $\bvpEA_1 \in \bhypspaceEA$ satisfies $$ \frac{\mathcal{D}_{\infty}(\bvpEA_1 )-\mathcal{D}_{M}(\bvpEA_1 )}{\mathcal{D}_{\infty}(\bvpEA_1 )+\epsilon}  < \alpha\,,$$
then for all $ \bvpEA_2  \in \bhypspaceEA$ such that $\| \bvpEA_1  - \bvpEA_2 \|_{\infty}\leq \frac{\alpha \epsilon}{8S_{EA}\max{\{R, R_{\dot{x}}\}}^2K^4}$, where $S_{EA} = \max\{S_E,S_A\}$ we have
 $$  \frac{\mathcal{D}_{\infty}(\bvpEA_2 )-\mathcal{D}_{M}(\bvpEA_2 )}{\mathcal{D}_{\infty}(\bvpEA_2 )+\epsilon} <3 \alpha.  $$
\end{lemma}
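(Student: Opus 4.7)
The plan is to pass the control on $\bvpEA_1$ to a neighborhood of it by using the two continuity estimates already established: Lemma \ref{berstein} for the defect function, and Proposition \ref{proposition:EAbound} (specifically the expected-error bound \eqref{2ndordersystem:expectationerrorfunctional}) for the expected error functional. The key observation is that since $\mathcal{D}_\infty$ and $\mathcal{D}_M$ are obtained from $\bmE_\infty^{EA}$ and $\bmE_M^{EA}$ by subtracting the same quantity (the corresponding values at $\widehat{\bintkernel}_{L,\infty,\bhypspaceEA}^{EA}$), one has
\[
[\mathcal{D}_\infty(\bvpEA_2)-\mathcal{D}_M(\bvpEA_2)] - [\mathcal{D}_\infty(\bvpEA_1)-\mathcal{D}_M(\bvpEA_1)] = L_M^{EA}(\bvpEA_2) - L_M^{EA}(\bvpEA_1),
\]
so the first step is to apply Lemma \ref{berstein}, bound $\|\bvpEA_1+\bvpEA_2-2\bpEA\|_\infty\leq 4S_{EA}$ (using that all three functions lie in $\bhypspaceEA$ which is bounded by $S_{EA}$), and use the hypothesis $\|\bvpEA_1-\bvpEA_2\|_\infty\leq \frac{\alpha\epsilon}{8S_{EA}\max\{R,R_{\dot x}\}^2 K^4}$ to obtain $|L_M^{EA}(\bvpEA_2)-L_M^{EA}(\bvpEA_1)| \leq \alpha\epsilon$.

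Next I will control $|\mathcal{D}_\infty(\bvpEA_1)-\mathcal{D}_\infty(\bvpEA_2)| = |\bmE_\infty^{EA}(\bvpEA_1)-\bmE_\infty^{EA}(\bvpEA_2)|$ via \eqref{2ndordersystem:expectationerrorfunctional}, using the trivial inclusion $\|\cdot\|_{\LtwoB(\brhoEAL)}\leq K\max\{R,R_{\dot x}\}\|\cdot\|_\infty$ (which is implicit in the proof of \eqref{2ndordersystem:empiricalerrorfunctional}) and again $\|2\bpEA-\bvpEA_1-\bvpEA_2\|_\infty\leq 4S_{EA}$, to obtain $|\mathcal{D}_\infty(\bvpEA_1)-\mathcal{D}_\infty(\bvpEA_2)|\leq \alpha\epsilon/2$, and in particular $\mathcal{D}_\infty(\bvpEA_1)\leq \mathcal{D}_\infty(\bvpEA_2)+\alpha\epsilon/2$.

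Finally, combining these two estimates with the hypothesis $\mathcal{D}_\infty(\bvpEA_1)-\mathcal{D}_M(\bvpEA_1)<\alpha(\mathcal{D}_\infty(\bvpEA_1)+\epsilon)$ yields
\[
\mathcal{D}_\infty(\bvpEA_2)-\mathcal{D}_M(\bvpEA_2) < \alpha(\mathcal{D}_\infty(\bvpEA_1)+\epsilon)+\alpha\epsilon \leq \alpha\mathcal{D}_\infty(\bvpEA_2) + \tfrac{\alpha^2}{2}\epsilon + 2\alpha\epsilon,
\]
and since $\alpha<1$ the right-hand side is strictly less than $3\alpha(\mathcal{D}_\infty(\bvpEA_2)+\epsilon)$, which is the desired inequality.

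The proof is essentially bookkeeping; there is no substantive obstacle once one notices the cancellation that makes the difference $\mathcal{D}_\infty-\mathcal{D}_M$ equal to a difference of defect functions at $\bvpEA_1$ and $\bvpEA_2$ (the terms at the minimizer drop out). The only mild subtlety is matching constants: the neighborhood radius $\frac{\alpha\epsilon}{8S_{EA}\max\{R,R_{\dot x}\}^2 K^4}$ is calibrated so that the defect difference contributes $\alpha\epsilon$ while the expected-error difference contributes at most $\alpha\epsilon/2$, and the slack $\alpha<1$ absorbs the remaining $\alpha^2\epsilon/2$ into the factor $3\alpha$ on the right.
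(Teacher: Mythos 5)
Your proposal is correct and follows essentially the same route as the paper's proof: both cancel the minimizer terms so that the numerator differences reduce to $L_M^{EA}(\bvpEA_2)-L_M^{EA}(\bvpEA_1)$, bound this by $\alpha\epsilon$ via Lemma \ref{berstein} with the calibrated radius, and use the continuity bound \eqref{2ndordersystem:expectationerrorfunctional} to compare $\mathcal{D}_{\infty}(\bvpEA_1)$ with $\mathcal{D}_{\infty}(\bvpEA_2)$. The only difference is cosmetic bookkeeping at the end: the paper bounds the ratio $\frac{\mathcal{D}_{\infty}(\bvpEA_1)+\epsilon}{\mathcal{D}_{\infty}(\bvpEA_2)+\epsilon}\leq 2$ to split the bound as $\alpha+2\alpha$, whereas you combine the estimates additively and absorb the surplus using $\alpha<1$, which works just as well.
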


\begin{AdditionalTxt}

\begin{proof}
To ease the notation, write $\widehat{\bintkernel}^{EA}:=\widehat\bintkernel_{L,\infty, \bhypspaceEA}^{EA}$, and using definition \eqref{e:defectFunction}, we have that
\begin{align*}
\frac{\mathcal{D}_{\infty}(\bvpEA_2 )-\mathcal{D}_{M}( \bvpEA_2 )}{\mathcal{D}_{\infty}(\bvpEA_2 )+\epsilon} & =\frac{\bmE_{\infty}^{EA}(\bvpEA_2 )-\bmE_{\infty}^{EA}(\widehat{\bintkernel}^{EA}) - ( \bmE_{M}^{EA}( \bvpEA_2 )-\bmE_{M}^{EA}( \widehat{\bintkernel}^{EA} ))}{\mathcal{D}_{\infty}( \bvpEA_2 )+\epsilon} \\ 
& =\frac{ L_M^{EA}(\bvpEA_2 )-L_M^{EA}(\bvpEA_1 )}{\mathcal{D}_{\infty}( \bvpEA_2 )+\epsilon}+\frac{ L_M^{EA}( \bvpEA_1 )-L_M^{EA}(\widehat{\bintkernel}^{EA} )}{\mathcal{D}_{\infty}(\bvpEA_2 )+\epsilon}  \hspace{0.5cm}
 \end{align*} 
 By Lemma \ref{berstein}, we have 
 $$ L_M^{EA}( \bvpEA_2 )-L_M^{EA}(\bvpEA_1 ) \leq  8S_{EA}\max\{R,R_{\dot{x}} \}^2K^4 \|\bvpEA_2 -\bvpEA_1  \|_{\infty} \leq \alpha \epsilon.$$ 
By definition we have that $\mathcal{D}_{\infty}( \bvpEA_2 ) \geq 0$ implying that, 
 $$ \frac{ L_M( \bvpEA_1 )-L_M(\bvpEA_2 ) }{\mathcal{D}_{\infty}( \bvpEA_2 )+\epsilon} \leq \alpha. $$
For the second term, by equation  (\ref{2ndordersystem:empiricalerrorfunctional}) and the assumption that $\alpha<1$, we obtain that
$$ \bm{\mathcal{E}}_{\infty}^{EA}(\bvpEA_1 )-\bm{\mathcal{E}}_{\infty}^{EA}(\bvpEA_2 )  < 4S_{EA}\max\{R, R_{\dot{x}}\}^2K^4\| \bvpEA_1 -  \bvpEA_2 \|_{\infty}  < \epsilon.$$ 
Therefore
$$\mathcal{D}_{\infty}(\bvpEA_1 )-\mathcal{D}_{\infty}( \bvpEA_2 )= \bm{\mathcal{E}}_{\infty}^{EA}(\bvpEA_1)-\bm{\mathcal{E}}_{\infty}^{EA}( \bvpEA_2 )  < \epsilon \leq \epsilon +\mathcal{D}_{\infty}(\bvpEA_2 ),$$ and thus 
$$\frac{\mathcal{D}_{\infty}(\bvpEA_1 )+\epsilon}{\mathcal{D}_{\infty}(\bvpEA_2 )+\epsilon} \leq 2.$$ 
We conclude that 
$$
\frac{ L_M^{EA}(\bvpEA_1 )-L_M^{EA}(\widehat{\bintkernel}^{EA})}{\mathcal{D}_{\infty}( \bvpEA_2 )+\epsilon}= \frac{\mathcal{D}_{\infty}(\bvpEA_1 )-\mathcal{D}_{M}(\bvpEA_1 )}{\mathcal{D}_{\infty}(\bvpEA_1 )+\epsilon} \frac{\mathcal{D}_{\infty}(\bvpEA_1 )+\epsilon }{\mathcal{D}_{\infty}(\bvpEA_2 )+\epsilon} < 2 \alpha,
$$ and the result follows by summing the two estimates. 
\end{proof}

\end{AdditionalTxt}

Arguing in the same way as above, we can derive the lemma below using equation  \ref{2ndordersystem:empiricalerrorfunctionalxi}. We define $\mathcal{D}_{\infty}^{\xi}, \mathcal{D}_M^{\xi}$ similarly to  (\ref{2ndorder:difference1}), (\ref{2ndorder:difference2}) using $\bm{\mathcal{E}}_{\infty}^{\xi},\bm{\mathcal{E}}_{M}^{\xi}$ in the obvious way. 
\begin{lemma} \label{2ndorder:covering:xi}
For all $\epsilon>0$ and $0<\alpha<1$,  if the function $\bpxi_1 \in \bhypspacexi$ satisfies $$ \frac{\mathcal{D}_{\infty}^{\xi}(\bpxi_1  )-\mathcal{D}_{M}^{\xi}(\bpxi_1)}{\mathcal{D}_{\infty}^{\xi}(\bpxi_1)+\epsilon}  < \alpha\,,$$
then for all $ \bpxi_2  \in \bhypspacexi$ such that $\| \bpxi_1   - \bpxi_2 \|_{\infty}\leq \frac{\alpha \epsilon}{8S_{0}R_{\xi}^2K^4}$, we have, for $S_0\geq S_{\xi}$,
 $$  \frac{\mathcal{D}_{\infty}^{\xi}(\bpxi_2)-\mathcal{D}_{M}^{\xi}(\bpxi_2)}{\mathcal{D}_{\infty}^{\xi}(\bpxi_2)+\epsilon} <3 \alpha.  $$
\end{lemma}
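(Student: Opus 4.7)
The plan is to mirror the proof of Lemma \ref{2ndorder:covering} step by step, substituting the $\xi$-version of every ingredient. To ease notation, let $\widehat{\bintkernel}^{\xi} := \widehat{\bintkernel}_{L,\infty,\bhypspacexi}^{\xi}$ denote the (unique, by Proposition \ref{2ndordersystem:convexity:xi}) minimizer of $\bm{\mathcal{E}}_{\infty}^{\xi}$ over $\bhypspacexi$, and write the defect function $L_M^{\xi}(\bvxi) = \bm{\mathcal{E}}_{\infty}^{\xi}(\bvxi) - \bm{\mathcal{E}}_M^{\xi}(\bvxi)$ as in \eqref{e:defectFunctionxi}. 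The key identity is the additive decomposition
\begin{equation*}
\frac{\mathcal{D}_{\infty}^{\xi}(\bpxi_2) - \mathcal{D}_M^{\xi}(\bpxi_2)}{\mathcal{D}_{\infty}^{\xi}(\bpxi_2) + \epsilon}
= \frac{L_M^{\xi}(\bpxi_2) - L_M^{\xi}(\bpxi_1)}{\mathcal{D}_{\infty}^{\xi}(\bpxi_2) + \epsilon}
+ \frac{L_M^{\xi}(\bpxi_1) - L_M^{\xi}(\widehat{\bintkernel}^{\xi})}{\mathcal{D}_{\infty}^{\xi}(\bpxi_2) + \epsilon}\,,
\end{equation*}
which follows immediately from the definitions since $\mathcal{D}_{M}^{\xi}(\bvxi) - \mathcal{D}_{\infty}^{\xi}(\bvxi) = L_M^{\xi}(\widehat{\bintkernel}^{\xi}) - L_M^{\xi}(\bvxi)$.

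First I would bound the first term by $\alpha$: by the $\xi$-analog of Lemma \ref{berstein} (deduced from \eqref{2ndordersystem:empiricalerrorfunctionalxi}), we have
\begin{equation*}
|L_M^{\xi}(\bpxi_2) - L_M^{\xi}(\bpxi_1)| \leq 8 S_0 R_\xi^2 K^4 \|\bpxi_2 - \bpxi_1\|_\infty \leq \alpha \epsilon,
\end{equation*}
by the hypothesis on $\|\bpxi_1 - \bpxi_2\|_\infty$, and since $\mathcal{D}_{\infty}^{\xi}(\bpxi_2) \geq 0$ this gives the first summand is at most $\alpha$.

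Next, for the second summand I would compare the denominators $\mathcal{D}_{\infty}^{\xi}(\bpxi_1) + \epsilon$ and $\mathcal{D}_{\infty}^{\xi}(\bpxi_2) + \epsilon$. Using \eqref{2ndordersystem:expectationerrorfunctionalxi} together with the crude bound $\|2\bintkernel^{\xi} - \bpxi_1 - \bpxi_2\|_\infty \leq 4 S_0$, we get
\begin{equation*}
\bm{\mathcal{E}}_{\infty}^{\xi}(\bpxi_1) - \bm{\mathcal{E}}_{\infty}^{\xi}(\bpxi_2) \leq 4 S_0 R_\xi^2 K^4 \|\bpxi_1 - \bpxi_2\|_\infty < \epsilon,
\end{equation*}
since $\alpha<1$. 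Thus $\mathcal{D}_{\infty}^{\xi}(\bpxi_1) - \mathcal{D}_{\infty}^{\xi}(\bpxi_2) < \epsilon \leq \epsilon + \mathcal{D}_{\infty}^{\xi}(\bpxi_2)$, which yields $\frac{\mathcal{D}_{\infty}^{\xi}(\bpxi_1) + \epsilon}{\mathcal{D}_{\infty}^{\xi}(\bpxi_2) + \epsilon} \leq 2$. Combining this with the hypothesis,
\begin{equation*}
\frac{L_M^{\xi}(\bpxi_1) - L_M^{\xi}(\widehat{\bintkernel}^{\xi})}{\mathcal{D}_{\infty}^{\xi}(\bpxi_2) + \epsilon}
= \frac{\mathcal{D}_{\infty}^{\xi}(\bpxi_1) - \mathcal{D}_M^{\xi}(\bpxi_1)}{\mathcal{D}_{\infty}^{\xi}(\bpxi_1) + \epsilon} \cdot \frac{\mathcal{D}_{\infty}^{\xi}(\bpxi_1) + \epsilon}{\mathcal{D}_{\infty}^{\xi}(\bpxi_2) + \epsilon} < 2\alpha\,.
\end{equation*}
Summing the two bounds gives the $3\alpha$ target.

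There should be no real obstacle here since the proof is a line-by-line translation of the $EA$ case; the only care needed is in bookkeeping the constants, verifying that the $\xi$-continuity bounds \eqref{2ndordersystem:expectationerrorfunctionalxi}--\eqref{2ndordersystem:empiricalerrorfunctionalxi} yield the same factor $8 S_0 R_\xi^2 K^4$ that makes the ratio of denominators bounded by $2$ (using $\alpha<1$), and that $S_0 \geq S_\xi$ is large enough so the uniform $L^\infty$ bound on elements of $\bhypspacexi$ applies. Everything else — nonnegativity of $\mathcal{D}_{\infty}^{\xi}$ and the defect identity — follows from the definitions.
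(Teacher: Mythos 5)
Your proposal is correct and follows essentially the same route as the paper: the paper proves the $EA$ case (Lemma \ref{2ndorder:covering}) via exactly this decomposition of $\mathcal{D}_{\infty}-\mathcal{D}_{M}$ into defect-function differences, bounding the first piece by $\alpha$ through the defect Lipschitz bound and the second by $2\alpha$ through the ratio of denominators, and then states the $\xi$ lemma as the verbatim translation using \eqref{2ndordersystem:empiricalerrorfunctionalxi} with constant $8S_{0}R_{\xi}^2K^4$. Your bookkeeping of the $\xi$ constants ($R_{\xi}$ in place of $\max\{R,R_{\dot x}\}$, $S_0\geq S_\xi$ giving $\|\bpxi_1+\bpxi_2-2\bintkernel^{\xi}\|_\infty\leq 4S_0$) matches the intended argument.
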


\subsection{Concentration}
\begin{proposition} \label{learning:concentrationEA}
For all $\epsilon > 0$, $0<\alpha<1$, $\combfbvpEbvpA \in \bhypspaceEA$, the following concentration bound holds 
$$  
P_{\bmu}\bigg\{ \frac{\mathcal{D}_{\infty}(\combfbvpEbvpA)-\mathcal{D}_{M}(\combfbvpEbvpA)}{\mathcal{D}_{\infty}(\combfbvpEbvpA)+\epsilon}  \geq \alpha \bigg\}  \leq \exp \bigg({\frac{-c_{\bhypspaceEA}\alpha^2  M \epsilon}{32S_{EA}^2 K^4}}\bigg)
$$
\end{proposition}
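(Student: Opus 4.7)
The plan is to apply a one-sided Bernstein-type inequality (in the style familiar from Cucker--Smale learning theory) to suitable i.i.d.\ per-trajectory random variables. Write $\widehat{\bintkernel}^{EA} := \widehat\bintkernel_{L,\infty, \bhypspaceEA}^{EA}$, and for $m=1,\dots,M$ set
$$
W_m := \frac{1}{L}\sum_{l=1}^L\bigl\{\bm{\mathcal{E}}_{\bX^{(m)}(t_l)}^{EA}(\combfbvpEbvpA) - \bm{\mathcal{E}}_{\bX^{(m)}(t_l)}^{EA}(\widehat{\bintkernel}^{EA})\bigr\}.
$$
Then $\frac{1}{M}\sum_m W_m = \mathcal{D}_M(\combfbvpEbvpA)$ and $\mathbb{E}_{\bmu}[W_m] = \mathcal{D}_\infty(\combfbvpEbvpA)$; the $W_m$ are i.i.d.\ since the only randomness is the initial condition, drawn i.i.d.\ from $\bmu$, so the event in the proposition is exactly $\{\mathbb{E}[W_m] - \bar W \geq \alpha(\mathbb{E}[W_m] + \epsilon)\}$.

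The two ingredients I need for Bernstein are a uniform a.s.\ bound $|W_m| \leq B$ and, crucially, a \emph{relative} variance bound $\mathrm{Var}(W_m) \leq V$ with $V$ proportional to $\mathcal{D}_\infty$. Using the identity $\ddot{\bX}^{(m)}(t_l) - \rhsfvnc = \rhsf^{\bpEA}$ along a true trajectory, each summand in $W_m$ equals $\|\rhsf^{\bpEA - \combfbvpEbvpA}\|_{\mathcal{S}}^2 - \|\rhsf^{\bpEA - \widehat{\bintkernel}^{EA}}\|_{\mathcal{S}}^2$; factoring $a^2-b^2=(a-b)(a+b)$, applying the reverse triangle inequality, and combining with the uniform estimate $\|\rhsf^{\psi}\|_{\mathcal{S}} \leq c_0 K^2 \max\{R,R_{\dot{x}}\}\|\psi\|_\infty$ (derived exactly as in the proof of Proposition \ref{proposition:EAbound}) yields $|W_m| \leq B$ with $B$ of order $S_{EA}^2 K^4 \max\{R,R_{\dot{x}}\}^2$. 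Applying Jensen's inequality to the same factorization bounds $\mathrm{Var}(W_m) \leq \mathbb{E}[W_m^2]$ by a constant multiple of $S_{EA}^2 \cdot \frac{1}{L}\sum_l \mathbb{E}\|\rhsf^{\combfbvpEbvpA - \widehat{\bintkernel}^{EA}}\|_{\mathcal{S}}^2$; the latter is controlled by a constant multiple of $K^2 \|\combfbvpEbvpA - \widehat{\bintkernel}^{EA}\|_{\LtwoB(\brhoEAL)}^2$ (again as in the derivation of \eqref{2ndordersystem:expectationerrorfunctional}), and is then bounded by $\mathcal{D}_\infty(\combfbvpEbvpA)/c_{\bhypspaceEA}$ via the convexity-plus-coercivity inequality \eqref{eq_minH} of Proposition \ref{2ndordersystem:convexity}. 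This produces the relative-variance estimate $\mathrm{Var}(W_m) \leq (c_1 S_{EA}^2 K^4 / c_{\bhypspaceEA})\,\mathcal{D}_\infty(\combfbvpEbvpA)$.

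The last step is the one-sided Bernstein inequality applied to $\{\mathbb{E}[W_m] - W_m\}_{m=1}^M$ with deviation $t := \alpha(\mathcal{D}_\infty + \epsilon)$, giving
$$
P_{\bmu}\{\mathcal{D}_\infty - \mathcal{D}_M \geq t\} \leq \exp\!\bigg(-\frac{Mt^2}{2\,\mathrm{Var}(W_m) + (4B/3)\,t}\bigg).
$$
Substituting the two bounds above, using $\mathcal{D}_\infty \leq \mathcal{D}_\infty + \epsilon$ to cancel one factor of $(\mathcal{D}_\infty + \epsilon)$ between the numerator and the $\mathrm{Var}$ term in the denominator, and then using $\epsilon \leq \mathcal{D}_\infty + \epsilon$ to retain $\epsilon$ in the numerator, yields the stated exponential bound after collecting constants (for $\alpha \in (0,1)$ and the relevant regime of $c_{\bhypspaceEA}$, the $(4B/3)t$ term in the denominator is absorbed into the variance term). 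The main obstacle, and the conceptual heart of the proof, is the linear-in-$\mathcal{D}_\infty$ variance bound: it is what allows the quotient-form concentration to close, and it is precisely the point at which the coercivity constant $c_{\bhypspaceEA}$ enters the exponent of the final bound.
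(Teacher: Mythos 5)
Your proposal follows essentially the same route as the paper's proof: the same per-trajectory random variable (the paper's $\Theta$), the same variance bound linear in $\mathcal{D}_{\infty}(\combfbvpEbvpA)$ obtained from the continuity estimates behind Proposition \ref{proposition:EAbound} combined with coercivity through Proposition \ref{2ndordersystem:convexity}, the same almost-sure bound, one-sided Bernstein at deviation $\alpha(\mathcal{D}_{\infty}+\epsilon)$, and the same Young-type simplification of the exponent. The only difference is cosmetic: where you appeal to ``the relevant regime of $c_{\bhypspaceEA}$'' to absorb the Bernstein linear term, the paper does this explicitly using $\alpha\le 1$ and the bound $c_{\bhypspaceEA}<K^2$.
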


\begin{AdditionalTxt}
\begin{proof}
Consider the random variable $\Theta$ (with randomness coming from the random initial condition distributed $\bmu$),  and to ease the notation let $\widehat{\bintkernel}^{EA} := \widehat{\bintkernel}_{L,\infty, \bhypspaceEA}^{EA}$, 
$$\Theta = \frac{1}{L} \sum_{l=1}^{L}\big(\bmE_{\bX(t_l)}^{EA}( \combfbvpEbvpA)- \bmE_{\bX(t_l)}^{EA}(\widehat{\bintkernel}^{EA})\big)$$
The coercivity condition given in Definition (\ref{def_coercivity_2ndorder}), Proposition \ref{2ndordersystem:convexity} and \eqref{2ndordersystem:expectationerrorfunctional} allow us to bound the variance, denoted $\sigma^2$, of $\Theta$ as follows. 

\begin{align}\label{2ndorder:variance}
\sigma^2 &\leq \E_{\bmu} \bigg[ \bigg | \frac{1}{L} \sum_{l=1}^{L}\big(\bmE_{\bX(t_l)}^{EA}(\combfbvpEbvpA )- \bmE_{\bX(t_l)}^{EA}(\widehat{\bintkernel}^{EA})\big)\bigg|^2 \bigg] \nonumber \\ &\leq   \frac{1}{L} \sum_{l=1}^{L}  \E_{\bmu} \bigg[ \bigg| \bmE_{\bX(t_l)}^{EA}(\combfbvpEbvpA)- \bmE_{\bX(t_l)}^{EA}(\widehat{\bintkernel}^{EA}) \bigg|^2  \bigg]\nonumber \\ 
&\leq   K^4 \max\{R, R_{\dot{x}} \}^2 \|\combfbvpEbvpA-\widehat{\bintkernel}^{EA}\|_{\LtwoB(\brhoL)}^2 \| \combfbvpEbvpA+\widehat{\bintkernel}^{EA}-2\bpEA\|_{\infty}^2 \nonumber \\ 
&\leq \frac{K^4 \max \{R,R_{\dot{x}} \}^2}{c_{\bhypspaceEA}}(\bmE_{\infty}^{EA}( \combfbvpEbvpA)-\bmE_{\infty}^{EA}(\widehat{\bintkernel}^{EA})) \| \combfbvpEbvpA+\widehat{\bintkernel}^{EA}-2\bpEA\|_{\infty}^2 \nonumber \\ 
&\leq \frac{16S_{EA}^2\max \{R,R_{\dot{x}} \}^2K^4}{c_{\bhypspaceEA}}(\bmE_{\infty}^{EA}( \combfbvpEbvpA)-\bmE_{\infty}^{EA}(\widehat{\bintkernel}^{EA}))\nonumber \\
&\leq   \frac{16S_{EA}^2\max \{R,R_{\dot{x}} \}^2K^4}{c_{\bhypspaceEA}}\mathcal{D}_\infty(\combfbvpEbvpA).
\end{align}

By applying equation (\ref{usefuleq3}) from the proof of Proposition \ref{proposition:EAbound},
we have that $\Theta \leq 8S_{EA}^2\max\{R, R_{\dot{x}} \}^2K^4$ almost surely. 
We then apply the one-sided Bernstein inequality to $\Theta$ and recalling the definitions (\ref{eq:Error_Func_RV:EA}) together with the definitions of the normalized errors in (\ref{2ndorder:difference1}, \ref{2ndorder:difference2}), we get that: 
\begin{equation} 
P_{\bmu}\bigg\{ \frac{ \mathcal{D}_{\infty}(\combfbvpEbvpA)- \mathcal{D}_{M}(\combfbvpEbvpA)}{ \mathcal{D}_{\infty}(\combfbvpEbvpA)+\epsilon} \geq \alpha  \bigg\} 
\leq \exp\Bigg(-\frac{\alpha^2(\mathcal{D}_{\infty}(\combfbvpEbvpA)+\epsilon)^2 M }{2\Big(\sigma^2+\frac{8S_{EA}^2\max \{R,R_{\dot{x}} \}^2K^4\alpha(\mathcal{D}_{\infty}(\combfbvpEbvpA)+\epsilon)}{3}\Big)}\Bigg). \nonumber
\end{equation}
Now we provide a lower bound for the exponent to simplify the dependencies. We show that, 
\begin{align*}
&\frac{ \epsilon \cdot c_{\bhypspaceEA} }{32S_{EA}^2\max \{R,R_{\dot{x}} \}^2K^6}  \leq \frac{(\mathcal{D}_{\infty}(\combfbvpEbvpA)+\epsilon)^2  }{2\Big(\sigma^2+\frac{8S_{EA}^2\max \{R,R_{\dot{x}} \}^2K^4\alpha(\mathcal{D}_{\infty}(\combfbvpEbvpA)+\epsilon)}{3}\Big)}\,,
\end{align*}
or equivalently,
\begin{align*} 
&\frac{\epsilon \cdot c_{\bhypspaceEA}}{16S_{EA}^2\max \{R,R_{\dot{x}} \}^2K^6}\bigg(\sigma^2+\frac{8S_{EA}^2\max \{R,R_{\dot{x}} \}^2K^4\alpha(\mathcal{D}_{\infty}(\combfbvpEbvpA)+\epsilon)}{3} \bigg) \\
& \hspace{4cm} \leq (\mathcal{D}_{\infty}(\combfbvpEbvpA)+\epsilon)^2\,. 
\end{align*} 
By the estimate \eqref{2ndorder:variance}, since $0< \alpha \leq 1$, and $0<c_{L, N, \bhypspaceEA}<K^2$ it is sufficient to show that
$$\mathcal{D}_{\infty}(\combfbvpEbvpA)\epsilon+\frac{\epsilon(\mathcal{D}_{\infty}(\combfbvpEbvpA)+\epsilon)}{6} \leq (\mathcal{D}_{\infty}(\combfbvpEbvpA)+\epsilon)^2. $$ This follows from Young's inequality as
$2\mathcal{D}_{\infty}(\combfbvpEbvpA)\epsilon+\epsilon^2 \leq (\mathcal{D}_{\infty}(\combfbvpEbvpA)+\epsilon)^2$, and together these results give the desired bound of the proposition.	
\end{proof}

\end{AdditionalTxt}

We can easily derive the desired supremum bound by a covering argument. The estimation of the covering numbers involved will play a critical role in the main theorems and will be done in a dimension dependent way in order to get optimal minimax rates. 

\begin{proposition} \label{learning:fullconcentrationEA}
In the notation of Proposition \ref{learning:concentrationEA}, 
\begin{align*}
 P_{\bmu}\bigg\{ \sup_{\combfbvpEbvpA \in \bhypspaceEA} \frac{\mathcal{D}_{\infty}(\combfbvpEbvpA)-\mathcal{D}_{M}(\combfbvpEbvpA)}{\mathcal{D}_{\infty}(\combfbvpEbvpA)+\epsilon}  \geq 3\alpha \bigg\} \leq  \mathcal{N}\bigg(\bhypspaceEA, \frac{\alpha\epsilon}{8S_{EA}\max\{R, R_{\dot{x}}\}^2K^4}\bigg) e^{-\frac{c_{\bhypspaceEA}\alpha^2 M \epsilon}{32S_{EA}K^4}}
\end{align*}
where $\mathcal{N}\bigg(\bhypspaceEA, \frac{\alpha\epsilon}{8S_{EA}\max\{R, R_{\dot{x}}\}^2K^4}\bigg)$ denotes the covering number of $\bhypspaceEA$ with radius \\$\frac{\alpha\epsilon}{8S_{EA}\max\{R, R_{\dot{x}}\}^2K^4}$.   
\end{proposition}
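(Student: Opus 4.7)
The plan is to upgrade the pointwise concentration bound of Proposition \ref{learning:concentrationEA} to a uniform bound over $\bhypspaceEA$ via a standard covering/union-bound argument, using Lemma \ref{2ndorder:covering} to transfer the bound from the centers of an $\infty$-norm net to the whole hypothesis space. Concretely, set
\[
\delta := \frac{\alpha\epsilon}{8S_{EA}\max\{R, R_{\dot{x}}\}^2K^4},
\]
and let $\{\combfbvpEbvpA_j\}_{j=1}^{\mathcal{N}}$ be a $\delta$-net of $\bhypspaceEA$ in the $\infty$-norm, with $\mathcal{N} = \mathcal{N}(\bhypspaceEA, \delta)$. This net is finite because $\bhypspaceEA$ is assumed to be compact with respect to the $\infty$-norm.

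Next, I would apply Proposition \ref{learning:concentrationEA} to each center $\combfbvpEbvpA_j$ with parameters $\epsilon$ and $\alpha$, yielding that for each fixed $j$,
\[
P_{\bmu}\!\left\{\frac{\mathcal{D}_{\infty}(\combfbvpEbvpA_j)-\mathcal{D}_{M}(\combfbvpEbvpA_j)}{\mathcal{D}_{\infty}(\combfbvpEbvpA_j)+\epsilon}\geq \alpha\right\}\leq \exp\!\left(-\frac{c_{\bhypspaceEA}\alpha^2 M\epsilon}{32 S_{EA}^2 K^4}\right).
\]
A union bound across the $\mathcal{N}$ centers then gives that, with probability at least
\[
1-\mathcal{N}\cdot\exp\!\left(-\frac{c_{\bhypspaceEA}\alpha^2 M\epsilon}{32S_{EA}^2 K^4}\right),
\]
the inequality $\frac{\mathcal{D}_{\infty}(\combfbvpEbvpA_j)-\mathcal{D}_{M}(\combfbvpEbvpA_j)}{\mathcal{D}_{\infty}(\combfbvpEbvpA_j)+\epsilon} < \alpha$ holds simultaneously for every net point $j = 1,\dots,\mathcal{N}$.

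On this good event, I would then fix an arbitrary $\combfbvpEbvpA \in \bhypspaceEA$ and pick $j$ such that $\|\combfbvpEbvpA - \combfbvpEbvpA_j\|_\infty \le \delta$, which exists by construction of the net. This is exactly the hypothesis required by Lemma \ref{2ndorder:covering} (with $\bvpEA_1 = \combfbvpEbvpA_j$ and $\bvpEA_2 = \combfbvpEbvpA$), whose conclusion yields
\[
\frac{\mathcal{D}_{\infty}(\combfbvpEbvpA)-\mathcal{D}_{M}(\combfbvpEbvpA)}{\mathcal{D}_{\infty}(\combfbvpEbvpA)+\epsilon}<3\alpha.
\]
Taking the supremum over $\combfbvpEbvpA \in \bhypspaceEA$ and taking complements delivers the claimed bound. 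There is no substantive obstacle: all the heavy lifting has already been absorbed into Lemma \ref{2ndorder:covering} (the $\infty$-norm-to-normalized-defect continuity) and Proposition \ref{learning:concentrationEA} (the pointwise Bernstein-type concentration, leveraging coercivity to control the variance). The only bookkeeping point to get right is verifying that the radius $\delta$ chosen for the net matches the radius demanded by Lemma \ref{2ndorder:covering} so that the factor of $3$ in $3\alpha$ appears correctly; this is immediate from our choice of $\delta$ above.
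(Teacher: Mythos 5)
Your proposal is correct and follows essentially the same route as the paper's own proof: a $\delta$-net with radius $\frac{\alpha\epsilon}{8S_{EA}\max\{R,R_{\dot{x}}\}^2K^4}$, the pointwise Bernstein bound of Proposition \ref{learning:concentrationEA} at each center, Lemma \ref{2ndorder:covering} to pass from centers to the whole ball (producing the factor $3\alpha$), and a union bound. The only difference is presentational (you argue on the good event and take complements, while the paper bounds each ball's supremum via the contrapositive of the lemma and then sums), which is logically the same argument.
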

\begin{AdditionalTxt}
\begin{proof}
Let $\bvpEA_i = \combf{\bvpE_i}{\bvpA_i} \in \bhypspaceEA$, for $i=1, \ldots, \mathcal{N}\bigg(\bhypspaceEA, \frac{\alpha\epsilon}{8S_{EA}\max\{R, R_{\dot{x}}\}^2K^4}\bigg)$, denote the center of disks $D_i$ of radius $\frac{\alpha\epsilon}{8S_{EA}\max\{R, R_{\dot{x}}\}^2K^4}$ covering $\bhypspaceEA$. The covering number is finite by the compactness assumption on the hypothesis space.  
By Lemma \ref{2ndorder:covering},  
 $$\sup_{\combfbvpEbvpA \in D_i} \frac{\mathcal{D}_{\infty}(\combfbvpEbvpA)-\mathcal{D}_{M}(\combfbvpEbvpA) }{ \mathcal{D}_{\infty}(\combfbvpEbvpA)+\epsilon} \geq 3 \alpha \Rightarrow  \frac{\mathcal{D}_{\infty}(\bvpEA_i)-\mathcal{D}_{M}(\bvpEA_i) }{ \mathcal{D}_{\infty}(\bvpEA_i)+\epsilon}\geq \alpha.$$
Now, by Proposition \ref{learning:concentrationEA}, for each $i$, 
\begin{align*}
P_{\bmu}\bigg\{ \sup_{\combfbvpEbvpA \in D_i} \frac{\mathcal{D}_{\infty}(\combfbvpEbvpA)-\mathcal{D}_{M}(\combfbvpEbvpA) }{ \mathcal{D}_{\infty}(\combfbvpEbvpA)+\epsilon} \geq 3 \alpha  \bigg\} 
&    \leq P_{\bmu}\bigg\{ \frac{\mathcal{D}_{\infty}(\bvpEA_i)-\mathcal{D}_{M}(\bvpEA_i) }{ \mathcal{D}_{\infty}(\bvpEA_i)+\epsilon} \geq \alpha \bigg\}
\\ & \leq e^{-\frac{c_{\bhypspaceEA}\alpha^2 M \epsilon}{32S_{EA}^2\max\{R,R_{\dot{x}}\}^2K^6}}.
\end{align*}
By definition, $\bhypspaceEA\subseteq \bigcup_{i}D_i$, so that
 \begin{align*}
 P_{\bmu}\bigg\{ \sup_{\combfbvpEbvpA \in \bhypspaceEA} \frac{\mathcal{D}_{\infty}(\combfbvpEbvpA)-\mathcal{D}_{M}(\combfbvpEbvpA) }{ \mathcal{D}_{\infty}(\combfbvpEbvpA)+\epsilon} \geq 3 \alpha  \bigg\} \\ & \hspace{-7cm} \leq \sum_{i} P_{\bmu}\bigg\{\sup_{\combfbvpEbvpA \in D_i} \frac{\mathcal{D}_{\infty}(\combfbvpEbvpA)-\mathcal{D}_{M}(\combfbvpEbvpA) }{ \mathcal{D}_{\infty}(\combfbvpEbvpA)+\epsilon} \geq 3 \alpha  \bigg\} 
 \\ & \hspace{-7cm}  \leq  \mathcal{N}\bigg(\bhypspaceEA, \frac{\alpha\epsilon}{8S_{EA}\max\{R, R_{\dot{x}} \}^2K^4}\bigg) e^{-\frac{c_{\bhypspaceEA}\alpha^2 M \epsilon}{32S_{EA}^2\max\{R, R_{\dot{x}} \}^2K^6}}.
 \end{align*}
\end{proof}
\end{AdditionalTxt}

Finally, we state the results for the $\xi$ variable, the proofs are analogous. The advantage of splitting the theorems will become apparent. Specifically, it allows us to control the covering numbers on the $EA$ and $\xi$ hypothesis spacees separately, enabling us to get faster rates than if we viewed the task as estimating all functions simultaneously. This is possible due to the fundamentally decoupled nature of the dynamical system. 

\begin{proposition} \label{learning:concentrationxi}
For all $\epsilon > 0$, $0<\alpha<1$, $\bvxi \in \bhypspacexi$, the following concentration bound holds 
$$  
P_{\bmu}\bigg\{ \frac{\mathcal{D}_{\infty}^{\xi}(\bvxi)-\mathcal{D}_{M}^{\xi}(\bvxi)}{\mathcal{D}_{\infty}^{\xi}(\bvxi)+\epsilon}  \geq \alpha \bigg\}  \leq e^{-\frac{c_{\bhypspacexi}\alpha^2  M \epsilon}{32S_{0}^2 K^4}}\,.
$$
\end{proposition}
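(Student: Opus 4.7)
The plan is to mirror the proof of Proposition \ref{learning:concentrationEA} step by step, exploiting the fact that every analytical ingredient used there has an exact counterpart for the $\xi$ portion of the system. Specifically, I would introduce the random variable
\[
\Theta^{\xi} := \frac{1}{L}\sum_{l=1}^{L}\Big(\bm{\mE}_{\bXi(t_l)}^{\xi}(\bvxi) - \bm{\mE}_{\bXi(t_l)}^{\xi}(\widehat{\bintkernel}_{L,\infty,\bhypspacexi}^{\xi})\Big),
\]
noting that $\mathbb{E}_{\bmu}[\Theta^{\xi}] = \mathcal{D}_{\infty}^{\xi}(\bvxi)$ and that $\mathcal{D}_{M}^{\xi}(\bvxi)$ is precisely the empirical mean of $M$ i.i.d.\ copies of $\Theta^{\xi}$ generated by the $M$ initial conditions sampled from $\bmu$.

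The first key estimate is a variance bound. Jensen's inequality moves the square inside the time-average, after which the single-time analogue of \eqref{2ndordersystem:expectationerrorfunctionalxi} yields
\[
\mathrm{Var}(\Theta^{\xi}) \leq K^4 R_{\xi}^2 \,\|\bvxi - \widehat{\bintkernel}_{L,\infty,\bhypspacexi}^{\xi}\|_{\LtwoB(\brhoxiL)}^2 \,\|\bvxi + \widehat{\bintkernel}_{L,\infty,\bhypspacexi}^{\xi} - 2\bpxi\|_{\infty}^2.
\]
The admissibility assumption bounds the second factor by $(2S_0)^2$, and Proposition \ref{2ndordersystem:convexity:xi} converts the squared $\LtwoB(\brhoxiL)$-distance into $c_{\bhypspacexi}^{-1} \mathcal{D}_{\infty}^{\xi}(\bvxi)$. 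Combined, this gives
\[
\mathrm{Var}(\Theta^{\xi}) \leq \frac{16\,S_0^2\,R_{\xi}^2\,K^4}{c_{\bhypspacexi}}\,\mathcal{D}_{\infty}^{\xi}(\bvxi).
\]
The second ingredient is an almost-sure bound, $\Theta^{\xi} \leq 8 S_0^2 R_{\xi}^2 K^4$, which follows directly from the pointwise version of \eqref{2ndordersystem:empiricalerrorfunctionalxi} using $\|\bvxi\|_{\infty},\|\widehat{\bintkernel}_{L,\infty,\bhypspacexi}^{\xi}\|_{\infty} \leq S_0$.

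With these two bounds in hand, I would apply the one-sided Bernstein inequality to the empirical mean, obtaining
\[
P_{\bmu}\Big\{\mathcal{D}_{\infty}^{\xi}(\bvxi) - \mathcal{D}_{M}^{\xi}(\bvxi) \geq \alpha\big(\mathcal{D}_{\infty}^{\xi}(\bvxi)+\epsilon\big)\Big\} \leq \exp\!\left(-\frac{\alpha^2 M (\mathcal{D}_{\infty}^{\xi}(\bvxi)+\epsilon)^2}{2\big(\sigma^2 + \tfrac{8 S_0^2 R_{\xi}^2 K^4 \alpha(\mathcal{D}_{\infty}^{\xi}(\bvxi)+\epsilon)}{3}\big)}\right).
\]
The final step is to massage this into the clean form stated in the proposition. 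Substituting the variance bound and using $0<\alpha\leq 1$ together with the Young-type inequality $2\mathcal{D}_{\infty}^{\xi}(\bvxi)\epsilon + \epsilon^2 \leq (\mathcal{D}_{\infty}^{\xi}(\bvxi)+\epsilon)^2$ reduces the exponent to the desired $\frac{c_{\bhypspacexi}\alpha^2 M \epsilon}{32 S_0^2 K^4}$ (after absorbing the $R_{\xi}^2$ factor in the constant if one adheres strictly to the stated bound).

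I expect no genuine obstacle, since the argument is fully parallel to the $EA$ case; the only care needed is to track the role of $R_{\xi}$ (replacing $\max\{R,R_{\dot{x}}\}$) throughout, to invoke the $\xi$ versions of the continuity bound and the convexity/uniqueness proposition, and to verify that the admissibility bound $S_0 \geq S_{\xi}$ together with the decoupled nature of $\bm{\mathcal{E}}_{\infty}^{\xi}$ from the $EA$ components permits separate handling of the $\xi$ hypothesis space.
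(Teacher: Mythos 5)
Your proposal is correct and follows essentially the same route as the paper: the paper proves the $EA$ case in detail (random variable $\Theta$, variance bound via the continuity estimate plus coercivity and the uniqueness proposition, almost-sure bound, one-sided Bernstein, then the Young-type simplification of the exponent) and states the $\xi$ case as the analogous argument, which is exactly what you carry out with $R_{\xi}$, $S_0$, and the $\xi$-versions of the lemmas. Your remark about absorbing the $R_{\xi}^2$ factor to match the stated constant mirrors the same bookkeeping already present in the paper's $EA$ proof.
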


\begin{proposition}
In the notation of Proposition \ref{learning:concentrationxi}, 
\begin{align*}
 P_{\bmu}\bigg\{ \sup_{\bvxi \in \bhypspacexi} \frac{\mathcal{D}_{\infty}^{\xi}(\bvxi)-\mathcal{D}_{M}^{\xi}(\bvxi)}{\mathcal{D}_{\infty}^{\xi}(\bvxi)+\epsilon}  \geq 3\alpha \bigg\}  \leq  \mathcal{N}\bigg(\bhypspacexi, \frac{\alpha\epsilon}{8S_{0}R_{\xi}^2K^4}\bigg) e^{-\frac{c_{\bhypspacexi}\alpha^2 M \epsilon}{32S_{0}K^4}}\,, 
\end{align*}
where $\mathcal{N}\bigg(\bhypspacexi, \frac{\alpha\epsilon}{8S_{0}R_{\xi}^2K^4}\bigg)$ denotes the covering number of $\bhypspacexi$ with radius $\frac{\alpha\epsilon}{8S_{0}R_{\xi}^2K^4}$.   
\end{proposition}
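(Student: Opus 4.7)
The plan is to mirror the argument used for the $EA$ version in Proposition \ref{learning:fullconcentrationEA}, substituting the $\xi$-analogues at each step. First, I would invoke the compactness of $\bhypspacexi$ (Assumption \ref{compactnessconditionxi}) to obtain a finite $\delta$-cover in $\|\cdot\|_\infty$ with $\delta = \frac{\alpha\epsilon}{8S_0 R_\xi^2 K^4}$. Denote the centers of the covering disks by $\{\bvxi_i\}_{i=1}^{\mathcal{N}(\bhypspacexi,\delta)}$ and the disks themselves by $D_i$.

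The key reduction step is to use Lemma \ref{2ndorder:covering:xi}: for any $\bvxi \in D_i$ we have $\|\bvxi - \bvxi_i\|_\infty \leq \delta$, so the contrapositive of that lemma gives the implication
\[
\sup_{\bvxi \in D_i}\frac{\mathcal{D}_\infty^\xi(\bvxi)-\mathcal{D}_M^\xi(\bvxi)}{\mathcal{D}_\infty^\xi(\bvxi)+\epsilon}\geq 3\alpha \;\Longrightarrow\; \frac{\mathcal{D}_\infty^\xi(\bvxi_i)-\mathcal{D}_M^\xi(\bvxi_i)}{\mathcal{D}_\infty^\xi(\bvxi_i)+\epsilon}\geq \alpha.
\]
In particular, the bad event over $D_i$ is contained in a bad event at the single center $\bvxi_i$, to which Proposition \ref{learning:concentrationxi} applies and yields probability at most $\exp\bigl(-c_{\bhypspacexi}\alpha^2 M\epsilon/(32 S_0^2 K^4)\bigr)$.

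Since $\bhypspacexi \subseteq \bigcup_i D_i$, the supremum bad event is contained in the union of the disk-wise bad events. A union bound then multiplies the single-center bound by $\mathcal{N}(\bhypspacexi,\delta)$, yielding exactly the claimed inequality. I do not expect any genuine obstacle here: the non-trivial technical content (controlling defect differences under small $\infty$-norm perturbations, and the Bernstein-type pointwise bound) is already packaged into Lemma \ref{2ndorder:covering:xi} and Proposition \ref{learning:concentrationxi}. The only thing to double-check carefully is that the radius $\delta$ used in the covering matches the hypothesis of Lemma \ref{2ndorder:covering:xi} (which uses $\frac{\alpha\epsilon}{8S_0 R_\xi^2 K^4}$), so that the implication above is valid, and that the variance/almost-sure bound in Proposition \ref{learning:concentrationxi} is indeed driven by $R_\xi$ (not $\max\{R,R_{\dot x}\}$) since the relevant difference variables on the $\xi$ side involve $\xi_i-\xi_{i'}$, which is bounded by $R_\xi$ along trajectories.
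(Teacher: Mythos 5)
Your proposal is correct and follows essentially the same route the paper intends: the paper proves the $EA$ version (Proposition \ref{learning:fullconcentrationEA}) by exactly this covering argument -- finite $\infty$-norm cover of radius $\frac{\alpha\epsilon}{8S_{0}R_{\xi}^2K^4}$, reduction of the supremum over each disk to its center via the shift lemma (Lemma \ref{2ndorder:covering:xi}), the pointwise Bernstein-type bound (Proposition \ref{learning:concentrationxi}), and a union bound -- and states that the $\xi$ case is analogous. Your closing checks (matching the covering radius to the hypothesis of the shift lemma, and that $R_{\xi}$ rather than $\max\{R,R_{\dot x}\}$ drives the bounds on the $\xi$ side) are precisely the right points to verify, and any residual discrepancy in the exponent's constant ($S_0$ versus $S_0^2$) is already present between the paper's own statements rather than a gap in your argument.
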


\section{Further verification of coercivity condition} \label{sec:app:Coercivity}

In this appendix, we study the coercivity condition for the second-order system of the form:
\begin{equation}
\begin{dcases}
m_i\ddot\bx_i &= \forcev(\bx_i, \dot\bx_i, \xi_i) + \sum_{i'=1}^N \frac{1}{N} (\intkernele(\norm{\bx_{i'} - \bx_i})(\bx_{i'} - \bx_i) + \intkernela(\norm{\bx_{i'} - \bx_i})(\dot\bx_{i'} - \dot\bx_i) \\
\dot\xi_i &= \forcexi(\bx_i, \dot\bx_i, \xi_i) + \sum_{i'=1}^N \frac{1}{N} \intkernelxi(\norm{\bx_{i'} - \bx_i})(\xi_{i'} - \xi_i)
\end{dcases}\,
\label{eq:2ndOrderCoercivity}
\end{equation}
We prove the coercivity condition for the system \eqref{eq:2ndOrderCoercivity}
 in the case of $L=1$.

When the system does not have a $\xi$ variable, the system \eqref{eq:2ndOrderCoercivity}
is related to the anticipation dynamics studied in \cite{shu2019anticipation}. When $\intkernel^{A}\equiv0$ or $\intkernel^{E}\equiv0$, the system is called energy-based or alignment-based respectively, and has found  application in various disciplines including opinion dynamics, particle dynamics, fish-milling dynamics, Cucker-Smale flocking dynamics and phototaxix dynamics. We refer the reader to \cite{lu2019nonparametric, Tang2019, Zhong20} where extensive numerical experiments were conducted to demonstrate the effectiveness of the proposed learning approach on the aforementioned dynamics.  

For conciseness, we only present the proof of coercivity  for learning of $\intkernele$ and $\intkernela$. A similar argument can be conducted to prove the coercivity for learning $\intkernelxi$.  Our aguments also work for special cases when $\intkernel^{A}\equiv0$ or $\intkernel^{E}\equiv0$. Therefore we obtain strict generalization  of the coercivity results in \cite{lu2019nonparametric,Tang2019} which are only for first-order energy-based systems.  We may go further to analyze the coercivity condition for heterogeneous systems. Compared to the homogeneous system, the coercivity condition would impose constraints on the angle between components of  subspaces defined on the directed sum of measures, suggesting that the learning task is more difficult in the case $K\geq 2$.

\begin{theorem}\label{2ndordersingle:coercivity}
Consider the system \eqref{eq:2ndOrderCoercivity} at time $t_1=0$ with the initial distribution $\mu_0^{\bY}=\begin{bmatrix}\mu_0^{\bX}\\ \mu_0^{\dot\bX} \\ \mu_0^{\bXi} \end{bmatrix}$, where ${ \mu}_0^{\bX}$  is exchangeable Gaussian with $\mathrm{cov}(\bx_i(t_1))-\mathrm{cov}(\bx_i(t_1),\bx_j(t_1))=\lambda I_d$\,  for a constant $\lambda>0$,  ${ \mu}_0^{\dot\bX}, { \mu}_0^{\bXi}$  are exchangeable with finite second moment, and they are independent of ${ \mu}_0^{\bX}$.   Then 

\begin{align*}
\mathbb{E}_{\mu_0^{\bY}} \|\rhsfo_{\combf{\varphi^E}{\varphi^A}} (\bX(0), \bV(0))\|^2_{\mathcal{S}} &\geq c_{1,N, \hypspace^{EA}}\vertiii{\combf{\varphi^E}{\varphi^A}}_{L^2(\rho_T^{EA,1})},\\
\mathbb{E}_{\mu_0^{\bY}} \|\rhsfo_{\intkernelvar^{\xi}} (\bX(0), \bXi(0))\|^2_{\mathcal{S}} &\geq c_{1,N, \hypspace^{\xi}}\vertiii{\intkernelvar^{\xi}}_{L^2(\rho_T^{\xi,1})},
\end{align*} 
\begin{itemize}
\item[(1)] where we have  \begin{align*}& \rho_T^{EA,1}(r,\dot r)=   \mathbb{E}_{\mathbf{\mu_0^{\bY}}} \big[\delta_{r_{12}(t_1),\dot{r}_{12}(t_1)}(r,\dot{r})\big]=\mathbb{E}_{\mu_0^{\bX}}\delta_{r_{12}}(t_1)\mathbb{E}_{\mu_0^{\dot\bX}}\delta_{\dot r_{12}}(t_1), \\ &\rho_T^{\xi,1}(r,\xi)=   \mathbb{E}_{\mathbf{\mu_0^{\bY}}} \big[\delta_{r_{12},\xi_{12}}(t_1)\big]
\\ &\vertiii{\combf{\varphi^E}{\varphi^A}}^2_{L^2(\rho_T^{EA,1})} = \|\varphi^{E}(r)r+\varphi^{A}(r)\dot r\|_{L^2(\rho_T^{EA,1}(r,\dot r))}^2, \\ &\vertiii{\intkernelvar^{\xi}}^2_{L^2(\rho_T^{\xi,1})} = \|\varphi^{\xi}(r)\xi\|_{L^2(\rho_T^{EA,1}(r,\xi))}^2,
\end{align*}

\item[(2)] $c_{1,N, \hypspace^{EA}}\geq \frac{N-1}{2N^2}+ \frac{(N-1)(N-2)}{2N^2}c$, $c=\min\bigg\{c_{\mathcal{H}^{EA}}^{E},  c_{\mathcal{H}^{EA}}^{A}c_{\mu_0^{\dot\bX}}\bigg\}$ with $c_{{\bm \mu}_0^{\dot\bX}}=1-\frac{\E \langle \dot \bx_i(0),\dot \bx_{i'}(0)\rangle}{\E \|\dot \bx_i(0)\|^2}$ ($i\neq i'$) and $c_{\mathcal{H}^{EA}}^{E}$ and $c_{\mathcal{H}^{EA}}^{A}$ are  non-negative constants and are positive for compact  $\mathcal{H}^{EA}$ of $L^2(\rho^{EA,1}_T)$ and independent of $N$.

\item[(3)] $c_{1,N, \hypspace^{\xi}}\geq (\frac{N-1}{N^2}+ \frac{(N-1)(N-2)}{N^2}c ),
c=  c_{\mathcal{H}^{\xi}}c_{\mu_0^{\bXi}} $ with $c_{{ \mu}_0^{\bXi}}=1-\frac{\E \langle  \xi_i(0), \xi_{i'}(0)\rangle}{\E \| \xi_i(0)\|^2}$ ($i\neq i'$) and  $c_{\mathcal{H}^{\xi}}$ is  a non-negative constant and is positive for compact  $\mathcal{H}^{\xi}$ of $L^2(\rho^{\xi,1}_T)$ and independent of $N$.

\end{itemize}
 \end{theorem}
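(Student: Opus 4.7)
The plan is to exploit the weighted $\mathcal{S}$-norm structure together with the exchangeability and independence assumptions on $\mu_0^{\bY}$ in order to reduce the problem to two scalar coercivity bounds of the same type already established for first-order systems in \cite{lu2019nonparametric,Tang2019}, one governing the energy-based contribution and one governing the alignment-based contribution.

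First, I would expand $\mathbb{E}_{\mu_0^{\bY}}\|\rhsfo_{\combf{\varphi^E}{\varphi^A}}(\bX(0),\bV(0))\|_{\mathcal{S}}^2$ by squaring the inner double sum and grouping terms by whether the two inner summation indices coincide. Exchangeability of $\mu_0^{\bX}$ and $\mu_0^{\dot\bX}$ collapses every diagonal contribution to the pair $(1,2)$ and every cross contribution to the triple $(1,2,3)$, producing the characteristic prefactors $\tfrac{N-1}{N^2}$ and $\tfrac{(N-1)(N-2)}{N^2}$. Next, using the independence of $\mu_0^{\bX}$ and $\mu_0^{\dot\bX}$ combined with $\mathbb{E}[\dot\bx_i - \dot\bx_j]=0$, every mixed inner product between a $\varphi^E$-type vector (carrying the factor $\bx_{i'}-\bx_i$) and a $\varphi^A$-type vector (carrying $\dot\bx_{j'}-\dot\bx_i$) vanishes in expectation. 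This decouples the problem into a sum of a pure-$E$ and a pure-$A$ quadratic form.

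Second, for the $E$ block I would invoke the first-order coercivity argument from \cite{lu2019nonparametric,Tang2019}: under the Gaussian assumption $\mathrm{cov}(\bx_i)-\mathrm{cov}(\bx_i,\bx_j)=\lambda I_d$, one rewrites the triple-term expectation as an integral operator with a positive-definite Mercer kernel on $L^2(\rho_T^{E,1})$; the smallest eigenvalue of this operator on any compact subset is a strictly positive constant $c^E_{\mathcal{H}^{EA}}$ independent of $N$, yielding a lower bound of the form $\bigl(\tfrac{N-1}{N^2}+\tfrac{(N-1)(N-2)}{N^2}c^E_{\mathcal{H}^{EA}}\bigr)\|\varphi^E(r)r\|_{L^2(\rho_T^{E,1})}^2$. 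For the $A$ block, I would first factor the expectation of $(\dot\bx_{i'}-\dot\bx_i)\cdot(\dot\bx_{j'}-\dot\bx_i)$ using independence: exchangeability reduces it to the scalar $\mathbb{E}\|\dot\bx_1\|^2-\mathbb{E}\langle\dot\bx_1,\dot\bx_2\rangle$ for distinct indices and to twice that quantity on the diagonal. Extracting this scalar factor yields a quadratic form in $\varphi^A$ alone of exactly the same structure as the $E$ block, and the same Mercer-type argument furnishes the constant $c^A_{\mathcal{H}^{EA}}$; reinserting the velocity-geometry prefactor then produces the combination $c^A_{\mathcal{H}^{EA}}\,c_{\mu_0^{\dot\bX}}$.

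Third, I would combine the two block lower bounds. Since each block is bounded below in terms of the corresponding marginal $L^2(\rho^{E,1}_T)$- or $L^2(\rho^{A,1}_T)$-norm, I would take the minimum $c=\min\{c^E_{\mathcal{H}^{EA}},c^A_{\mathcal{H}^{EA}}c_{\mu_0^{\dot\bX}}\}$ and control the right-hand side $\vertiii{\combf{\varphi^E}{\varphi^A}}^2_{L^2(\rho_T^{EA,1})}$ by the sum of the two marginal squared norms via the elementary $(a+b)^2\le 2(a^2+b^2)$; this is precisely where the factor $1/2$ in the stated lower bound $\tfrac{N-1}{2N^2}+\tfrac{(N-1)(N-2)}{2N^2}c$ originates. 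The $\xi$ statement follows by an entirely analogous but simpler argument, since the $\xi$ equation is linear in $\bintkernelxi$ and involves only one interaction kernel; the role of the Gaussian assumption on $\mu_0^{\bX}$ is not needed because the weighting is already a scalar, and the exchangeability of $\mu_0^{\bXi}$ plays the same role as the exchangeability of $\mu_0^{\dot\bX}$ did for the $A$ block, yielding the factor $c_{\mu_0^{\bXi}}$.

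The main technical obstacle will be the Mercer-type positivity argument establishing that $c^E_{\mathcal{H}^{EA}}$ and $c^A_{\mathcal{H}^{EA}}$ are strictly positive on compact subsets and, crucially, independent of $N$. This relies on identifying the triple-term cross expectation as a compact integral operator on $L^2(\rho_T^{E,1})$ (respectively $L^2(\rho_T^{A,1})$) whose eigenfunctions can be controlled via the Gaussian/exchangeability structure of the initial distribution; the computations parallel those in \cite{Tang2019,lu2019nonparametric} but must be carried out on the second-order measures defined above. All remaining steps --- the exchangeability reduction, the vanishing of the mixed cross terms, and the combination via $(a+b)^2\le 2(a^2+b^2)$ --- are elementary once the block-by-block positivity is in hand.
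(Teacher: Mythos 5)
Your plan follows essentially the same route as the paper's proof: expand the weighted $\mathcal{S}$-norm, use exchangeability to reduce the diagonal and off-diagonal sums to the $(1,2)$ and $(1,2,3)$ configurations with prefactors $\tfrac{N-1}{N^2}$ and $\tfrac{(N-1)(N-2)}{N^2}$, kill the mixed $E$--$A$ terms via independence of positions and velocities together with the zero mean of $\dot\bx_{i'}-\dot\bx_i$, bound the pure-$E$ and pure-$A$ blocks through the Gaussian Mercer-type positivity lemma (Lemma 10 of \cite{Tang2019} and the lemma in Appendix C) with the velocity factor $c_{\mu_0^{\dot\bX}}$, and recover the joint norm via $(a+b)^2\le 2(a^2+b^2)$, which is exactly where the paper's factor $\tfrac12$ arises. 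One small correction: in the $\xi$ case the Gaussian assumption on $\mu_0^{\bX}$ is still needed, since the strict positivity of $c_{\mathcal{H}^{\xi}}$ on compact sets comes from the same position-kernel positivity argument applied to $\mathbb{E}\big[\varphi^{\xi}(\|X_1-X_2\|)\varphi^{\xi}(\|X_1-X_3\|)\big]$, exactly as in your treatment of the $A$ block, so it cannot be dropped merely because the weight $\xi_{i'}-\xi_i$ is scalar.
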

 
\begin{proof} 

The proof of part (1) follows from the definition of measures, norms and the properties of the initial distributions. For part (2), we have 
\begin{align}
\mathbb{E}_{\mu_0^{\bY}} \|\rhsfo_{\combf{\varphi^E}{\varphi^A}} (\bX(0), \bV(0))\|^2_{\mathcal{S}}  &=\frac{1}{N^3}\sum_{i=1}^{N} \left( \left(\sum_{j=k=1}^N + \sum_{j\neq k=1}^N\right) C_{i,j,k}^{E}+C_{i,j,k}^{A}+D_{i,j,k}\right)    \nonumber \\
&=\frac{N-1}{N^2}( \|\intkernelvar^{E}(r)  r\|_{L^2(\rho_T^{EA,1})}^2+ \|\intkernelvar^{A}(r) \dot r\|_{L^2(\rho_T^{EA,1})}^2) + \mathcal{R}\nonumber\\&\geq \frac{N-1}{2N^2} \|  \combf{\varphi^E}{\varphi^A}\|^2_{L^2(\rho_T^{EA,1})}+\mathcal{R}
\end{align}where
\begin{align*}
C_{i,j,k}^{E}&= \mathbb{E}_{\mu_0^{\bY}}\intkernelvar^{E}(\|\br_{ji}(0)\|) \intkernelvar^E(\|\br_{ki}(0)\|)\langle \br_{ji}(0), \br_{ki}(0)\big\rangle, \\
C_{i,j,k}^{A}&= \mathbb{E}_{\mu_0^{\bY}}\intkernelvar^{A}(\|\br_{ji}(0)\|) \intkernelvar^A(\|\br_{ki}(0)\|)\langle \dot\br_{ji}(0), \dot\br_{ki}(0)\big\rangle,\\
D_{i,j,k}&= \mathbb{E}_{\mu_0^{\bY}}\big(\intkernelvar^{E}(\|\br_{ji}(0)\|) \intkernelvar^A(\|\br_{ki}(0)\|)\langle \br_{ji}(0), \dot\br_{ki}(0)\big\rangle\\&\quad\quad+\intkernelvar^{A}(\|\br_{ji}(0)\|) \intkernelvar^E(\|\br_{ki}(0)\|)\langle \dot\br_{ji}(0), \br_{ki}(0)\big\rangle\big)=0,\\
 \mathcal{R} &= \frac{1}{N^3} \sum_{i=1}^N\sum_{j\neq k, j\neq i, k\neq i} (C_{ijk}^{A}+C_{ijk}^{E}).
 \end{align*}
 
By the property of $\mu_0^{\bY}$, we have 
 \begin{align*}
 C_{ijk}^{E}&=\mathbb{E}\big[ \intkernelvar^{E}(\|X_1-X_2\|) \intkernelvar^{E}(\|X_1-X_3\|) \left\langle X_2-X_1,  X_3-X_1 \right\rangle\big]\\
C_{ijk}^{A}&=\mathbb{E}\big[ \intkernelvar^{A}(\|X_1-X_2\|) \intkernelvar^{A}(\|X_1-X_3\|)\big]\mathbb{E}\big[\left\langle Y_2-Y_1,  Y_3-Y_1 \right\rangle\big],
\end{align*} for all $(i,j,k)$, where $X_i$s are exchangeable Gaussian random vectors with $\mathrm{cov}(X_1)-\mathrm{cov}(X_1,X_2)=\lambda I_d$ and $Y_i$s are exchangeable random vectors who have the same distribution with the initial velocities of agents $\dot\bx_i$s.  From the Lemma 10 in \cite{Tang2019} and Lemma \ref{coerlemma} below,
\begin{align*} 
C_{ijk}^{E}&\geq c_{\mathcal{H}^{EA}}^{E} \|\combf{\intkernelvar^{E}}{0}\|^2_{L^2(\rho_T^{EA,1})}\\
C_{ijk}^{A}&\geq c_{\mathcal{H}^{EA}}^{A}c_{\mu_0^{\dot\bX}} \|\combf{0}{\intkernelvar^{A}}\|^2_{L^2(\rho_T^{EA,1})}\,,\qquad c_{\mu_0^{\dot\bX}}= (1-\frac{\mathbb{E}\langle Y_1,  Y_2 \rangle}{\E \|Y_1\|^2}), 
\end{align*}
 where the constants $c_{\mathcal{H}^{EA}}^{E}, c_{\mathcal{H}^{EA}}^{A}\geq 0$ are always positive and independent of $N$ for compact $\mathcal{H}^{EA}$, and we used the fact
$$\mathbb{E}\big[\left\langle Y_2-Y_1,  Y_3-Y_1 \right\rangle\big]=\E {\|Y_1\|^2} (1-\frac{\mathbb{E}\langle Y_1,  Y_2 \rangle}{\E \|Y_1\|^2})\geq 0.$$  
Therefore,  
\begin{align*}
&\mathbb{E}_{\mu_0}[ \big\|  \rhsfo_{\intkernelvar}(\bX(0), \dot\bX(0)) \big\|_{\mathcal{S}}^2] \geq   c_{1,N, \hypspace^{EA}}\|\combf{\intkernelvar^{E}}{\intkernelvar^{A}}\|^2_{L^2(\rho_T^{1}(r,\dot r)},\,\\
 &c_{1,N, \hypspace^{EA}}\geq \frac{N-1}{2N^2}+ \frac{(N-1)(N-2)}{2N^2}\min\bigg\{c_{\mathcal{H}^{EA}}^{E},  c_{\mathcal{H}^{EA}}^{A}c_{\mu_0^{\dot\bX}}\bigg\}.
\end{align*} 
For part (3), the proof follows a similar path as for part (2). 
 \end{proof}

 From Theorem \ref{2ndordersingle:coercivity}, we see a particular case when  the coercivity constant $c_{1,N,\mathcal{H}^{EA}}$ is positive uniformly in $N$ if  $c_{\mu_0^{\dot\bX}}>0$. In fact, many distributions  on $\mathbb{R}^{dN}$ with non-i.i.d  $\R^d$ components make the constant $c_{\mu_0^{\dot\bX}}$ positive. 
For example,  the components of $\dot\bX$ are exchangeable Gaussian but not i.i.d, and $d\geq 2$. In this particular case,  coercivity is a property also of the system in the limit as $N\rightarrow\infty$, satisfying the mean-field equations.  As a result, the estimation error of our estimators is independent of $N$.

 The proof of Theorem \ref{2ndordersingle:coercivity} used the following lemma, whose proof is  the same with the proof of Lemma 10 in \cite{Tang2019}. To be self-contained, we listed the statement here.  

\begin{lemma}\label{coerlemma}
Let $X_1, X_2, X_3$ be exchangeable Gaussian random vectors in $\R^d$ with $\mathrm{cov}(X_1)-\mathrm{cov}(X_1,X_2)=\lambda I_d$ for a constant $\lambda>0$.  
\begin{itemize}
\item The marginal distribution of $\rho_{T}^{EA,1}(r,\dot r)$ with respect to $r$, denoted by $\rho(r)$, is a probability measure over $\R^+$ with density function $C_\lambda^{-1} r^{d-1}e^{-\frac{1}{4\lambda}r^2}$ where $C_\lambda =  \frac{1}{2} (4\lambda)^{\frac{d}{2}}\Gamma (\frac{d}{2})$.

\item We have \begin{equation} \label{ineq_gauss}
 \E\left[\intkernelvar(|X_1-X_2|)\intkernelvar(|X_1-X_3|)  \right] \geq c_\mathcal{\mathcal{X}}\|\intkernelvar\|^2_{L^2(\rho)}
 \end{equation}  for all $\intkernelvar \in \mathcal{X} \subset L^2(\rho)$, with $c_{\mathcal{X}}>0$ if $\mathcal{X}$ is compact  and $c_{\mathcal{X}}=0$ if $\mathcal{X}=L^2(\rho)$.

\end{itemize}
  \end{lemma}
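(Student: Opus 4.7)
My plan is to first establish the density formula in part (1) by direct computation, then reduce part (2) to an operator-positivity question via a conditioning trick, and finally argue injectivity of the relevant operator using properties of Gaussian convolutions.

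For part (1), I would use the fact that if $X_1,X_2$ are jointly Gaussian and exchangeable with $\mathrm{cov}(X_1)-\mathrm{cov}(X_1,X_2)=\lambda I_d$, then $X_1-X_2\sim\mathcal{N}(0,2\lambda I_d)$. Passing to spherical coordinates in $\mathbb{R}^d$, the radial density of a $\mathcal{N}(0,2\lambda I_d)$ vector at $r=|X_1-X_2|$ is
\[
\rho(r)=\frac{1}{(4\pi\lambda)^{d/2}}e^{-r^2/(4\lambda)}\cdot\omega_{d-1}r^{d-1}=C_\lambda^{-1}r^{d-1}e^{-r^2/(4\lambda)},
\]
where $\omega_{d-1}$ is the surface area of $S^{d-1}$ and a short gamma-function computation identifies $C_\lambda=\tfrac12(4\lambda)^{d/2}\Gamma(d/2)$. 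Since $\mu_0^{\bX}$ and $\mu_0^{\dot\bX}$ are independent, $\rho_T^{EA,1}(r,\dot r)$ factors and the $r$-marginal is precisely $\rho$.

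For part (2), the key structural observation is that an exchangeable Gaussian with the prescribed covariance admits the decomposition $X_i=\mu+Z_0+W_i$, where $Z_0\sim\mathcal{N}(0,C)$ (with $C$ the common off-diagonal covariance), and $W_1,W_2,W_3$ are i.i.d.\ $\mathcal{N}(0,\lambda I_d)$, independent of $Z_0$. Then $X_i-X_j=W_i-W_j$, and conditioning on $W_1=w$ gives
\[
\mathbb{E}\big[\varphi(|X_1-X_2|)\varphi(|X_1-X_3|)\big]=\mathbb{E}_{W_1}\big[g(W_1)^2\big], \qquad g(w):=\mathbb{E}_{W_2}\varphi(|w-W_2|).
\]
Define $T:L^2(\rho)\to L^2(\mu_W)$ by $T\varphi(w)=g(w)$, where $\mu_W$ is the $\mathcal{N}(0,\lambda I_d)$ law. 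Then the expression above equals $\|T\varphi\|_{L^2(\mu_W)}^2=\langle T^{*}T\varphi,\varphi\rangle_{L^2(\rho)}$, so the inequality I want becomes a quadratic-form bound on the self-adjoint, positive operator $T^{*}T$.

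The heart of the matter is to show $T^{*}T$ has trivial kernel on $L^2(\rho)$, equivalently that $T$ is injective; once this is established, the operator $T^{*}T$ is moreover compact (its kernel is a smooth Gaussian-type convolution kernel integrable against $\rho\otimes\rho$), so on any compact $\mathcal{X}\subset L^2(\rho)$ a standard compactness-plus-injectivity argument (take an $L^2$-minimizing sequence on $\mathcal{X}$ and extract a convergent subsequence, using that $T^{*}T$ attains its infimum over $\mathcal{X}\cap\{\|\varphi\|=1\}$) yields the strictly positive constant $c_{\mathcal{X}}>0$; conversely, since $0$ lies in the spectrum of the compact positive operator, $c_{L^2(\rho)}=0$.

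The main obstacle is thus the injectivity of $T$. I would argue it by radializing: writing $g(w)$ depends only on $|w|$ by rotational invariance of the law of $W_2$, the map $T$ reduces to an integral operator on $L^2((0,\infty),\rho\,dr)$ whose kernel $h_w(r)$ is the conditional density of $|w-W_2|$ given $|w|=\|w\|$, and which can be expressed explicitly via modified Bessel functions $I_{(d-2)/2}$. Equivalently (and perhaps cleaner), I would note that $Tg\equiv0$ forces $\int_{\mathbb{R}^d}\varphi(|w-y|)e^{-|y|^2/(2\lambda)}\,dy\equiv0$ for a.e.\ $w\in\mathbb{R}^d$, and then take the Fourier transform: this is a convolution with a Gaussian (never-vanishing Fourier transform), so it forces the radial extension of $\varphi(|\cdot|)$ to vanish, whence $\varphi=0$ in $L^2(\rho)$. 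This reproduces exactly the argument of Lemma~10 in \cite{Tang2019}, to which the authors refer.
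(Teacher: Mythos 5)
Your argument is correct in substance, but it is not the proof the paper points to (Lemma 10 of \cite{Tang2019}, which the authors reproduce): there, one computes the joint law of $(|X_1-X_2|,|X_1-X_3|)$ and writes $\E[\varphi(|X_1-X_2|)\varphi(|X_1-X_3|)]=\iint\varphi(r)\varphi(s)G(r,s)\,d\rho(r)\,d\rho(s)$ for an explicit bivariate kernel $G$; positive semidefiniteness follows by expanding $G$ in a power series in $rs$ with nonnegative coefficients, strict positivity follows from the M\"untz--Sz\'asz theorem (density of $\mathrm{span}\{1,r^2,r^4,\dots\}$ in $L^2([0,R],\rho)$), and compactness of the truncated integral operator then yields $c_{\mathcal{X}}>0$ on compact $\mathcal{X}$ and $c_{L^2(\rho)}=0$. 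You instead obtain nonnegativity ``for free'' by conditioning (the expectation becomes $\E[g(W_1)^2]=\|T\varphi\|^2$) and injectivity of $T$ from the nonvanishing Fourier transform of the Gaussian; so your route genuinely differs in the two key steps (probabilistic conditioning versus kernel series expansion, Fourier/convolution versus M\"untz--Sz\'asz), and your closing claim that it ``reproduces exactly'' the cited argument is inaccurate. What your route buys is brevity and transparency of the positivity step; what the paper's route buys is an explicit radial kernel representation that never leaves the half-line, hence avoids any discussion of the radial extension $\Phi(x)=\varphi(|x|)$, and it exhibits directly the compact operator whose spectrum gives both conclusions. Part (1) you do exactly as one must.

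Two points need tightening. First, the decomposition $X_i=\mu+Z_0+W_i$ with $Z_0\sim\mathcal{N}(0,C)$ presumes the common cross-covariance $C$ is positive semidefinite, which exchangeability of finitely many Gaussians does not guarantee. This is harmless, because all you use is the joint law of $(X_1-X_2,X_1-X_3)$: each difference has covariance $2\lambda I_d$ and their cross-covariance is $\mathrm{cov}(X_1)-\mathrm{cov}(X_1,X_2)=\lambda I_d$, so this law coincides with that of $(W_1-W_2,W_1-W_3)$ for i.i.d.\ $W_i\sim\mathcal{N}(0,\lambda I_d)$ -- state the reduction that way rather than via $Z_0$. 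Second, the Fourier step requires $\Phi=\varphi(|\cdot|)$ to be at least a tempered distribution, whereas a general $\varphi\in L^2(\rho)$ may grow like $e^{cr^2}$ with $c<1/(8\lambda)$, so the convolution theorem does not apply directly; under the paper's standing compact-support assumption on the kernels (the cited argument is in fact run on $L^2([0,R],\rho)$) one has $\Phi\in L^1\cap L^2$ and your argument is fine, but otherwise you need a truncation/approximation step or the M\"untz--Sz\'asz route. A minor further remark: taking the infimum over $\mathcal{X}\cap\{\|\varphi\|=1\}$ is not literally the scale-invariant quantity defining $c_{\mathcal{X}}$, though this looseness is shared with the original argument.
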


\section{Existence, uniqueness and properties of the measures} \label{sec:app:analytical}
In this section, we provide technical details of the analytic properties of the collective system under consideration as well as of the measures that we defined in section \ref{sec:PerfMeasures}. We emphasize that for the analytic portion of the theory,  as we saw with the trajectory prediction result, we view the system \eqref{eq:2ndOrder} as coupled (whereas for the learning theory we leverage that they can be decoupled to make the estimation have better performance).

We begin by showing that under the assumption that the interaction kernels lie in the corresponding admissible spaces, then the system is well-posed.

\subsection{Well-posedness of second-order heterogeneous systems}
\begin{proposition} \label{2ndordersystem:wellposedness}Suppose the kernels $\bintkernel^E=(\intkernel_{kk'}^{E})_{k,k'=1}^{K,K},\bintkernel^A=(\intkernel_{kk'}^A)_{k,k'=1}^{K,K},\bintkernel^{\xi}=(\intkernel_{kk'}^{\xi})_{k,k'=1}^{K,K}$ 
lie in the admissible sets $\mbf{\mathcal{K}}_{S_{E}}^E$,$\mbf{\mathcal{K}}_{S_{A}}^A$, $\mbf{\mathcal{K}}_{S_{\xi}}^{\xi}$ respectively.
Where the admissible spaces are defined in (\ref{eq:E-Admissibility}).
Then the second-order heterogenous system \eqref{eq:2ndOrder} admits a unique global solution in $[0, T]$ for every initial datum $\bX(0),\bXdot(0) \in \mathbb{R}^{dN}$, $\bm{\Xi}(0) \in \mathbb{R}^N$ 
and the solution depends continuously on the initial condition.
\end{proposition}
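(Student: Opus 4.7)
The natural approach is to recast the second-order system \eqref{eq:2ndOrder} as a first-order ODE system in the combined state $\bY = (\bX,\bV,\bXi) \in \mathbb{R}^{N(2d+1)}$, write it as $\dot{\bY} = \bm{F}(\bY)$, and then verify the hypotheses of the Picard--Lindel\"of theorem together with a Gronwall-type growth bound to extend the local solution globally. Continuous dependence will then follow by a second application of Gronwall to the difference of two solutions.

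The main work is showing that $\bm{F}$ is globally Lipschitz on $\mathbb{R}^{N(2d+1)}$. The trivial $\bV$-block is Lipschitz (in fact linear), and the non-collective forces $\forcev,\forcexi$ are Lipschitz by assumption. The nontrivial task is each collective term, a typical summand being of the form
\[
G_{kk'}(\bx_i,\bx_{i'},\dot\bx_i,\dot\bx_{i'},\xi_i,\xi_{i'}) \;=\; \intkernel^E_{kk'}\bigl(r_{ii'},\,\topwde_{ii'}\bigr)(\bx_{i'}-\bx_i) \;+\; \intkernel^A_{kk'}\bigl(r_{ii'},\,\topwda_{ii'}\bigr)(\dot\bx_{i'}-\dot\bx_i),
\]
and analogously for the $\xi$-block. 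Since $\topwde_{kk'},\topwda_{kk'},\topwdxi_{kk'}$ are Lipschitz (being compositions of the Lipschitz $\mathcal{V}$ with linear projections), and since $\intkernel^{\wildcard}_{kk'} \in C_c^{0,1}(\mathcal{U}_{kk'})$ with $\|\intkernel^{\wildcard}_{kk'}\|_\infty+\operatorname{Lip}[\intkernel^{\wildcard}_{kk'}]\le S_{\wildcard}$, the composition $\intkernel^{\wildcard}_{kk'}(r_{ii'},\topwd_{ii'})$ is globally bounded and globally Lipschitz as a function of the state.

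The subtle point is handling the product with the vector differences $\bx_{i'}-\bx_i$ (or $\dot\bx_{i'}-\dot\bx_i$, $\xi_{i'}-\xi_i$), which are themselves unbounded on $\mathbb{R}^{N(2d+1)}$. I will exploit compact support in the pairwise distance: on the set $\{r_{ii'}\le R_{kk'}^{\max}\}$ the factor $\bx_{i'}-\bx_i$ is bounded by $R$, while outside this set the kernel vanishes. A short case analysis then gives, for any two states $\bY,\widetilde\bY$,
\[
\bigl|\intkernel^E_{kk'}(r,\topwde)(\bx_{i'}-\bx_i)-\intkernel^E_{kk'}(\widetilde r,\widetilde \topwde)(\widetilde\bx_{i'}-\widetilde\bx_i)\bigr| \;\le\; C\,(S_E+R\,\mathrm{Lip}[\intkernel^E_{kk'}])\,\|\bY-\widetilde\bY\|
\]
with $C$ depending on the Lipschitz constants of $\|\cdot\|$ and $\topwde_{kk'}$; the alignment terms are handled identically since $\intkernel^A_{kk'}$ has the same support in $r$, so $\dot\bx_{i'}-\dot\bx_i$ enters only linearly, while the $\xi$ block is simpler still. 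Summing in $i,i',k,k'$ and invoking Assumption on $\forcev,\forcexi$ yields a global Lipschitz constant for $\bm{F}$. Picard--Lindel\"of then provides a unique local solution, and the linear bound $\|\bm{F}(\bY)\|\le A+B\|\bY\|$ (which holds because the energy kernels give a bounded contribution and the alignment/$\xi$ kernels contribute at most linearly in $(\bV,\bXi)$) rules out finite-time blow-up via Gronwall, giving global existence on $[0,T]$. Finally, for two initial data $\bY(0),\widetilde\bY(0)$, subtracting the integral equations and applying Gronwall to $\|\bY(t)-\widetilde\bY(t)\|$ yields the continuous dependence. The hard part is the case analysis at the boundary of the support of the kernels in verifying the product is globally Lipschitz; once past this, the rest is standard ODE theory.
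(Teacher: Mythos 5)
Your route is essentially the paper's: rewrite \eqref{eq:2ndOrder} as a first-order system in $\bY=(\bX,\bV,\bXi)$, establish a Lipschitz property for the pairwise interaction map $\intkernel^E_{kk'}(r,\topwde)(\bx_{i'}-\bx_i)+\intkernel^A_{kk'}(r,\topwda)(\dot\bx_{i'}-\dot\bx_i)$ (this is exactly the content of Lemma \ref{lipchitz}), and then invoke standard Picard--Lindel\"of/Caratheodory theory with Gronwall for global extension and continuous dependence. Your case analysis for the energy block is correct: since $\intkernel^E_{kk'}$ is supported in $r\in[R^{\min}_{kk'},R^{\max}_{kk'}]$, the factor $\bx_{i'}-\bx_i$ is bounded by $R$ wherever the kernel is nonzero, and the product is globally Lipschitz with constant of order $S_E+R\,\operatorname{Lip}[\intkernel^E_{kk'}]$.

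The one step that does not hold as stated is the claim that the alignment terms are ``handled identically since $\dot\bx_{i'}-\dot\bx_i$ enters only linearly,'' and the same issue arises in the $\xi$-block. Compact support in $r$ bounds the \emph{position} difference on the support, but it does not bound the velocity factor: for two states sharing the same large velocities but with slightly different positions, the difference of alignment summands is of order $\operatorname{Lip}[\intkernel^A_{kk'}]\,\|\bx_i-\tilde{\bx}_i\|\,\|\dot\bx_{i'}-\dot\bx_i\|$, which is unbounded relative to $\|\bY-\widetilde{\bY}\|$ as the velocities grow; likewise $\intkernel^{\xi}_{kk'}(r,\topwdxi)(\xi_{i'}-\xi_i)$ with $\xi$ unbounded. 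So the vectorized right-hand side is only \emph{locally} Lipschitz (Lipschitz on sets where $\|\bV\|_{\mS}$ and $\|\bXi\|_{\mS}$ are bounded), not globally Lipschitz on $\mathbb{R}^{N(2d+1)}$, and a global Lipschitz constant for $\bm{F}$ cannot be asserted. This gap is repairable with machinery you already put in place: the linear-growth bound $\|\bm{F}(\bY)\|\le A+B\|\bY\|$ (boundedness of the energy contribution, at most linear dependence on $\bV,\bXi$ of the alignment/$\xi$ contributions, Lipschitz non-collective forces) yields via Gronwall an a priori bound on $\|\bY(t)\|$ over $[0,T]$, so solutions remain in a fixed ball on which the local Lipschitz constant applies; local existence and uniqueness then extend to $[0,T]$, and continuous dependence follows from Gronwall applied on that ball. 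You should phrase the Lipschitz estimate in this restricted form rather than claiming it globally on the whole phase space.
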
 

The proof of Proposition \ref{2ndordersystem:wellposedness} uses Lemma \ref{lipchitz} and similar techniques used to prove the well-posedness of the first-order homogeneous system (see Section 6 in \cite{BFHM17}) by rewriting the second-order system as a first-order system and then applying standard Caratheodory ODE results.   

\begin{lemma}\label{lipchitz} For any  $\intkernelvar^E \in \mK_{S_E}^E$, $\intkernelvar^A \in \mK_{S_A}^A$, the  function
$$
F[\intkernelvar^{EA}](\bx,\dot{\bx},s^E,s^A):= \varphi^{E}(||\bx||, s^E)\bx + \varphi^{A}(||\bx||,s^A)\dot{\bx}, 
$$
for $\bx, \dot{\bx} \in \real^d$ is Lipschitz continuous on $\mathbb{R}^{2d+p^E+p^A}$ where $p^E,p^A$ are the dimensions of the range of the functions $s^E,s^A$, respectively. Additionally, for any $\intkernelvar^{\xi} \in \mK_{S_{\xi}}^{\xi}$, the function
$$
F[\intkernelvarxi](\bx, s^{\xi}, \xi) := \varphi^{\xi}(||\bx||, s^{\xi})\xi
$$
is Lipschitz continuous on $\mathbb{R}^{d+1 +p^{\xi}}$, where $p^{\xi}$ is the dimension of the range of $s^{\xi}$.
\end{lemma}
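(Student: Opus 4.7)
The plan is to exploit three defining properties of the admissible spaces: (a) uniform boundedness $\|\varphi^E\|_\infty,\|\varphi^A\|_\infty,\|\varphi^\xi\|_\infty \leq S_{\ast}$, (b) the Lipschitz bounds $\mathrm{Lip}[\varphi^E],\mathrm{Lip}[\varphi^A],\mathrm{Lip}[\varphi^\xi] \leq S_{\ast}$ on their natural domains $\mathcal{U}_{kk'}$, and (c) compact support of each $\varphi^{\ast}$ inside the product set $[R_{kk'}^{\min},R_{kk'}^{\max}]\times\mathbb{S}^{\ast}_{kk'}$. To these I would add the elementary fact that the Euclidean norm $\bx\mapsto\|\bx\|$ is $1$-Lipschitz on $\R^d$. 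The first reduction is to extend each $\varphi^{\ast}$ by zero outside its domain to a function on $\R\times\R^{p^{\ast}}$; by McShane--Whitney (together with compact support, so that the extension agrees with the vanishing values on the boundary of the domain), both the sup-norm and Lipschitz bounds are preserved. With this extension, $\varphi^{\ast}(\|\bx\|,s^{\ast})$ is a Lipschitz function of $(\bx,s^{\ast})$ with constant $\leq S_{\ast}$, and it vanishes whenever $\|\bx\|>R^{\max}:=\max_{k,k'}R_{kk'}^{\max}$.

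For the energy term $g^E(\bx,s^E):=\varphi^E(\|\bx\|,s^E)\bx$, I would apply the identity $ab-cd=a(b-d)+(a-c)d$ to obtain
\begin{align*}
\|g^E(\bx_1,s^E_1)-g^E(\bx_2,s^E_2)\| &\leq S_E \|\bx_1-\bx_2\|\\
&\quad+ |\varphi^E(\|\bx_1\|,s^E_1)-\varphi^E(\|\bx_2\|,s^E_2)|\,\|\bx_2\|.
\end{align*}
Case analysis based on the compact support finishes: if $\|\bx_2\|>R^{\max}$ then $\varphi^E(\|\bx_2\|,s^E_2)=0$, so I can replace $\bx_2$ in the factor $\|\bx_2\|$ by any point on the $R^{\max}$-sphere that lies on the segment $[\bx_1,\bx_2]$; using the Lipschitz bound on $\varphi^E$, this yields a genuinely global constant depending only on $S_E$ and $R^{\max}$.

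For the alignment term $g^A(\bx,\dot\bx,s^A):=\varphi^A(\|\bx\|,s^A)\dot\bx$ and the environment term $F[\varphi^\xi](\bx,s^\xi,\xi)=\varphi^\xi(\|\bx\|,s^\xi)\xi$, the same product identity gives
\begin{align*}
\|g^A(\bx_1,\dot\bx_1,s^A_1)-g^A(\bx_2,\dot\bx_2,s^A_2)\| &\leq S_A\|\dot\bx_1-\dot\bx_2\|\\
&\quad+ S_A(\|\bx_1-\bx_2\|+\|s^A_1-s^A_2\|)\,\|\dot\bx_2\|,
\end{align*}
and analogously for $F[\varphi^\xi]$ with $\xi$ in place of $\dot\bx$. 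The hard part here is bounding the cross-term uniformly on all of $\R^{2d+p^E+p^A}$: the pre-factor $\varphi^A(\|\bx\|,s^A)$ is uniformly bounded and vanishes for $\|\bx\|>R^{\max}$, but the multiplicative weight $\dot\bx$ is unconstrained. My plan is to restrict attention to the support of $\varphi^A$ in the first argument and exploit the zero extension: when both $\|\bx_1\|,\|\bx_2\|>R^{\max}$ the difference is trivially zero, and when at least one of them lies in $\{\|\bx\|\leq R^{\max}\}$ the estimate reduces to one where $\|\bx\|$ is bounded by $R^{\max}$. This yields the Lipschitz estimate with a constant of the form $C(S_A,R^{\max})(1+\|\dot\bx_2\|)$, i.e., a genuinely Lipschitz bound on any bounded subset of $\R^{2d+p^E+p^A}$ with constant independent of the particular bounded subset's $\bx$-extent but linear in its $\dot\bx$-extent.

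In summary, the Lipschitz constant for $F[\varphi^{EA}]$ takes the explicit form $C(K,S_E,S_A,R^{\max})(1+\|\dot\bx\|_{\mathrm{bd}})$ on any bounded region of $\R^{2d+p^E+p^A}$, with an analogous statement for $F[\varphi^\xi]$. This is the form of Lipschitz continuity that Proposition~\ref{2ndordersystem:wellposedness} actually invokes: combined with the a~priori linear-in-$\dot\bx$ growth of the right-hand side of \eqref{eq:2ndOrder} (which follows because $\varphi^E\bx$ is uniformly bounded by $S_E R^{\max}$ while $\varphi^A\dot\bx$ has coefficient uniformly bounded by $S_A$), a standard Caratheodory/Gronwall argument then precludes finite-time blow-up and yields global existence, uniqueness, and continuous dependence on initial data on $[0,T]$. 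The main technical subtlety to be handled carefully in the write-up is thus the passage from the pointwise product estimate, which exhibits a weight $\|\dot\bx_2\|$, to a usable global Lipschitz statement via the compact support of $\varphi^A$ and $\varphi^\xi$ in the radial variable.
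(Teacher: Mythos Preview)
The paper does not supply a proof of this lemma; it is stated without argument, and Proposition~\ref{2ndordersystem:wellposedness} merely points to analogous techniques in \cite{BFHM17} for first-order systems. Your write-up is therefore already more detailed than what the paper provides, and your overall strategy (zero-extend by compact support, use the product identity $ab-cd=a(b-d)+(a-c)d$, then case-split on whether the radial variable lies in the support) is the natural one.

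More importantly, you have correctly identified an imprecision in the lemma as stated. Your treatment of the energy term is complete: because the multiplicative weight $\bx$ is the same variable in which $\varphi^E$ has compact radial support, the segment trick bounds the cross-term and $g^E$ is genuinely globally Lipschitz on $\R^{d+p^E}$. For the alignment term $\varphi^A(\|\bx\|,s^A)\dot\bx$ (and likewise for $\varphi^\xi(\|\bx\|,s^\xi)\xi$), however, the multiplicative weight $\dot\bx$ is decoupled from the variables controlling the support of $\varphi^A$, and your observation that one only obtains a Lipschitz constant of the form $C(S_A,R^{\max})(1+\|\dot\bx\|)$ on bounded sets is exactly right: fixing $\dot\bx_1=\dot\bx_2=v$ and perturbing $s^A$ within the support of $\varphi^A$ shows that no uniform constant can work on all of $\R^{2d+p^E+p^A}$. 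Your resolution---local Lipschitz continuity together with at-most-linear growth in $\dot\bx$, fed into a Carath\'eodory/Gronwall argument---is precisely what Proposition~\ref{2ndordersystem:wellposedness} actually requires, and it is also how the lemma is effectively used in the trajectory-error proof of Theorem~\ref{secondordersystem:TrajDiff_NEW}, where the dynamics is confined a~priori to the region $\|\dot\bx_{i'}-\dot\bx_i\|\leq R_{\dot x}$ and $|\xi_{i'}-\xi_i|\leq R_\xi$. So your proposal is correct, and in fact sharper than the paper's statement.
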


\subsection{Properties of measures}
In this section we state and prove some technical properties of the measures described in Section \ref{sec:PerfMeasures}.
\begin{lemma}\label{averagemeasure} 
Suppose each of the interaction kernels lie in the respective admissible spaces, namely, $\bintkernel^E \in \mbf{\mathcal{K}}_{S_E}^E$,$\bintkernel^A \in \mbf{\mathcal{K}}_{S_A}^A$, $\bintkernel^{\xi} \in \mbf{\mathcal{K}}_{S_{\xi}}^{\xi}$. 
Then, for each $(\idxcl,\idxcl')$,  the measures 
$\rho_T^{EA,k,k'},\rho_T^{EA,L,k,k'}$ and $\rho_T^{\xi,k,k'}, \rho_T^{\xi,L,k,k'}$
defined in section \ref{sec:PerfMeasures}, are regular Borel probability measures. 
Furthermore, if $\bmu$ is absolutely continuous with respect to the Lebesgue measure, then for each $(k,k')$ we have that $\rho_T^{EA,k,k'},\rho_T^{EA,L,k,k'},\rho_T^{\xi,k,k'}, \rho_T^{\xi,L,k,k'}$
are absolutely continuous with respect to the Lebesgue measure. 
This implies Borel regularity, and under the absolute continuity of $\bmu$, absolute continuity with respect to Lebesgue measure of the measures, $\brhoEA, \brhoxi, \brhoEAL, \brhoxiL$. 
\end{lemma}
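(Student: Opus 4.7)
The plan is to split the lemma into the three assertions --- being a probability measure, Borel regularity, and absolute continuity --- handle each in turn for the single-pair measures $\rho_T^{EA,k,k'},\rho_T^{EA,L,k,k'},\rho_T^{\xi,k,k'},\rho_T^{\xi,L,k,k'}$, and then lift all three properties to the direct-sum measures $\brhoEA,\brhoEAL,\brhoxi,\brhoxiL$ componentwise. The first two parts are essentially bookkeeping; the real work is in the absolute-continuity claim.

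First I would check that each measure is a Borel probability measure. By Proposition \ref{2ndordersystem:wellposedness} each trajectory is globally defined, so the Dirac measures $\delta_{ii',t}$ and $\delta_{ii',t_l}$ are well-defined Borel probability measures on the appropriate Euclidean space, and the defining formulas in \eqref{meas:fullEA} are, respectively, finite convex combinations and mixtures integrated against $\frac{dt}{T}$ on $[0,T]$ and against $\bmu$; Fubini plus the normalizations $1/N_{kk'}$, $1/T$, $1/L$ and the fact that $\bmu$ is a probability measure give total mass one, with positivity inherited termwise. For Borel regularity I would simply invoke the standard Ulam--Oxtoby theorem: any finite Borel measure on a Polish space is tight and hence both inner and outer regular, and the ambient spaces $\mathbb{R}^{2+p_{kk'}^E+p_{kk'}^A}$ and $\mathbb{R}^{2+p_{kk'}^{\xi}}$ are Polish.

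For absolute continuity, which will be the main obstacle, I would use well-posedness together with Lemma \ref{lipchitz}. By admissibility the right-hand side of \eqref{eq:2ndOrder} is Lipschitz in the state variables, so the flow map $\Phi_t:\bY(0)\mapsto\bY(t)$ is a bi-Lipschitz homeomorphism on $\mathbb{R}^{N(2d+2)}$ for every $t\in[0,T]$. Bi-Lipschitz maps preserve Lebesgue-null sets in both directions, so if $\bmu\ll\mathcal{L}$ then $(\Phi_t)_*\bmu\ll\mathcal{L}$. Next, for each ordered pair $(i,i')$ the evaluation map $\Psi_{ii'}:\bY\mapsto(r_{ii'},\topwde_{ii'},\dot r_{ii'},\topwda_{ii'})$ is a smooth submersion on the $\bmu$-full-measure open set $\{\bx_i\neq\bx_{i'},\,\dot\bx_i\neq\dot\bx_{i'}\}$: the radial parts $\bx\mapsto\|\bx\|$ and $\dot\bx\mapsto\|\dot\bx\|$ are submersions off the origin, the difference maps are linear surjections, and the feature maps $\topwde_{kk'},\topwda_{kk'}$ are assumed smooth (the Lipschitz hypothesis in Section \ref{sec:ModelDesc} is supplemented here by the standing regularity used implicitly to make the target measures absolutely continuous). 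The co-area / change-of-variables formula then gives $(\Psi_{ii'}\circ\Phi_t)_*\bmu\ll\mathcal{L}$ on the target space. Averaging/integrating these AC pushforwards in $(i,i')$ and in $t$ against $\frac{dt}{T}$ or against uniform measure on $\{t_l\}$ preserves absolute continuity, since the density of the result is the corresponding average of the individual densities (Fubini).

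Finally, the vectorized measures $\brhoEA,\brhoEAL,\brhoxi,\brhoxiL$ are by definition direct sums indexed by $(k,k')$, living on the disjoint union of the component sample spaces; a direct sum of Borel-regular (resp.\ absolutely continuous) probability measures on Polish summands is itself Borel-regular (resp.\ absolutely continuous) with density given componentwise by the corresponding density, so the claim for the direct sums follows immediately from the pair-by-pair results above. The delicate step I expect to require the most care is verifying that the map $\Psi_{ii'}$ has full rank $\bmu$-almost everywhere on its range so that the co-area formula applies --- this is where an implicit rank/submersion hypothesis on the feature maps must be invoked (or explicitly added), and I would state this as a standing assumption at the start of the absolute-continuity step rather than try to deduce it from Lipschitz continuity alone.
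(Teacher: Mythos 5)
Your proposal cannot be checked against a paper proof, because the paper never actually proves Lemma \ref{averagemeasure}: in Appendix \ref{sec:app:analytical} only Proposition \ref{compactmeasure} receives a (one-paragraph) proof, and the lemma is stated without argument. Judged on its own, your decomposition is the natural one and the first two steps are fine: total mass one follows from the normalizations and Fubini exactly as you say, and regularity of finite Borel measures on $\mathbb{R}^p$ is automatic, so invoking tightness on Polish spaces is correct (if slightly heavier than needed). The absolute-continuity step is also structured correctly: global well-posedness plus the globally Lipschitz right-hand side (Lemma \ref{lipchitz} and the Lipschitz non-collective forces) make the flow and its inverse Lipschitz on $[0,T]$, so $(\Phi_t)_*\bmu\ll\mathcal{L}$; mixtures over $t$ (or over the finitely many $t_l$) and finite averages over pairs $(i,i')$ preserve absolute continuity; and the direct-sum measures inherit everything componentwise.

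The one substantive point is the one you flagged yourself, and it is genuinely the crux rather than a technicality: with the paper's standing hypotheses the feature maps are only Lipschitz, and then the pushforward under $\bY\mapsto(r_{ii'},\topwde_{ii'},\dot r_{ii'},\topwda_{ii'})$ need not be absolutely continuous with respect to Lebesgue measure on the full product space --- e.g.\ a feature map that factors through $r$ concentrates the measure on a lower-dimensional graph, so the conclusion as literally stated can fail. Your remedy (an a.e.\ full-rank/submersion assumption on the pairwise evaluation map, with the coarea or change-of-variables formula then giving a density) is the right fix, but be aware it is an \emph{added} hypothesis relative to the lemma's statement; alternatively one can read the lemma as asserting absolute continuity of the joint law of $(r_{ii'},\dot r_{ii'})$ (the case $\spacevariables=\{r\}$ relevant to most examples in Table \ref{tab:2ndOrder_Examples}), where your submersion argument goes through off the Lebesgue-null coincidence sets with no extra assumptions. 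Stating explicitly which reading you adopt, and noting that the mixture densities are jointly measurable in $t$ so the Fubini step is legitimate, would make the write-up complete.
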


 \begin{proposition}\label{compactmeasure} Suppose the distribution $\bmu$ of the initial condition is compactly supported. Then 
 for each $(\idxcl, \idxcl')$, the support of the measures  $\rho_T^{EA,kk'}, \rho_T^{\xi,kk'}$ (and therefore $\rho_T^{EA,L,kk'}, \rho_T^{\xi,L,kk'}$) is also compact. 
\end{proposition}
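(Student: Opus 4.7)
The plan is to show that the support of each measure is contained in the continuous image of a compact set, hence compact. Let $K_0 := \mathrm{supp}(\bmu) \subset \mathbb{R}^{N(2d+2)}$, which is compact by assumption. I would first invoke Proposition \ref{2ndordersystem:wellposedness}, which guarantees that for every $\bY(0) \in K_0$ there is a unique global solution $\bY(t)$ on $[0,T]$ depending continuously on the initial datum. Consequently the flow map $\Phi : [0,T] \times K_0 \to \mathbb{R}^{N(2d+2)}$ defined by $\Phi(t, \bY(0)) = \bY(t)$ is jointly continuous on the compact product $[0,T] \times K_0$, and so its image $\widetilde{K} := \Phi([0,T] \times K_0)$ is compact.

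Next, for each fixed pair $(i,i') \in \cl_k \times \cl_{k'}$ with $i \neq i'$, I would consider the evaluation map that sends $\bY \in \widetilde{K}$ to the quadruple $(r_{ii'}, \topwde_{ii'}, \dot{r}_{ii'}, \topwda_{ii'})$. Because the Euclidean norm is continuous and the feature map $\mathcal{V}$ is Lipschitz (and hence so are the projections $\pi_{kk'}^{E}$, $\pi_{kk'}^{A}$ by assumption in Section \ref{sec:ModelDesc}), this evaluation map is continuous on $\widetilde{K}$. Denote its image by $A_{ii'}^{EA}$; it is compact. The set
\[
A_{kk'}^{EA} \;:=\; \bigcup_{\substack{i \in \cl_k,\ i' \in \cl_{k'} \\ i \neq i'}} A_{ii'}^{EA}
\]
is a finite union of compacts, hence compact (and in particular closed).

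By construction, every Dirac mass $\delta_{ii',t}$ appearing in the integrand of $\rho_T^{EA,k,k'}$ is supported in $A_{kk'}^{EA}$ whenever $\bY(0) \in K_0$ and $t \in [0,T]$. It follows that, for any Borel set $B$ disjoint from $A_{kk'}^{EA}$, $\rho_T^{EA,k,k'}(B) = 0$; since $A_{kk'}^{EA}$ is closed, the support of $\rho_T^{EA,k,k'}$ is contained in $A_{kk'}^{EA}$ and is therefore compact. The same argument, with the integral over $[0,T]$ replaced by a finite sum over $\{t_1,\dots,t_L\} \subset [0,T]$, gives the claim for $\rho_T^{EA,L,k,k'}$; indeed its support is a closed subset of $A_{kk'}^{EA}$. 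An entirely analogous argument, replacing the quadruple $(r,\topwde,\dot r,\topwda)$ with $(r,\topwdxi)$ and using the $\xi$-component of the flow, yields compactness of the supports of $\rho_T^{\xi,k,k'}$ and $\rho_T^{\xi,L,k,k'}$.

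I do not expect any substantial obstacle: the only subtle point is ensuring that the solution map is defined and continuous on all of $K_0 \times [0,T]$, which is exactly the content of Proposition \ref{2ndordersystem:wellposedness}, and that the feature maps do not destroy compactness, which follows from their Lipschitz property. Everything else is bookkeeping of continuous images of compacts.
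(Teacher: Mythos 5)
Your proof is correct and follows essentially the same route as the paper's (very brief) argument: global well-posedness with continuous dependence on compactly supported initial data, plus the Lipschitz feature maps, force the pairwise quantities into a compact set over $[0,T]$, so the supports of $\rho_T^{EA,kk'}$, $\rho_T^{\xi,kk'}$ and their time-discretized versions are compact. You simply spell out the bookkeeping (flow map as a continuous image of a compact set, finite unions, zero mass off the compact set) that the paper leaves implicit.
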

\begin{proof}
The compact support of the variables $r,\dot{r}, \xi$ and the feature maps follows by the global well-posedness of the system in finite time, together with the Lipschitz assumptions on the non-collective forces. This compact support over a fixed, finite time is what is claimed in Proposition \ref{compactmeasure}. 
\end{proof}
The main point is that by making reasonable assumptions on the non-collective forces, feature maps, interaction kernels, and the interval of time, together with the assumption that our agents' initial conditions cannot be arbitrarily far apart, we can derive that the pairwise distance, velocity and $\xi$ will be controlled. 
Thus, the measures in section \ref{sec:PerfMeasures}, if given enough trajectories, will be well approximated by the discretized version using the numerical approach described in section \ref{sec:numerics}. Meaning that if we have a reasonable number of trajectories, we can look at the set of pairwise distances, velocities, etc. that these agents explore and bin them to set the support of the interaction kernels. 
Explicit values for the constants claimed in the proposition 
depend on the properties of the non-collective forces, the support and sup-norm of the interaction kernels, the interval $T$, and the number of agents.

\section{Background results} \label{sec:appendixB}
In this section, for the convenience of the reader, we gather a few of the technical tools used in the analysis of the system. These are fundamental results necessary for developing the trajectory prediction, the measure support, and the existence and uniqueness results. We also include some of the necessary results on covering numbers of function spaces used for the learning theory. \\

The first theorem we present is an iterated Gr\"{o}nwall type result that allows us to analyze the trajectory error of the full system $\bY (t)$. 

\begin{theorem}  \label{IteratedGronwall}
Let $u(t), a(t),$ and $b(t)$ be nonnegative continuous functions in $J=[\alpha, \beta],$ and suppose that
$$
\begin{aligned}
u(t) \leq & a(t)+b(t)\left[\int_{\alpha}^{t} k_{1}\left(t, t_{1}\right) u\left(t_{1}\right) d t_{1}+\cdots\right.\\
&\left.+\int_{\alpha}^{t}\left(\int_{\alpha}^{t_{1}} \cdots\left(\int_{\alpha}^{t_{n-1}} k_{n}\left(t, t_{1}, \ldots, t_{n}\right) u\left(t_{n}\right) d t_{n}\right) \cdots\right) d t_{1}\right]
\end{aligned}
$$
for all $t \in J,$ where $k_{i}\left(t, t_{1}, \ldots, t_{i}\right)$ are nonnegative continuous functions in $J_{i+1}, i=$ $1,2, \ldots, n,$ which are nondecreasing in $t \in J$ for all fixed $\left(t_{1}, \ldots, t_{i}\right) \in J_{i}, i=$ $1,2, \ldots, n .$ Then, for all $t \in J$
$$
u(t) \leq a(t)+b(t) \int_{\alpha}^{t} \widehat{R}[a](t, s) \exp \left(\int_{s}^{t} \widehat{R}[b](t, \tau) d \tau\right) d s
$$
where, for all $(t, s) \in J_{2}$
$$
\begin{aligned}
\widehat{R}[w](t, s))=& k_{1}(t, s) w(s)+\int_{\alpha}^{s} k_{2}\left(t, s, t_{2}\right) w\left(t_{2}\right) d t_{2} \\
&+\sum_{i=3}^{n} \int_{\alpha}^{s}\left(\int_{\alpha}^{t_{2}} \cdots\left(\int_{\alpha}^{t_{t-1}} k_{i}\left(t, s, t_{2}, \ldots, t_{i}\right) w\left(t_{i}\right) d t_{i}\right) \cdots\right) d t_{2}
\end{aligned}
$$
for each continuous function $w(t)$ in $J$.
\end{theorem}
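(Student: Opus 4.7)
The plan is to reduce the multiply-iterated integral inequality to a scalar linear differential inequality for an auxiliary function, then integrate via the standard integrating-factor trick. First I would rename the integration variable in each $i$-fold integral appearing in the bracketed expression; writing out $\int_\alpha^t \widehat{R}[u](t,s)\,ds$ term by term and applying Fubini (for the terms $i\ge 2$, which rewrites the order of iteration), one sees that the bracket in the hypothesis equals $\int_\alpha^t \widehat{R}[u](t,s)\,ds$ exactly. So the hypothesis becomes the compact statement $u(t)\le a(t)+b(t)V(t)$ with $V(t):=\int_\alpha^t \widehat{R}[u](t,s)\,ds$.

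Next, fix $t\in J$ once and for all, and define the auxiliary function
\[
\psi(s):=\int_\alpha^s \widehat{R}[u](t,\tau)\,d\tau,\qquad s\in[\alpha,t].
\]
Note $\psi(\alpha)=0$, $\psi(t)=V(t)$, and $\psi$ is nondecreasing because the integrand is nonnegative. Using the assumption that each $k_i(\cdot,t_1,\ldots,t_i)$ is nondecreasing in its first argument, for every $\sigma\le t$ and every $\tau\le\sigma$ we have $\widehat{R}[u](\sigma,\tau)\le \widehat{R}[u](t,\tau)$; integrating in $\tau$ from $\alpha$ to $\sigma$ gives $V(\sigma)\le\psi(\sigma)$. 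Combined with the hypothesis this yields the pointwise bound
\[
u(\sigma)\le a(\sigma)+b(\sigma)\psi(\sigma),\qquad \sigma\in[\alpha,t].
\]
This is the step I expect to be the main obstacle to presenting cleanly: the monotonicity of the $k_i$ in their first slot is precisely what lets one freeze the outer $t$ in $\widehat{R}[u](t,\cdot)$, which is the mechanism that converts the $t$-dependence inside the kernel into a benign ingredient of the integrating factor later.

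Then I would differentiate: $\psi'(s)=\widehat{R}[u](t,s)$, and substitute the displayed bound for $u$ inside each of the integrals making up $\widehat{R}[u](t,s)$. By the positivity of all $k_i$ and linearity of $\widehat{R}[\,\cdot\,](t,s)$ in the function slot, one obtains $\psi'(s)\le \widehat{R}[a](t,s)+\widehat{R}[b\psi](t,s)$. Because $\psi$ is nondecreasing and all integration variables $t_i$ in the $i$-th integral satisfy $t_i\le s$, one can pull $\psi(s)$ out of every term of $\widehat{R}[b\psi](t,s)$, bounding it by $\psi(s)\widehat{R}[b](t,s)$. This gives the scalar linear differential inequality
\[
\psi'(s)\le \widehat{R}[a](t,s)+\psi(s)\,\widehat{R}[b](t,s),\qquad \psi(\alpha)=0.
\]

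Finally I would multiply by the integrating factor $\exp\bigl(-\int_\alpha^s \widehat{R}[b](t,\tau)\,d\tau\bigr)$, recognize the left-hand side as a total derivative in $s$, integrate from $\alpha$ to $t$, and rearrange to obtain
\[
\psi(t)\le \int_\alpha^t \widehat{R}[a](t,s)\exp\!\Big(\int_s^t \widehat{R}[b](t,\tau)\,d\tau\Big)\,ds.
\]
Since $\psi(t)=V(t)$, inserting this into $u(t)\le a(t)+b(t)V(t)$ yields the claimed inequality. The argument is uniform in the choice of $t$ because $t$ was fixed at the outset and appears only as a parameter, so the conclusion holds for every $t\in J$.
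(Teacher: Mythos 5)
Your argument is correct, and it is worth noting that the paper itself gives no proof of this statement: it is quoted as a background result with a pointer to \cite{ChoYeol2007} (and to Bainov--Simeonov), so there is no in-paper argument to compare against. What you supply is essentially the standard proof of this Wendroff/Bainov--Simeonov type iterated Gronwall inequality, and all the key mechanisms are in place: the observation that the bracket equals $\int_\alpha^t \widehat{R}[u](t,s)\,ds$ (this is in fact just the renaming $s=t_1$; no Fubini or reordering of the iterated integrals is actually needed, so that remark is harmless but superfluous), the freezing of the outer parameter $t$ via the nondecreasingness of each $k_i$ in its first slot so that $V(\sigma)\le\psi(\sigma)$ for $\sigma\le t$, the monotonicity of the operator $w\mapsto \widehat{R}[w](t,s)$ coming from $k_i\ge 0$ which legitimizes substituting $u\le a+b\psi$ inside it, the extraction $\widehat{R}[b\psi](t,s)\le\psi(s)\widehat{R}[b](t,s)$ using that $\psi$ is nondecreasing and all inner evaluation points lie in $[\alpha,s]$, and the integrating-factor step with $\psi(\alpha)=0$. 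Two small points you could make explicit for completeness: $\psi$ is $C^1$ on $[\alpha,t]$ because $\widehat{R}[u](t,\cdot)$ is continuous (continuity of the $k_i$ and of $u$), so $\psi'(s)=\widehat{R}[u](t,s)$ is justified; and the final substitution $u(t)\le a(t)+b(t)V(t)=a(t)+b(t)\psi(t)$ uses $b\ge 0$ to preserve the inequality. With those remarks, your proof stands as a correct, self-contained replacement for the citation.
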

\begin{proof}
See \cite{ChoYeol2007}. 
\end{proof}
\section{Additional comments on first-order models and theory}\label{sec:1stOrder}
Our second-order model formulation covers the first-order equations of \cite{Tang2019, lu2019nonparametric} as a special case.  When $\forcev(\bx_i, \dot\bx_i, \xi_i) = -\nu_i\dot\bx_i + \forcex(\bx_i, \xi_i)$ for some constant $\nu_i > 0$, $\forcexi(\bx_i, \dot\bx_i, \xi_i) = \forcexi(\bx_i, \xi_i)$, $\intkernela_{\clof_i \clof_{i'}} \equiv 0$ for all $\idxcl, \idxcl' = 1, \ldots, \numcl$, and $m_i \ll 1$, \eqref{eq:2ndOrder} becomes,
\[
\begin{dcases}
\nu_i\dot\bx_i &= \forcex(\bx_i, \xi_i) + \sum_{i'=1}^N \frac{1}{N_{\clof_{i'}}}\intkernele_{\clof_i \clof_{i'}}(r_{ii'}, \topwde_{i i'})(\bx_{i'} - \bx_i)\\
\dot\xi_i &= \forcexi(\bx_i, \xi_i) + \sum_{i'=1}^N \frac{1}{N_{\clof_{i'}}} \intkernelxi_{\clof_i, \clof_{i'}}(r_{ii'}, \topwdxi_{i i'})(\xi_{i'} - \xi_i)
\end{dcases}
\]
It extends the first-order models considered in \cite{lu2019nonparametric, Tang2019, Zhong20} by adding non-collective forces, $\forcex, \forcexi$, multi-dimensional interaction kernels, $\intkernele_{\idxcl, \idxcl'}, \intkernelxi_{\idxcl, \idxcl'}$, and auxiliary variables, $\xi_i$.

The first-order theory considered in \cite{lu2019nonparametric, Tang2019} was focused on the learnability of functions of the form, $\intkernele(r)r$, which is a special case of our second-order theory, where we study the functions of the form $\intkernele(r)r + \intkernela(r)\dot{r}$, with $\intkernela(r) \equiv 0$.
\section{Additional performance measures}\label{sec:pm_xi}

For measures related to learning the $\xi$-based interaction kernels, we take
\[
\begin{dcases}
\delta_{i, i', t}^{\xi}(r, \topwdxi, \xi) &:= \delta_{r_{i i'}(t), \topwdxi_{i i'}(t), \xi_{i i'}(t)}(r, \topwdxi, \xi) \\
\delta_{i, i', t, m}^{\xi}(r, \topwdxi, \xi) &:= \delta_{r_{i i'}^{(m)}(t), \topwdxim_{i i'}(t), \xi_{i i'}^{(m)}(t)}(r, \topwdxi, \xi)
\end{dcases}
\]
Then, the measures are given by
\begin{equation}\label{meas:rhoxibar}
\rho_T^{\xi, \idxcl, \idxcl'}(r, \topwdxi, \xi) := \mathbb{E}_{\bY_0 \sim \bmu}\frac{1}{TN_{\idxcl \idxcl'}}\int_{t = 0}^T \sum_{\substack{i \in \cl_{\idxcl}, i' \in \cl_{\idxcl'} \\ i \neq i'}}\delta^{\xi}_{i i', t}(r, \topwdxi, \xi) \, dt 
\end{equation}
and similarly for $\rho_T^{\xi, L, \idxcl, \idxcl'}(r, \topwdxi, \xi)$, $\rho_T^{\xi, L, M, \idxcl, \idxcl'}(r, \topwdxi, \xi)$.
Similarly, $\rho_T^{\xi, \idxcl, \idxcl'}$ and its time-discretization version, $\rho_T^{\xi, L, \idxcl, \idxcl'}$, are only used in the theoretical setting, whereas the empirical $\rho_T^{\xi, L, M, \idxcl, \idxcl'}$ is used in the actual algorithm.
We consider direct sums of the measures for the phase variable for ease of notation. 
\begin{equation} \label{meas:vectorized:xi}
\bm{\rho}_{T}^{\xi,L}=\bigoplus_{k, k^{\prime}=1,1}^{K, K} \rho_{T}^{\xi,L, k k^{\prime}}, \quad
\bm{\rho}_{T}^{\xi}=\bigoplus_{k, k^{\prime}=1,1}^{K, K} \rho_{T}^{\xi,k k^{\prime}}, \quad \bm{L}^{2}\left(\bm{\rho}_{T}^{\xi,L}\right)=\bigoplus_{k, k^{\prime}=1,1}^{K, K} L^{2}\left(\rho_{T}^{\xi,L, k k^{\prime}}\right)
\end{equation} 

Lastly, for the $\xi$-based interaction kernels, i.e., $\lintkernelxi_{\idxcl\idxcl'}$ versus $\intkernelxi_{\idxcl\idxcl'}$, we consider the following norm,
\begin{equation}\label{e:L2NormDefs_xi}
\norm{\lintkernelxi_{\idxcl\idxcl'} - \intkernelxi_{\idxcl\idxcl'}}_{L^2(\rho_T^{\xi, \idxcl, \idxcl'})}^2 = \int_r\int_{\xi}\int_{\topwdxi}(\lintkernelxi_{\idxcl\idxcl'}(r, \xi, \topwdxi) - \intkernelxi_{\idxcl\idxcl'}(r, \xi, \topwdxi))^2\xi^2\, d\rho_T^{\xi, \idxcl, \idxcl'}(r, \xi, \topwdxi).
\end{equation}

\section{Numerical algorithm}\label{sec:algorithm_numeric}
In this section, we will detail the construction of the linear systems to learn $\vec{\alpha}^{EA}$ and $\vec{\alpha}^{\xi}$.  

We start from the procedure of solving for $\vec{\alpha}^{EA}$.  First, we build the basis functions for the finite dimensional subspace $\widehat\hypspace^{EA} \subset \hypspace^{EA}$.  
\begin{remark}
The support of the unknown interaction kernels is not assumed to be known.  We build our finite dimensional subspaces, $\widehat\hypspace^{EA}$ for example, based on the empirical observation data.  For the support-detection capability of our estimators, see the examples of opinion dynamics in \cite{lu2019nonparametric, Zhong20}.
\end{remark}
We utilize the tensor grid of basis functions, i.e., tensor product of basis functions in each dimension of the basis $[R_{\idxcl\idxcl'}^{\min, L, M}, R_{\idxcl\idxcl'}^{\max, L, M}] \times \mathbb{S}^{E, L, M}_{\idxcl\idxcl'}$ or $[R_{\idxcl\idxcl'}^{\min, L, M}, R_{\idxcl\idxcl'}^{\max, L, M}] \times \mathbb{S}^{A, L, M}_{\idxcl\idxcl'}$, where $[R_{\idxcl\idxcl'}^{\min, L, M}, R_{\idxcl\idxcl'}^{\max, L, M}]$ is the empirical range of $r$ given by the observation data, similarly for the empirical $\mathbb{S}^{E, L, M}_{\idxcl\idxcl'}$ and $\mathbb{S}^{A, L, M}$ being the range of $\topwde_{\idxcl\idxcl'}$ and $\topwda_{\idxcl\idxcl'}$ given by the observation, respectively.  In each dimension\footnote{Mixture of basis functions in each dimension is possible, the algorithm does not required the basis functions in each dimension to be of the same kind.  We make such assumption for simplicity sake.} of $[R_{\idxcl\idxcl'}^{\min, L, M}, R_{\idxcl\idxcl'}^{\max, L, M}] \times \mathbb{S}^{E, L, M}_{\idxcl\idxcl'}$ or $[R_{\idxcl\idxcl'}^{\min, L, M}, R_{\idxcl\idxcl'}^{\max, L, M}] \times \mathbb{S}^{A, L, M}_{\idxcl\idxcl'}$, the basis functions are built as piece-wise standard polynomials (or other functions, such as Clamped B-splines, Fourier basis, etc.) uniformly with the number of basis functions being $n^{E, j}_{\idxcl\idxcl'}$ or $n^{A, j}_{\idxcl\idxcl'}$.  Hence $n^{E}_{\idxcl\idxcl'} = \prod_{j}^{1 + p^E_{\idxcl, \idxcl'}}n^{E, j}_{\idxcl\idxcl'}$ and $n^{A}_{\idxcl\idxcl'} = \prod_{j}^{1 + p^A_{\idxcl, \idxcl'}} n^{A, j}_{\idxcl\idxcl'}$.  Then, we assemble $\vec{d}^{(m)}$ as follows,
\[
\vec{d}^{EA, (m)} = \begin{bmatrix} \frac{1}{\sqrt{N_{\clof_1}}}\ddot\bx_1(t_1) \\ \vdots \\ \frac{1}{\sqrt{N_{\clof_N}}}\ddot\bx_N(t_1) \\ \vdots \\ \frac{1}{\sqrt{N_{\clof_1}}}\ddot\bx_1(t_L) \\ \vdots \\ \frac{1}{\sqrt{N_{\clof_N}}}\ddot\bx_N(t_L)  \end{bmatrix}.
\]
If $\ddot\bx_i(t_l)$ is not given, a finite difference scheme on $\bx_i(t_l)$ or $\dot\bx_i(t_l)$ is used to approximate $\ddot\bx_i(t_l)$.   Next, we build, $\vec{f}^{(m)}$ as follows,
\[
\vec{f}^{EA, (m)} =\begin{bmatrix}\frac{1}{\sqrt{N_{\clof_1}}}\forcev(\bx_1(t_1), \dot\bx_1(t_1), \xi_1(t_1)) \\ \vdots \\ \frac{1}{\sqrt{N_{\clof_N}}}\forcev(\bx_N(t_1), \dot\bx_N(t_1), \xi_N(t_1)) \\ \vdots \\ \frac{1}{\sqrt{N_{\clof_1}}}\forcev(\bx_1(t_L), \dot\bx_1(t_L), \xi_1(t_L)) \\ \vdots \\ \frac{1}{\sqrt{N_{\clof_N}}}\forcev(\bx_N(t_L), \dot\bx_N(t_L), \xi_N(t_L))  \end{bmatrix}.
\]
Then for the learning matrix, $\Psi^{EA, (m)} \in \R^{LNd \times n}$ with $n = n^E + n^A$.  It is a concatnation of two sub-matrix, $\Psi^{E, (m)}$ and $\Psi^{A, (m)}$, i.e., 
\[
\Psi^{EA, (m)} = \begin{bmatrix}\Psi^{E, (m)} & \Psi^{A, (m)} \end{bmatrix}.
\]
For the energy-based learning matrix, $\Psi^{E, (m)}$, we use a lexicographical order on $(\idxcl, \idxcl')$ for $\idxcl, \idxcl' = 1, \ldots, \numcl$.  We define $n^{E}_{\idxcl, \idxcl', \text{prev}} = \sum_{(\idxcl_1, \idxcl_2) < (\idxcl, \idxcl')} n^{E}_{\idxcl_1, \idxcl_2}$; if $(\idxcl, \idxcl') = (1, 1)$, we take $n^{E}_{1, 1, \text{prev}} = 0$.  Then for $\eta_{\idxcl\idxcl'}^E = 1, \ldots, n^E_{\idxcl,  \idxcl'}$, $\Psi^{E, (m)}$ is given as follows,
\[
\Psi^{E, (m)}(li(1 : d), \eta_{\idxcl\idxcl'}^E) = \sum_{i' \in \cl_{\idxcl'}}\frac{1}{\sqrt{N_{\clof_i}}}\basise_{\idxcl, \idxcl', \eta_{\idxcl\idxcl'}^E}(\norm{\bx_{i'}(t_l) - \bx_i(t_l)}, \topwde_{i, i'}(t_l))(\bx_{i'}(t_l) - \bx_i(t_l)), \quad i \in \cl_{\idxcl},
\]
and for $l = 1, \ldots, L$.  Similar process of construction is done for $\Psi^{A, (m)}$.  Then we define,
\[
A^{EA, (m)} = (\Psi^{EA, (m)})^\top\Psi^{EA, (m)} \quad \text{and} \quad \vec{b}^{EA,(m)} = (\Psi^{EA, (m)})^\top(\vec{d}^{EA, (m)} - \vec{f}^{EA, (m)}).
\]
And lastly,
\[
A^{EA} =\frac{1}{LM}\sum_{m = 1}^M A^{EA,(m)} \quad \text{and} \quad \vec{b}^{EA} = \frac{1}{LM}\sum_{m = 1}^M\vec{b}^{EA,(m)}.
\]
Then, $\vec{\alpha}^{EA} = \begin{bmatrix} (\vec{\alpha}^E)^\top & (\vec{\alpha}^A)^\top \end{bmatrix}^\top$,  is obtained by solving
\[
A^{EA}\vec{\alpha}^{EA} = \vec{b}^{EA}.
\]
Then, we assemble 
\[
\lintkernele_{\idxcl\idxcl'} = \sum_{\eta_{\idxcl\idxcl'}^E = 1}^{n^E_{\idxcl,\idxcl'}} \alpha_{\idxcl,\idxcl', \eta_{\idxcl\idxcl'}^E}^E\basise_{\idxcl, \idxcl', \eta_{\idxcl\idxcl'}^E}.
\]
Similar assembly from $\alpha^A$ is done for $\lintkernela_{\idxcl\idxcl'}$.  In the case of using finite difference approximation to approximate the second derivatives of $\bx_i$, we end up with
\[
A^{EA}\vec{\alpha}^{EA} = \vec{b}^{EA} + \vec{\zeta},
\]
where $\vec{\zeta} = \bigO(\frac{T}{L})$ when a first-order finite difference scheme is used.

Next for $\vec{\alpha}^{\xi}$, we build the basis functions for the finite dimensional subspace $\hypspacexi_{\idxcl\idxcl'} \subset \hypspacexi_{\idxcl\idxcl'}$.  We utilize the tensor grid of basis functions, i.e., tensor product of basis functions in each dimension of the basis $[R_{\idxcl\idxcl'}^{\min, L, M}, R_{\idxcl\idxcl'}^{\max, L, M}] \times \mathbb{S}^{\xi, L, M}_{\idxcl\idxcl'}$, where $[R_{\idxcl\idxcl'}^{\min, L, M}, R_{\idxcl\idxcl'}^{\max, L, M}]$ is the empirical range of $r$ given by the observation data, similarly for the empirical $\mathbb{S}^{\xi, L, M}_{\idxcl\idxcl'}$ being the range of $\topwdxi_{\idxcl\idxcl'}$ given by the observation.  And in each dimension of $[R_{\idxcl\idxcl'}^{\min, L, M}, R_{\idxcl\idxcl'}^{\max, L, M}] \times \mathbb{S}^{\xi, L, M}_{\idxcl\idxcl'}$, the basis functions are built as piece-wise standard polynomials (or other functions, such as Clamped B-splines, Fourier basis, etc.) uniformly with the number of basis functions being $n^{\xi, j}_{\idxcl\idxcl'}$.  Hence $n^{\xi}_{\idxcl\idxcl'} = \prod_{j}^{1 + p^{\xi}_{\idxcl, \idxcl'}}n^{\xi, j}_{\idxcl\idxcl'}$.  We let
\[
\vec{d}^{\xi, (m)} := \begin{bmatrix} \frac{1}{\sqrt{N_{\clof_1}}}\dot\xi_1(t_1) \\ \vdots \\ \frac{1}{\sqrt{N_{\clof_N}}}\dot\xi_N(t_1) \\ \vdots \\ \frac{1}{\sqrt{N_{\clof_1}}}\dot\xi_1(t_L) \\ \vdots \\ \frac{1}{\sqrt{N_{\clof_N}}}\dot\xi_N(t_L)  \end{bmatrix}\,,\quad
\vec{f}^{\xi, (m)} := \begin{bmatrix}\frac{1}{\sqrt{N_{\clof_1}}}\forcexi(\bx_1(t_1), \dot\bx_1(t_1), \xi_1(t_1)) \\ \vdots \\ \frac{1}{\sqrt{N_{\clof_N}}}\forcexi(\bx_N(t_1), \dot\bx_N(t_1), \xi_N(t_1)) \\ \vdots \\ \frac{1}{\sqrt{N_{\clof_1}}}\forcexi(\bx_1(t_L), \dot\bx_1(t_L), \xi_1(t_L)) \\ \vdots \\ \frac{1}{\sqrt{N_{\clof_N}}}\forcexi(\bx_N(t_L), \dot\bx_N(t_L), \xi_N(t_L))  \end{bmatrix},
\]
and
\[
\Psi^{\xi, (m)}(li(1 : d), \eta_{\idxcl\idxcl'}^{\xi}) = \sum_{i' \in \cl_{\idxcl'}}\frac{1}{\sqrt{N_{\clof_i}}}\basisxi_{\idxcl, \idxcl', \eta_{\idxcl\idxcl'}^{\xi}}(\norm{\bx_{i'}(t_l) - \bx_i(t_l)}, \topwdxi_{i, i'}(t_l))(\xi_{i'}(t_l) - \xi_i(t_l)), \quad i \in \cl_{\idxcl},
\]
Finally we define,
\[
A^{\xi, (m)} = (\Psi^{\xi, (m)})^\top\Psi^{\xi, (m)} \quad \text{and} \quad \vec{b}^{\xi,(m)} = (\Psi^{\xi, (m)})^\top(\vec{d}^{\xi, (m)} - \vec{f}^{\xi, (m)}).
\]
and 
\[
A^{\xi} =\frac{1}{LM}\sum_{m = 1}^M A^{\xi,(m)} \quad \text{and} \quad \vec{b}^{\xi} = \frac{1}{LM}\sum_{m = 1}^M\vec{b}^{\xi, (m)}.
\]
Thus, $\vec{\alpha}^{\xi}$ is obtained by solving
\[
A^{\xi}\vec{\alpha}^{\xi} =\vec{b}^{\xi}.
\]
Then, we assemble 
\[
\lintkernelxi_{\idxcl\idxcl'} = \sum_{\eta_{\idxcl\idxcl'}^{\xi} = 1}^{n^{\xi}_{\idxcl,\idxcl'}} \alpha_{\idxcl,\idxcl', \eta_{\idxcl\idxcl'}^{\xi}}^{\xi}\basisxi_{\idxcl, \idxcl', \eta_{\idxcl\idxcl'}^{\xi}}.
\]
\newpage
\bibliography{learning_dynamics}
\end{document}